\newcommand{\mathcolorbox}[2]{\colorbox{#1}{$\displaystyle #2$}}
\def\D{\mathcal{D}}
\def\R{\mathbb{R}}
\def\H{\mathcal{H}}
\def\clip{\mathbf{Clip}}
\def\agg{\mathbf{F}}
\def\aggclip{\agg \circ \clip_C}
\def\arc{\mathbf{ARC}}
\theoremstyle{plain}
\newtheorem{theorem}{Theorem}[section]
\newtheorem{lemma}[theorem]{Lemma}
\theoremstyle{definition}
\newtheorem{definition}[theorem]{Definition}
\theoremstyle{remark}
\title{Adaptive Gradient Clipping for Robust Federated Learning}
\newcommand{\expect}[1]{\mathop{{}\mathbb{E}}\left[{#1}\right]}
\newcommand{\suchthat}{\ensuremath{~\middle|~}}
\newcommand{\knowing}{\suchthat{}}
\newcommand{\card}[1]{\left\lvert{#1}\right\rvert}
\newcommand{\norm}[1]{\left\lVert{#1}\right\rVert}
\newcommand{\floor}[1]{\left\lfloor{#1}\right\rfloor}
\newcommand{\indexvar}[3]{\ensuremath{{{#3}^{\ifthenelse{\equal{#1}{}}{}{\left({#1}\right)}}_{#2}}}}
\newcommand{\indexvarNoPar}[3]{\ensuremath{{{#3}^{\ifthenelse{\equal{#1}{}}{}{\left{#1}\right}}_{#2}}}}
\newcommand{\params}[2]{\indexvarNoPar{#1}{#2}{\theta}}
\providecommand{\iprod}[2]{\ensuremath{\left\langle #1,\,#2  \right\rangle}}
\providecommand{\norm}[1]{\ensuremath{\left\lVert#1\right\rVert }}
\newcommand{\loss}{\mathcal{L}}
\newcommand{\weight}[1]{\params{}{#1}}
\newcommand{\gradient}[2]{\indexvar{#1}{#2}{g}}
\newcommand{\worker}[2]{\indexvar{#1}{#2}{w}}
\newcommand{\proba}[2]{\ensuremath{\text{P}\!\left({#1}\ifthenelse{\equal{#2}{}}{}{\knowing{}{#2}}\right)}}
\renewcommand{\paragraph}[1]{\textbf{#1}~}
\newcommand{\layer}{ARC}
\author{%
  Youssef Allouah$^{1}$\thanks{Authors are listed in alphabetical order.} \And
  Rachid Guerraoui$^{1}$ \And
  Nirupam Gupta$^{2}$ \AND
  Ahmed Jellouli$^{1}$ \And
  Geovani Rizk$^{1}$ \And
  John Stephan$^{1\dagger}$ \AND
  \parbox{\textwidth}{\centering
    \normalfont $^{1}$EPFL, Switzerland \quad \quad
    $^{2}$University of Copenhagen, Denmark\\[0.5ex]
     \normalfont  $^{\dagger}$ Correspondence to: \texttt{john.stephan@epfl.ch}%
  }
}
\begin{document}

\maketitle
\begin{abstract}
\vspace{-10pt}

Robust federated learning aims to maintain reliable performance despite the presence of adversarial or misbehaving workers. While state-of-the-art (SOTA) robust distributed gradient descent (Robust-DGD) methods were proven theoretically optimal, their empirical success has often relied on pre-aggregation \textit{gradient clipping}.
However, existing \textit{static} clipping strategies yield inconsistent results: enhancing robustness against some attacks while being ineffective or even detrimental against others.
To address this limitation, we propose a principled \textit{adaptive} clipping strategy, Adaptive Robust Clipping (ARC), which dynamically adjusts clipping thresholds based on the input gradients. We prove that ARC not only preserves the theoretical robustness guarantees of SOTA Robust-DGD methods but also provably improves asymptotic convergence when the model is well-initialized. Extensive experiments on benchmark image classification tasks confirm these theoretical insights, demonstrating that ARC significantly enhances robustness, particularly in highly heterogeneous and adversarial settings.

\end{abstract}
\section{Introduction}\label{sec_intro}
\vspace{-10pt}
Distributed machine learning, a.k.a.~federated learning, has emerged as a dominant paradigm to cope with the increasing computational cost of learning tasks, mainly due to growing model sizes and datasets~\citep{OpenProblemsinFed2021}.
{\em Worker} machines, holding each a fraction of the training dataset, collaborate over a network to learn an optimal common model over the collection of their datasets. 
Workers typically collaborate with the help of a central coordinator, that we call {\em server}~\citep{mcmahan17a}.
Besides scalability, distributed learning is also helpful in preserving data ownership and sovereignty, since the workers do not have to share their local datasets during the learning.

Conventional distributed learning algorithms are known to be vulnerable to misbehaving workers that could behave unpredictably~\citep{krum, OpenProblemsinFed2021, guerraoui2023byzantine}.
Misbehavior may result from software and hardware bugs, data poisoning, or malicious players controlling part of the network. In the parlance of distributed computing, misbehaving workers are referred to as {\em Byzantine}~\citep{lamport82}. Due to the growing influence of distributed learning in critical public-domain applications such as healthcare~\citep{nguyen2023recent} and finance~\citep{long2020federated}, the problem of robustness to misbehaving workers, a.k.a.~{robust distributed learning}, has received significant attention~\citep{yin2018byzantine, farhadkhani2022byzantine, karimireddy2022byzantinerobust, gorbunov2023variance, allouah2023fixing, farhadkhani2023robust, elmhamdi2023impossiblesafetylargeai}.

Robust distributed learning algorithms primarily rely on {robust aggregation}, such as {coordinate-wise trimmed mean} (CWTM)~\citep{yin2018byzantine}, {geometric median} (GM)~\citep{chen2017distributed} and {Multi-Krum} (MK)~\citep{krum}. Specifically, in {robust distributed gradient descent ({\em Robust-DGD}), the server aggregates the workers' local gradients using a robust aggregation method, instead of simply averaging them. This protects the learning from erroneous gradients sent by misbehaving workers. Recent work has made significant improvements over these aggregation techniques by incorporating a pre-aggregation step such as {bucketing}~\citep{karimireddy2022byzantinerobust, gorbunov2023variance} and {nearest-neighbor mixing} (NNM)~\citep{allouah2023fixing}, to tackle gradient dissimilarity resulting from data heterogeneity. 
The learning guarantee of the resulting Robust-DGD has been proven optimal~\citep{allouah2023robust}, i.e., it cannot be improved without additional assumptions under the standard heterogeneity model of $(G, B)$-gradient dissimilarity~\citep{karimireddy2020scaffold}.

Despite its theoretical tightness, the empirical success of Robust-DGD
has unknowingly relied on pre-aggregation 
\textit{gradient clipping}~\citep{mhamdi2021distributed, farhadkhani2022byzantine, allouah2023fixing}. Specifically, clipping the gradients of the workers prior to aggregation has been observed to sometimes enhance the algorithm's empirical performance in the presence of adversarial workers (as evidenced in Figure~\ref{fig_mnist_intro_SF}). Yet, this improvement lacks a concrete explanation, raising the question of whether the observed benefits of clipping are merely anecdotal. This leads to the natural inquiry: \textit{Why, and when, does pre-aggregation clipping improve robustness?}



In particular, using a constant clipping threshold, referred to as static clipping, has exhibited mixed results.
Figure~\ref{fig_plots_mnist_comparison} shows that while static clipping effectively mitigates sign-flipping (SF) attacks, it completely fails under label-flipping (LF).
This indicates an inherent fragility of static clipping.
Indeed, we prove in our work that static clipping breaks the standard $(f, \kappa)$-robustness property of an aggregation method~\citep{allouah2023fixing}. This highlights a key shortcoming of existing empirical results that rely on static clipping~\citep{mhamdi2021distributed, farhadkhani2022byzantine, allouah2023fixing}.
To overcome the limitations of static clipping but preserve its empirical benefits at the same time, we introduce a novel adaptive clipping scheme, termed Adaptive Robust Clipping (\layer{}).

\layer{} dynamically adjusts the clipping threshold as per the gradients sent by the workers and the fraction of adversarial workers to be tolerated.  
We demonstrate that integrating \layer{} into Robust-DGD consistently improves its empirical performance (see Figures~\ref{fig_plots_mnist_comparison} and~\ref{fig_breakdown_mnist_intro}), while also preserving the convergence guarantee of the original Robust-DGD algorithm.
Moreover, we show that when the model initialization is good, \layer{} provably improves the robustness of Robust-DGD.
The benefits of \layer{} are more pronounced as the fraction of misbehaving workers approaches the system’s breakdown point\footnote{Breakdown point refers to the minimum fraction of adversarial workers that can break the system, making it impossible to guarantee a bound on the learning error~\citep{allouah2023robust}.} and when the data across the workers is highly heterogeneous. Our key results are summarized below. Critical comparisons to prior work are deferred to Section~\ref{sec:rel_work}.

\vspace{-10pt}

\begin{figure*}[ht!]
    \centering
    \begin{subfigure}[t]{0.5\textwidth}
        \centering
        \includegraphics[height=1.6in]{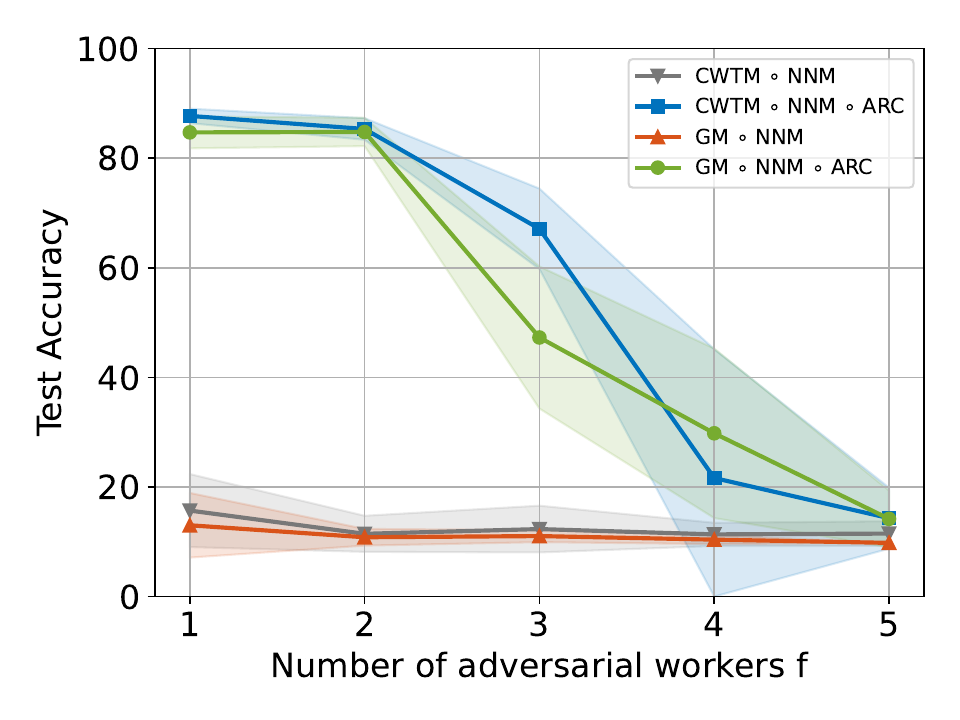}
        \vspace{-3mm}
        \caption{Improved robustness}
        \label{fig_breakdown_mnist_intro}
    \end{subfigure}%
    \begin{subfigure}[t]{0.5\textwidth}
        \centering
        \includegraphics[height=1.5in]{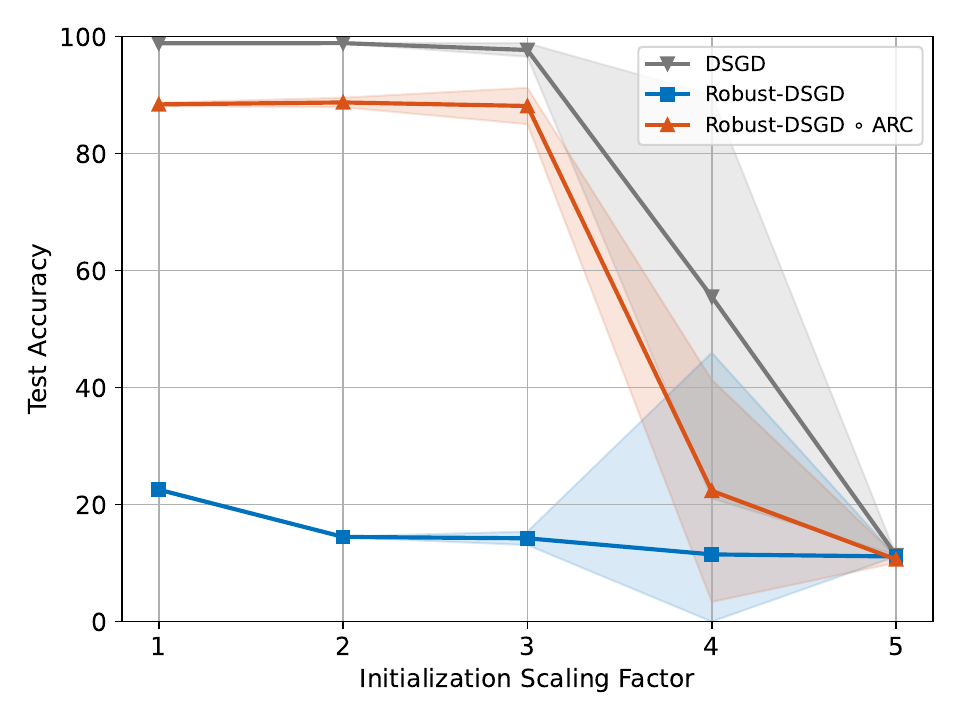}
        \vspace{-3mm}
        \caption{Influence of model initialization}
        \label{fig_initial_mnist_intro}
    \end{subfigure}
    \vspace{-3mm}
    \caption{\textit{Worst-case maximal accuracies} of Robust-DSGD, with and without \layer{}, across several types of misbehavior for distributed MNIST with $10$ honest workers and under \textit{extreme} heterogeneity.
    On the left, we vary the number of adversarial workers.
    On the right, we vary the initialization conditions by scaling a well-chosen set of initial parameters (CWTM $\circ$ NNM is used, and $f=1$).
    More details on the experimental setup can be found in Sections~\ref{sec_experiments} and~\ref{sec:model_initialization_impact}, and Appendix~\ref{app_exp_setup}.}
    \label{fig_mnist_intro}
\end{figure*}
\vspace{-10pt}

\textbf{Main results \& contributions.}
We consider a system comprising $n$ workers and a server. The goal is to tolerate up to $f$ adversarial workers. 

\textit{(1) Adaptive robust clipping (ARC).} We propose ARC, wherein prior to aggregating the gradients, the server clips the largest $k \coloneqq \floor{2(f/n)(n-f)}$ gradients using a clipping parameter given by the (Euclidean) norm of the $(k+1)$-th largest gradient.
It is important to note that in contrast to existing adaptive clipping schemes~\citep{diakonikolas2020outlier, abdalla2024covariance}, ARC does not require additional a priori information on honest workers' gradients. 
We prove that \layer{} preserves the robustness guarantee of the original robust aggregation method. 

\textit{(2) Improved empirical robustness.} We conduct experiments on MNIST~\citep{mnist}, Fashion-MNIST~\citep{fashion-mnist}, and CIFAR-10~\citep{cifar}, across various data heterogeneity settings and adversarial regimes.
Our results demonstrate that \layer{} significantly enhances the performance of state-of-the-art Robust-DGD methods, particularly in scenarios with high data heterogeneity (Figure~\ref{fig_breakdown_mnist_intro}) and a large number of adversarial workers (Figure~\ref{fig_breakdown_mnist_alpha=0.5}).

\textit{(3) Improved learning guarantee.} We demonstrate that \layer{} possesses an additional property that is not satisfied by classical robust aggregation methods.
Specifically, \layer{} constrains the norm of an adversarial gradient by that of an honest (non-adversarial) gradient.
Leveraging this property, we show that \layer{} circumvents the lower bound established under data heterogeneity in~\cite{allouah2023robust}, provided the honest gradients are bounded at model initialization.
An empirical validation of this insight is shown in Figure~\ref{fig_initial_mnist_intro}.
Such model initialization is often satisfiable in practice~\citep{glorot2010understanding}, highlighting the practical relevance of \layer{}.
When the model is arbitrarily initialized, \layer{} recovers the original convergence guarantee of Robust-DGD in the worst case.

\section{Problem Statement and Relevant Background}\label{sec_problem}
\vspace{-10pt}
We consider the problem of distributed learning in the server-based architecture. The system comprises $n$ workers represented by $w_1, \dots ,w_n$, that collaborate with the help of a trusted server. The workers hold local datasets $\D_1, \dots,\D_n$ respectively, each composed of $m$ data points from an input space $\mathcal{Z}$. Specifically, for any $i \in [n]$, $\D_i \coloneqq \{z_1^{(i)}, \dots, z_m^{(i)}\} \subset \mathcal{Z}^m$. For a given model parameterized by vector $\theta \in \R^d$, $d$ being the number of trainable parameters, each worker $w_i$ incurs a loss given by $\loss_i{(\theta)} \coloneqq \frac{1}{m}\sum_{k=1}^m \ell (\theta, z_k^{(i)} )\enspace$,
where $\ell: \mathbb{R}^d \times \mathcal{Z} \rightarrow \R$ is the point-wise loss.
We make the following standard assumptions~\citep{bottou2018optimization}: (i) the point-wise loss function $\ell$ is differentiable with respect to $\theta$. (ii) For all $i \in [n]$, $\loss_i$ is $L$-Lipschitz smooth, i.e., there exists $L \in \mathbb{R}^{+}$ such that for all $\theta, \theta' \in \R^d$, $\norm{\nabla \loss_{i}(\theta{}) - \nabla \loss_{i}(\theta{}')} \leq L \norm{\theta{} - \theta{}'}$, where $\|\cdot\|$ refers to the Euclidean norm. 
In the ideal setting where all workers are {\em honest}, i.e., follow the prescribed algorithm correctly, the server aims to minimize the global average loss function given by $\loss(\theta{}) \coloneqq \frac{1}{n} \sum_{i = 1}^n \loss_{i}(\theta{})$.
However, this objective is rendered vacuous when some workers could be adversarial, described in the following. 

A robust distributed learning algorithm aims to output a good model despite the presence of {adversarial} (a.k.a.,~{\em Byzantine}) workers in the system~\citep{su2016fault,yin2018byzantine,allouah2023fixing}. Specifically, the goal is to design a distributed learning algorithm that can tolerate up to $f$ adversarial workers, of a priori unknown identity, out of $n$ workers. Adversarial workers need not follow a prescribed algorithm, and can send arbitrary information to the server. In the context of distributed gradient descent, adversarial workers can send incorrect gradients to the server~\citep{baruch2019alittle, xie2020fall}. 
We let $\mathcal{H} \subseteq [n]$, with $\card{\H} = n-f$, denote the set of honest workers that do not deviate from the algorithm.
The objective of the server is to minimize the average loss function of honest workers given by $\loss_\mathcal{H}(\theta) \coloneqq \frac{1}{\card{\mathcal{H}}}\sum_{i \in \mathcal{H}} \loss_i(\theta),~\forall \theta \in \R^d$.
While we can find a minimum of $\loss_\mathcal{H}(\theta)$ when $\loss_\mathcal{H}$ is convex, 
in general however the loss function is non-convex, and the optimization problem is NP-hard~\citep{boyd2004convex}. Therefore, we aim to find a \textit{stationary point} of $\loss_\mathcal{H}$ instead, i.e., $\theta^*$ such that $\norm{\nabla \loss_\mathcal{H}(\theta^\ast)} = 0$. Formally, we define robustness to adversarial workers by {\em $(f, \epsilon)$-resilience}~\citep{allouah2023fixing}.

\begin{definition}
\label{def:resilience}
A distributed learning algorithm $\mathcal{A}$ is said to be {\em $(f, \varepsilon)$-resilient} if, despite the presence of $f$ adversarial workers, $\mathcal{A}$ enables the server to output a model $\hat{\theta}$ such that $\expect{\norm{\nabla \loss_\mathcal{H}\left(\hat{\theta} \right)}^2} \leq \varepsilon$,
%
where the expectation $\expect{\cdot}$ is over the randomness of the algorithm.
\end{definition}

If a distributed learning algorithm $\mathcal{A}$ is $(f, \varepsilon)$-resilient, then it can tolerate up to $f$ adversarial workers.
It is standard in robust distributed learning to design robust algorithms using $f$ as a parameter~\citep{krum, yin2018byzantine, gorbunov2023variance, allouah2023fixing}.


In the context of 
distributed learning, data heterogeneity is characterized by the following standard notion of {$(G, B)$-gradient dissimilarity}~\citep{vaswani2019fast, karimireddy2020scaffold,karimireddy2022byzantinerobust, gorbunov2023variance, allouah2023robust}.
\begin{definition}
\label{def:dissimilarity}
    Loss functions $\loss_i, \; i \in \mathcal{H},$ are said to satisfy {\em $(G, B)$-gradient dissimilarity} if, 
    \begin{align*}
    \frac{1}{\card{\mathcal{H}}}\sum_{i \in \mathcal{H}} \norm{\nabla \loss_i(\theta) - \nabla \loss_{\mathcal{H}}(\theta)}^2 \leq G^2 + B^2 \norm{\nabla \loss_\mathcal{H}(\theta)}^2 , \quad \forall \theta \in \mathbb{R}^d.
    \end{align*}
\end{definition}
\vspace{-2mm}



{\bf General limitations on robustness.} Note that $(f, \varepsilon)$-resilience is impossible (for any $\varepsilon$) 
when $f/n \geq 1/2$~\citep{gupta2020fault}. Moreover, even for $f/n < 1/2$, it is generally impossible to achieve $(f, \varepsilon)$-resilience for arbitrarily small $\varepsilon$ due to the disparity amongst the local datasets $\D_i,\; i \in \mathcal{H}$ (a.k.a., data heterogeneity)~\citep{liu2021approximate}. Henceforth, we assume that $n > 2f$. Under $(G, B)$-gradient dissimilarity, we have the following lower bound on the achievable resilience.

\begin{lemma}[Non-convex extension of Theorem 1~\citeauthor{allouah2023robust}]
\label{lem:lower_bound}
    Under $(G, B)$-gradient dissimilarity, a distributed learning algorithm is $(f, \varepsilon)$-resilient only if $\frac{f}{n} < \frac{1}{2+B^2}$ and $\varepsilon \geq \frac{1}{4} \cdot \frac{f}{n-(2+B^2)f} G^2$.
\end{lemma}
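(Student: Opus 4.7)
The plan is to prove the lemma with a two-world indistinguishability argument in the style of Le Cam, adapting the convex lower bound of \citet{allouah2023robust} to the stationary-point objective via constant-gradient (linear) loss functions. I would partition $[n]$ into three blocks $A, M, B$ with $|A|=|B|=f$ and $|M|=n-2f$, and consider two scenarios that the server cannot distinguish: in world~1 the honest set is $\mathcal{H}_1 = A \cup M$ with adversaries $B$, and in world~2 it is $\mathcal{H}_2 = M \cup B$ with adversaries $A$. Every worker is assigned the \emph{same} linear loss $\loss_i(\theta) = g_i\,\theta$ in both worlds, with $g_i = c$ on $A$, $g_i = 0$ on $M$, and $g_i = -c$ on $B$, where $c$ is a scalar parameter to be optimized against the dissimilarity budget. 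Because the honest gradients are constant in $\theta$, the adversaries in each world can perfectly mimic the ``honest'' behaviour of the workers playing the adversarial role in the other world, so the server's transcripts are identically distributed and $\hat\theta$ has the same law in both worlds.

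Next I would carry out the two core computations. First, the honest averages are $\nabla\loss_{\mathcal{H}_w}(\theta) = (-1)^{w+1}\,fc/(n-f)$, constant in $\theta$, so for any output $\hat\theta$ we have $\|\nabla\loss_{\mathcal{H}_1}(\hat\theta)\|^2 = \|\nabla\loss_{\mathcal{H}_2}(\hat\theta)\|^2 = (fc/(n-f))^2$, and $(f,\varepsilon)$-resilience forces $\varepsilon \geq (fc/(n-f))^2$. Second, a direct variance calculation in world~1 gives an honest dissimilarity of $(n-2f)f c^2/(n-f)^2$, which combined with Definition~\ref{def:dissimilarity} becomes $\tfrac{f c^2}{(n-f)^2}\bigl[(n-2f) - B^2 f\bigr] \leq G^2$.

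Both conclusions of the lemma then fall out by choosing $c$ optimally. If $f/n \geq 1/(2+B^2)$ the bracket is non-positive, $c$ is unbounded, and the induced lower bound on $\varepsilon$ blows up, ruling out $(f,\varepsilon)$-resilience for any finite $\varepsilon$ and establishing the threshold condition. When $f/n < 1/(2+B^2)$, pushing $c^2$ to the boundary value $G^2(n-f)^2 / (f[n-(2+B^2)f])$ and substituting yields $\varepsilon \geq fG^2/(n-(2+B^2)f)$, which is four times the stated bound and hence a fortiori implies the $\tfrac14$ statement.

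The main obstacle is not a single hard calculation but careful bookkeeping: I must check that the variance identity is evaluated in the correct honest world (world~2 matching by symmetry), that the adversarial simulation remains consistent across all iterations of a possibly-adaptive algorithm (it does, since linear losses have $\theta$-independent gradients, so the ``fake'' honest transcripts the adversaries must produce are trivially reproducible at every round), and that the construction meets the standing differentiability and $L$-Lipschitz-smoothness assumptions (trivially, since linear losses are $0$-smooth). The only genuinely new ingredient compared to the convex case is the non-convex resilience measure; since my construction has a constant nonzero honest gradient, the squared gradient norm is pinned exactly, so the extension to stationarity-based resilience is essentially automatic.
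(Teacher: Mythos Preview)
The paper does not actually prove this lemma; it is stated in Section~\ref{sec_problem} as a recalled result (the ``non-convex extension of Theorem~1'' from \cite{allouah2023robust}), and no proof appears anywhere in the body or the appendices. So there is no in-paper argument against which to compare your proposal.

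Evaluated on its own merits, your argument is correct. The three-block construction with linear losses $\loss_i(\theta)=g_i\theta$ and gradients $c,0,-c$ on $A,M,B$ yields precisely the dissimilarity inequality $\tfrac{fc^2}{(n-f)^2}\bigl[(n-2f)-B^2f\bigr]\le G^2$ you state, and the constant-gradient feature pins $\|\nabla\loss_{\mathcal H}(\hat\theta)\|^2$ at $(fc/(n-f))^2$ independently of $\hat\theta$. Two remarks. First, the two-world indistinguishability device, while sound, is unnecessary here: because the honest gradient is constant in $\theta$, a \emph{single} world already forces $\varepsilon\ge (fc/(n-f))^2$ for every algorithm, regardless of what the adversaries do. The two-world mechanism is essential in the convex/minimizer formulation of \cite{allouah2023robust} (where the adversaries' role is to make two distinct optima indistinguishable), but for stationarity with a nowhere-vanishing honest gradient the lower bound is immediate. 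Second, as you correctly observe, your construction actually delivers $\varepsilon\ge fG^2/(n-(2+B^2)f)$, which is four times the constant claimed in the lemma, so the stated $\tfrac14$ bound follows a fortiori.
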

\vspace{-2mm}

For a given distributed learning problem, the minimum fraction of adversarial workers that renders any distributed learning algorithm ineffective is called the {\em breakdown point}~\citep{guerraoui2023byzantine, allouah2023robust}. Thus, according to Lemma~\ref{lem:lower_bound}, there exists a distributed learning problem satisfying $(G, B)$-gradient dissimilarity whose breakdown point is given by $1/(2+B^2)$.



{\bf Robust distributed gradient descent.} To tolerate adversarial workers, we replace the averaging operation in the classic distributed gradient descent (DGD) method with a robust aggregation, e.g., coordinate-wise trimmed mean (CWTM) and median (CWMed)~\citep{yin2018byzantine}, {geometric median} (GM)~\citep{chen2017distributed}, and {Multi-Krum} (MK)~\citep{krum}. This yields Robust-DGD, presented in Algorithm~\ref{algo}, where robust aggregation protects the learning from incorrect gradients sent by the adversarial workers.
The robustness of an aggregation method can be quantified by the following property of {$(f,\kappa)$-robustness}~\citep{allouah2023fixing}.

\begin{algorithm}[ht!]
   \caption{Robust Distributed Gradient Descent (Robust-DGD)}
   \label{algo}
\begin{algorithmic}
    \STATE {\bfseries Initialization:} \textcolor{purple}{\bf Server} chooses a model $\weight{1} \in \R^d$, a {learning rate} $\gamma \in \R^+$ and a robust aggregation method $\agg : \mathbb{R}^{n \times d} \to \mathbb{R}^{d}$. 
    \FOR{$t = 1$ to $T$}
    \STATE \textcolor{purple}{\bf Server} broadcasts $\weight{t}$ to all workers.
    \FOR{\textbf{\textup{each}} \textcolor{teal}{\textbf{honest worker} $\worker{}{i}$} \textbf{in parallel}}
    \STATE Compute local gradient $\gradient{i}{t} \coloneqq \nabla{ \loss_i (\weight{t})}$, and  send $\gradient{i}{t}$ to \textcolor{purple}{\bf Server}.
    \STATE  \textit{\color{darkgray}// An adversarial worker $\worker{}{j}$ can send an arbitrary vector in $\R^d$ for $\gradient{j}{t}$}
    \ENDFOR
    \STATE \textcolor{purple}{\bf Server} computes $R_t \coloneqq \agg\left(\gradient{1}{t}, \dots, \, \gradient{n}{t} \right)$.
    \STATE \textcolor{purple}{\bf Server} computes the updated model $\weight{t+1} \coloneqq \weight{t} - \gamma \, R_t $ .
    \ENDFOR
    \STATE {\bfseries Output:}  \textcolor{purple}{{\bf Server}} outputs $\hat{\weight{}}$ chosen uniformly at random from $\{\weight{1}, \ldots, \, \weight{T}\}$.
\end{algorithmic}
\end{algorithm}

\begin{definition}
\label{def:robustness}
\label{def_robustness}
     ${\mathbf{F}: \mathbb{R}^{n \times d} \to \mathbb{R}^{d}}$ is said to be {\em $(f, \kappa)$-robust} if there exists a robustness coefficient $\kappa \in \R$ such that for all $x_1, \ldots, x_n \in \mathbb{R}^{d}$ and any set $S \subseteq [n]$, $\card{S} = n-f$, the following holds:
    \begin{align*}
        \norm{\mathbf{F}(x_1,\ldots, x_n) - \overline{x}_S}^2 \leq \frac{\kappa}{\card{S}}\sum_{i \in S}\norm{x_i - \overline{x}_S}^2 \enspace, \quad \text{where } ~ \overline{x}_S = \frac{1}{\card{S}}\sum\limits_{i \in S}x_i .
    \end{align*}
\end{definition}
\vspace{-10pt}

$(f, \kappa)$-robustness
encompasses several robust aggregations, including the ones mentioned above and more (e.g., see~\cite{allouah2023fixing}). It has been shown that an aggregation method is $(f, \kappa)$-robust only if $\kappa \geq \frac{f}{n-2f}$~\citep{allouah2023fixing}. Importantly, the asymptotic error of Robust-DGD with an $(f, \kappa)$-robust aggregation, where $\kappa \in \mathcal{O}(f/(n - 2f))$, matches the lower bound (recalled in Lemma~\ref{lem:lower_bound})~\citep{allouah2023robust}. 
This testifies to the tightness of the $(f, \kappa)$-robustness property. 
Note that the aforementioned aggregation methods (CWTM, CWMed, GM and MK) attain optimal robustness
when composed with the pre-aggregation scheme: {nearest neighbor mixing} (NNM)~\citep{allouah2023fixing}.
Specifically, we recall the following result.

\begin{lemma}[Lemma~1 in~\cite{allouah2023fixing}]
\label{lemma:kappa_agg}
    For $\agg \in \left\{\mathbf{CWTM}, \mathbf{CWMed}, \mathbf{GM}, \mathbf{MK}\right\}$, the composition $\agg \circ \mathbf{NNM}$ is $(f, \kappa)$-robust with
        $\kappa \leq \frac{8f}{n-f} \left(1 + \left( 1 + \frac{f}{n-2f}\right)^2 \right)$.
\end{lemma}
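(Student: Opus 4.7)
The plan is to decompose the proof into two independent pieces and combine them via a standard composition argument. First, for each base aggregator $\agg \in \{\mathbf{CWTM}, \mathbf{CWMed}, \mathbf{GM}, \mathbf{MK}\}$, I would establish that $\agg$ alone is $(f, \kappa_0)$-robust with $\kappa_0$ of order $f/(n-2f)$ (possibly multiplied by an absolute constant). For CWTM this is direct from the fact that after trimming $f$ entries on each coordinate, at least $n-2f$ surviving entries are honest, and elementary bounds on the variance of sub-averages give the claim. For MK the analysis is similar but uses the score-based selection of $n-f$ vectors. For CWMed and GM the argument reduces to comparing the (possibly multivariate) median to the mean of the honest set via the Lipschitzness of the median as a minimizer of a sum of distances, again producing $\kappa_0 = O(f/(n-2f))$.

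The second and more substantial piece is the mixing analysis for $\mathbf{NNM}$. For each $i$, let $y_i$ denote the average of the $n-f$ inputs nearest to $x_i$ (including $x_i$). The key claim is that for every $S \subseteq [n]$ with $\card{S} = n-f$ (think: the honest set),
\begin{align*}
    \frac{1}{\card{S}} \sum_{i \in S} \norm{y_i - \bar{x}_S}^2 \;\leq\; C \cdot \frac{f}{n-f} \cdot \frac{1}{\card{S}} \sum_{i \in S} \norm{x_i - \bar{x}_S}^2,
\end{align*}
for a small absolute constant $C$. To prove this, fix $i \in S$ and note that among the $n-f$ nearest neighbors of $x_i$, at least $n-2f$ belong to $S$. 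For any adversarial neighbor $j$, the nearest-neighbor rule forces $\norm{x_j - x_i} \le \norm{x_k - x_i}$ for some honest $k$ not chosen as a neighbor, so by the triangle inequality $\norm{x_j - \bar{x}_S}$ is bounded by a constant times the average honest deviation. Convexity of the squared norm then lets me push the average inside, producing the $f/(n-f)$ factor from the fraction of possibly adversarial neighbors.

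Finally, I would combine the pieces. Applying $(f, \kappa_0)$-robustness of $\agg$ to $y_1,\ldots,y_n$ yields $\norm{\agg(y_1,\ldots,y_n) - \bar{y}_S}^2 \leq \kappa_0 \cdot \frac{1}{\card{S}}\sum_{i\in S}\norm{y_i - \bar{y}_S}^2$. Using $\norm{y_i - \bar{y}_S}^2 \leq 2\norm{y_i - \bar{x}_S}^2 + 2\norm{\bar{y}_S - \bar{x}_S}^2$, together with the fact that $\norm{\bar{y}_S - \bar{x}_S}^2$ is itself controlled by the mixing inequality (since $\bar{y}_S$ is an average over $S$), everything collapses to the desired form $\kappa \cdot \frac{1}{\card{S}}\sum_{i\in S}\norm{x_i - \bar{x}_S}^2$. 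Tracking constants, the $\kappa_0 \le C_1 \, f/(n-2f)$ factor from the base aggregator multiplies the mixing factor $C_2\, f/(n-f)$, and the triangle-inequality step contributes the $(1+f/(n-2f))^2$ term, delivering the stated bound $\tfrac{8f}{n-f}\bigl(1 + (1+f/(n-2f))^2\bigr)$. The main obstacle I anticipate is the bookkeeping in the mixing lemma: controlling the contribution from adversarial neighbors without losing an extra factor of $n/(n-f)$ is what ultimately forces the $(1+f/(n-2f))^2$ inside the bound rather than a cleaner expression, and getting a clean constant of $8$ (rather than a larger number) requires carefully pairing each adversarial neighbor with a specific excluded honest vector rather than using a loose uniform bound.
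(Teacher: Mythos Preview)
This lemma is not proved in the present paper at all; it is quoted verbatim as Lemma~1 of \cite{allouah2023fixing} and used as a black box. So there is no ``paper's own proof'' to compare your proposal against.

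That said, two points in your sketch do not match how the result is actually obtained in \cite{allouah2023fixing}, and would not reproduce the stated bound. First, your premise that each base aggregator individually satisfies $\kappa_0 \in O\!\bigl(f/(n-2f)\bigr)$ is false for $\mathbf{CWMed}$ and $\mathbf{GM}$ (and not with the right constant for $\mathbf{MK}$): without NNM these rules only achieve $\kappa_0$ of constant order, specifically $\kappa_0 \le \bigl(1 + f/(n-2f)\bigr)^2$. The paper itself hints at this in the footnote noting that CWTM alone is already optimal whereas the others need NNM. Second, and relatedly, your composition is multiplicative ($\kappa \approx \kappa_0 \times C\,f/(n-f)$), which with your claimed $\kappa_0$ would yield a bound quadratic in $f/n$ rather than the linear bound in the lemma. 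The actual composition result from \cite{allouah2023fixing} is of the additive form $\kappa \le \tfrac{8f}{n-f}\,(1 + \kappa_0)$: NNM drives the post-mixing honest variance down by the factor $f/(n-f)$, and this small variance is what the base aggregator's (possibly large, $O(1)$) $\kappa_0$ then multiplies. Plugging $\kappa_0 \le (1 + f/(n-2f))^2$ into $\tfrac{8f}{n-f}(1+\kappa_0)$ is exactly what gives the displayed expression. Your mixing inequality and the pairing idea for adversarial neighbors are on the right track, but the way they feed into the final bound is structurally different from what you wrote.
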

\vspace{-3mm}
Thus, if $n \geq (2 + \delta) f$ for $\delta > 0$, then $\agg \circ \mathbf{NNM}$ is $(f, \kappa)$-robust with $\kappa \leq \frac{16 f}{n-f} \left( \frac{\delta + 1}{\delta}\right)^2  \in \mathcal{O}\left( \frac{f}{n-2f}\right)$. 

{\bf Convergence of Robust-DGD.} Lastly, we recall the convergence result for Robust-DGD with an $(f, \kappa)$-robust aggregation $\agg$. We let $\loss_{\H}^*$ denote the minimum value of $\loss_{\H}(\weight{})$.

\begin{lemma}[Theorem~2 in~\cite{allouah2023robust}]
\label{lem:upper_bound}
    Consider Algorithm~\ref{algo} with $\gamma \leq 1/L$. Let $\Delta_o \in \R^{+}$ such that $\loss_{\H}(\weight{1}) - \loss_{\H}^* \leq \Delta_o$. If $\agg$ is $(f, \kappa)$-robust with $\kappa B^2 < 1$, then
    \vspace{-2mm}
    \begin{align*}
       \frac{1}{T} \sum_{t = 1}^T \norm{\nabla \loss_{\H} (\weight{t})}^2 \leq \frac{2 \Delta_o}{(1 - \kappa B^2) \gamma T} + \frac{\kappa G^2}{1 - \kappa B^2}.
    \end{align*}
\end{lemma}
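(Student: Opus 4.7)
The plan is to combine the standard $L$-smoothness descent lemma with the $(f,\kappa)$-robustness guarantee of $\agg$ and the $(G,B)$-gradient-dissimilarity of the honest losses. Since all honest workers compute true gradients at $\theta^{(t)}$, we have $\bar g_{\mathcal H} := \frac{1}{|\mathcal H|}\sum_{i\in\mathcal H} g_i^{(t)} = \nabla \loss_{\mathcal H}(\theta^{(t)})$, so the aggregate $R_t$ can be compared directly against $\nabla \loss_{\mathcal H}(\theta^{(t)})$ via Definition~\ref{def:robustness}.

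First I would invoke $L$-smoothness of $\loss_{\mathcal H}$ (which follows from $L$-smoothness of each $\loss_i$) together with the update $\theta^{(t+1)} = \theta^{(t)} - \gamma R_t$ to get
\begin{align*}
\loss_{\mathcal H}(\theta^{(t+1)}) \le \loss_{\mathcal H}(\theta^{(t)}) - \gamma \iprod{\nabla \loss_{\mathcal H}(\theta^{(t)})}{R_t} + \tfrac{L\gamma^2}{2}\norm{R_t}^2 .
\end{align*}
Then I would apply the polarization identity $-\iprod{a}{b} = \tfrac{1}{2}\norm{a-b}^2 - \tfrac{1}{2}\norm{a}^2 - \tfrac{1}{2}\norm{b}^2$ with $a = \nabla \loss_{\mathcal H}(\theta^{(t)})$ and $b = R_t$. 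Using $\gamma \le 1/L$ to absorb the $\tfrac{L\gamma^2}{2}\norm{R_t}^2$ term into the resulting $-\tfrac{\gamma}{2}\norm{R_t}^2$, the cross term disappears and I obtain the clean one-step descent
\begin{align*}
\loss_{\mathcal H}(\theta^{(t+1)}) - \loss_{\mathcal H}(\theta^{(t)}) \le - \tfrac{\gamma}{2} \norm{\nabla \loss_{\mathcal H}(\theta^{(t)})}^2 + \tfrac{\gamma}{2} \norm{R_t - \nabla \loss_{\mathcal H}(\theta^{(t)})}^2 .
\end{align*}

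Next I would bound the aggregation error. By $(f,\kappa)$-robustness applied with $S=\mathcal H$ and the identification $\overline{x}_S = \nabla \loss_{\mathcal H}(\theta^{(t)})$,
\begin{align*}
\norm{R_t - \nabla \loss_{\mathcal H}(\theta^{(t)})}^2 \le \tfrac{\kappa}{|\mathcal H|}\sum_{i\in\mathcal H} \norm{g_i^{(t)} - \nabla \loss_{\mathcal H}(\theta^{(t)})}^2 ,
\end{align*}
and the right-hand side is in turn at most $\kappa\bigl(G^2 + B^2 \norm{\nabla \loss_{\mathcal H}(\theta^{(t)})}^2\bigr)$ by Definition~\ref{def:dissimilarity}. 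Substituting back into the descent inequality gives
\begin{align*}
\loss_{\mathcal H}(\theta^{(t+1)}) - \loss_{\mathcal H}(\theta^{(t)}) \le - \tfrac{\gamma(1-\kappa B^2)}{2}\norm{\nabla \loss_{\mathcal H}(\theta^{(t)})}^2 + \tfrac{\gamma \kappa G^2}{2} .
\end{align*}
The assumption $\kappa B^2 < 1$ is exactly what keeps the coefficient of $\norm{\nabla \loss_{\mathcal H}(\theta^{(t)})}^2$ strictly positive.

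Finally I would telescope from $t=1$ to $T$, use $\loss_{\mathcal H}(\theta^{(1)}) - \loss_{\mathcal H}(\theta^{(T+1)}) \le \loss_{\mathcal H}(\theta^{(1)}) - \loss_{\mathcal H}^\ast \le \Delta_o$, and divide by $T\gamma(1-\kappa B^2)/2$ to recover the stated bound. The only non-routine step is the careful pairing of $\gamma\le 1/L$ with the polarization identity so that the $\norm{R_t}^2$ term disappears cleanly; everything else is direct bookkeeping. No extra assumption on the adversarial gradients is needed because $(f,\kappa)$-robustness already accounts for worst-case adversarial behavior through the set $S = \mathcal H$.
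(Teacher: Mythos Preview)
Your proof is correct. The paper does not actually prove Lemma~\ref{lem:upper_bound} itself---it is quoted as Theorem~2 of~\cite{allouah2023robust}---but the identical mechanics (smoothness descent, the polarization identity $2\iprod{a}{b}=\norm{a}^2+\norm{b}^2-\norm{a-b}^2$, dropping the $\norm{R_t}^2$ term via $\gamma\le 1/L$, then $(f,\kappa)$-robustness followed by $(G,B)$-dissimilarity and telescoping) appear verbatim in the paper's proof of Lemma~\ref{lem:conv_robustDGD_ARC}, so your argument is exactly the intended one.
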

\vspace{-3mm}
Thus, when $\kappa \in \mathcal{O}\left(f/(n-2f)\right)$ (and smaller than $1/B^2$), Robust-DGD is optimal, i.e., its error matches the lower bound recalled in Lemma~\ref{lem:lower_bound}, as soon as $T \geq \Delta_o/\gamma \kappa G^2$.
For $f = 0$, Lemma~\ref{lem:upper_bound} recovers the convergence of DGD, provided that $\kappa$ is tight, i.e., $\kappa \in \mathcal{O}(\frac{f}{n-2f})$.
\section{Adaptive Robust Clipping (ARC) and its Properties}
\label{sec_arc}
\vspace{-10pt}
In this section, we first present a preliminary observation on the fragility of {\em static} clipping, and then introduce {\em adaptive} robust clipping (i.e., ARC) along with its robustness guarantees.

For a clipping parameter $C \in \mathbb{R}^+$ and a vector $x \in\mathbb{R}^d$, we denote $\mathrm{clip}_C(x) \coloneqq \min\left(1, \frac{C}{\norm{x}}\right)x$.
For a set of $n$ vectors $x_1, \dots, x_n \in\mathbb{R}^d$, we denote $\clip_C(x_1, \dots, x_n) \coloneqq (\mathrm{clip}_C(x_1), \dots, \mathrm{clip}_C(x_n)) \enspace.$

Let $\agg$ be an $(f, \kappa)$-robust aggregation. Given a set of $n$ vectors $x_1, \ldots, x_n$, let $\agg \circ \clip_C(x_1, \dots, x_n) \coloneqq \agg \left ( \clip_C(x_1, \dots, x_n) \right )$. We make the following observation.
\begin{restatable}{lemma}{impossibility}\label{lem:impossibility}
   For any fixed $C \in \R^+$ and $\kappa^\prime \geq 0$, ${\agg \circ \clip_C}$ is not $(f, \kappa^\prime)$-robust.
\end{restatable}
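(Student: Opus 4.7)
The plan is to exhibit, for any fixed $C>0$ and any $\kappa' \geq 0$, a single input configuration on which the $(f,\kappa')$-robustness inequality of Definition~\ref{def:robustness} fails for $\agg \circ \clip_C$. The construction exploits the fact that clipping shrinks vectors whose norm exceeds $C$, but it does so even when those vectors are all identical; in that case the sample variance on the right-hand side of Definition~\ref{def:robustness} is zero while the left-hand side is forced to be strictly positive.

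Concretely, I would fix an arbitrary $v \in \R^d$ with $\|v\| > C$ and set $x_1 = \cdots = x_n = v$. Let $S \subseteq [n]$ be any subset with $\card{S} = n-f$. Then $\overline{x}_S = v$ and $\sum_{i \in S}\norm{x_i - \overline{x}_S}^2 = 0$, so the right-hand side of the $(f,\kappa')$-robustness inequality equals $0$ regardless of $\kappa'$. It therefore suffices to show that the left-hand side, $\|\agg \circ \clip_C(v,\dots,v) - v\|^2$, is strictly positive.

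The one non-trivial step is to argue that $\agg$ preserves constants, i.e.\ $\agg(w,\dots,w) = w$ for any $w \in \R^d$. But this is a direct consequence of the hypothesis that $\agg$ is $(f,\kappa)$-robust: applying Definition~\ref{def:robustness} to the inputs $(w,\dots,w)$ (with any $S$ of size $n-f$) gives $\|\agg(w,\dots,w) - w\|^2 \leq 0$, hence equality. Applying this to $w = \mathrm{clip}_C(v) = \tfrac{C}{\|v\|}v$ yields
\begin{equation*}
\agg \circ \clip_C(v,\dots,v) \;=\; \agg\!\left(\tfrac{C}{\|v\|}v,\dots,\tfrac{C}{\|v\|}v\right) \;=\; \tfrac{C}{\|v\|}v,
\end{equation*}
and therefore $\|\agg \circ \clip_C(x_1,\dots,x_n) - \overline{x}_S\|^2 = \bigl(\|v\| - C\bigr)^2 > 0$, contradicting the $(f,\kappa')$-robustness inequality for every $\kappa'\geq 0$.

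\textbf{Main obstacle.} The only subtlety is justifying that $\agg$ fixes constants; once that is established via the $(f,\kappa)$-robustness hypothesis, the rest reduces to a one-line comparison. No additional structural assumption on $\agg$ (such as translation invariance or symmetry) is needed.
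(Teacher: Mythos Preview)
Your proof is correct and follows essentially the same approach as the paper's: both exploit that when the vectors in $S$ are identical the right-hand side of Definition~\ref{def:robustness} vanishes, and both use the $(f,\kappa)$-robustness of $\agg$ to deduce constant-preservation so that the clipped output is forced to differ from $\overline{x}_S$. The only cosmetic difference is that the paper sets just the $n-f$ vectors in $S$ equal (leaving the remaining $f$ arbitrary) while you set all $n$ vectors equal, which is a harmless simplification.
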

\vspace{-2mm}
Thus, if the clipping threshold is fixed, i.e., independent from the input vectors, pre-aggregation clipping does not preserve the robustness of the original aggregation.
This fragility of such {\em static} clipping is also apparent in practice, as shown by our experimental study in Section~\ref{sec_experiments} and Appendix~\ref{app_exp_static}.

{\bf Description of ARC.}
\layer{} is an adaptive clipping scheme that only makes use of the standard robustness parameter $f$, i.e., the tolerable number of adversarial workers.
\layer{} is \textit{adaptive} in the sense that the clipping threshold $C$ is not fixed but depends on the input vectors.
Specifically, \layer{} clips the largest $k = \floor{2(f/n)(n-f)}$ vectors using a clipping parameter given by the norm of the $(k+1)$-th largest input vector.
The overall scheme is formally presented in Algorithm~\ref{algo_clip}, and its computational complexity is $\mathcal{O}(nd + n\log(n))$ (see Appendix~\ref{app_arc_complexity}).
Thus, pre-composing more computationally expensive schemes like NNM and Multi-Krum, which have a complexity of $\mathcal{O}(dn^2)$, with \layer{} does not introduce a significant overhead (see Appendix~\ref{app_exp_runtime}).
For a detailed explanation of the underlying intuition behind our adaptive clipping strategy, \layer{}, we refer the reader to Appendix~\ref{app_arc_design}.

\begin{algorithm}[htb!]
   \caption{Adaptive Robust Clipping (\layer{})}
   \label{algo_clip}
\begin{algorithmic}
    \STATE {\bfseries Input:} $f$ and $x_1, \ldots, x_n \in \mathbb{R}^d$.
    \STATE Find a permutation $\pi: [n] \to [n]$ such that $\norm{x_{\pi(1)}} \geq \norm{x_{\pi(2)}} \ldots \geq \norm{x_{\pi(n)}}$.
    \STATE Set $k = \floor{2\frac{f}{n}(n-f)}$ and $C = \norm{x_{\pi(k+1)}}$.
    \STATE {\bfseries Output:} $\mathbf{Clip}_C(x_{1}, \ldots , x_{n})$ . 
\end{algorithmic}
\end{algorithm}

{\bf Robustness Guarantee.}
We present below the preservation of robustness guaranteed by \layer{}.
Let $\agg$ be an $(f, \kappa)$-robust aggregation method and $\agg \circ \arc(x_1, \dots, x_n) \coloneqq \agg(\arc(x_1, \dots, x_n)) $ .

\begin{restatable}{theorem}{acg}
\label{thm:f_kappa_adap}
    If $\agg$ is $(f, \kappa)$-robust, then $\agg \circ \arc$ is $\left(f,\kappa + \frac{2f}{n-2f}\right)$-robust.
\end{restatable}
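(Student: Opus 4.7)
The plan is to exploit two complementary features of ARC: pointwise clipping is a 1-Lipschitz map on $\mathbb{R}^d$ (so it contracts pairwise distances), and by construction ARC clips at most $k=\floor{2(f/n)(n-f)}$ coordinates with a threshold $C$ such that every unclipped coordinate has norm at most $C$. These two facts will respectively control the variance of the clipped tuple and the bias of its mean relative to the original.

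Fix $S\subseteq[n]$ with $|S|=n-f$, let $y=\arc(x_1,\dots,x_n)$, and write $V_x=\tfrac{1}{|S|}\sum_{i\in S}\|x_i-\bar x_S\|^2$, with $V_y$ defined analogously. First, I will apply $(f,\kappa)$-robustness of $\agg$ directly to $y$ with reference set $S$, yielding $\|\agg(y)-\bar y_S\|^2\leq\kappa V_y$. Next, I will show $V_y\leq V_x$: a short case analysis on whether $u,v$ lie inside or outside the ball of radius $C$ gives $\|\clip_C(u)-\clip_C(v)\|\leq\|u-v\|$, hence $\|y_i-y_j\|\leq\|x_i-x_j\|$ for all $i,j$; summing over pairs in $S$ and using the identity $\sum_{i,j\in S}\|v_i-v_j\|^2=2|S|\sum_{i\in S}\|v_i-\bar v_S\|^2$ transfers the contraction to variances.

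The core of the proof is the bias bound $\|\bar y_S-\bar x_S\|^2\leq\tfrac{2f}{n-2f}V_x$. Let $T=\{i\in S:\|x_i\|>C\}$ denote the clipped indices inside $S$; by ARC's construction $|T|\leq k$ and every $j\in S\setminus T$ satisfies $\|x_j\|\leq C$. Since $\bar y_S-\bar x_S=\tfrac{1}{|S|}\sum_{i\in T}(y_i-x_i)$ with $\|y_i-x_i\|=\|x_i\|-C$, Cauchy-Schwarz will give $\|\bar y_S-\bar x_S\|^2\leq\tfrac{|T|}{|S|^2}\sum_{i\in T}(\|x_i\|-C)^2$. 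The key pairing move is the reverse triangle inequality $\|x_i\|-C\leq\|x_i\|-\|x_j\|\leq\|x_i-x_j\|$ valid for every $i\in T$, $j\in S\setminus T$; squaring, averaging over $j\in S\setminus T$, and summing over $i\in T$ will bound $\sum_{i\in T}(\|x_i\|-C)^2$ by $\tfrac{1}{|S\setminus T|}\sum_{i\in T,\,j\in S\setminus T}\|x_i-x_j\|^2\leq\tfrac{|S|^2V_x}{|S\setminus T|}$, where the final inequality bounds the partial pairwise sum by the full one. Combining with $|T|\leq k\leq\tfrac{2f(n-f)}{n}$ and $|S\setminus T|\geq(n-f)-k\geq\tfrac{(n-f)(n-2f)}{n}$ then yields the announced constant.

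Finally, I will combine the three ingredients through the expansion $\|\agg(y)-\bar x_S\|^2=\|\agg(y)-\bar y_S\|^2+2\langle\agg(y)-\bar y_S,\bar y_S-\bar x_S\rangle+\|\bar y_S-\bar x_S\|^2$ and the variance decomposition $V_y+\|\bar y_S-\bar x_S\|^2=\tfrac{1}{|S|}\sum_{i\in S}\|y_i-\bar x_S\|^2$; applying Young's inequality on the cross term with a weight tuned to $\kappa$ will cancel the residual $\|\bar y_S-\bar x_S\|^2$ contribution after substitution, collapsing the estimate to $\left(\kappa+\tfrac{2f}{n-2f}\right)V_x$. I expect the main technical hurdle to be the bias bound in the third step: converting the raw gaps $\|x_i\|-C$ into a multiple of $V_x$ requires the pairing between the clipped indices $T$ and the unclipped indices $S\setminus T$, and the precise factor $\tfrac{2f}{n-2f}$ appears exactly because ARC sets $k=\floor{2(f/n)(n-f)}$.
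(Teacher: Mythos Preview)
Your Steps~1--3 are correct, and in fact your arguments there are cleaner than the paper's: the non-expansiveness of projection onto a ball immediately gives $V_y\le V_x$, and your pairing argument for the bias avoids the paper's case split on $\|\bar x_S\|$ versus $C$. The counting $|T|\le k\le \tfrac{2f(n-f)}{n}$ and $|S\setminus T|\ge \tfrac{(n-f)(n-2f)}{n}$ is exactly right, so you do obtain $\|\bar y_S-\bar x_S\|^2\le \tfrac{2f}{n-2f}\,V_x$.

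The gap is in Step~4. With only the \emph{decoupled} estimates $\|\agg(y)-\bar y_S\|^2\le \kappa V_y\le \kappa V_x$ and $\|\bar y_S-\bar x_S\|^2\le \mu V_x$ (where $\mu=\tfrac{2f}{n-2f}$), Young's inequality $\|a+b\|^2\le(1+c)\|a\|^2+(1+1/c)\|b\|^2$ gives at best
\[
\min_{c>0}\bigl[(1+c)\kappa+(1+1/c)\mu\bigr]V_x=(\sqrt{\kappa}+\sqrt{\mu})^2\,V_x,
\]
which equals $(\kappa+\mu+2\sqrt{\kappa\mu})V_x$, not $(\kappa+\mu)V_x$. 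No tuning of the Young's weight produces the ``cancellation'' you describe, and the variance decomposition $V_y+\|\bar y_S-\bar x_S\|^2=\tfrac{1}{|S|}\sum_{i\in S}\|y_i-\bar x_S\|^2$ does not help either, since the right-hand side is not bounded by $V_x$ when $\|\bar x_S\|>C$.

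What is missing is a \emph{coupled} bound. The paper proves (via a sharper variance lemma that retains the negative term coming from the clipped indices) that
\[
V_y+\frac{|S\setminus T|}{|T|}\,\|\bar y_S-\bar x_S\|^2\le V_x,
\]
with a case analysis $\|\bar x_S\|\le C$ versus $\|\bar x_S\|>C$. Plugging this into Young's inequality with $c=|T|/(\kappa\,|S\setminus T|)$ yields exactly $(\kappa+\tfrac{|T|}{|S\setminus T|})V_x\le(\kappa+\mu)V_x$. Your non-expansiveness argument for $V_y\le V_x$ discards precisely the negative term that makes this coupling work; to recover the stated constant you will need to track it, which forces an analysis closer to the paper's.
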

\vspace{-5pt}

Proofs for the results presented in this section are deferred to Appendix~\ref{app:proofs_1}.
Since $\kappa \geq \frac{f}{n-2f}$ (recalled in Section~\ref{sec_problem}), Theorem~\ref{thm:f_kappa_adap} implies that $\agg \circ \arc$ is $\left(f, 3 \kappa\right)$-robust, i.e., \layer{} preserves the robustness of the original aggregation scheme. Therefore, a convergence result for Robust-DGD with \layer{} follows verbatim from Lemma~\ref{lem:upper_bound}, replacing $\kappa$ with $3 \kappa$.
Despite this multiplicative factor of 3, we observe in Section~\ref{sec_experiments} that incorporating \layer{} consistently improves the empirical performance of classical aggregation methods.
A more detailed theoretical explanation is provided later in Section~\ref{sec:improvement_after_exp}.

\section{Empirical Evaluation}\label{sec_experiments}
\vspace{-10pt}
In this section, we delve into the practical performance of \layer{} when incorporated in Robust-D\textit{S}GD (Algorithm~\ref{algo_dsgd} in Appendix~\ref{app_exp_setup}), an order-optimal \textit{stochastic} variant of Robust-DGD~\citep{allouah2023fixing}.
We conduct experiments on standard image classification tasks, covering different adversarial scenarios.
We empirically test four aggregation methods when pre-composed with \layer{}.
We contrast these outcomes against the performance of these aggregations under static clipping and when no gradient clipping is used.
We show in Table~\ref{table_cifar_main}, and Figures~\ref{fig_static_clipping_main} and~\ref{fig_breakdown_mnist}, the metric of \textit{worst-case maximal accuracy}.
For each of the five Byzantine attacks executed (see Appendix~\ref{app_exp_setup_attacks}), we record the maximal accuracy achieved by Robust-DSGD during the learning procedure under that attack.
The worst-case maximal accuracy is thus the smallest maximal accuracy reached across the five attacks.
Furthermore, we use the Dirichlet~\citep{dirichlet} distribution of parameter $\alpha$ to simulate data heterogeneity.
The comprehensive experimental setup can be found in Appendix~\ref{app_exp_setup}.
In this section, we focus on presenting our results for the CWTM and GM aggregation methods applied to MNIST and CIFAR-10.
As benchmark, we also execute the standard DSGD algorithm, in the absence of adversarial workers (i.e., without attack and $f = 0$).
Results for Fashion-MNIST, as well as for CWMed and MK, are provided in Appendices~\ref{app_exp_static} and ~\ref{app_exp_results}.
All aggregation methods in our experiments are pre-composed with NNM~\citep{allouah2023fixing}, but we omit explicit mention of NNM in their names for simplicity.

\subsection{Brittleness of Static Clipping and Superiority of \layer{}}
\vspace{-2mm}

\begin{figure*}[ht!]
    \vspace{-2mm}
    \centering
    \begin{subfigure}[t]{0.5\textwidth}
        \centering
        \includegraphics[height=1.7in]{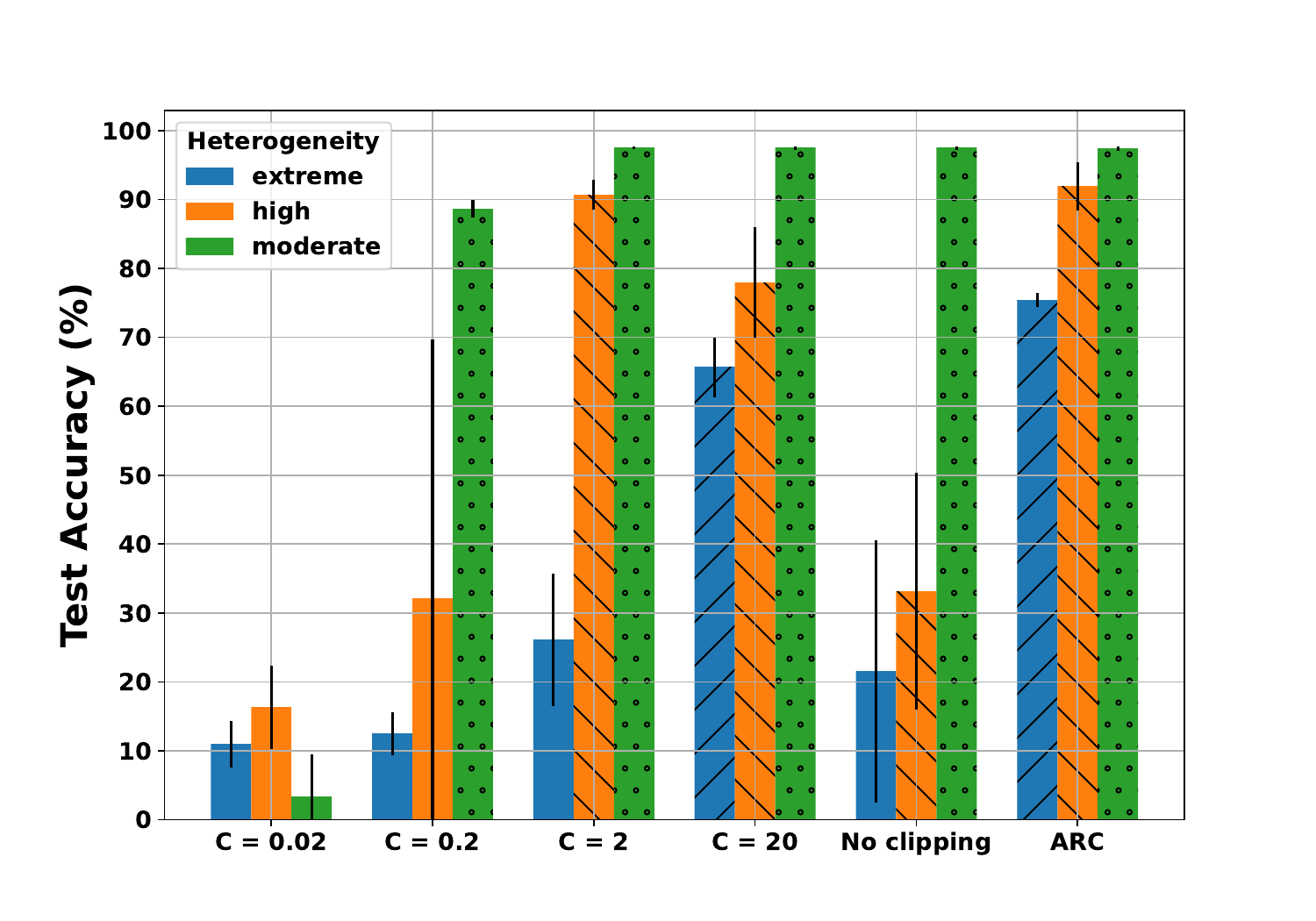}
        \vspace{-3mm}
        \caption{$f=3$}
        \label{static_f=3}
    \end{subfigure}%
    \begin{subfigure}[t]{0.5\textwidth}
        \centering
        \includegraphics[height=1.7in]{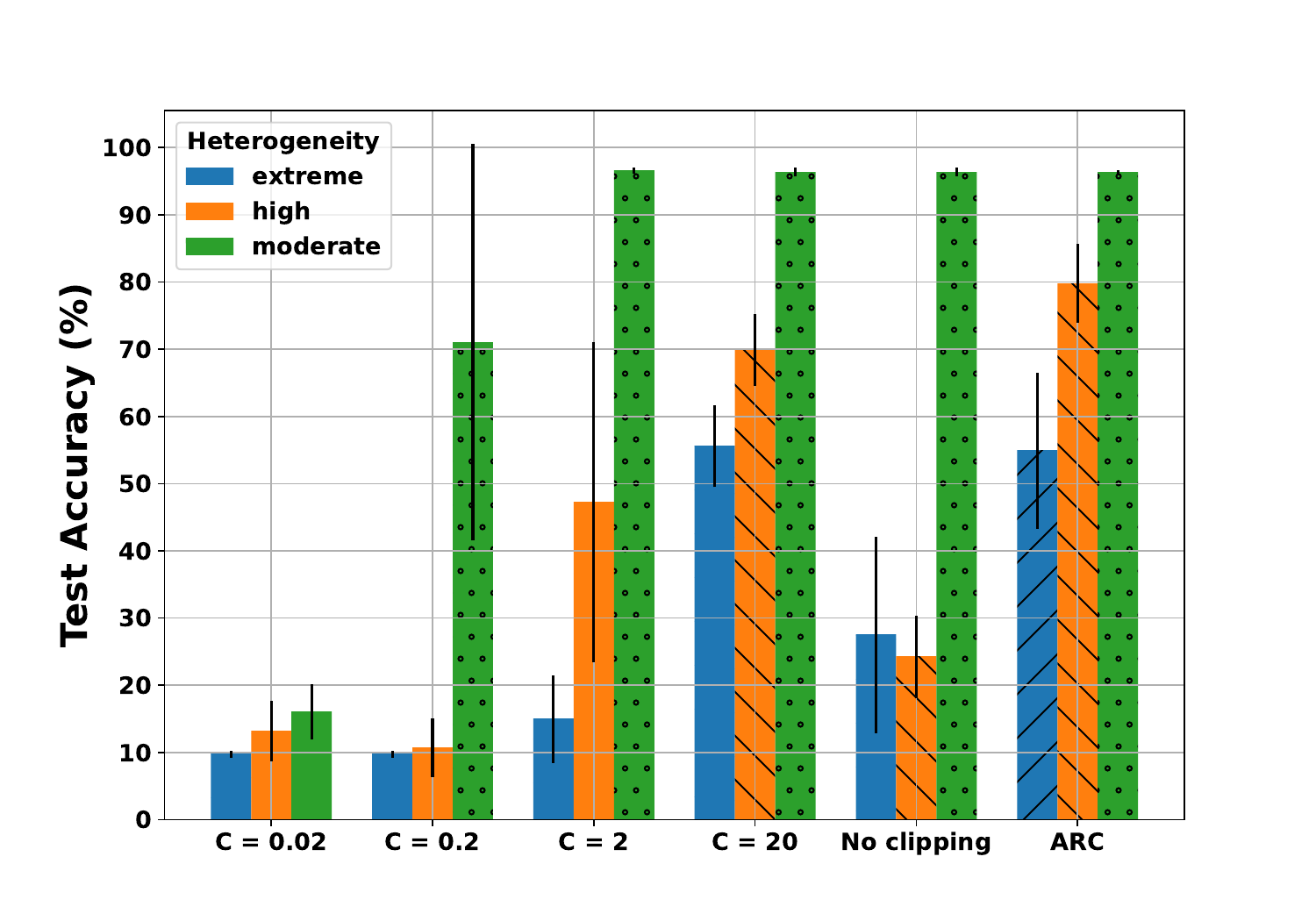}
        \vspace{-3mm}
        \caption{$f=4$}
        \label{static_f=4}
    \end{subfigure}
    \vspace{-4.5mm}
    \caption{Impact of the clipping strategy, under varying heterogeneity levels, on the worst-case maximal accuracy achieved by Robust-DSGD on MNIST with $n = 15$ workers.
    CWTM $\circ$ NNM is used as aggregation.
    DSGD reaches at least 98.5\% in accuracy in all heterogeneity regimes.}
    \label{fig_static_clipping_main}
    \vspace{-2mm}
\end{figure*}

We empirically demonstrate the brittleness of static clipping in robust distributed learning.
While an exhaustive empirical study can be found in Appendix~\ref{app_exp_static}, we present findings on MNIST using CWTM, considering three heterogeneity levels: \textit{moderate} (\(\alpha = 1\)), \textit{high} (\(\alpha = 0.1\)), and \textit{extreme}. We execute Robust-DSGD with \( n = 15 \) workers, among which \( f \in \{3, 4\} \) are adversarial, and test static clipping thresholds \( C \in \{0.02, 0.2, 2, 20\} \).
First, we observe that the optimal \( C \) depends heavily on data heterogeneity. As seen in Figure~\ref{fig_static_clipping_main}, \( C = 2 \) is ideal in high heterogeneity, whereas \( C = 20 \) performs much better under extreme heterogeneity. This dependency requires fine-tuning, which is impractical in distributed settings.
Moreover, static clipping’s effectiveness varies significantly across Byzantine attacks. While \( C = 2 \) performs well under SF~\citep{allen2020byzantine}, it fails under LF, causing training collapse (see Figure~\ref{fig_plots_mnist_comparison} in Appendix~\ref{app_exp_static}). This unpredictability makes static clipping unreliable in practice.
The number of adversarial workers further affects static clipping’s performance. In Figure~\ref{fig_static_clipping_main}, \( C = 2 \) is optimal for \( f = 3 \) under high heterogeneity, yet its accuracy drops below 50\% when \( f = 4 \), making it ineffective.
Unlike static clipping, \layer{} consistently matches or outperforms the best static threshold across all configurations, as shown in Figure~\ref{fig_static_clipping_main}.
These results underscore the superiority of \layer{} over static clipping, eliminating the need for manual tuning and ensuring robustness regardless of unpredictable parameters like data heterogeneity and Byzantine attacks.

\subsection{Performance Gains of \layer{} Over Robust-DSGD}
To quantify the improvement induced by \layer{} over Robust-DSGD without clipping, we evaluate a distributed system with $n-f=10$ honest workers while varying the number of adversarial workers $f$. We compare the performance of Robust-DSGD with and without \layer{} on the MNIST dataset, highlighting the improvements achieved by our method.
Additionally, we extend our analysis to larger distributed systems with 30 honest workers and $f \in \{3, 6, 9\}$, as detailed in Appendix~\ref{app_exp_large_sys_mnist}.

\textbf{\layer{} boosts robustness in high heterogeneity.} Figure~\ref{fig_arc_vs_no_clip_varying_hetero} shows the performance of Robust-DSGD against the FOE attack when using \layer{} opposed to no clipping.
In low heterogeneity (i.e., $\alpha = 0.5$), the performances of \layer{} and no clipping are comparable.
When the heterogeneity increases ($\alpha = 0.1$), CWTM and GM significantly struggle to learn, while the same aggregations composed with \layer{} almost match the final accuracy of DSGD.
In extreme heterogeneity, the improvement induced by \layer{} is the most pronounced, enabling both aggregations to reach a final accuracy close to 90\%.
Contrastingly, the same aggregations without clipping stagnate at around 10\% throughout the training.
Interestingly, only $f = 1$ adversarial worker among $n = 11$ (i.e., less than 10\%) suffices to completely deteriorate the learning when no clipping is applied, highlighting the necessity of \layer{} in extreme heterogeneity.
These observations are also conveyed in Figure~\ref{fig_breakdown_mnist_f=1} which compares the worst-case maximal accuracies achieved by Robust-DSGD when $f=1$, with and without \layer{}, for varying levels of heterogeneity.
While CWTM and GM yield comparable accuracies with \layer{} in low heterogeneity ($\alpha \geq 0.5$), their performance significantly degrades when $\alpha$ drops below that threshold.
Indeed, when $\alpha = 0.1$, their accuracies drop below 65\%, while \layer{} enables the same aggregations to maintain their accuracy at just below 98\%.
In extreme heterogeneity, the performance of the aggregations without clipping completely deteriorates with accuracies close to 15\%.
Contrastingly, \layer{} efficiently mitigates the Byzantine attacks, resulting in accuracies above 85\% in the worst case for both aggregations.
Similar plots for $f \in \{3, 5, 7\}$ convey similar observations (see Appendix~\ref{app_exp_results_mnist}).

\begin{figure*}[ht!]
    \vspace{-2mm}
    \centering
    \begin{subfigure}[t]{0.33\textwidth}
        \centering
        \includegraphics[width=\linewidth]{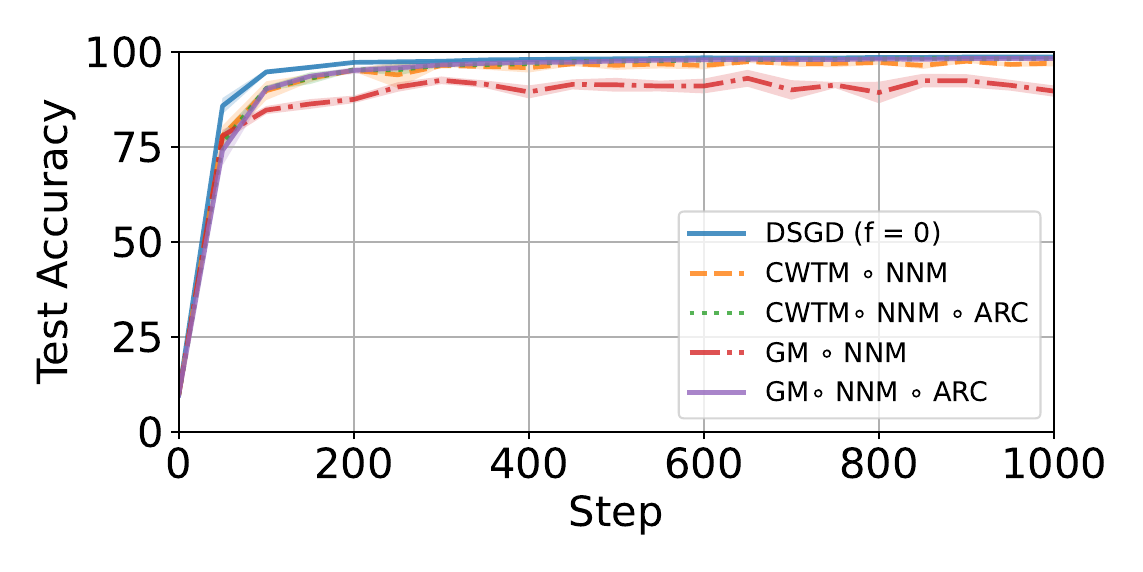}
        \vspace{-7mm}
        \caption{$\alpha = 0.5$}
    \end{subfigure}%
    \begin{subfigure}[t]{0.33\textwidth}
        \centering
        \includegraphics[width=\linewidth]{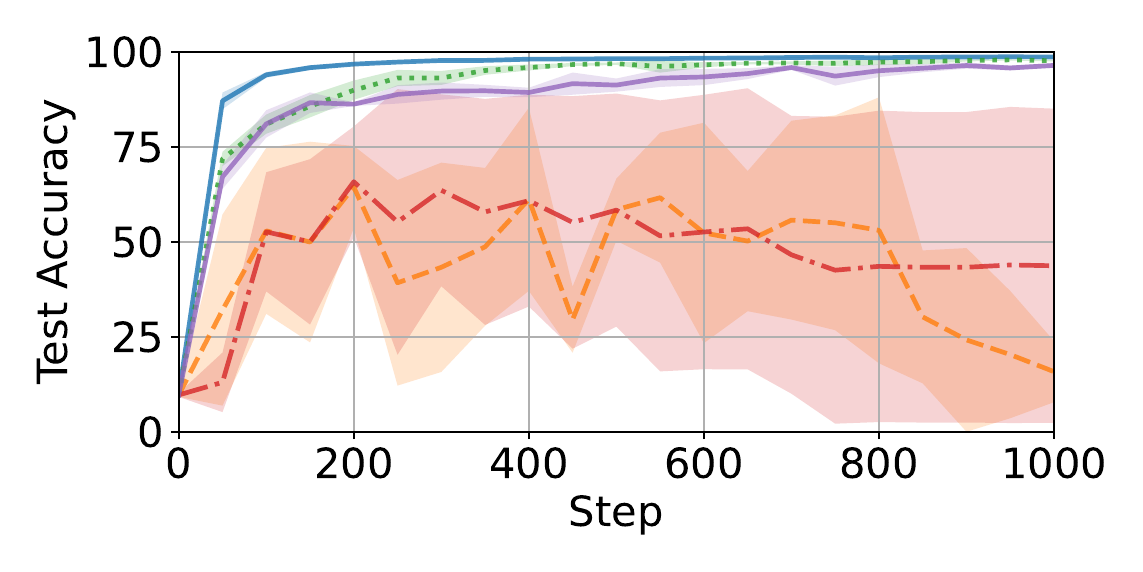}
        \vspace{-7mm}
        \caption{$\alpha = 0.1$}
        \label{fig_mnist_FOE_main_0.1}
    \end{subfigure}
    \begin{subfigure}[t]{0.33\textwidth}
        \centering
        \includegraphics[width=\linewidth]{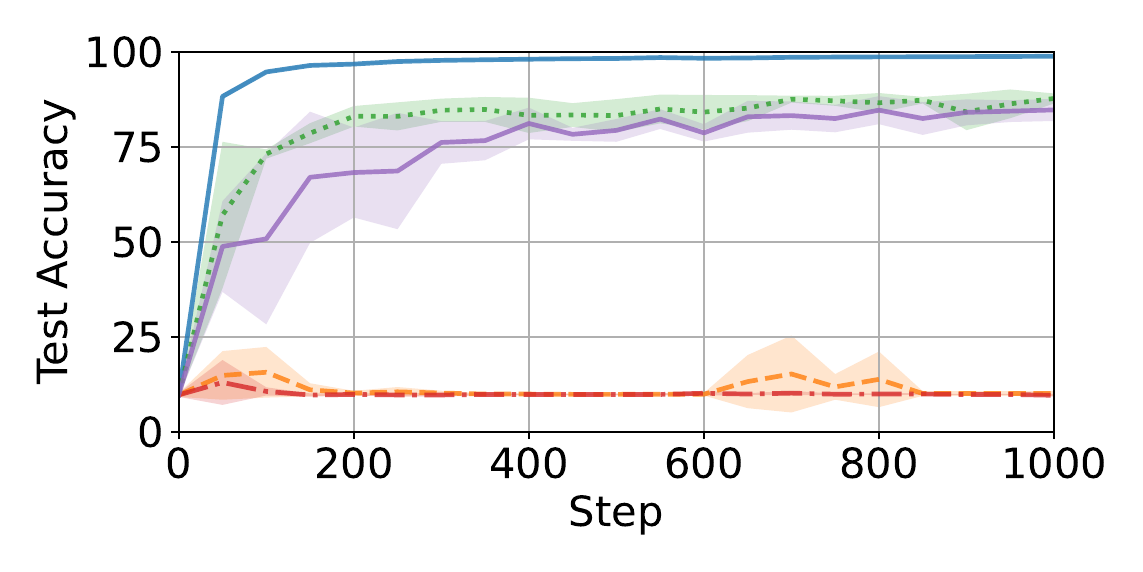}
        \vspace{-7mm}
        \caption{extreme heterogeneity}
        \label{fig_mnist_FOE_main_extreme}
    \end{subfigure}
    \vspace{-4.5mm}
    \caption{Performance of Robust-DSGD when using \layer{} and without clipping on MNIST.
    There are 10 honest workers and $f = 1$ adversarial worker executing FOE~\citep{xie2020fall}.}
    \label{fig_arc_vs_no_clip_varying_hetero}
    \vspace{-2mm}
\end{figure*}

\textbf{\layer{} increases the breakdown point in adverse settings.} 
Figure~\ref{fig_breakdown_mnist_intro} of Section~\ref{sec_intro} shows that for $f \in \{1, ..., 5\}$, Robust-DSGD with CWTM and GM completely fails to learn, consistently yielding worst-case maximal accuracies close to 15\%.
This suggests that $f = 1$ constitutes the breakdown point for these aggregations in extreme heterogeneity.
However, composing them with \layer{} increases the breakdown point of these aggregations to $f = 3$.
Indeed, for $f = 1$ and 2, \layer{} enables Robust-DSGD to achieve accuracies greater than 85\% in the worst case.
However, when $f \geq 3$, the performance degrades, although CWTM with \layer{} is still able to reach a satisfactory accuracy close to 70\%.
Moreover, even when the heterogeneity is not large ($\alpha=0.5$), \layer{} still produces a significant improvement when the fraction of adversarial workers increases in the system.
Indeed, in Figure~\ref{fig_breakdown_mnist_alpha=0.5} , the performances of \layer{} and no clipping are comparable for $f \leq 3$.
However, the improvement induced by \layer{} is much more visible when $f \geq 5$.
Particularly when $f = 7$, \layer{} enables CWTM and GM to reach accuracies greater than 80\% in the worst case, whereas the same aggregations yield accuracies below 40\% without clipping, indicating the raise of the breakdown point due to \layer{}.
Plots for $\alpha = 0.1$ and 1 convey the same observations in Appendix~\ref{app_exp_results_mnist}.

\begin{figure*}[ht!]
    \vspace{-2mm}
    \centering
    \begin{subfigure}[t]{0.5\textwidth}
        \centering
        \includegraphics[height=1.7in]{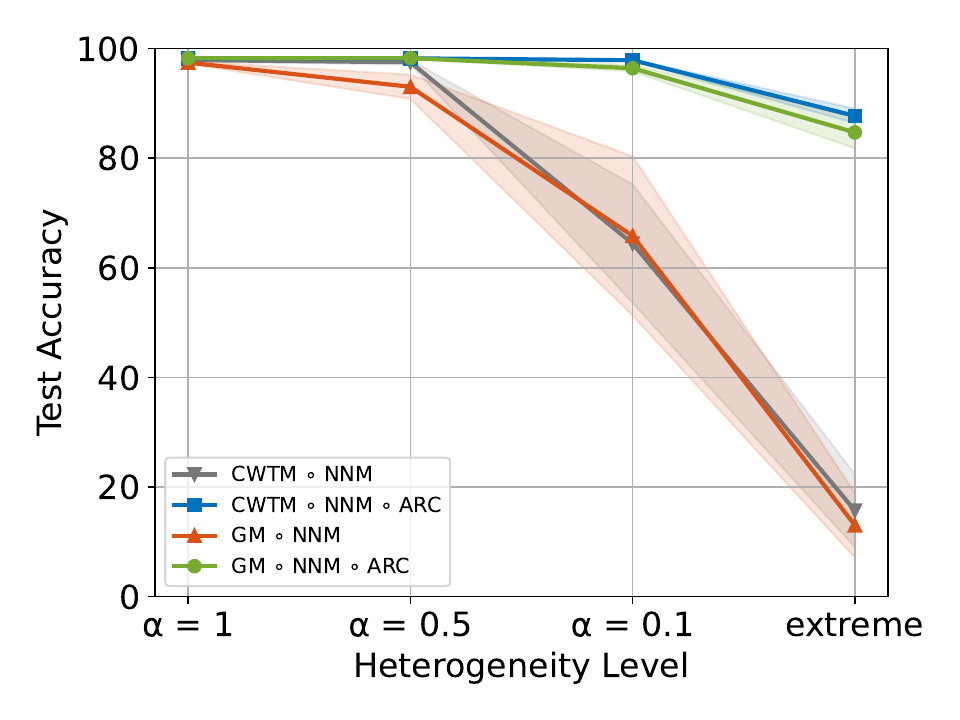}
        \vspace{-4mm}
        \caption{$f=1$}
        \label{fig_breakdown_mnist_f=1}
    \end{subfigure}%
    \begin{subfigure}[t]{0.5\textwidth}
        \centering
        \includegraphics[height=1.7in]{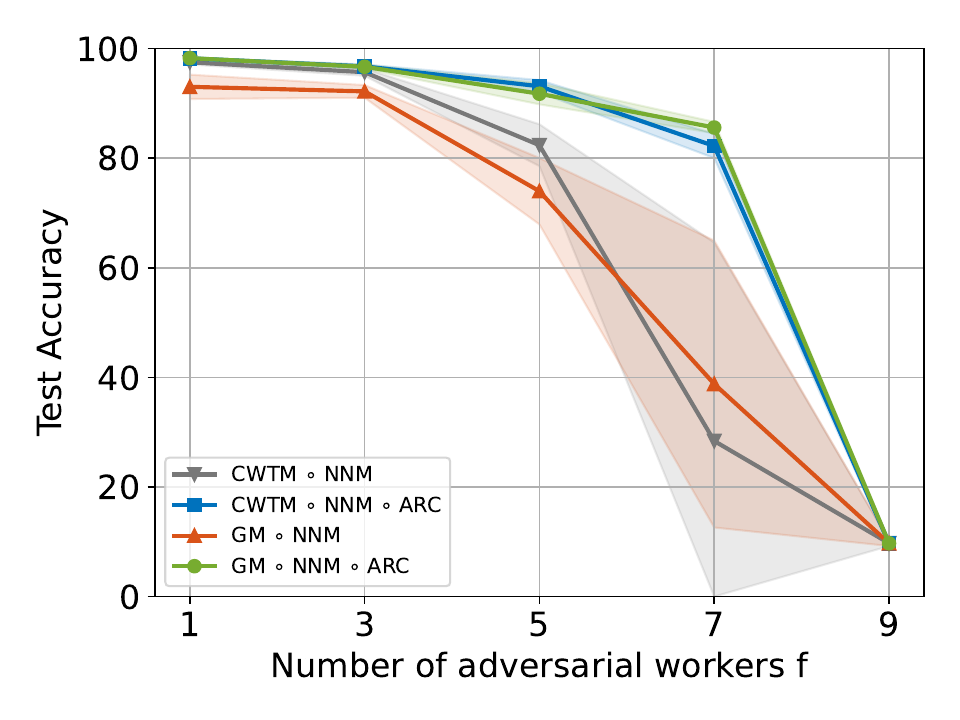}
        \vspace{-4mm}
        \caption{$\alpha = 0.5$}
        \label{fig_breakdown_mnist_alpha=0.5}
    \end{subfigure}
    \vspace{-5mm}
    \caption{\textit{Worst-case maximal accuracies} achieved by Robust-DSGD, with and without \layer{}, on heterogeneously-distributed MNIST with $10$ honest workers.
    \textit{Left:} $f = 1$ adversarial worker among $n=11$ for varying levels of heterogeneity.
    \textit{Right:} $\alpha = 0.5$ for varying $f$.}
    \label{fig_breakdown_mnist}
    \vspace{-3mm}
\end{figure*}

\textbf{Improved robustness on CIFAR-10.} We also conduct experiments on CIFAR-10 with $n = 17$ and $f = 1$.
Table~\ref{table_cifar_main} shows the worst-case maximal accuracies achieved by \layer{} and no clipping for CWTM and GM.
For $\alpha = 0.2$, \layer{} consistently outputs accuracies greater than 67\% for both aggregations, while no clipping yields lower accuracies (with a larger variance).
For instance, GM achieves 41.2\% on average, i.e., 26\% less than its counterpart with \layer{}.
In the more heterogeneous setting $\alpha = 0.075$, \layer{} enables all aggregations to reach worst-case maximal accuracies close to 60\% (with a small variance).
However, executing the same methods without clipping significantly deteriorates the performance of Robust-DSGD, with GM achieving 16\% in worst-case accuracy.
This can also be seen in Figure~\ref{fig_cifar_main_foe}, where FOE completely degrades the learning when \layer{} is not used.
We further extend this experiment to a larger distributed system with 33 honest workers, as detailed in Appendix~\ref{app_exp_large_sys_cifar}, where we observe the same trends as reported here.
\vspace{-10pt}

\begin{table*}[ht!]
\centering
\begin{tabular}{||c | c | c || c | c||}
 \multicolumn{1}{c}{} & \multicolumn{2}{c}{$\alpha = 0.2$} & \multicolumn{2}{c}{$\alpha = 0.075$}\\
\hline
  Aggregation & No Clipping & \layer{} & No Clipping & \layer{}\\ [0.5ex] 
 \hline\hline
 CWTM & 51.6 $\pm$ 5.1 & \textbf{67.8 $\pm$ 0.9} & 40.7 $\pm$ 0.5 & \textbf{60.5 $\pm$ 1.2}\\
 \hline
 GM & 41.2 $\pm$ 3.5 & \textbf{67.0 $\pm$ 1.0} & 16.0 $\pm$ 2.3 & \textbf{60.0 $\pm$ 2.0}\\
 \hline
\end{tabular}
\vspace{-2mm}
\caption{\textit{Worst-case maximal accuracies} (\%) achieved by Robust-DSGD on heterogeneously-distributed CIFAR-10 with \layer{} and without.
There is $f = 1$ adversarial worker among $n = 17$.
As a baseline, DSGD ($f=0$) reaches 76.5\% and 70\% when $\alpha = 0.2$ and 0.075, respectively.}
\label{table_cifar_main}
\end{table*}
\vspace{-4mm}
\section{Improved Guarantee of Robust-DGD with ARC}
\label{sec:learning_ARC}
\vspace{-10pt}
In Section~\ref{sec_experiments}, we empirically demonstrated that \layer{} outperforms SOTA aggregation methods without clipping, despite these methods being theoretically proven to be optimal.
This naturally raises the question: why does \layer{}, which shares similar convergence guarantees, significantly outperform plain Robust-DSGD? 
To address this, we now focus on the influence of model initialization on the robustness of aggregation methods, considering both theoretical results and empirical insights.

\subsection{Improvement of convergence guarantees}
\label{sec:improvement_after_exp}
\vspace{-5pt}
In this section, we characterize the improvement induced by \layer{} over the lower bound recalled in Lemma~\ref{lem:lower_bound}.
Specifically, we consider Algorithm~\ref{algo} with aggregation 
$\agg \circ \arc$, i.e., in each learning step $t$, $R_t \coloneqq \agg \circ \arc \left(\gradient{1}{t}, \dots, \, \gradient{n}{t} \right)$.
We establish in Lemma~\ref{lem:bounded_output} a key property of \layer{}, crucial to derive the improved stationarity error of Robust-DGD with \layer{} in Theorem~\ref{thm:improv_ARC}.
\begin{restatable}[\textbf{Bounded Output}]{lemma}{boundedoutput}
\label{lem:bounded_output}
    Let $\agg$ be an $(f, \kappa)$-robust aggregation method. For any vectors $x_1, \dots, x_n \in \mathbb{R}^d$ and set $S\subset [n]$ such that $|S|=n-f$,
        $\norm{\agg \circ \arc(x_1, \dots, x_n)} \leq \max_{i\in S} \norm{x_i}$.
\end{restatable}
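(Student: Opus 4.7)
My plan is to split the argument into two essentially independent pieces: (i) control the clipping threshold $C$ that ARC chooses, and (ii) control the norm of the aggregate once all inputs have been clipped to radius $C$.

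For piece (i), I would first verify that the integer $k = \lfloor 2(f/n)(n-f) \rfloor$ satisfies $k \geq f$. From the standing assumption $n > 2f$ we get $2f(n-f)/n = 2f - 2f^2/n \geq 2f - f = f$, and since $f$ is an integer the floor remains at least $f$. Consequently the set $\{\pi(1), \dots, \pi(k+1)\}$ of the $k+1$ largest-norm indices has cardinality $k+1 \geq f+1 > f = |S^c|$, so it cannot be entirely contained in $S^c$. Pigeonhole then produces an index $\pi(j) \in S$ with $j \leq k+1$. Since norms are non-increasing along $\pi$, this yields
$$\max_{i \in S} \norm{x_i} \;\geq\; \norm{x_{\pi(j)}} \;\geq\; \norm{x_{\pi(k+1)}} \;=\; C.$$

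For piece (ii), after ARC every input $y_i$ satisfies $\norm{y_i} \leq C$, and I would argue that $\norm{\agg(y_1, \dots, y_n)} \leq C$ under this constraint. The cleanest route exploits the fact that the aggregation rules considered in the paper (GM, MK, and their NNM-composed variants of CWTM/CWMed) produce outputs that are convex combinations of the inputs, so by convexity of the Euclidean norm $\norm{\agg(y)} \leq \max_i \norm{y_i} \leq C$. Alternatively, one could attempt to run this step through Definition~\ref{def_robustness} directly: pick a reference set $S'$ of size $n-f$, bound $\norm{\overline{y}_{S'}} \leq C$ and $\sum_{i \in S'} \norm{y_i - \overline{y}_{S'}}^2 \leq |S'| C^2$, and combine with the triangle inequality.

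Chaining (i) and (ii) gives $\norm{\agg \circ \arc(x)} \leq C \leq \max_{i \in S} \norm{x_i}$, which is the desired conclusion. The main obstacle is piece (ii): extracting the sharp bound $\norm{\agg(y)} \leq C$ with constant exactly $1$ from a purely abstract $(f, \kappa)$-robustness assumption is delicate, since the naive triangle-inequality route via Definition~\ref{def_robustness} picks up an undesirable $(1+\sqrt{\kappa})$ factor. The proof therefore likely needs to invoke a structural property of the concrete aggregation rules used in the paper — namely, that their outputs lie in (or can be decomposed as) convex combinations of their inputs — rather than $(f, \kappa)$-robustness alone. Piece (i), by contrast, is a short pigeonhole argument that rests entirely on the chosen value of $k$ in Algorithm~\ref{algo_clip}.
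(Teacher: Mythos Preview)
Your piece (i) is correct and is exactly the paper's argument: from $n > 2f$ one gets $2(f/n)(n-f) > f$, hence $k+1 \geq f+1$, and pigeonhole forces at least one of the $k+1$ largest-norm indices into $S$, giving $C = \norm{x_{\pi(k+1)}} \leq \max_{i \in S}\norm{x_i}$.

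Your diagnosis of piece (ii) is also right: abstract $(f,\kappa)$-robustness yields only $\norm{\agg(y)} \leq (1+\sqrt{\kappa})\,C$, not the sharp bound $C$. But your proposed remedy via convex combinations has a gap. Coordinate-wise rules such as CWTM and CWMed (even pre-composed with NNM) do \emph{not} in general output convex combinations of their input vectors, since the trimmed or selected index set varies across coordinates; so the ``convex hull'' argument does not apply to them. More to the point, the lemma is stated for an arbitrary $(f,\kappa)$-robust $\agg$, so a proof restricted to four concrete rules would not establish it as written.

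The paper takes a different and cleaner route for (ii): it declares, as a standing assumption at the start of Appendix~\ref{app:improv_ARC}, that $\norm{\agg(y_1,\dots,y_n)} \leq \max_{i \in [n]} \norm{y_i}$, and argues in Appendix~\ref{app:wlog_asp} that this is without loss of generality. The justification is that one may replace any $(f,\kappa)$-robust $\agg$ by $\agg^{\dagger}(y) \coloneqq \mathrm{clip}_{\max_i \norm{y_i}}\bigl(\agg(y)\bigr)$; by the non-expansion property of the clip map, $\norm{\agg^{\dagger}(y) - \overline{y}_S} \leq \norm{\agg(y) - \overline{y}_S}$ for every $S$, so $\agg^{\dagger}$ remains $(f,\kappa)$-robust and satisfies the norm bound by construction. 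With that assumption in place, piece (ii) is immediate, and chaining with (i) gives the lemma.
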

\vspace{-5pt}
This result indicates that incorporating \layer{} bounds the norm of the aggregated output by the largest norm of the honest gradients, i.e., $\max_{i \in \mathcal{H}} \norm{x_i}$.
In Appendix~\ref{app:improv_ARC}, we provide the proof of Lemma~\ref{lem:bounded_output} and further demonstrate in Lemma~\ref{lem:counter_example} that this property is specific to \layer{}, i.e., classic $(f, \kappa)$-robust aggregation methods do not inherently exhibit this behavior.
This property enables us to obtain the following improvement on the asymptotic convergence guarantee of Robust-DGD with \layer{}.

In the following, we show that when the local gradients of the honest workers at the initial model $\weight{1}$ are sufficiently small,
then Robust-DGD with \layer{} overcomes the lower bound recalled in Lemma~\ref{lem:lower_bound}. 
We denote $\mathsf{BP} \coloneqq \frac{1}{2 + B^2}$, $\varepsilon_o \coloneqq \frac{1}{4} \cdot \frac{G^2 (f/n)}{1 - (2 + B^2) (f/n)}, \text{ and } \Psi(G, B, \rho)  \coloneqq 640 \left( 1 + \frac{1}{B^2} \right)^2 \left(1  + \frac{B^2 \rho^2}{G^2} \right)$,

where $\rho$ denotes a real value.
Recall from Lemma~\ref{lem:lower_bound} that $\mathsf{BP}$ and $\varepsilon_o$ are the breakdown point and the lower bound on the stationarity error when $f/n < \mathsf{BP}$, respectively, under $(G, B)$-gradient dissimilarity.
We obtain the following theorem
for Robust-DGD with \layer{}. Let $\Delta_o$ be a real value such that $\loss_{\H}(\weight{1}) - \loss_{\H}^* \leq \Delta_o$. The proof of Theorem~\ref{thm:improv_ARC} is deferred to Appendix~\ref{app:improv_ARC}.

\begin{restatable}{theorem}{improvARC}
    \label{thm:improv_ARC}
    Suppose $B > 0$ and there exists $\zeta \in \R^+$ such that $\max_{i \in \H} \norm{\nabla\loss_{i} (\weight{1})} \leq \zeta$. Let $\agg \in \left\{\mathbf{CWTM}, \mathbf{CWMed}, \mathbf{GM}, \mathbf{MK}\right\}$ $\circ \ \mathbf{NNM}$ , $\gamma = \min \left\{ \left(\frac{\Delta_o}{ \kappa G^2} \right) \frac{1}{T} , ~ \frac{1}{L} \right\}$ and $T \geq \frac{\Delta_o L}{ \kappa G^2}$. Consider an arbitrary real value $\xi_o \in (0, 1)$. Let $\rho \coloneqq \exp\left( \frac{(2 + B^2)\Delta_o}{(1 - \xi_o) G^2 } L \right) \zeta$. For any $\boldsymbol{\upsilon} \in (0, 1)$, if $\frac{f}{n} \coloneqq (1 - \xi) \mathsf{BP}$, where $0 < \xi \leq \min\left\{ \frac{\boldsymbol{\upsilon}}{\Psi(G, B, \rho) }, \xi_o \right\}$, then $\expect{ \norm{\nabla \loss_{\H} \left( \hat{\weight{}} \right) }^2} \leq \boldsymbol{\upsilon} ~ \varepsilon_o $.
\end{restatable}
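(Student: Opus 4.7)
The plan is to combine an inductive control of the honest gradient norms with a refined descent analysis, both driven by the Bounded Aggregation Output property (Lemma~\ref{lem:bounded_output}). The key observation I would exploit is that \layer{} caps the aggregator's norm by the largest honest gradient norm, so if this quantity stays bounded by a constant $\rho$ throughout training, I can plug $\norm{\nabla \loss_{\H}(\weight{t})} \leq \rho$ directly into the $(f, 3\kappa)$-robustness inequality of Theorem~\ref{thm:f_kappa_adap}. This replaces the otherwise troublesome $B^2 \norm{\nabla \loss_{\H}(\weight{t})}^2$ term by the constant $B^2 \rho^2$, decoupling the analysis from the standard obstruction $3\kappa B^2 < 1$ that is violated near the breakdown point and that is precisely why Lemma~\ref{lem:upper_bound} cannot improve on $\varepsilon_o$ in that regime.

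First I would set up the induction. Let $\Phi_t \coloneqq \max_{i \in \H} \norm{\nabla \loss_i(\weight{t})}$. Using $L$-smoothness of each $\loss_i$ together with Lemma~\ref{lem:bounded_output}, one obtains
\[
\Phi_{t+1} \leq \Phi_t + L \gamma \norm{R_t} \leq (1 + L\gamma)\, \Phi_t ,
\]
so $\Phi_t \leq e^{L \gamma T} \zeta$. The parameter choice $\gamma = \Delta_o / (\kappa G^2 T)$ (valid since $T \geq L\Delta_o/(\kappa G^2)$) yields $L\gamma T \leq L\Delta_o/(\kappa G^2)$; and since $\kappa \geq f/(n-2f) = (1-\xi)/(B^2+2\xi) \geq (1-\xi_o)/(2+B^2)$ under the hypothesis $\xi \leq \xi_o$, this quantity is at most $(2+B^2) L\Delta_o / ((1-\xi_o) G^2)$, which equals $\ln(\rho/\zeta)$ by the very definition of $\rho$. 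Hence $\Phi_t \leq \rho$, and a fortiori $\norm{\nabla \loss_{\H}(\weight{t})} \leq \rho$ and $\norm{R_t} \leq \rho$ for every $t$.

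Next I would refine the descent inequality. From $L$-smoothness of $\loss_{\H}$,
\[
\loss_{\H}(\weight{t+1}) - \loss_{\H}(\weight{t}) \leq -\gamma \norm{\nabla \loss_{\H}(\weight{t})}^2 - \gamma \langle \nabla \loss_{\H}(\weight{t}),\, R_t - \nabla \loss_{\H}(\weight{t}) \rangle + \tfrac{L \gamma^2}{2} \norm{R_t}^2 .
\]
The invariant from the previous step lets me replace $\norm{R_t}^2$ by $\rho^2$ in the smoothness term, and $(f, 3\kappa)$-robustness together with $\norm{\nabla \loss_{\H}(\weight{t})} \leq \rho$ yields the constant bound $\norm{R_t - \nabla \loss_{\H}(\weight{t})}^2 \leq 3\kappa (G^2 + B^2 \rho^2)$. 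Applying Cauchy--Schwarz and Young's inequality to the cross term produces
\[
\loss_{\H}(\weight{t+1}) \leq \loss_{\H}(\weight{t}) - \tfrac{\gamma}{2} \norm{\nabla \loss_{\H}(\weight{t})}^2 + \tfrac{3 \kappa \gamma (G^2 + B^2 \rho^2)}{2} + \tfrac{L \gamma^2 \rho^2}{2} .
\]
Telescoping over $t$, dividing by $\gamma T/2$, and using $\gamma T = \Delta_o/(\kappa G^2)$ and $L\gamma \leq 1$ gives $\expect{\norm{\nabla \loss_{\H}(\hat{\weight{}})}^2} \leq 5 \kappa G^2 + (3 \kappa B^2 + 1)\, \rho^2$, since $\hat{\weight{}}$ is drawn uniformly from $\{\weight{1}, \ldots, \weight{T}\}$.

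The last step is to compare this constant bound against $\boldsymbol{\upsilon}\, \varepsilon_o = \frac{\boldsymbol{\upsilon}\, G^2 (1-\xi)}{4 \xi (2+B^2)}$. Inserting the bound $\kappa \lesssim (1+1/B^2)^2$ coming from Lemma~\ref{lemma:kappa_agg} together with $f/(n-f)\leq 1/(1+B^2)$ and $f/(n-2f)\leq 1/B^2$, the stationarity bound reduces to $\lesssim (1+1/B^2)^2 (G^2 + B^2 \rho^2)$, so the target inequality becomes $\xi \cdot [\text{factor in } G, B, \rho] \leq \boldsymbol{\upsilon}$; the numerical constant $640$ in $\Psi(G, B, \rho) = 640 (1+1/B^2)^2 (1+B^2 \rho^2/G^2)$ is calibrated to absorb the various absolute constants and the factor $(2+B^2)/(1-\xi_o)$, so that $\xi \leq \boldsymbol{\upsilon}/\Psi(G,B,\rho)$ implies the desired bound. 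I expect the main obstacle to be the inductive invariant $\Phi_t \leq \rho$: the natural recursion grows exponentially in $t$, so the exponent in $\rho$, the restriction $\xi \leq \xi_o$, and the implicit floor $\kappa \geq (1-\xi_o)/(2+B^2)$ must be balanced precisely. Once this invariant is secured, the rest is a controlled variant of Lemma~\ref{lem:upper_bound} in which the problematic $3\kappa B^2 \norm{\nabla \loss_{\H}(\weight{t})}^2$ term is frozen to the constant $3\kappa B^2 \rho^2$, enabling the improvement across the full range $f/n < \mathsf{BP}$.
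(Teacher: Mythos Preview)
Your proposal follows the paper's proof essentially step for step: the inductive control $\Phi_{t+1}\leq (1+L\gamma)\Phi_t$ via Lemma~\ref{lem:bounded_output}, combined with the floor $\kappa\geq f/(n-2f)\geq (1-\xi_o)/(2+B^2)$, gives $\Phi_t\leq\rho$; this freezes the dissimilarity term at $G^2+B^2\rho^2$ inside the $(f,3\kappa)$-robustness bound, and the final comparison against $\varepsilon_o$ uses $\kappa\leq\frac{16(1-\xi)}{1+B^2}(1+1/B^2)^2$ from Lemma~\ref{lemma:kappa_agg}, exactly as in the paper.

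One minor point on the descent step: the paper uses the polarization identity $2\langle a,b\rangle=\|a\|^2+\|b\|^2-\|a-b\|^2$ rather than Young, which produces an extra $-\tfrac{\gamma}{2}(1-\gamma L)\|R_t\|^2\leq 0$ and kills the smoothness term outright, giving the clean bound $5\kappa(G^2+B^2\rho^2)$ that lands exactly in $\Psi\,\xi\,\varepsilon_o$. Your version leaves a residual $+\rho^2$ that is \emph{not} multiplied by $\kappa$ and therefore does not carry the $(1-\xi)$ factor needed to cancel against the $(1-\xi)$ in $\varepsilon_o$; it would overshoot the constant $640$ by a small factor. Also, the ``$(2+B^2)/(1-\xi_o)$'' you say must be absorbed does not actually appear in the final ratio---the paper's computation has the $(1-\xi)$ from $\kappa$ cancelling the one in $\varepsilon_o$, with no leftover $\xi_o$-dependence. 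Switching Young for polarization fixes both issues with no change to the rest of your argument.
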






Theorem~\ref{thm:improv_ARC} demonstrates that Robust-DGD with \layer{} improves over the lower bound $\varepsilon_o$ when the ratio $f/n$ is sufficiently close to the breakdown point ($\mathsf{BP}$), provided the local gradients at model initialization are bounded. This, in turn, leads to an improvement over the original learning guarantee of Robust-DGD (as stated in Lemma~\ref{lem:upper_bound}), which applies for arbitrary model initialization.

\textbf{Influence of model initialization and data heterogeneity.}
Note that the smaller the bound on the initial gradients (i.e., the better the model initialization), the greater the improvement induced by \layer{}. Specifically, a decrease in $\zeta$ leads to a reduction in $\rho$, which subsequently lowers $\Psi(G, B, \rho)$. As a result, for a fixed value of $\xi$ (i.e., for a fixed ratio of adversarial workers), the condition $\xi \leq \boldsymbol{\upsilon}/\Psi(G, B, \rho)$ is satisfied for a smaller $\boldsymbol{\upsilon}$, thereby yielding a lower stationarity error.
This theoretical deduction is empirically validated in Section~\ref{sec:model_initialization_impact}. A similar improvement occurs when increasing data heterogeneity, while keeping all other factors unchanged. Specifically, as $G$ increases, $\Psi(G, B, \rho)$ decreases, allowing for a smaller reduction factor $\boldsymbol{\upsilon}$ and thus a larger improvement in performance. This trend is also validated empirically in Figures~\ref{fig_arc_vs_no_clip_varying_hetero} and~\ref{fig_breakdown_mnist_f=1}.

\textbf{Influence of $\boldsymbol{f/n}$.}
Additionally, for a fixed model initialization, an increase of $f/n$ towards the $\mathsf{BP}$ (i.e., as $\xi \to 0$) allows the reduction factor $\boldsymbol{\upsilon}$ to become smaller, leading to a greater improvement, as evidenced in Figures~\ref{fig_breakdown_mnist_intro} and~\ref{fig_breakdown_mnist_alpha=0.5}. 
Indeed, we show in Theorem~\ref{thm:improv_ARC_expounded} (a more complete version of Theorem~\ref{thm:improv_ARC}) that \layer{} effectively increases the $\mathsf{BP}$ of ($f, \kappa$)-robust methods from $\frac{1}{2 + B^2}$ to $\frac{1}{2}$, provided the initial honest gradients are bounded in norm.
Lastly, we would like to recall that when the workers' gradients at model initialization are arbitrarily large, the convergence guarantee of Robust-DGD with \layer{} reduces to that of classic Robust-DGD (see Section~\ref{sec_arc}).

\paragraph{Practical scope of Theorem~\ref{thm:improv_ARC}.} 
Since $\xi = \frac{\mathsf{BP} - f/n}{\mathsf{BP}}$ and $\upsilon \geq \Psi(G, B, \rho) \xi$, it follows that the improvement factor $\upsilon$ is at least $\Psi(G, B, \rho) \frac{\mathsf{BP} - f/n}{\mathsf{BP}}$.
For \layer{} to induce an improvement, we require $\upsilon < 1$, which reduces to having $f > n\mathsf{BP} \left( 1 - \frac{1}{\Psi(G, B, \rho)} \right)$.
Since Theorem~\ref{thm:improv_ARC} assumes $f < n\mathsf{BP}$, the improvement occurs when $f \in \left(n\mathsf{BP} \left( 1 - \frac{1}{\Psi(G, B, \rho)} \right), n \mathsf{BP}\right)$.
The length of this interval, i.e., $n \frac{\mathsf{BP}}{\Psi(G, B, \rho)}$, depends on the size of the system $n$.
In small systems where $n < \Psi(G, B, \rho)$,
the length of the interval is smaller than 1 since $\mathsf{BP} < \nicefrac{1}{2}$.
Therefore, at most one value of $f$, i.e., the largest integer smaller than $n\mathsf{BP}$, can lie in the improvement interval.
Conversely, in large-scale systems where $n \in \Omega\left( \Psi(G, B, \rho) \log {\Psi(G, B, \rho)} \right)$, the improvement interval expands significantly and can be much larger than 1, thereby demonstrating \layer{}'s effectiveness in large-scale distributed learning.

\vspace{-3mm}
\subsection{Influence of model initialization on empirical robustness}
\label{sec:model_initialization_impact}
\vspace{-5pt}

\begin{wrapfigure}{r}{0.355\textwidth}
 \vspace{-5pt}
    \centering
    \includegraphics[width=\linewidth]{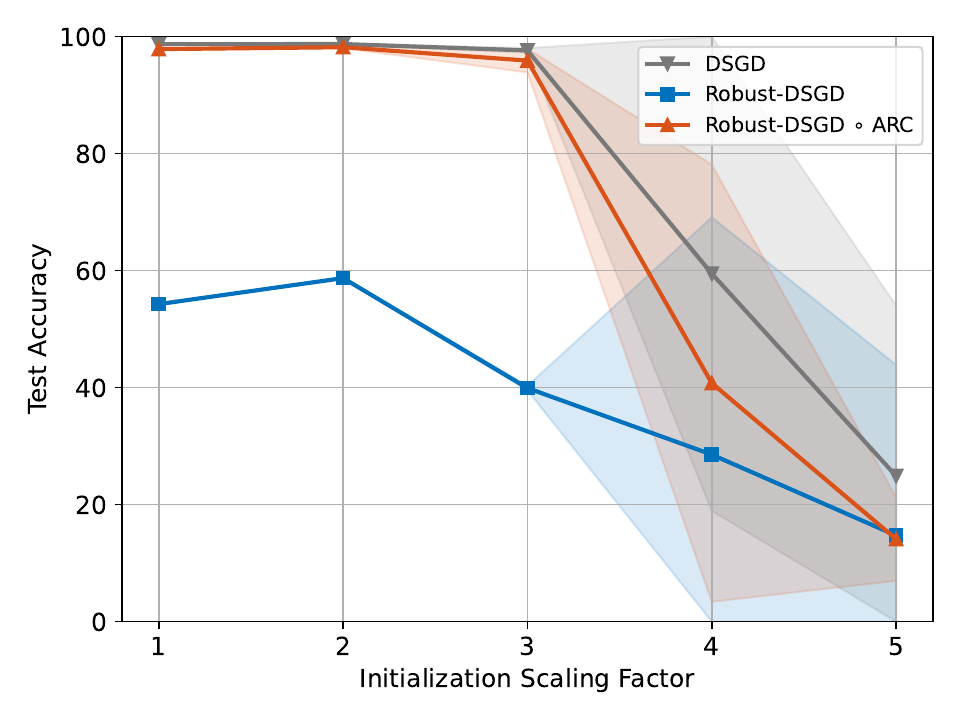}
    \vspace{-8mm}
    \caption{Worst-case maximal accuracies achieved by Robust-DSGD with CWTM, on MNIST ($\alpha = 0.1$), with 10 honest workers and 1 Byzantine worker. The x-axis represents worsening model initialization.}
\label{fig_initial_mnist}
\vspace{-5pt}
\end{wrapfigure}

Figures~\ref{fig_initial_mnist_intro} and~\ref{fig_initial_mnist} illustrate the effect of model initialization on the performance of Robust-DSGD with and without \layer{}, evaluated on MNIST with 10 honest workers.
We investigate two regimes of heterogeneity: extreme heterogeneity and $\alpha = 0.1$, and consider $f = 1$ adversarial worker.
The experiment proceeds as follows: we begin with the default model parameters initialized by PyTorch (i.e., the same ones used in Section~\ref{sec_experiments}), which represent a well-chosen set of initial parameters.
These parameters are then scaled multiplicatively by a factor $\mu$ where larger values of $\mu$ correspond to progressively worse initialization, and vary $\mu \in \{1, ..., 5\}$.
The results show that, under well-initialized conditions ($\mu = 1$), \layer{} significantly enhances the performance of Robust-DSGD, achieving a substantial improvement in worst-case maximal accuracy, particularly under extreme heterogeneity.
In this regime, \layer{} boosts accuracy by about 70\% compared to plain Robust-DSGD (Figure~\ref{fig_initial_mnist_intro}).
This ($\mu = 1$) corresponds to the initialization conditions of the empirical results presented in Section~\ref{sec_experiments}.
As $\mu$ increases (i.e., the initialization worsens), the performance of \layer{}-enhanced Robust-DSGD gradually declines, with noticeable degradation starting from $\mu = 4$. Nevertheless, even at this point, \layer{} still offers a performance advantage over Robust-DSGD without clipping. By $\mu = 5$, the performance of Robust-DSGD with \layer{} converges to that of plain Robust-DSGD, both achieving an accuracy of around 10\%.
Another key observation from Figures~\ref{fig_initial_mnist_intro} and~\ref{fig_initial_mnist}
is that the behavior of Robust-DSGD with \layer{} closely mirrors that of Byzantine-free DSGD. Both exhibit similarly poor performance when $\mu = 5$, and their accuracies are comparable when $\mu \leq 3$, especially when $\alpha = 0.1$. This suggests that \layer{} is particularly effective at exploiting good model initialization, similar to the performance of DSGD in the absence of Byzantine workers.
In contrast, plain Robust-DSGD struggles to fully leverage well-initialized models, as evidenced by its consistently lower accuracy (around 20\%) in Figure~\ref{fig_initial_mnist_intro}.
These findings highlight the important influence of model initialization on the robustness of aggregation methods, and empirically validate our theoretical findings in Section~\ref{sec:improvement_after_exp}.
\section{Related Work}
\label{sec:rel_work}
\vspace{-10pt}
Gradient clipping is a well-known technique for tackling exploding gradients in deep learning~\citep{Goodfellow-et-al-2016}. It has been extensively analyzed in the centralized setting~\citep{zhang2020gradient, zhang2020improved, koloskova_revisiting}, with applications to differential privacy~\citep{pichapati_ada_clip}. 
Moreover, we would like to note that gradient clipping has also been shown useful to obtain tighter convergence guarantees for stochastic gradient-descent methods in the case where the gradients have heavy-tailed noise~\citep{gorbunov2020stochastic, gorbunov2022clipped, danilova2023algorithms}. The considered proof techniques, however, do not apply directly to our adversarial distributed learning setting.
A similar study on 
the impact of clipping also exists for distributed settings~\citep{zhang2022understanding, khirirat2023clip21}.

In the context of robust distributed learning, prior work has proposed {\em iterative} clipping for robust aggregation~\citep{Karimireddy2021}. The clipping scheme, however, has not been shown to induce any improvement over other  SOTA robustness schemes. Recent work~\citep{malinovsky2023byzantine} has proposed pre-aggregation clipping using temporal gradient differences, in conjunction with variance reduction, to tackle partial worker participation when adversarial workers can form a majority. We, however, propose and study pre-aggregation clipping as a tool to improve the robustness obtained by a general class of aggregation rules. Other prior work~\citep{mhamdi2021distributed, farhadkhani2022byzantine, allouah2023fixing} that used static pre-aggregation clipping to enhance the empirical robustness of Robust-DGD, did not provide any formal explanation. 

While adaptive clipping schemes, similar to \layer{}, have been studied in the context of robust aggregation~\citep{gupta2020fault, liu2021approximate, diakonikolas2020outlier, abdalla2024covariance},  critical differences should be noted. The robustness guarantees in~\citet{gupta2020fault, liu2021approximate} only apply to strongly convex loss functions, under the specific {\em $2f$-redundancy} condition. We consider non-convex loss functions, as well as a generic heterogeneous setting of $(G, B)$-gradient dissimilarity. In contrast to the clipping scheme proposed in~\cite{diakonikolas2020outlier, abdalla2024covariance}, \layer{} only uses the $f$ parameter to tune the clipping threshold and does not rely on any additional a priori knowledge of the distribution of honest workers' gradients. Moreover, we prove a deterministic robustness property of ARC (see Theorem~\ref{thm:f_kappa_adap}), whereas~\cite{diakonikolas2020outlier, abdalla2024covariance} only provides probability guarantees assuming the non-adversarial inputs to be i.i.d.~with the distribution satisfying certain special properties.


Lastly, prior work~\citep{guerraoui2021differential,zhu22bridging,allouah2023privacy, choffrut2023towards, allouah2024robustnessefficiencyprivacypick} has considered the challenge of privacy preservation alongside robustness in distributed learning. In this context, the gradients are assumed to have bounded norms in order to control the sensitivity of the algorithm, as is common in differential privacy (DP)~\citep{abadi2016deep}. This is often enforced in practice through static gradient clipping. 
Our findings suggest that an adaptive approach, such as \layer{}, may provide a more reliable alternative. However, a rigorous analysis of \layer{} in the context of DP remains an important avenue for future research.
\vspace{-10pt}
\section{Conclusion \& Discussion}\label{sec_conclusion}
\vspace{-10pt}
We introduced Adaptive Robust Clipping (ARC), a pre-aggregation clipping scheme designed to harness the empirical benefits of gradient clipping, while preserving the worst-case optimal convergence guarantees of Robust-DGD.
Unlike existing adaptive clipping schemes, ARC does not require additional tuning since it only relies on the standard robustness parameter, i.e., the tolerable fraction of adversarial workers.
Through theoretical analysis, we explained ARC's ability to enhance the robustness of Robust-DGD, particularly when the model is well-initialized.
This phenomenon was validated through comprehensive experiments on standard image classification datasets.
In short, our experiments showed that ARC consistently boosts the performance of Robust-DGD, especially in scenarios with high heterogeneity and large fraction of adversarial workers.

\textbf{Future direction.} Our work also reveals a gap between theory and practice in Byzantine machine learning (ML).
While Robust-DGD and ARC-enhanced Robust-DGD share the same worst-case convergence guarantees, their empirical performances are drastically different.
In fact, and unsurprisingly, Byzantine ML theory focuses on worst-case guarantees, which, although essential, may not fully capture the practical realities of many learning settings.
In practice, we often operate under favorable conditions (e.g., well-initialized models), where worst-case guarantees have limited relevance.
This gap opens the door for future work that prioritizes practically-driven research in Byzantine ML, under conditions that are realizable in real-world scenarios.
It encourages the development of robust distributed learning methods, like \layer{}, that can take full advantage of favorable practical conditions, thereby yielding relevant theoretical guarantees and superior empirical performance.


\paragraph{Acknowledgment.} This work was supported in part by the FNS Project *Controlling the Spread of Epidemics: A Computing Perspective* (200021\_200477) and the FNS Project *TruBrain* (20CH21\_218778 / 1). We also extend our gratitude to the anonymous reviewers of the ICLR 2025 conference for their valuable insights and feedback.

\bibliography{iclr2025_conference}
\bibliographystyle{iclr2025_conference}

\appendix

\section{Additional Details on \layer{}}
This section provides further insights into the design and computational complexity of \layer{}, explaining its adaptive clipping strategy and its theoretical foundations. We also analyze its efficiency compared to static clipping.

\subsection{Design of \layer{}}\label{app_arc_design}
Lemma~\ref{lem:worst_case_error} in Appendix~\ref{app:proofs_1} shows that $\aggclip$ is $(f, \tilde\kappa)$-robust, provided that $\card{S \setminus S_c} \geq 1$ for all subsets $S$ of size $n - f$. Note that this condition is impossible to guarantee when using a fixed clipping threshold that does not depend on the input vectors. In order to ensure that $\card{S \setminus S_c} \geq 1$ for all subsets $S$ of size $n-f$, the clipping threshold $C$ should be large enough such that less than $n - f$ input vectors are clipped. 
This motivates the design of an adaptive clipping strategy. Accordingly, we propose to choose a clipping threshold such that the total number of clipped vectors is of the form $\floor{\lambda (n-f)}$, where $\lambda < 1$.
Note that it is natural to clip more vectors as the fraction of adversarial workers $\nicefrac{f}{n}$ increases, to control the impact of Byzantine behavior.
Therefore, we set $\lambda \coloneqq \zeta \frac{f}{n}$ where $0 \leq \zeta \leq 2$. Since $\frac{f}{n}<\frac{1}{2}$ and $\lambda < 1$,  the total number of clipped vectors $\floor{\zeta \frac{f}{n} (n - f)} < n-f$ for all $\zeta \in [0, 2]$. 
This constitutes the underlying idea behind our adaptive clipping strategy~\layer{}, presented in Algorithm~\ref{algo_clip}.
Notably, our theoretical results hold for all $\zeta \in [0, 2]$. However, empirical evaluations show that setting $\zeta = 2$ consistently provides the best performance, leading us to adopt this value in Algorithm~\ref{algo_clip} and throughout the paper.

\subsection{Computational Complexity of \layer{}}\label{app_arc_complexity}
The computational complexity of \layer{} is comparable to static clipping, with the primary difference being an additional sorting step. The overall complexity can be broken down as follows:
\begin{enumerate}
    \item Norm computation: Computing the norms of all input vectors requires $\mathcal{O}(nd)$ time. (This step is also required in static clipping.)
    \item Sorting step: Sorting the norms incurs an additional $\mathcal{O}(n\log n)$ time. If $n$ is large relative to $k = \floor{2 (f/n) (n-f)}$, an efficient selection algorithm such as quick-select~\citep{quick_select} can be used instead, reducing the complexity to $\mathcal{O}(n)$ in the average case.
   \item Gradient clipping: Clipping vectors requires $\mathcal{O}(nd)$ time.
\end{enumerate}
Thus, the total time complexity of \layer{} is:
\[
\mathcal{O}(nd + n\log n).
\]
Crucially, this complexity remains \textit{dimension-independent} in the sorting step, making it scalable while maintaining robustness.
\section{Proofs for Results in Section~\ref{sec_arc}} 
\label{app:proofs_1}

Proofs for Lemma~\ref{lem:impossibility} and Theorem~\ref{thm:f_kappa_adap} are presented in~\ref{app_impossibility} and~\ref{app_arc_preserve}, respectively. 

\paragraph{Notation}
Let $n \in \mathbb{N}^*$. Given vectors $x_1, \ldots, x_n \in \mathbb{R}^d$ and a set $S \subseteq [n]$, we denote by $\overline{x}_S$ the mean of the vectors in $S$ $\overline{x}_S = \frac{1}{\card{S}} \sum_{i \in S} x_i$. Given a clipping parameter $C \geq 0$, let
\begin{align*}
    y_i \coloneq \mathrm{clip}_C(x_i) = \min\left (1, \frac{C}{\norm{x_i}}\right )x_i ,
\end{align*}
and
\begin{align*}
    \overline{y}_S \coloneqq  \frac{1}{\card{S}} \sum_{i \in S} y_i .
\end{align*}
We denote by $S_c$ the set of clipped vectors in $S$, 
\begin{align}
    S_{c} \coloneq \{i \in S, \norm{x_i} > C\}.\label{eqn:app_def_Sc}
\end{align}
Recall that we denote by $\clip_C$ be the operator such that, for $x_1, \dots, x_n \in\mathbb{R}^d$
\begin{align*}
    \clip_C(x_1, \dots, x_n):= (\mathrm{clip}_C(x_1), \dots, \mathrm{clip}_C(x_n)) \enspace.
\end{align*}
Further, let $\agg : \mathbb{R}^{n \times d} \to \mathbb{R}^{d}$.
We denote by $\agg \circ \clip_C$ the aggregation rule that first clips the input vectors using parameter $C$, and then aggregates them using $\agg$
\begin{align*}
    \agg \circ \clip_C(x_1, \dots, x_n) := \agg(\clip_C(x_1, \dots, x_n))\enspace.
\end{align*}

\subsection{Proof of Lemma~\ref{lem:impossibility}}
\label{app_impossibility}

\impossibility* 
\begin{proof}
    We use reasoning by contradiction to prove the lemma. 
    
    We first consider the case when $C > 0$. Let $S \coloneqq \{1, \ldots, n-f \}$. Consider an arbitrary set of $n$ vectors $x_1, \ldots, \, x_n$ in $\R^d$ such that $x_1 = \ldots = x_{n-f} = \overline{x}_S$ 
    and $\norm{\overline{x}_S} = 2C$. We assume that that ${\aggclip}$ is $(f, \kappa^\prime)$-robust for some $\kappa^\prime \geq 0$. This assumption implies that
    \begin{align}\label{eq:f_kappa_hom}
        \norm{\aggclip(x_1, \ldots, x_n) - \overline{x}_S}^2 &\leq \frac{\kappa^\prime}{\card{S}} \sum_{i \in S} \norm{x_i - \overline{x}_S}^2  = 0.
    \end{align}
    Therefore,
    \begin{align*}
        \aggclip(x_1, \ldots, x_n)  = \overline{x}_S.
    \end{align*}
    However, the clipping operation results in $\mathrm{clip}_C(x_1) = \ldots = \mathrm{clip}_C(x_{n-f}) = \frac{1}{2}\overline{x}_S$. Therefore, by $(f, \kappa)$-robustness of $\agg$,
    \begin{align*}
        {\aggclip}(x_1, \ldots, x_n) 
        = \agg\left(\frac{1}{2}\overline{x}_S, \ldots, \frac{1}{2}\overline{x}_S, \mathrm{clip}_C(x_{n-f+1}), \ldots, \mathrm{clip}_C(x_n)\right) = \frac{1}{2}\overline{x}_S.
    \end{align*}
    This contradicts~\eqref{eq:f_kappa_hom}. As this contradiction holds for any value of $\kappa'$, ${\aggclip}$ cannot be $(f, \kappa^\prime)$-robust for any $\kappa'$. 

    The proof for the case when $C = 0$ is similar to the above, 
    where we choose $x_1 = \ldots = x_{n-f} = \overline{x}_S$ such that $\overline{x}_S$ is any vector with strictly positive norm.
\end{proof}

\subsection{Proof of Theorem~\ref{thm:f_kappa_adap}} 
\label{app_arc_preserve}
Our proof relies on the following lemma, proof of which is deferred to~\ref{app:proof_sub_lemma}.

\begin{restatable}{lemma}{worstcaseerror}\label{lem:worst_case_error}
    Let $C \in \mathbb{R}^+$ and $\agg$ be an $(f, \kappa)$-robust aggregation rule. Consider an arbitrary set of $n$ vectors $x_1, x_2, \ldots, x_n \in \mathbb{R}^d$ and an arbitrary $S \subseteq [n]$ with $\card{S}=n-f$. Let $S_c$ denote the set of indices of clipped vectors in $S$, i.e., $S_c \coloneqq \{i\in S, \norm{x_i} > C\}$.
    If $\card{S \setminus S_c} \geq 1$, then
    \begin{align*}
    &\norm{\mathbf{F} \circ \mathbf{Clip}_C(x_1, \ldots, x_n) - \overline{x}_S}^2
    \leq \frac{\Tilde{\kappa}}{\card{S}} \sum_{i \in S}\norm{x_i - \overline{x}_S}^2 \enspace,
    \end{align*}
    where $\Tilde{\kappa} = \kappa + \frac{\card{S_c}}{\card{S \setminus S_c}}$.
\end{restatable}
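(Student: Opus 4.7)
The plan is to split the error $\norm{\mathbf{F}\circ\mathbf{Clip}_C(x_1,\ldots,x_n) - \overline{x}_S}^2$ into two pieces: the \emph{aggregation error} of $\mathbf{F}$ applied to the clipped vectors, and the \emph{clipping bias} shifting the mean of the honest vectors. Writing $y_i := \mathrm{clip}_C(x_i)$, $\overline{y}_S := \frac{1}{\card{S}}\sum_{i\in S}y_i$, $V := \sum_{i \in S}\norm{x_i - \overline{x}_S}^2$, and $S_u := S \setminus S_c$, I will use the decomposition $\mathbf{F}(y) - \overline{x}_S = (\mathbf{F}(y) - \overline{y}_S) + (\overline{y}_S - \overline{x}_S)$. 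Two structural facts drive the whole argument: $\mathrm{clip}_C$ is the Euclidean projection onto the closed ball $B_C$ (hence $1$-Lipschitz), and the hypothesis $\card{S_u}\geq 1$ guarantees that at least one honest vector lies in $B_C$.

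For the aggregation piece, applying $(f,\kappa)$-robustness of $\mathbf{F}$ to the inputs $(y_1,\ldots,y_n)$ and set $S$ gives $\norm{\mathbf{F}(y) - \overline{y}_S}^2 \leq \frac{\kappa}{\card{S}}\sum_{i\in S}\norm{y_i - \overline{y}_S}^2$. Since $\overline{y}_S$ minimizes $\sum_{i\in S}\norm{y_i - v}^2$, the right-hand sum is at most $\sum_{i\in S}\norm{y_i - \mathrm{clip}_C(\overline{x}_S)}^2$, and by non-expansiveness of projection onto $B_C$ (applied at the in-ball point $\mathrm{clip}_C(\overline{x}_S)$) this is in turn bounded by $V$. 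For the bias piece, note that $y_i = x_i$ for $i \in S_u$, so $\overline{y}_S - \overline{x}_S = \frac{1}{\card{S}}\sum_{i\in S_c}(y_i - x_i)$, and Cauchy--Schwarz yields $\norm{\overline{y}_S - \overline{x}_S}^2 \leq \frac{\card{S_c}}{\card{S}^2}\sum_{i\in S_c}(\norm{x_i}-C)^2$. Here the hypothesis $\card{S_u}\geq 1$ is essential: by convexity of $B_C$, $\overline{x}_{S_u}\in B_C$, so non-expansiveness gives $(\norm{x_i}-C)^2 = \norm{x_i - y_i}^2 \leq \norm{x_i - \overline{x}_{S_u}}^2$ for each $i \in S_c$. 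Summing and using the ANOVA-style partition identity $V = \sum_{i\in S_c}\norm{x_i - \overline{x}_{S_c}}^2 + \sum_{i\in S_u}\norm{x_i - \overline{x}_{S_u}}^2 + \frac{\card{S_c}\card{S_u}}{\card{S}}\norm{\overline{x}_{S_c} - \overline{x}_{S_u}}^2$ yields $\sum_{i\in S_c}(\norm{x_i}-C)^2 \leq \frac{\card{S}}{\card{S_u}}V$, hence $\norm{\overline{y}_S - \overline{x}_S}^2 \leq \frac{\card{S_c}}{\card{S}\,\card{S_u}}V$.

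To conclude, combine the two sub-bounds through $\norm{\mathbf{F}(y) - \overline{x}_S}^2 \leq (\norm{\mathbf{F}(y) - \overline{y}_S} + \norm{\overline{y}_S - \overline{x}_S})^2$ and expand via a Young-type inequality $(a+b)^2 \leq (1+\eta)a^2 + (1+1/\eta)b^2$ with parameter $\eta$ tuned to balance the contributions $\kappa V/\card{S}$ and $(\card{S_c}/\card{S_u})V/\card{S}$, producing the target bound with $\tilde\kappa = \kappa + \card{S_c}/\card{S_u}$. The main technical obstacle lies in this final coupling step: to recover coefficient $1$ on $\kappa$ (rather than a larger constant), the cross term $2\langle \mathbf{F}(y)-\overline{y}_S,\,\overline{y}_S - \overline{x}_S\rangle$ must be absorbed delicately into the two variance-type terms, which hinges on the tightness of the sub-bounds being witnessed in the same joint direction as $\mathbf{F}(y)-\overline{x}_S$. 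Throughout, the assumption $\card{S_u}\geq 1$ is indispensable, both to make $\overline{x}_{S_u}$ a legitimate in-ball reference and to keep the bias bound finite (since $\card{S_c}/\card{S_u} \to \infty$ as $\card{S_u}\to 0$).
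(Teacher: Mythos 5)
Your two preliminary bounds are correct and are proved cleanly: the projection/non-expansiveness argument giving $\frac{1}{\card{S}}\sum_{i\in S}\norm{y_i-\overline{y}_S}^2 \le \frac{1}{\card{S}}\sum_{i\in S}\norm{y_i-\mathrm{clip}_C(\overline{x}_S)}^2 \le \frac{1}{\card{S}}\sum_{i\in S}\norm{x_i-\overline{x}_S}^2$, and the ANOVA plus in-ball-reference argument giving $\norm{\overline{y}_S-\overline{x}_S}^2 \le \frac{\card{S_c}}{\card{S}\,\card{S_u}}\cdot\frac{1}{\card{S}}\sum_{i\in S}\norm{x_i-\overline{x}_S}^2$ both hold. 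But the final step, which you flag as the "main technical obstacle," is not merely delicate — with only these two inequalities it is \emph{impossible}. Writing $V := \sum_{i\in S}\norm{x_i-\overline{x}_S}^2$, any Young parameter $\eta>0$ gives
\begin{align*}
\norm{\mathbf{F}(y)-\overline{x}_S}^2 \;\le\; (1+\eta)\,\frac{\kappa}{\card{S}}V \;+\; \left(1+\tfrac{1}{\eta}\right)\frac{\card{S_c}}{\card{S}\card{S_u}}V \;=\; \left(\kappa + \tfrac{\card{S_c}}{\card{S_u}} + \eta\kappa + \tfrac{1}{\eta}\tfrac{\card{S_c}}{\card{S_u}}\right)\frac{V}{\card{S}},
\end{align*}
and the surplus $\eta\kappa + \frac{1}{\eta}\frac{\card{S_c}}{\card{S_u}}$ is strictly positive whenever $\kappa>0$ and $\card{S_c}>0$. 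There is no choice of $\eta$ that recovers $\tilde\kappa = \kappa + \card{S_c}/\card{S_u}$; the two bounds are simply too loose to be coupled losslessly.

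What the paper's proof supplies, and what your argument lacks, is a strictly \emph{quantitative} variance decrease from clipping, not just an upper bound. The paper first proves a variance-reduction lemma establishing $\frac{1}{\card{S}}\sum_{i\in S}\norm{y_i-\overline{y}_S}^2 \le \frac{1}{\card{S}}\sum_{i\in S}\norm{x_i-\overline{x}_S}^2 - R$ where the remainder $R \ge 0$ is made explicit (with a case split on whether $\norm{\overline{x}_S}\le C$), and then shows that $R$ dominates $\frac{\card{S_u}}{\card{S_c}}\norm{\overline{y}_S-\overline{x}_S}^2$. Plugging that joint inequality into Young's inequality with parameter $c = \frac{\card{S_c}}{\kappa\card{S_u}}$ — which is precisely the choice making both Young factors equal to $\tilde\kappa$ times their natural weights — gives the lossless bound. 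In short: the statement holds not because the two error terms are each individually small, but because they cannot both be large simultaneously, and this trade-off is encoded in the variance-reduction surplus. Your derivation of the aggregate-variance bound via $\mathrm{clip}_C(\overline{x}_S)$ is slick, but it discards exactly the surplus you need; replacing it by the explicit case analysis of the paper's variance lemma would close the gap.
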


Lemma~\ref{lem:worst_case_error} shows that 
$\aggclip$ is $(f, \tilde\kappa)$-robust, provided that $\card{S \setminus S_c} \geq 1$ for all subsets $S$ of size $n - f$. Note that this condition is impossible to guarantee when using a fixed clipping threshold that does not depend on the input vectors. In order to ensure that $\card{S \setminus S_c} \geq 1$ for all subsets $S$ of size $n-f$, 
the clipping threshold $C$ should be large enough such that 
less than $n - f$ input vectors are clipped. By construction, \layer{} satisfies the condition of Lemma~\ref{lem:worst_case_error}. This brings us to the proof of Theorem~\ref{thm:f_kappa_adap}, which we recall below for convenience.

\acg*
\begin{proof}
    Since we clip the largest $\floor{2 (f/n) (n-f)}$ gradients, for a given $S \subseteq [n]$ with $\card{S} = n-f$ we have
    \begin{align*}
        \card{S_c} \leq \floor{2 (f/n) (n-f)} \leq 2 (f/n) (n-f).
    \end{align*}
    Therefore,
    \begin{align*}
        \card{S \setminus S_c} = \card{S} - \card{S_c} \geq (n-f) - \frac{2f}{n} (n-f) = (n-f)\frac{n-2f}{n}.
    \end{align*}
    Since it is assumed that $f < n/2$, we have
    \begin{align*}
        \card{S \setminus S_c} > \left(n- \frac{n}{2}\right)  \frac{n - n}{n} = 0.
    \end{align*}
    Thus, the condition $\card{S \setminus S_c} \geq 1$ is always verified. Hence, from Lemma~\ref{lem:worst_case_error} we obtain that $\agg \circ \arc$ is $\left(f, \left(\kappa + \frac{\card{S_c}}{\card{S \setminus S_c}}\right) \right)$-robust where
    \begin{align*}
        \frac{\card{S_c}}{\card{S \setminus S_c}} \leq \frac{2 (f/n) (n-f)}{(n-f)\frac{n-2f}{n}} = \frac{2f}{n-2f} .
    \end{align*}
    This concludes the proof.
\end{proof}


\subsubsection{Proof of Lemma~\ref{lem:worst_case_error}}
\label{app:proof_sub_lemma}

We start by giving bounds on the \textbf{variance} (Lemma~\ref{lem:variance}) and the \textbf{bias} (Lemma~\ref{lem:bias} and Lemma~\ref{cor:bias_spec}) due to the clipping operation. We combine these bounds to prove the theorem.

\subsubsection*{Variance reduction due to clipping}
We start by giving a bound on the variance of the clipped vectors. Recall from~\eqref{eqn:app_def_Sc} that $S_{c} \coloneqq \{i \in S, \norm{x_i} > C\}$.


\begin{lemma}\label{lem:variance}
    Let $C \geq 0$, $n>0$ and $f < \frac{n}{2}$. For all $S \subseteq [n]$ with $\card{S} = n - f$, the following holds true:
    \begin{enumerate}
        \item If $\norm{\overline{x}_S} \leq C$ then 
        \begin{align*}
            \frac{1}{\card{S}} \sum_{i \in S} \norm{y_i - \overline{y}_S}^2 \leq \frac{1}{\card{S}}\sum_{i\in S}\norm{x_i - \overline{x}_S}^2 -\frac{1}{\card{S}}\sum_{i \in S_c} (\norm{x_i} - C)^2 .
        \end{align*}
        \item If $\norm{\overline{x}_S} > C$ then
        \begin{align*}
            \frac{1}{\card{S}} \sum_{i \in S} \norm{y_i - \overline{y}_S}^2 \leq \frac{1}{\card{S}}\sum_{i\in S}\norm{x_i - \overline{x}_S}^2 - \frac{\card{S \setminus S_c}}{\card{S}}(\norm{\overline{x}_S} - C)^2  -\frac{1}{\card{S}} \sum_{i \in S_c}(\norm{x_i} - \norm{\overline{x}_S})^2 .
        \end{align*}
    \end{enumerate}
\end{lemma}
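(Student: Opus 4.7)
The plan is to exploit the variational characterization of the empirical mean: for any reference point $z \in \mathbb{R}^d$,
\[
\frac{1}{\card{S}} \sum_{i \in S} \norm{y_i - \overline{y}_S}^2 \;\leq\; \frac{1}{\card{S}} \sum_{i \in S} \norm{y_i - z}^2 .
\]
So it suffices to pick a convenient $z$ and bound $\norm{y_i - z}^2$ by $\norm{x_i - \overline{x}_S}^2$ minus the correction term advertised in each case.

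\textbf{Case 1} ($\norm{\overline{x}_S} \leq C$). I would choose $z = \overline{x}_S$. For $i \notin S_c$, clipping is inactive and $\norm{y_i - z}^2 = \norm{x_i - \overline{x}_S}^2$. For $i \in S_c$, I would expand $\norm{y_i - \overline{x}_S}^2 - \norm{x_i - \overline{x}_S}^2 + (\norm{x_i} - C)^2$ after substituting $y_i = (C/\norm{x_i}) x_i$; the algebra collapses to
\[
2(\norm{x_i} - C)\!\left( \frac{\iprod{x_i}{\overline{x}_S}}{\norm{x_i}} - C \right) ,
\]
which is $\leq 0$ since $\norm{x_i} > C$ and Cauchy--Schwarz combined with $\norm{\overline{x}_S} \leq C$ gives $\iprod{x_i}{\overline{x}_S}/\norm{x_i} \leq \norm{\overline{x}_S} \leq C$. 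Summing over $S$ yields Case 1.

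\textbf{Case 2} ($\norm{\overline{x}_S} > C$). Here I would pick the radial projection $z = (C/\norm{\overline{x}_S})\, \overline{x}_S$, which has norm $C$ and lies on the segment between the origin and $\overline{x}_S$. For $i \in S_c$, both $y_i$ and $z$ have norm $C$, so $\norm{y_i - z}^2 = 2C^2(1 - \theta_i)$ where $\theta_i = \iprod{x_i}{\overline{x}_S}/(\norm{x_i}\norm{\overline{x}_S})$; meanwhile $\norm{x_i - \overline{x}_S}^2 - (\norm{x_i} - \norm{\overline{x}_S})^2 = 2\norm{x_i}\norm{\overline{x}_S}(1-\theta_i)$. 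Since $\norm{x_i}\norm{\overline{x}_S} > C^2$ and $1-\theta_i \geq 0$, the desired inequality
\[
\norm{y_i - z}^2 \;\leq\; \norm{x_i - \overline{x}_S}^2 - (\norm{x_i} - \norm{\overline{x}_S})^2
\]
follows. For $i \notin S_c$, $y_i = x_i$, and expanding $\norm{x_i - \overline{x}_S}^2 - \norm{x_i - z}^2$ factors as $(\norm{\overline{x}_S} - C)\big[\norm{\overline{x}_S} + C - 2 \iprod{x_i}{\overline{x}_S}/\norm{\overline{x}_S}\big]$, which is $\geq (\norm{\overline{x}_S} - C)^2$ iff $\iprod{x_i}{\overline{x}_S} \leq C \norm{\overline{x}_S}$; this is Cauchy--Schwarz using $\norm{x_i} \leq C$. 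Summing and invoking the variational bound gives Case 2.

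The main obstacle is the Case 2 bookkeeping: the ``correction'' splits into two qualitatively different pieces (a $(\norm{\overline{x}_S} - C)^2$ contribution from the unclipped indices and a $(\norm{x_i} - \norm{\overline{x}_S})^2$ contribution from the clipped ones), and this is what forces the radial-projection choice of $z$ rather than, say, $\overline{x}_S$ or $\overline{y}_S$. Once that reference point is identified, both sub-cases reduce to a one-line Cauchy--Schwarz estimate.
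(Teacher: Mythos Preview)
Your proof is correct and proceeds by a genuinely different route from the paper. The paper starts from the exact identity
\[
\frac{1}{\card{S}}\sum_{i\in S}\norm{y_i-\overline{y}_S}^2 \;=\; \frac{1}{\card{S}}\sum_{i\in S}\norm{y_i-\overline{x}_S}^2 - \norm{\overline{x}_S-\overline{y}_S}^2,
\]
bounds $\norm{y_i-\overline{x}_S}^2-\norm{x_i-\overline{x}_S}^2$ for clipped indices via Cauchy--Schwarz, and in Case~2 extracts the $(\norm{\overline{x}_S}-C)^2$ contribution from the bias term $\norm{\overline{x}_S-\overline{y}_S}^2$ through the reverse triangle inequality. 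You instead use only the variational inequality $\sum_i\norm{y_i-\overline{y}_S}^2\leq \sum_i\norm{y_i-z}^2$ and place all the work in the choice of $z$: in Case~2 the radial projection $z=(C/\norm{\overline{x}_S})\overline{x}_S$ makes the $(\norm{\overline{x}_S}-C)^2$ term come directly from the \emph{unclipped} indices rather than from a separate bias bound. Your argument is slightly more elegant in that both cases fit the same template and no bias--variance split is needed; the paper's approach, by contrast, starts from an equality and therefore makes visible exactly where the slack is introduced.
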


\begin{proof}
    Note that
    \begin{align}
    \frac{1}{\card{S}} \sum_{i \in S} \norm{y_i - \overline{y}_S}^2 &= \frac{1}{\card{S}} \sum_{i \in S} \norm{y_i - \overline{x}_S + \overline{x}_S - \overline{y}_S}^2 \nonumber \\
    &= \frac{1}{\card{S}} \sum_{i \in S} \left (\norm{y_i - \overline{x}_S}^2 + \norm{\overline{x}_S - \overline{y}_S}^2 + 2 \langle y_i - \overline{x}_S, \overline{x}_S - \overline{y}_S \rangle \right) \nonumber\\
    &=\frac{1}{\card{S}} \sum_{i \in S} \norm{y_i - \overline{x}_S}^2 + \norm{\overline{x}_S - \overline{y}_S}^2 + 2 \iprod {\underbrace{\frac{1}{\card{S}} \sum_{i \in S} y_i}_{\overline{y}_S} - \overline{x}_S}{ \overline{x}_S - \overline{y}_S} \nonumber\\
    &= \frac{1}{\card{S}} \sum_{i \in S} \norm{y_i - \overline{x}_S}^2 + \norm{\overline{x}_S - \overline{y}_S}^2 - 2  \iprod {\overline{x}_S - \overline{y}_S}{ \overline{x}_S - \overline{y}_S}  \nonumber\\
    & =  \frac{1}{\card{S}} \sum_{i \in S} \norm{y_i - \overline{x}_S}^2 - \norm{\overline{x}_S - \overline{y}_S}^2 .
    \label{eq:decomposition}
    \end{align}
    By the definition of $S_c$ (in~\eqref{eqn:app_def_Sc}), for all $i \in S \setminus S_c$, $y_i = x_i$. Therefore, 
    \begin{align*}
        \frac{1}{\card{S}} \sum_{i \in S} \norm{y_i - \overline{x}_S}^2 & = \frac{1}{\card{S}} \sum_{i \in S \setminus S_c} \norm{y_i - \overline{x}_S}^2 + \frac{1}{\card{S}} \sum_{i \in S_c} \norm{y_i - \overline{x}_S}^2 \\
        & = \frac{1}{\card{S}} \sum_{i \in S \setminus S_c} \norm{x_i - \overline{x}_S}^2 + \frac{1}{\card{S}} \sum_{i \in S_c} \norm{y_i - \overline{x}_S}^2 .
    \end{align*}
    The above can be written as
    \begin{align}\label{eq:simplified_variance}
        \frac{1}{\card{S}} \sum_{i \in S} \norm{y_i - \overline{x}_S}^2 
         = \frac{1}{\card{S}} \sum_{i \in S} \norm{x_i - \overline{x}_S}^2 + \frac{1}{\card{S}} \sum_{i \in S_c} (\norm{y_i - \overline{x}_S}^2 - \norm{x_i - \overline{x}_S}^2) .
    \end{align}
    For $i \in S_c$, we have $y_i = \frac{C}{\norm{x_i}} x_i$. Thus, for all $i \in S_c$, we obtain that
    \begin{align*}
        \norm{y_i - \overline{x}_S}^2 - \norm{x_i - \overline{x}_S}^2 &= \underbrace{\norm{y_i}^2}_{=C^2} + \norm{ \overline{x}_S}^2 - 2\langle y_i, \overline{x}_S \rangle -  \norm{x_i}^2 - \norm{ \overline{x}_S}^2 + 2\langle x_i,  \overline{x}_S \rangle \\
        &= C^2 - \norm{x_i}^2 + 2 \left (1 - \frac{C}{\norm{x_i}} \right)\langle x_i, \overline{x}_S \rangle \\
        &= -(\norm{x_i} - C)(\norm{x_i} + C) + 2 (\norm{x_i} - C)\frac{\langle x_i, \overline{x}_S \rangle}{\norm{x_i}} \\
        &= (\norm{x_i} - C)\left(2\frac{\langle x_i, \overline{x}_S \rangle}{\norm{x_i}} - \norm{x_i} - C\right) . 
    \end{align*}
    By Cauchy-Schwarz inequality, we have $\iprod{x_i}{\overline{x}_S} \leq \norm{x_i} \norm{\overline{x}_S}$ Therefore,
    \begin{align*}
        \norm{y_i - \overline{x}_S}^2 - \norm{x_i - \overline{x}_S}^2  &\leq (\norm{x_i} - C)\left(2\norm{\overline{x}_S} - \norm{x_i} - C\right).
    \end{align*}
    Substituting from the above in~\eqref{eq:simplified_variance}, we obtain that
    \begin{align*}
         \frac{1}{\card{S}} \sum_{i \in S} \norm{y_i - \overline{x}_S}^2  \leq \frac{1}{\card{S}} \sum_{i \in S} \norm{x_i - \overline{x}_S}^2  + \frac{1}{\card{S}}\sum_{i\in S_c} (\norm{x_i} - C)\left(2 \norm{\overline{x}_S}  - \norm{x_i} - C\right) . 
    \end{align*}
    Substituting from the above in~\eqref{eq:decomposition}, we obtain that
    \begin{align}
    \frac{1}{\card{S}} \sum_{i \in S} \norm{y_i - \overline{y}_S}^2 \leq  \frac{1}{\card{S}} \sum_{i \in S} \norm{x_i - \overline{x}_S}^2  + \frac{1}{\card{S}}\sum_{i\in S_c} (\norm{x_i} - C)\left(2 \norm{\overline{x}_S}  - \norm{x_i} - C\right)  - \norm{\overline{x}_S - \overline{y}_S}^2 .
    \label{eq:decomposition_2}
    \end{align}

    We now consider below the two cases: $\norm{\overline{x}_S} \leq C$ and $\norm{\overline{x}_S} > C$.
    
    In the first case, i.e,. when $\norm{\overline{x}_S} \leq C$, we have
    \begin{align*}
        \frac{1}{\card{S}}\sum_{i\in S_c} (\norm{x_i} - C)\left(2 \norm{\overline{x}_S}  - \norm{x_i} - C\right) 
        & \leq \frac{1}{\card{S}}\sum_{i\in S_c} (\norm{x_i} - C)\left(2 C  - \norm{x_i} - C\right) \\
        & \leq -\frac{1}{\card{S}}\sum_{i\in S_c} (\norm{x_i} - C)^2 .
    \end{align*}
    Substituting from the above in~\eqref{eq:decomposition_2} yields the following
    \begin{align*}
        \frac{1}{\card{S}} \sum_{i \in S} \norm{y_i - \overline{y}_S}^2 &\leq \frac{1}{\card{S}}\sum_{i\in S}\norm{x_i - \overline{x}_S}^2 -\frac{1}{\card{S}}\sum_{i\in S_c} (\norm{x_i} - C)^2 - \norm{\overline{x}_S - \overline{y}_S}^2 \\
        &\leq  \frac{1}{\card{S}}\sum_{i\in S}\norm{x_i - \overline{x}_S}^2 -\frac{1}{\card{S}}\sum_{i\in S_c} (\norm{x_i} - C)^2 .
    \end{align*}
    This proves the first part of the lemma.
    
    Consider the second case, i.e., $\norm{\overline{x}_S} > C$. Note that
    \begin{align}
         \frac{1}{\card{S}}\sum_{i\in S_c} (\norm{x_i} - C)\left(2 \norm{\overline{x}_S}  - \norm{x_i} - C\right) &= \frac{1}{\card{S}}\sum_{i\in S_c} \left((\norm{\overline{x}_S} - C)^2 - (\norm{x_i} - \norm{\overline{x}_S})^2\right) \nonumber \\
         &= \frac{\card{S_c}}{\card{S}} (\norm{\overline{x}_S} - C)^2 - \frac{1}{\card{S}} \sum_{i\in S_c} (\norm{x_i} - \norm{\overline{x}_S})^2. \label{eqn:var_case2_1}
    \end{align}
    Since $\norm{\overline{x}_S} > C$, and $\norm{\overline{y}_S} \leq C$, we have $\norm{\overline{x}_S} - \norm{\overline{y}_S} \geq \norm{\overline{x}_S}  - C \geq 0 $ .
    This, in conjunction with the reverse triangle inequality, implies that
    \begin{align}
        \norm{\overline{x}_S - \overline{y}_S}^2 \geq \left(\norm{\overline{x}_S} - \norm{\overline{y}_S}\right)^2 \geq (\norm{\overline{x}_S} - C)^2 . \label{eqn:var_case2_2}
    \end{align}

    Substituting from~\eqref{eqn:var_case2_1} and~\eqref{eqn:var_case2_2} in~\eqref{eq:decomposition_2} yields the following:
    \begin{align*}
        \frac{1}{\card{S}} \sum_{i \in S} \norm{y_i - \overline{y}_S}^2 &\leq \frac{1}{\card{S}}\sum_{i\in S}\norm{x_i - \overline{x}_S}^2 + \frac{\card{S_c}}{\card{S}} (\norm{\overline{x}_S} - C)^2 - \frac{1}{\card{S}} \sum_{i\in S_c} (\norm{x_i} - \norm{\overline{x}_S})^2 -  (\norm{\overline{x}_S} - C)^2 \\
        &\leq \frac{1}{\card{S}}\sum_{i\in S}\norm{x_i - \overline{x}_S}^2 + \left(\frac{\card{S_c}}{\card{S}}  - 1 \right)(\norm{\overline{x}_S} - C)^2 - \frac{1}{\card{S}} \sum_{i\in S_c} (\norm{x_i} - \norm{\overline{x}_S})^2\\
        &=  \frac{1}{\card{S}}\sum_{i\in S}\norm{x_i - \overline{x}_S}^2 - \frac{\card{S \setminus S_c}}{\card{S}}(\norm{\overline{x}_S} - C)^2  -\frac{1}{\card{S}} \sum_{i \in S_c}(\norm{x_i} - \norm{\overline{x}_S})^2.
    \end{align*}
    This proves the second part, which concludes the proof of the lemma.
\end{proof}

\subsubsection*{Bias due to clipping}
We now bound the bias induced by clipping the input vectors. 
\begin{lemma}\label{lem:bias}
     Let $C \geq 0$, $n>0$, $f < \frac{n}{2}$, and $S \subseteq [n]$, $\card{S} = n - f$. Then,
     \begin{align*}
         \norm{\overline{x}_S - \overline{y}_S}^2 &\leq \frac{\card{S_c}}{\card{S}^2}\sum_{i\in S_c}(\norm{x_i}-C)^2.
     \end{align*}
\end{lemma}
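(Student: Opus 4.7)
The plan is to exploit the fact that the clipping map is the identity on every vector with norm at most $C$, so the only indices contributing to the bias $\overline{x}_S - \overline{y}_S$ are those in $S_c$. First I would write
\begin{align*}
\overline{x}_S - \overline{y}_S = \frac{1}{\card{S}} \sum_{i \in S} (x_i - y_i) = \frac{1}{\card{S}} \sum_{i \in S_c} (x_i - y_i),
\end{align*}
since $x_i - y_i = 0$ whenever $\norm{x_i} \le C$, i.e.\ for every $i \in S \setminus S_c$.

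Next I would compute the norm of each clipped residual explicitly. For $i \in S_c$, by definition $y_i = \frac{C}{\norm{x_i}} x_i$, hence
\begin{align*}
x_i - y_i = \left(1 - \frac{C}{\norm{x_i}}\right) x_i,
\qquad \norm{x_i - y_i} = \norm{x_i} - C,
\end{align*}
where the last equality uses $\norm{x_i} > C$ for $i \in S_c$, so that the scalar factor is nonnegative.

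Applying the triangle inequality to the finite sum and then squaring yields
\begin{align*}
\norm{\overline{x}_S - \overline{y}_S}^2 \le \frac{1}{\card{S}^2}\left(\sum_{i \in S_c} (\norm{x_i} - C)\right)^{2}.
\end{align*}
Finally, I would invoke the Cauchy--Schwarz inequality in the form $\bigl(\sum_{i \in S_c} a_i\bigr)^2 \le \card{S_c}\sum_{i \in S_c} a_i^2$ with $a_i = \norm{x_i} - C$, which gives exactly the stated bound
\begin{align*}
\norm{\overline{x}_S - \overline{y}_S}^2 \le \frac{\card{S_c}}{\card{S}^2}\sum_{i \in S_c} (\norm{x_i} - C)^2.
\end{align*}

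There is no real obstacle here; the lemma is essentially a one-line consequence of identifying which coordinates are actually clipped plus one application of Cauchy--Schwarz. The only thing to be slightly careful about is the sign in $\norm{x_i - y_i} = \norm{x_i} - C$, which is only valid on $S_c$ (elsewhere the factor $1 - C/\norm{x_i}$ could be negative, but those indices do not contribute to the sum and can be dropped at the outset).
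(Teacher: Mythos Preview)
Your proof is correct and follows essentially the same approach as the paper: reduce the sum to the indices in $S_c$, identify $\norm{x_i - y_i} = \norm{x_i} - C$ there, and apply a convexity-type inequality. The only cosmetic difference is that the paper applies Jensen's inequality directly to the vector sum to get $\norm{\sum_{i\in S_c}(x_i-y_i)}^2 \le \card{S_c}\sum_{i\in S_c}\norm{x_i-y_i}^2$, whereas you first use the triangle inequality and then Cauchy--Schwarz on the resulting scalars; both routes yield the identical bound.
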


\begin{proof}
    Note that
    \begin{align*}
        \norm{\overline{x}_S - \overline{y}_S}^2 &= \norm{\frac{1}{\card{S}} \sum_{i \in S} x_i - \frac{1}{\card{S}} \sum_{i \in S} y_i}^2 = \norm{\frac{1}{\card{S}} \sum_{i \in S} (x_i - y_i)}^2.
    \end{align*}
    As $x_i = y_i$ for all $i \in S \setminus S_c$, we have
    \begin{align*}
        \norm{\overline{x}_S - \overline{y}_S}^2 &= \frac{1}{\card{S}^2}\norm{\sum_{i \in S_c} (x_i - y_i)}^2.
    \end{align*}
    Due to Jensen's inequality, 
    \begin{align*}
        \norm{\overline{x}_S - \overline{y}_S}^2 &= \frac{\card{S_c}^2}{\card{S}^2}\norm{\frac{1}{\card{S_c}}\sum_{i \in S_c} (x_i - y_i)}^2 \leq \frac{\card{S_c}}{\card{S}^2}\sum_{i \in S_c} \norm{x_i - y_i}^2 .
    \end{align*}
    As $y_i = \frac{C}{\norm{x_i}}x_i$ for all $i \in S_c$, substituting this in the above proves the lemma.
\end{proof}


We now show that the bias is upper bounded by a multiplicative factor of the variance of the input vectors, as long as there is at least one unclipped honest vector.
\begin{lemma}[Bias due to clipping]\label{cor:bias_spec}
$C \geq 0$, $n>0$, $f < \nicefrac{n}{2}$, and $S \subseteq [n]$, $\card{S} = n - f$, if $\card{S \setminus S_c} \geq 1$ then
\begin{align*}
    \norm{\overline{x}_S - \overline{y}_S}^2 \leq \frac{\card{S_c}}{\card{S \setminus S_c}} \frac{1}{\card{S}} \sum_{i \in S} \norm{x_i - \overline{x}_S}^2 .
\end{align*}
\end{lemma}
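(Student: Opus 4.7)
\textbf{Proof proposal for Lemma~\ref{cor:bias_spec}.} The plan is to start from the already-proven bound in Lemma~\ref{lem:bias}, namely
\[
\norm{\overline{x}_S - \overline{y}_S}^2 \leq \frac{\card{S_c}}{\card{S}^2}\sum_{i\in S_c}(\norm{x_i}-C)^2,
\]
and then show that $\sum_{i\in S_c}(\norm{x_i}-C)^2 \leq \frac{\card{S}}{\card{S \setminus S_c}}\sum_{i \in S}\norm{x_i - \overline{x}_S}^2$. Combining these two inequalities immediately yields the claim, since $\frac{\card{S_c}}{\card{S}^2}\cdot\frac{\card{S}}{\card{S \setminus S_c}} = \frac{\card{S_c}}{\card{S\setminus S_c}}\cdot\frac{1}{\card{S}}$.

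To bound $\sum_{i\in S_c}(\norm{x_i}-C)^2$, I will split on whether $\norm{\overline{x}_S}\leq C$ or $\norm{\overline{x}_S} > C$, exploiting the corresponding two cases of Lemma~\ref{lem:variance}. In the first case, dropping the nonnegative term $\frac{1}{\card{S}}\sum_{i\in S}\norm{y_i-\overline{y}_S}^2$ from part~1 of Lemma~\ref{lem:variance} immediately gives $\sum_{i\in S_c}(\norm{x_i}-C)^2 \leq \sum_{i\in S}\norm{x_i-\overline{x}_S}^2$, and the desired inequality then follows from $\card{S\setminus S_c}\leq \card{S}$.

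The second case is the main obstacle, because part~2 of Lemma~\ref{lem:variance} only controls the quantities $\card{S\setminus S_c}(\norm{\overline{x}_S}-C)^2$ and $\sum_{i\in S_c}(\norm{x_i}-\norm{\overline{x}_S})^2$, rather than $\sum_{i\in S_c}(\norm{x_i}-C)^2$ directly. My plan is to bridge this gap via the triangle-type decomposition $\norm{x_i}-C = (\norm{x_i}-\norm{\overline{x}_S}) + (\norm{\overline{x}_S}-C)$ together with Young's inequality $(a+b)^2 \leq (1+\lambda)a^2 + (1+1/\lambda)b^2$. The key is to tune $\lambda$ carefully: choosing $\lambda = \card{S_c}/\card{S\setminus S_c}$ makes the two prefactors collapse to a common value, since then $1+\lambda = \card{S}/\card{S\setminus S_c}$ and $(1+1/\lambda)\card{S_c} = \card{S}$, so after summing the Young bound over $i\in S_c$ both terms carry the same prefactor $\card{S}/\card{S\setminus S_c}$.

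Finally, I would apply part~2 of Lemma~\ref{lem:variance} to the combined expression $\sum_{i\in S_c}(\norm{x_i}-\norm{\overline{x}_S})^2 + \card{S\setminus S_c}(\norm{\overline{x}_S}-C)^2$, which it bounds by $\sum_{i\in S}\norm{x_i-\overline{x}_S}^2$, delivering the desired inequality. The assumption $\card{S\setminus S_c}\geq 1$ is needed only to make $\lambda$ well-defined (and the final ratio finite); otherwise the statement would be vacuous. I expect the entire argument to be short once the choice of $\lambda$ is identified; the calibration of Young's inequality so that both terms produced by the split are absorbed by the single variance bound of Lemma~\ref{lem:variance} is the one nontrivial step.
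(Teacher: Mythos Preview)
Your proposal is correct and follows essentially the same route as the paper: the same case split on $\norm{\overline{x}_S}$ versus $C$, and in the second case the same Young's inequality with parameter $\lambda=\card{S_c}/\card{S\setminus S_c}$. The only minor difference is in how the two cases are finished: you invoke Lemma~\ref{lem:variance} (dropping the nonnegative $\frac{1}{\card{S}}\sum\norm{y_i-\overline{y}_S}^2$) to bound the resulting terms, whereas the paper argues directly via the elementary observations $\norm{x_i}-C\leq\norm{x_i}-\norm{\overline{x}_S}$ for $i\in S_c$ (when $\norm{\overline{x}_S}\leq C$) and $\norm{\overline{x}_S}-C\leq\norm{\overline{x}_S}-\norm{x_i}$ for $i\in S\setminus S_c$ (when $\norm{\overline{x}_S}>C$), followed by the reverse triangle inequality.
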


\begin{proof}
    We assume throughout the proof that $\card{S_c} > 0$. Otherwise, if $\card{S_c} = 0$, the bias is $0$ and the statement is trivially true. 

By Lemma~\ref{lem:bias}, we have
\begin{align}\label{eqn:general_bias}
    \norm{\overline{x}_S - \overline{y}_S}^2 &\leq \frac{\card{S_c}}{\card{S}^2} \sum_{i \in S_c} (\norm{x_i} - C)^2.
\end{align} 
We distinguish two cases: $\norm{\overline{x}_S} \leq C$ and  $\norm{\overline{x}_S} > C$. In the first case we have that
\begin{align*}
    0 \leq \norm{x_i} - C \leq \norm{x_i} - \norm{\overline{x}_S} .
\end{align*}
Substituting the above in~\eqref{eqn:general_bias}, we find
\begin{align*}
    \norm{\overline{x}_S - \overline{y}_S}^2 \leq \frac{\card{S_c}}{\card{S}^2} \sum_{i \in S_c} (\norm{x_i} - \norm{\overline{x}_S})^2 \leq \frac{\card{S_c}}{\card{S}^2} \sum_{i \in S} (\norm{x_i} - \norm{\overline{x}_S})^2.
\end{align*}
Using the reverse triangle inequality, we have that $(\norm{x_i} - \norm{\overline{x}_S})^2 \leq \norm{x_i - \overline{x}_S}^2$. This implies that
\begin{align*}
    \norm{\overline{x}_S - \overline{y}_S}^2 \leq \frac{\card{S_c}}{\card{S}^2} \sum_{i \in S} (\norm{x_i} - \norm{\overline{x}_S})^2 \leq \frac{\card{S_c}}{\card{S \setminus S_c}}\frac{1}{\card{S}} \sum_{i \in S} \norm{x_i - \overline{x}_S}^2.
\end{align*}
This proves the result for the first case. 

Consider now the second case, i.e $\norm{\overline{x}_S} > C$. Noting that we assume that $\card{S \setminus S_c} \geq  1$ and using Young's inequality with $c = \frac{\card{S_c}} {\card{S \setminus S_c}}$, we find, for any $i \in S_c$,
\begin{align*}
    (\norm{x_i} - C)^2 &= (\norm{x_i} - \norm{\overline{x}_S}  + \norm{\overline{x}_S} - C)^2\\
                       &\leq (1 + c) (\norm{x_i} - \norm{\overline{x}_S})^2 + (1 + 1/c)(\norm{\overline{x}_S} - C)^2)\\
                       &= \frac{\card{S_c} + \card{S \setminus S_c}}{\card{S \setminus S_c}} (\norm{x_i} - \norm{\overline{x}_S})^2 + \frac{\card{S_c} + \card{S \setminus S_c}}{\card{S_c}}  (\norm{\overline{x}_S} - C)^2.
\end{align*}
Recall from~\eqref{eqn:app_def_Sc} that $S_{c} \coloneqq \{i \in S, \norm{x_i} > C\}$. This implies that $\card{S_c} + \card{S \setminus S_c} = \card{S}$. Therefore
\begin{align}\label{eq:one_point_bias_second_case}
    (\norm{x_i} - C)^2 &\leq \card{S} \left(\frac{1}{\card{S \setminus S_c}} (\norm{x_i} - \norm{\overline{x}_S})^2 + \frac{1}{\card{S_c}}  (\norm{\overline{x}_S} - C)^2 \right).
\end{align}
Substituting~\eqref{eq:one_point_bias_second_case} in~\eqref{eqn:general_bias}, we find
\begin{align}
    \norm{\overline{x}_S - \overline{y}_S}^2 &\leq \frac{\card{S_c}}{\card{S}^2} \sum_{i \in S_c} \card{S} \left(\frac{1}{\card{S \setminus S_c}} (\norm{x_i} - \norm{\overline{x}_S})^2 + \frac{1}{\card{S_c}}  (\norm{\overline{x}_S} - C)^2 \right ) \nonumber\\
    &= \frac{\card{S_c}}{\card{S}} \left ( \frac{1}{\card{S \setminus S_c}}  \sum_{i \in S_c} (\norm{x_i} - \norm{\overline{x}_S})^2 + (\norm{\overline{x}_S} - C)^2 \right ). \label{bias_second_case}
\end{align}
    Since $\norm{\overline{x}_S} > C$, we have for any $i \in S \setminus S_c$
    \begin{align*}
        0 \leq \norm{\overline{x}_S} - C \leq \norm{\overline{x}_S} - \norm{x_i}.
    \end{align*}
Therefore
\begin{align*}
    \card{S \setminus S_c}  (\norm{\overline{x}_S} - C)^2 &\leq \sum_{i \in S \setminus S_c} (\norm{\overline{x}_S} - \norm{x_i})^2
\end{align*}
Which gives the bound
\begin{align}\label{eqn:proof_bias_second_lemma_extra_term}
    (\norm{\overline{x}_S} - C)^2 &\leq \frac{1}{\card{S \setminus S_c}} \sum_{i \in S \setminus S_c} (\norm{\overline{x}_S} - \norm{x_i})^2.
\end{align}
Substituting~\eqref{eqn:proof_bias_second_lemma_extra_term} in~\eqref{bias_second_case}, we find
\begin{align*}
    \norm{\overline{x}_S - \overline{y}_S}^2 &\leq \frac{\card{S_c}}{\card{S}} \left ( \frac{1}{\card{S \setminus S_c}}  \sum_{i \in S_c} (\norm{x_i} - \norm{\overline{x}_S})^2 + \frac{1}{\card{S \setminus S_c}} \sum_{i \in S \setminus S_c} (\norm{x_i} - \norm{\overline{x}_S})^2 \right )\\
    &= \frac{\card{S_c}}{\card{S \setminus S_c}} \frac{1}{\card{S}}  \sum_{i \in S} (\norm{x_i} - \norm{\overline{x}_S})^2 \leq \frac{\card{S_c}}{\card{S \setminus S_c}} \frac{1}{\card{S}} \sum_{i \in S} \norm{x_i - \overline{x}_S}^2.
\end{align*}
This proves the result for the second case and concludes the proof.
\end{proof}

\subsubsection*{Proof of Lemma~\ref{lem:worst_case_error}}

Let us recall the lemma below.

\worstcaseerror*

\begin{proof}
    We assume throughout the proof that $\card{S_c} > 0$. Otherwise, the statement is trivially true by $(f, \kappa)$-robustness of $\agg$ and the fact that the vectors in $S$ remain unchanged after the clipping operation.
    
    Consider first the case $\kappa = 0$. Since $\kappa \geq \frac{f}{n-2f}$ \footnote{Proposition 6~\citep{allouah2023fixing}}, we have that $f=0$. Recall that we denote $y_i \coloneq \mathrm{clip}_C(x_i)$ and $\overline{y}_S \coloneqq \frac{1}{\card{S}}\sum_{i \in S} y_i$. Since  $\kappa = 0$, the output of $\agg$ will correspond to the average of its input, which implies that
    \begin{align*}
        \norm{\mathbf{F} \circ \mathbf{Clip}_C(x_1, \ldots, x_n) - \overline{y}_S}^2 = \norm{\mathbf{F}(y_1, \ldots, y_n) - \overline{y}_S}^2 = 0.
    \end{align*}
    Therefore
    \begin{align*}
        \mathbf{F} \circ \mathbf{Clip}_C(x_1, \ldots, x_n) = \overline{y}_S.
    \end{align*}
    We find then
    \begin{align*}
        \norm{\mathbf{F} \circ \mathbf{Clip}_C(x_1, \ldots, x_n) - \overline{x}_S}^2 = \norm{\overline{y}_S - \overline{x}_S}^2 .
    \end{align*}
    The result for the case $\kappa = 0$ follows from lemma~\ref{cor:bias_spec}.
    
    Suppose now that $\kappa > 0$. Using Young's inequality with $c=\frac{\card{S_c}}{\kappa \card{S \setminus S_c}}$ we find
    \begin{align}
        \norm{\mathbf{F} \circ \mathbf{Clip}_C(x_1, \ldots, x_n) - \overline{x}_S}^2 &= \norm{\mathbf{F} \circ \mathbf{Clip}_C(x_1, \ldots, x_n)-\overline{y}_S + \overline{y}_S - \overline{x}_S}^2 \nonumber \\
        &\leq \left (1 + c \right) \norm{\mathbf{F} \circ \mathbf{Clip}_C(x_1, \ldots, x_n)-\overline{y}_S}^2 + \left(1 + \frac{1}{c} \right)\norm{\overline{x}_S - \overline{y}_S}^2.     \label{young}
    \end{align}
    On the one hand, by $(f, \kappa)$-robustness of $\agg$, we have
    \begin{align}
        (1 + c)\norm{\mathbf{F} \circ \mathbf{Clip}_C(x_1, \ldots, x_n)-\overline{y}_S}^2 &= (1 + c)\norm{\mathbf{F} (y_1, \ldots, y_n)-\overline{y}_S}^2 \nonumber\\ 
        &\leq (1 + c)\frac{\kappa}{\card{S}} \sum_{i \in S} \norm{y_i - \overline{y}_S}^2 \nonumber\\
        &= \left (\kappa + \frac{\card{S_c}}{\card{S \setminus S_c}}\right) \frac{1}{\card{S}}\sum_{i \in S} \norm{y_i - \overline{y}_S}^2. \label{eqn:thm_var}
    \end{align}
    On the other hand, we have 
    \begin{align}
        (1 + \nicefrac{1}{c})\norm{\overline{x}_S - \overline{y}_S}^2 = \left(1 + \frac{\kappa \card{S \setminus S_c}}{\card{S_c}} \right) \norm{\overline{x}_S - \overline{y}_S}^2 = \left(\kappa + \frac{\card{S_c}}{\card{S \setminus S_c}}\right)  \frac{\card{S \setminus S_c}}{\card{S_c}} \norm{\overline{x}_S - \overline{y}_S}^2. \label{eqn:thm_bias}
    \end{align}
    Substituting~\eqref{eqn:thm_var} and~\eqref{eqn:thm_bias} in~\eqref{young} we obtain that
    \begin{align}\label{eq:ss}
         \norm{\mathbf{F} \circ \mathbf{Clip}_C(x_1, \ldots, x_n) - \overline{x}_S}^2 &\leq \left(\kappa + \frac{\card{S_c}}{\card{S \setminus S_c}}\right) \left(\frac{1}{\card{S}}\sum_{i \in S} \norm{y_i - \overline{y}_S}^2 + \frac{\card{S \setminus S_c}}{\card{S_c}} \norm{\overline{x}_S - \overline{y}_S}^2 \right).
    \end{align}
    We consider below the two cases $\norm{\overline{x}_S} \leq C$ and $\norm{\overline{x}_S} > C$ separately.\\
    
    In the first case, we use the the first part of lemma~\ref{lem:variance} and lemma~\ref{lem:bias} to obtain that
    \begin{align*}
        & \frac{1}{\card{S}}\sum_{i \in S} \norm{y_i - \overline{y}_S}^2 + \frac{\card{S \setminus S_c}}{\card{S}} \norm{\overline{x}_S - \overline{y}_S}^2 \\
        &\leq  \frac{1}{\card{S}}\sum_{i\in S}\norm{x_i - \overline{x}_S}^2 - \frac{1}{\card{S}}\sum_{i \in S_c} (\norm{x_i} - C)^2 + \frac{\card{S \setminus S_c}}{\card{S}}\frac{1}{\card{S}} \sum_{i \in S_c} (\norm{x_i} - C)^2 \\
        & = \frac{1}{\card{S}}\sum_{i\in S}\norm{x_i - \overline{x}_S}^2 - \frac{\card{S_c}}{\card{S}}\frac{1}{\card{S}} \sum_{i \in S} (\norm{x_i} - C)^2 
        \leq \frac{1}{\card{S}}\sum_{i\in S}\norm{x_i - \overline{x}_S}^2.
    \end{align*}
    This proves the result for this case. \\
    
    Consider the second case, i.e $\norm{\overline{x}_S} > C$. Following the proof of corollary~\ref{cor:bias_spec} until~\eqref{bias_second_case} we have
    \begin{align*}
        \norm{\overline{x}_S - \overline{y}_S}^2 \leq \frac{\card{S_c}}{\card{S}} \left ( \frac{1}{\card{S \setminus S_c}}  \sum_{i \in S_c} (\norm{x_i} - \norm{\overline{x}_S})^2 + (\norm{\overline{x}_S} - C)^2 \right ) .
    \end{align*}
    Therefore,
    \begin{align*}
        \frac{\card{S \setminus S_c}}{\card{S_c}} \norm{\overline{x}_S - \overline{y}_S}^2 &\leq \frac{\card{S \setminus S_c}}{\card{S}} \left ( \frac{1}{\card{S \setminus S_c}}  \sum_{i \in S_c} (\norm{x_i} - \norm{\overline{x}_S})^2 + (\norm{\overline{x}_S} - C)^2 \right ) \\
        &\leq \frac{1}{\card{S}}  \sum_{i \in S_c} (\norm{x_i} - \norm{\overline{x}_S})^2 + \frac{\card{S \setminus S_c}}{\card{S}}(\norm{\overline{x}_S} - C)^2 .
    \end{align*}
    Using the above in conjunction with lemma~\ref{lem:variance} for the case $\norm{\overline{x}_S} > C$ in~\eqref{eq:ss} proves the result for the second case, which concludes the proof.
\end{proof}
\section{Proofs of the Results in Section~\ref{sec:learning_ARC}} 
\label{app:improv_ARC}


In this section, we first prove the \textit{Bounded Aggregation Output} property of ARC in Lemma~\ref{lem:bounded_output}, and show that the other SOTA aggregation rules do not satisfy it in Lemma~\ref{lem:counter_example}. Then, to prove Theorem~\ref{thm:improv_ARC}, we first prove Lemma~\ref{lem:conv_robustDGD_ARC}. We assume throughout this appendix that for any set of vectors $x_1, \ldots, x_n \in \R^d$, $\norm{\agg \left(x_1, \ldots, x_n \right)} \leq \max_{i \in [n]} \norm{x_i}$. This assumption can be made without loss of generality, see Appendix~\ref{app:wlog_asp}. 


\boundedoutput*

\begin{proof}
    Without loss of generality and for the sake of simplicity, let us suppose that the vectors $x_1, \dots, x_n$ are indexed such that $\norm{x_1} \geq \norm{x_2} \geq \dots \geq \norm{x_n}$. Let $(\tilde{x}_1, \dots, \tilde{x}_n) = \layer{}(x_1, \dots, x_n)$ be the clipped version of these vectors after applying \layer{}. 
    
    \textbf{Case 1:} When $f=0$, $S = [n]$ and we clip all the vectors by $\norm{x_1}=\max_{i\in S} \norm{x_i}$, hence for any $j$, $\norm{\tilde{x}_j}\leq \max_{i\in S} \norm{x_i}$. Using the fact that for any set of vectors $x_1, \ldots, x_n \in \R^d$, $\norm{\agg \left(x_1, \ldots, x_n \right)} \leq \max_{i \in [n]} \norm{x_i}$, we have
    \begin{align}
        \norm{\agg \circ \layer{}(x_1, \dots, x_n)} = \norm{\agg (\tilde{x}_1, \dots, \tilde{x}_n)} \leq \max_{j\in [n]} \norm{\tilde{x}_j} \leq \max_{i\in S} \norm{x_i}\enspace.
    \end{align}
    \textbf{Case 2:} When $0 < f < \frac{n}{2}$, as presented in Algorithm~\ref{algo_clip}, \layer{} clips all the vectors $x_1, \dots, x_n$ to $\norm{x_{k+1}}$ where $k = \lfloor 2\frac{f}{n}(n-f)\rfloor$. Hence, for any $j\in [n]$,
    \begin{align}
    \norm{\tilde{x}_j} \leq \norm{x_{k+1}}\enspace.
    \label{eq1}
    \end{align} 
    
    We have that $k + 1 = \lfloor 2\frac{f}{n}(n-f) \rfloor + 1 \geq  2\frac{f}{n}(n-f)$.

    One can show that 

    \begin{align}
        \frac{2f}{n}(n-f) - f &= f(1-\frac{2f}{n})\notag \\
        &> 0, \ \ \  \forall f \in \left]0,\frac{n}{2}\right[
    \end{align}

    Hence, $\frac{2f}{n}(n-f) > f$, which gives $k+1 > f$, and since $k+1$ is an integer, we have $k+1 \geq f+1$. Hence we know that at least $f+1$ vectors have their norm greater or equal to the clipping threshold $C=\norm{x_{k+1}}$, 
    \begin{align*}
        \underbrace{\norm{x_1} \geq \dots  \geq \norm{x_{k+1}}}_{\text{at least $f+1$ vectors}} \geq \underbrace{\norm{x_{k+2}} \geq \dots \geq \norm{x_{n}}}_{\text{at most $n-f-1$ vectors}}
    \end{align*}
    
    Hence, at most $n-f-1$ vectors will not be clipped. For any subset of $S\subset[n]$ of size $n-f$, at least one of the vectors has its norm equal to the clipping threshold (i.e. $\max_{i\in S} \norm{x_i} = \norm{x_{k+1}}$) or will be clipped (i.e. $\max_{i\in S} \norm{x_i} > \norm{x_{k+1}}$). Hence, we have that 
    \begin{align}
    \max_{i\in S} \norm{x_i} \geq \norm{x_{k+1}}\enspace.
    \label{eq2}
    \end{align}
    Combining \eqref{eq1} and \eqref{eq2}, we have that for all $j\in [n]$, $$\norm{\tilde{x}_j} \leq \max_{i\in S} \norm{x_i}\enspace.$$ Using the fact that for any set of vectors $x_1, \ldots, x_n \in \R^d$, $\norm{\agg \left(x_1, \ldots, x_n \right)} \leq \max_{i \in [n]} \norm{x_i}$, we have
    \begin{align}
        \norm{\agg \circ \layer{}(x_1, \dots, x_n)} = \norm{\agg (\tilde{x}_1, \dots, \tilde{x}_n)} \leq \max_{j\in [n]} \norm{\tilde{x}_j} \leq \max_{i\in S} \norm{x_i}\enspace.
    \end{align}
\end{proof}

\begin{lemma}
\label{lem:counter_example}
    For any aggregation rule $\agg \in \{\mathbf{CWTM}, \mathbf{CWMed}, \mathbf{GM}, \mathbf{MK}, \mathbf{CWTM}\circ\mathbf{NNM},\\ \mathbf{CWMed}\circ\mathbf{NNM}, \mathbf{GM}\circ\mathbf{NNM}, \mathbf{MK}\circ\mathbf{NNM}\}$, there exists $f \in [0,\frac{n}{2})$, a set of vectors $x_1, \dots, x_n$ and a subset $S\subset[n]$, $|S|=n-f$ such that
    \begin{align}
        \agg (x_1, \dots, x_n) \nleq \max_{i\in S} \norm{x_i}
    \end{align}
\end{lemma}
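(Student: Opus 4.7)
The plan is to prove the lemma constructively: for each of the eight aggregation rules listed, we exhibit a small, explicit family of input vectors in $\R^2$ together with an honest subset $S \subseteq [n]$ of size $n-f$ such that the aggregation output has norm strictly greater than $\max_{i \in S}\norm{x_i}$ (reading the lemma's inequality with the norm implicit on the left, as dictated by the comparison with the scalar on the right and by the parallel with Lemma~\ref{lem:bounded_output}). We take $d = 2$ throughout and use the smallest $n$ and $f$ each rule allows: $n=3, f=1$ whenever possible, and $n=5, f=2$ for $\mathbf{MK}$ and $\mathbf{MK}\circ \mathbf{NNM}$, since the Krum score requires $n-f-2 \geq 1$. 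The underlying intuitions are that (i) coordinate-wise operations can place their output outside the convex hull of the inputs in $d\geq 2$, and (ii) for selection-based rules like $\mathbf{MK}$, coincident adversarial vectors have vanishing Krum score and are therefore selected, biasing the output average.

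\paragraph{Pre-NNM rules.}
For $\mathbf{CWMed}$ and $\mathbf{CWTM}$, the canonical example is $n=3, f=1$, $x_1=(1,0)$, $x_2=(0,1)$, $x_3=(M,M)$ with $M$ large and $S=\{1,2\}$; both $\mathbf{CWMed}$ and $\mathbf{CWTM}$ (which coincide when $n-2f=1$) output the coordinate-wise median $(1,1)$ of norm $\sqrt{2}>1=\max_{i\in S}\norm{x_i}$. For $\mathbf{GM}$ on the same inputs, the $x \leftrightarrow y$ symmetry forces the minimizer onto the line $y=x$; plugging $p=(t,t)$ into the first-order condition $\sum_{i} (p-x_i)/\norm{p-x_i}=0$ and letting $M\to\infty$ reduces it to $6t^2-6t+1=0$, whose relevant root is $t=(3+\sqrt{3})/6$. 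The resulting GM has norm $(3+\sqrt{3})\sqrt{2}/6 \approx 1.115 > 1$. For $\mathbf{MK}$, take $n=5, f=2$ with honest $x_1=e_1, x_2=e_2, x_3=-e_1$ and identical adversarial $x_4=x_5=M(e_1+e_2)$: the two adversarial vectors have Krum score $0$ (being coincident) so they are selected along with one honest vector, and the resulting average has norm $\Theta(M)$, which dominates $\max_{i\in S}\norm{x_i}=1$.

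\paragraph{NNM compositions.}
For the four NNM-composed rules, we use the modified $n=3, f=1$ construction $x_1=(1,0)$, $x_2=(0,1)$, $x_3=(1,1)$, $S=\{1,2\}$. $\mathbf{NNM}$ replaces each $x_i$ by the mean of its $n-f=2$ nearest neighbors (including itself), which up to tie-breaking yields $\bar x_1=(1,\tfrac{1}{2})$, $\bar x_2=(\tfrac{1}{2},1)$, and $\bar x_3\in\{(1,\tfrac{1}{2}),(\tfrac{1}{2},1)\}$. Applying $\mathbf{CWMed}$, $\mathbf{CWTM}$, or $\mathbf{GM}$ to these mixed vectors outputs a point of norm $\sqrt{5}/2\approx 1.118 > 1$, giving simultaneous counter-examples for $\mathbf{CWMed}\circ\mathbf{NNM}$, $\mathbf{CWTM}\circ\mathbf{NNM}$, and $\mathbf{GM}\circ\mathbf{NNM}$. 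For $\mathbf{MK}\circ\mathbf{NNM}$, we lift the $n=5, f=2$ construction above (perturbing the adversarial pair very slightly to break Krum-score ties). One verifies that because the three honest vectors are spread out while the two adversarial vectors are nearly coincident and far from all honest ones, the adversarial cluster survives $\mathbf{NNM}$'s mixing and is subsequently selected by $\mathbf{MK}$, producing an output of norm $\Theta(M) \gg 1$.

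\paragraph{Main obstacle.}
The principal technical obstacle is the $\mathbf{GM}$ case. Unlike the coordinate-wise and selection-based rules, the geometric median always lies in the convex hull of its inputs, so $\norm{\mathrm{GM}} \leq \max_i \norm{x_i}$. To obtain $\norm{\mathrm{GM}} > \max_{i\in S}\norm{x_i}$ we must therefore (a) equip the adversary with a vector of strictly larger norm than any honest one and (b) arrange the honest vectors so that GM is actually pulled outside the disk of radius $\max_{i\in S}\norm{x_i}$, rather than being stabilized at an honest vertex. The symmetric three-point configuration above achieves both, and the quadratic from the first-order condition gives the strict inequality in closed form. The remaining cases, including the $\mathbf{NNM}$ compositions, reduce to mechanical verification.
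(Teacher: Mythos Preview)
Your construction is correct for seven of the eight rules, and in places (notably the geometric-median case and the non-degenerate treatment of $\mathbf{MK}$) it is actually more careful than the paper's. The paper uses the single example $n=3$, $f=1$, $(x_1,x_2,x_3)=((0,1),(1,0),(1,1))$, $S=\{1,2\}$ for \emph{all} eight rules, asserting that $\mathbf{CWTM}$, $\mathbf{CWMed}$, $\mathbf{GM}$ output $(1,1)$ while the remaining five output $(0.5,1)$. Your worry about $n-f-2=0$ in $\mathbf{MK}$ is legitimate but is simply ignored in the paper; and your explicit solution of the GM first-order condition is in fact more accurate than the paper's claimed output $(1,1)$, since the true geometric median of $\{(0,1),(1,0),(1,1)\}$ is exactly the point you compute, $\left(\tfrac{3+\sqrt3}{6},\tfrac{3+\sqrt3}{6}\right)$, of norm $\approx 1.115$.

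There is, however, a genuine gap in your $\mathbf{MK}\circ\mathbf{NNM}$ case. With honest vectors $e_1,e_2,-e_1$ and $n-f=3$, each honest input's two nearest neighbours (for large $M$) are the other two honest vectors, so all three honest NNM outputs collapse to the honest centroid $(0,\tfrac13)$ and therefore have Krum score exactly $0$. Your proposed slight perturbation of the adversarial pair gives \emph{them} a strictly positive Krum score, so $\mathbf{MK}$ then selects the three coincident honest outputs and returns $(0,\tfrac13)$, whose norm $\tfrac13$ does \emph{not} exceed $\max_{i\in S}\norm{x_i}=1$. The ``one verifies'' step therefore fails. Fixing this within your $n=5$ framework is delicate, because any three honest vectors in $\R^2$ collapse to their centroid under NNM once the adversaries are pushed far away. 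The paper sidesteps the issue by staying at $n=3$: there every post-NNM vector already has norm $\sqrt{5}/2>1$, so \emph{any} pair selected by $\mathbf{MK}$ has average of norm at least $3\sqrt{2}/4>1$, and the conclusion holds regardless of how the degenerate Krum-score tie is broken.
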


\begin{proof}
    Let us consider $(x_1,x_2,x_3)=((0,1),(1,0),(1,1))$, and $S = \{x_1,x_2\}$,
    
    For $\agg{} \in \{\mathbf{CWTM},\mathbf{CWMed}, \mathbf{GM}\}$, we have $\norm{\agg(x_1,x_2,x_3)} = \norm{(1,1)} = \sqrt{2} > \max_{i \in S}\norm{x_i} = \norm{x_1} = 1$.

    For $\agg{} \in \{\mathbf{MK}, \mathbf{CWTM}\circ\mathbf{NNM}, \mathbf{CWMed}\circ\mathbf{NNM}, \mathbf{GM}\circ\mathbf{NNM}, \mathbf{MK}\circ\mathbf{NNM}\}$, we have $\norm{\agg(x_1,x_2,x_3)} = \norm{(0.5,1)} = \sqrt{1.25} > \max_{i \in S}\norm{x_i} = \norm{x_1} = 1$.
\end{proof}

\begin{restatable}{lemma}{convRobustDGDARC}
    \label{lem:conv_robustDGD_ARC}
    Let $\agg$ be $(f, \kappa)$-robust. Let $\Delta_o$ and $\rho$ be real values such that $\loss_{\H}(\weight{1}) - \loss_{\H}^* \leq \Delta_o$ and $\max_{i \in \H} \norm{\nabla \loss_i (\weight{1})} \leq \exp\left( - \frac{\Delta_o}{\kappa G^2 } L \right) \rho$. If $\gamma = \min \left\{ \left(\frac{\Delta_o}{ \kappa G^2} \right) \frac{1}{T} , ~ \frac{1}{L} \right\}$, then
    \begin{align*}
        \frac{1}{T} \sum_{t = 1}^T \norm{\nabla \loss_{\H} (\weight{t}) }^2 \leq \frac{2 \Delta_o L }{ T} + 5 \kappa \left( G^2 + B^2 \rho^2 \right)  .
    \end{align*}
\end{restatable}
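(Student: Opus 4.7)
My plan is to combine the standard descent lemma for $L$-smooth functions with the $(f, 3\kappa)$-robustness of $\agg \circ \arc$ (given by Theorem~\ref{thm:f_kappa_adap} and $\kappa \geq f/(n-2f)$), with the crucial extra ingredient that the Bounded Aggregation Output property (Lemma~\ref{lem:bounded_output}) lets me control $\norm{\nabla\loss_\H(\weight{t})}$ uniformly in $t$ by $\rho$. This uniform bound is what converts the usual heterogeneity term $\kappa B^2 \norm{\nabla\loss_\H(\weight{t})}^2$ into the constant $\kappa B^2 \rho^2$ that appears on the right-hand side, effectively bypassing the $1/(1-\kappa B^2)$ factor from Lemma~\ref{lem:upper_bound}.

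The key step, which I expect to be the main obstacle, is the following a~priori bound established by induction on $t \in \{1,\dots,T\}$:
\begin{align*}
\max_{i \in \H} \norm{\nabla\loss_i(\weight{t})} \leq \rho.
\end{align*}
For the induction step, $L$-smoothness of each $\loss_i$ gives $\norm{\nabla\loss_i(\weight{t+1}) - \nabla\loss_i(\weight{t})} \leq L \gamma \norm{R_t}$, and by Lemma~\ref{lem:bounded_output}, $\norm{R_t} \leq \max_{j \in \H}\norm{\nabla\loss_j(\weight{t})}$. Taking the max over $i \in \H$ and using the triangle inequality yields the recursion
\begin{align*}
\max_{i \in \H}\norm{\nabla\loss_i(\weight{t+1})} \leq (1 + L\gamma) \max_{i \in \H}\norm{\nabla\loss_i(\weight{t})},
\end{align*}
so unrolling gives $\max_{i \in \H}\norm{\nabla\loss_i(\weight{t})} \leq (1+L\gamma)^{t-1} \max_{i \in \H}\norm{\nabla\loss_i(\weight{1})} \leq e^{L\gamma T}\cdot e^{-\Delta_o L/(\kappa G^2)}\rho$. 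Since $\gamma \leq \Delta_o/(\kappa G^2 T)$ by hypothesis, $L\gamma T \leq \Delta_o L/(\kappa G^2)$, and the two exponentials cancel to yield $\rho$. By Jensen, this also implies $\norm{\nabla\loss_\H(\weight{t})} \leq \rho$ for all $t$.

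With this uniform bound in hand, I invoke the standard descent lemma for $L$-smooth $\loss_\H$ with $\gamma \leq 1/L$, which after using the identity $\iprod{a}{b} = \tfrac12(\norm{a}^2 + \norm{b}^2 - \norm{a-b}^2)$ and discarding the non-positive $-\tfrac{\gamma(1-L\gamma)}{2}\norm{R_t}^2$ term, gives
\begin{align*}
\loss_\H(\weight{t+1}) \leq \loss_\H(\weight{t}) - \tfrac{\gamma}{2}\norm{\nabla\loss_\H(\weight{t})}^2 + \tfrac{\gamma}{2}\norm{R_t - \nabla\loss_\H(\weight{t})}^2.
\end{align*}
Applying $(f, 3\kappa)$-robustness of $\agg \circ \arc$ on the last term together with $(G,B)$-gradient dissimilarity yields $\norm{R_t - \nabla\loss_\H(\weight{t})}^2 \leq 3\kappa(G^2 + B^2 \norm{\nabla\loss_\H(\weight{t})}^2) \leq 3\kappa(G^2 + B^2 \rho^2)$ by the uniform bound.

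Finally, I telescope from $t=1$ to $T$, divide by $\gamma T/2$, and bound $1/\gamma = \max\{\kappa G^2 T/\Delta_o,~L\} \leq \kappa G^2 T/\Delta_o + L$ coming from $\gamma = \min\{\Delta_o/(\kappa G^2 T),~1/L\}$. This produces $2\Delta_o/(\gamma T) \leq 2\kappa G^2 + 2\Delta_o L/T$, which combined with the $3\kappa(G^2+B^2\rho^2)$ term gives $\tfrac{2\Delta_o L}{T} + 5\kappa G^2 + 3\kappa B^2\rho^2 \leq \tfrac{2\Delta_o L}{T} + 5\kappa(G^2 + B^2\rho^2)$, which is the claimed bound.
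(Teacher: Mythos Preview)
Your proof is correct and follows essentially the same approach as the paper: the paper isolates the gradient-growth recursion as a separate lemma (Lemma~\ref{lem:max_grad}) and then runs the same descent-plus-$(f,3\kappa)$-robustness argument. The only cosmetic difference is the final step: the paper splits into the two cases $T \geq \Delta_o L/(\kappa G^2)$ and $T < \Delta_o L/(\kappa G^2)$ and then combines, whereas you use $1/\gamma = \max\{\kappa G^2 T/\Delta_o, L\} \leq \kappa G^2 T/\Delta_o + L$ directly, which is slightly cleaner and yields the identical bound.
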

\vspace{-2mm}


Our proof for Lemma~\ref{lem:conv_robustDGD_ARC} relies on the following sub-result. 
\begin{lemma}
\label{lem:max_grad}
    For all $t \in [T]$, we obtain that
    \begin{align*}
        \max_{i \in \H} \norm{\nabla \loss_i (\weight{t})} \leq \exp{\left( \gamma L T\right)} ~ \max_{i \in \H} \norm{\nabla \loss_i (\weight{1})} .
    \end{align*}
\end{lemma}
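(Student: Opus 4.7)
The plan is to combine the Bounded Aggregation Output property (Lemma~\ref{lem:bounded_output}) with the $L$-Lipschitz smoothness of each honest loss $\loss_i$ to obtain a one-step recursion for $\max_{i \in \H} \norm{\nabla \loss_i(\weight{t})}$, and then iterate.

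First I would use Lemma~\ref{lem:bounded_output}: since $\agg$ is $(f,\kappa)$-robust and the update direction is $R_t = \agg \circ \arc(g_1^{(t)}, \dots, g_n^{(t)})$, taking $S = \H$ (which has cardinality $n-f$ and for which $g_i^{(t)} = \nabla \loss_i(\weight{t})$) gives
\begin{align*}
\norm{R_t} \leq \max_{i \in \H} \norm{\nabla \loss_i(\weight{t})}.
\end{align*}
Next, by $L$-Lipschitz smoothness of each $\loss_i$ and the update rule $\weight{t+1} = \weight{t} - \gamma R_t$, for every $i \in \H$,
\begin{align*}
\norm{\nabla \loss_i(\weight{t+1})} \leq \norm{\nabla \loss_i(\weight{t})} + \gamma L \norm{R_t} \leq \norm{\nabla \loss_i(\weight{t})} + \gamma L \max_{j \in \H} \norm{\nabla \loss_j(\weight{t})}.
\end{align*}
Taking the maximum over $i \in \H$ on the left, I obtain the one-step recursion
\begin{align*}
\max_{i \in \H} \norm{\nabla \loss_i(\weight{t+1})} \leq (1 + \gamma L) \max_{i \in \H} \norm{\nabla \loss_i(\weight{t})}.
\end{align*}

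Finally, a straightforward induction on $t$ yields $\max_{i \in \H} \norm{\nabla \loss_i(\weight{t})} \leq (1 + \gamma L)^{t-1} \max_{i \in \H} \norm{\nabla \loss_i(\weight{1})}$ for all $t \in [T]$. Using the elementary inequality $(1+x)^T \leq e^{xT}$ for $x \geq 0$, together with $t - 1 \leq T$, gives the claimed bound $\max_{i \in \H} \norm{\nabla \loss_i(\weight{t})} \leq \exp(\gamma L T) \max_{i \in \H} \norm{\nabla \loss_i(\weight{1})}$.

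There is no real obstacle here: the crucial ingredient is Lemma~\ref{lem:bounded_output}, which is specific to \arc{} and is what makes $\norm{R_t}$ controllable by the honest gradients (as highlighted by Lemma~\ref{lem:counter_example}, this would fail for generic $(f,\kappa)$-robust aggregators). Once that bound is in hand, the argument reduces to smoothness plus an elementary discrete Gr\"onwall-type induction.
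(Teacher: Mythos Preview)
Your proof is correct and follows essentially the same route as the paper: bound $\norm{R_t}$ by $\max_{i\in\H}\norm{\nabla\loss_i(\weight{t})}$ via the bounded-output property of $\agg\circ\arc$, use $L$-smoothness to get the one-step recursion $(1+\gamma L)$, and iterate with $(1+\gamma L)^T\le e^{\gamma LT}$. The only cosmetic difference is that you invoke Lemma~\ref{lem:bounded_output} directly, whereas the paper re-derives the same bound on $\norm{R_t}$ from the internals of \layer{} together with the standing assumption $\norm{\agg(x_1,\dots,x_n)}\le\max_i\norm{x_i}$.
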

\begin{proof}
    Consider an arbitrary $t \in [T]$. Let $(x_1, \ldots, \, x_n) \coloneqq \arc \left( \gradient{1}{t}, \ldots, ~ \gradient{n}{t} \right)$. As $n > 2f$ and $\card{\H} = n-f$, the clipping threshold $C$ used in \layer{} (see Algorithm~\ref{algo_clip}) is bounded by $\max_{i \in \H} \norm{\gradient{i}{t}}$ (see \eqref{eq2} in Lemma~\ref{lem:bounded_output}). Therefore, 
    \begin{align}
        \max_{i \in [n]} \norm{x_i} \leq \max_{i \in \H} \norm{\gradient{i}{t}} = \max_{i \in \H} \norm{\nabla \loss_i (\weight{t})}  . \label{eqn:bound_norm_arc}
    \end{align}
    Since for any set of $n$ vectors $z_1, \ldots , z_n \in \R^d$,  $\norm{\agg\left( z_1, \ldots , z_n \right)} \leq \max_{i \in [n]} \norm{z_i}$, from~\eqref{eqn:bound_norm_arc} we obtain that
    \begin{align*}
        \norm{R_t} = \norm{\agg \circ \arc \left(\gradient{1}{t}, \ldots, ~ \gradient{n}{t}  \right)} \leq \max_{i \in [n]} \norm{x_i} \leq \max_{i \in \H} \norm{\nabla \loss_i (\weight{t})} .
    \end{align*}
    Recall that $\weight{t+1} \coloneqq \weight{t} - \gamma R_t$. Thus, from above we obtain that
    \begin{align}
        \norm{\weight{t+1} - \weight{t}} \leq \gamma ~ \max_{i \in \H} \norm{\nabla \loss_i (\weight{t})}.
    \end{align}
    Due to $L$-Lipschitz smoothness of $\loss_i(\weight{})$ for all $i \in \H$, the above implies that
    \begin{align*}
        \norm{\nabla \loss_i(\weight{t+1}) - \nabla \loss_i (\weight{t})} \leq \gamma L ~ \max_{i \in \H} \norm{\nabla \loss_i (\weight{t})}, \quad \forall i \in \H. 
    \end{align*}
    This, in conjunction with the triangle inequality, implies that 
    \begin{align*}
        \norm{\nabla \loss_i(\weight{t+1})} \leq  \norm{\nabla \loss_i(\weight{t})} + \gamma L ~ \max_{i \in \H} \norm{\nabla \loss_i (\weight{t})} , \quad \forall i \in \H.
    \end{align*}
    Therefore, 
    \begin{align*}
       \max_{i \in \H} \norm{\nabla \loss_i(\weight{t+1})}  \leq \max_{i \in \H} \norm{\nabla \loss_i (\weight{t})}  + \gamma L \max_{i \in \H} ~ \norm{\nabla \loss_i (\weight{t})} = \left( 1 + \gamma L\right) \max_{i \in \H} ~ \norm{\nabla \loss_i (\weight{t})} .
    \end{align*}
    As $t$ was chosen arbitrarily from $[T]$, the above holds true for all $t \in [T]$. For an arbitrary $\tau \in [T]$, using the inequality recursively for $t = \tau, \ldots,  1$ we obtain that
    \begin{align*}
       \max_{i \in \H} \norm{\nabla \loss_i(\weight{\tau+1})} \leq \left( 1 + \gamma L\right)^{\tau} \max_{i \in \H} ~ \norm{\nabla \loss_i (\weight{1})} \leq \left( 1 + \gamma L\right)^{T} \max_{i \in \H} ~ \norm{\nabla \loss_i (\weight{1})} .
    \end{align*}
    Since $\left( 1 + \gamma L\right)^{T} \leq \exp{(\gamma L T)}$, the above concludes the proof.
\end{proof}

We are now ready to present our {\bf proof of Lemma~\ref{lem:conv_robustDGD_ARC}}.

\begin{proof}[Proof of Lemma~\ref{lem:conv_robustDGD_ARC}]
    For simplicity, we write $\loss_{\H}$ as $\loss$ throughout the proof.

    Consider an arbitrary $t \in [T]$. Due to $L$-Lipschitz smoothness of $\loss(\weight{})$, we obtain that
    \begin{align*}
        \loss(\weight{t+1}) \leq \loss(\weight{t}) + \iprod{\weight{t+1} - \weight{t}}{\nabla \loss ({\weight{t}})} + \frac{L}{2} \norm{\weight{t+1} - \weight{t}}^2.
    \end{align*}
    Substituting $\weight{t+1} = \weight{t} - \gamma R_t$, and using the identity: $2\iprod{a}{b} = \norm{a}^2 + \norm{b}^2 - \norm{a - b}^2$, we obtain that
    \begin{align*}
        \loss(\weight{t+1}) \leq \loss(\weight{t}) - \frac{\gamma}{2} \norm{\nabla \loss (\weight{t}) }^2 + \frac{\gamma}{2} \norm{R_t - \nabla \loss(\weight{t})}^2 - \frac{\gamma}{2} \left( 1 - \gamma L \right) \norm{R_t}^2 .
    \end{align*}
    Since $\gamma \leq \frac{1}{L}$, $\left( 1 - \gamma L \right) \geq 0$. Therefore, from above we obtain that
    \begin{align*}
        \loss(\weight{t+1}) \leq \loss(\weight{t}) - \frac{\gamma}{2} \norm{\nabla \loss (\weight{t}) }^2 + \frac{\gamma}{2} \norm{R_t - \nabla \loss(\weight{t})}^2  .
    \end{align*}
    Recall that $t$ is chosen arbitrarily from $[T]$. Thus, the above holds true for all $t \in [T]$. Taking summation on both sides from $t = 1$ to $t = T$ we obtain that
    \begin{align*}
        \frac{\gamma}{2} \sum_{t = 1}^T \norm{\nabla \loss (\weight{t}) }^2 \leq \loss(\weight{t}) - \loss(\weight{T+1}) + \frac{\gamma}{2} \sum_{t = 1}^T \norm{R_t - \nabla \loss(\weight{t})}^2  .
    \end{align*}
    Multiplying both sides by $2/(\gamma T)$ we obtain that
     \begin{align*}
        \frac{1}{T} \sum_{t = 1}^T \norm{\nabla \loss (\weight{t}) }^2 \leq \frac{2 \left( \loss(\weight{1}) - \loss(\weight{T+1}) \right) }{\gamma T} + \frac{1}{T} \sum_{t = 1}^T \norm{R_t - \nabla \loss(\weight{t})}^2  . 
    \end{align*}
    Note that $\loss(\weight{1}) - \loss(\weight{T+1}) = \loss(\weight{1}) - \loss^* - \left( \loss(\weight{T+1}) - \loss^* \right) \leq \loss(\weight{1}) - \loss^* \leq \Delta_o$. Using this above we obtain that
    \begin{align*}
        \frac{1}{T} \sum_{t = 1}^T \norm{\nabla \loss (\weight{t}) }^2 \leq \frac{2 \Delta_o }{\gamma T} + \frac{1}{T} \sum_{t = 1}^T \norm{R_t - \nabla \loss(\weight{t})}^2  . 
    \end{align*}
    Recall, by Theorem~\ref{thm:f_kappa_adap}, that $\agg \circ \arc$ is $(f, 3 \kappa)$-robust. Therefore, by Definition~\ref{def:robustness}, $\norm{R_t - \nabla \loss(\weight{t})}^2 \leq \frac{3 \kappa}{\card{\H}} \sum_{i \in \H} \norm{\nabla \loss_i(\weight{t}) - \nabla \loss(\weight{t})}^2$ for all $t$. Using this above we obtain that 
    \begin{align}
        \frac{1}{T} \sum_{t = 1}^T \norm{\nabla \loss (\weight{t}) }^2 \leq \frac{2 \Delta_o }{\gamma T} + \frac{3 \kappa}{T} ~ \sum_{t  = 1}^T \frac{1}{\card{\H}} \norm{\nabla \loss_i(\weight{t}) - \nabla \loss(\weight{t})}^2  . \label{eqn:conv_common_ineq}
    \end{align}
    Note that, since $\gamma \leq \frac{\Delta_o}{\kappa G^2 T}$ and $\norm{\nabla \loss_i (\weight{1})} \leq \exp\left( - \frac{\Delta_o L }{\kappa G^2 } \right) \rho$, by Lemma~\ref{lem:max_grad} we have
    \begin{align*}
        \max_{i \in \H} \norm{\nabla \loss_i (\weight{t})} \leq \exp \left( \frac{ \Delta_o L }{\kappa G^2} \right) \max_{i \in \H} \norm{\nabla \loss_i (\weight{1})} \leq \rho , \quad \forall t \in [T] .
    \end{align*}
    This, in conjunction with triangle inequality, implies that
    \begin{align*}
        \norm{\nabla \loss (\weight{t})} \leq \frac{1}{\card{\H}} \sum_{i \in \H} \norm{\nabla \loss_i(\weight{t})} \leq \rho, \quad \forall t \in [T].
    \end{align*}
    Therefore, under {$(G, B)$-gradient dissimilarity}, for all $t \in [T]$,
    \begin{align*}
        \frac{1}{\card{\H}} \norm{\nabla \loss_i(\weight{t}) - \nabla \loss(\weight{t})}^2 \leq G^2 + B^2 ~ \rho^2 .
    \end{align*}
    Using this in~\eqref{eqn:conv_common_ineq} we obtain that
    \begin{align}
        \frac{1}{T} \sum_{t = 1}^T \norm{\nabla \loss (\weight{t}) }^2 \leq \frac{2 \Delta_o }{\gamma T} + 3 \kappa \left( G^2 + B^2 \rho^2 \right) . \label{eqn:conv_before_gamma}
    \end{align}
    Consider the two cases: (i) $T \geq \frac{\Delta_o L}{\kappa G^2}$ and (ii) $T < \frac{\Delta_o L}{\kappa G^2}$. In case (i), $\gamma = \frac{\Delta_o}{\kappa G^2 T}$. Using this in~\eqref{eqn:conv_before_gamma} implies that
    \begin{align}
        \frac{1}{T} \sum_{t = 1}^T \norm{\nabla \loss (\weight{t}) }^2 \leq  2 \kappa G^2 + 3 \kappa \left( G^2 + B^2 \rho^2 \right) \leq 5 \kappa \left( G^2 + B^2 \rho^2 \right) . \label{eqn:conv_after_gamma_1}
    \end{align}
    In case (ii), $\gamma = \frac{1}{L}$. Using this in~\eqref{eqn:conv_before_gamma} implies that 
    \begin{align}
        \frac{1}{T} \sum_{t = 1}^T \norm{\nabla \loss (\weight{t}) }^2 \leq \frac{2 \Delta_o L }{ T} + 3 \kappa \left( G^2 + B^2 \rho^2 \right) . \label{eqn:conv_after_gamma_2}
    \end{align}
    Combining~\eqref{eqn:conv_after_gamma_1} and~\eqref{eqn:conv_after_gamma_2} concludes the proof.
\end{proof}





We now prove Theorem~\ref{thm:improv_ARC_expounded}, stated below, which immediately implies Theorem~\ref{thm:improv_ARC}. Specifically, {\bf Theorem~\ref{thm:improv_ARC} follows immediately from Part 1 of Theorem~\ref{thm:improv_ARC_expounded}, upon substituting $\xi \leq \frac{1}{\Psi(G, B, \rho)} \upsilon$.} 

\begin{restatable}{theorem}{improvARC_expounded}
    \label{thm:improv_ARC_expounded}
    Suppose $B > 0$ and there exists $\zeta \in \R^+$ such that $\max_{i \in \H} \norm{\nabla\loss_{i} (\weight{1})} \leq \zeta$. Let $\agg \in \left\{\mathbf{CWTM}, \mathbf{CWMed}, \mathbf{GM}, \mathbf{MK}\right\}$, $\gamma = \min \left\{ \left(\frac{\Delta_o}{ \kappa G^2} \right) \frac{1}{T} , ~ \frac{1}{L} \right\}$ and $T \geq \frac{\Delta_o L}{ \kappa G^2}$. Consider an arbitrary real value $\xi_o \in (0, 1)$. Let $\rho \coloneqq \exp\left( \frac{(2 + B^2)\Delta_o}{(1 - \xi_o) G^2 } L \right) \zeta$. 
    Then, the following holds true:
    \begin{enumerate}
        \item If $\frac{f}{n} \coloneqq (1 - \xi) \mathsf{BP}$, where $\xi \in  (0, \xi_o]$, then, $\expect{ \norm{\nabla \loss_{\H} \left( \hat{\weight{}} \right) }^2} \leq \xi \,  \Psi(G, B, \rho) ~ \varepsilon_o $.
        \vspace{-3mm}
        \item If $ \mathsf{BP} \leq f/n < 1/2$, then $\expect{ \norm{\nabla \loss_{\H} \left( \hat{\weight{}} \right) }^2} \leq \Psi(G, B, \rho) \cdot \frac{G^2}{8}$ .
    \end{enumerate}
\end{restatable}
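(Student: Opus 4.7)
The plan is to handle the two parts separately, starting with a common verification that the hypothesis of Lemma~\ref{lem:conv_robustDGD_ARC} holds, namely $\max_{i\in\H}\norm{\nabla\loss_i(\weight{1})}\le\exp(-\Delta_o L/(\kappa G^2))\rho$. Substituting $\max_{i\in\H}\norm{\nabla\loss_i(\weight{1})}\le\zeta$ and the definition of $\rho$, this hypothesis is equivalent to $\kappa\ge(1-\xi_o)/(2+B^2)$. Since any $(f,\kappa)$-robust aggregation satisfies $\kappa\ge f/(n-2f)$, I verify this as follows: in Part~1, $f/n=(1-\xi)/(2+B^2)$ yields $f/(n-2f)=(1-\xi)/(B^2+2\xi)\ge(1-\xi_o)/(2+B^2)$, using $\xi\le\xi_o$ and $B^2+2\xi_o\le 2+B^2$; in Part~2, $f/n\ge 1/(2+B^2)$ forces $f/(n-2f)\ge 1/B^2\ge(1-\xi_o)/(2+B^2)$.

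For Part~1, I would apply Case~(i) of the proof of Lemma~\ref{lem:conv_robustDGD_ARC}, valid because $T\ge\Delta_o L/(\kappa G^2)$, which yields the tight bound $\expect{\norm{\nabla\loss_\H(\hat{\weight{}})}^2}\le 5\kappa(G^2+B^2\rho^2)$. I then upper-bound $\kappa$ by invoking Lemma~\ref{lemma:kappa_agg} together with $1+(1+x)^2\le 2(1+x)^2$ (with $x=f/(n-2f)$), giving $\kappa\le 16f(n-f)/(n-2f)^2$; substituting $f/n=(1-\xi)/(2+B^2)$ yields $\kappa\le 16(1-\xi)(1+B^2+\xi)/(B^2+2\xi)^2$. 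Plugging into the target, using $\varepsilon_o=G^2(1-\xi)/(4\xi(2+B^2))$ and $(1+1/B^2)^2=(1+B^2)^2/B^4$, the factor $(G^2+B^2\rho^2)$ cancels from both sides and the inequality $5\kappa(G^2+B^2\rho^2)\le\xi\Psi\varepsilon_o$ reduces to
\begin{align*}
2(1+B^2)^2(B^2+2\xi)^2\ge B^4(2+B^2)(1+B^2+\xi).
\end{align*}
Setting $a=B^2$, $b=\xi$ and expanding, one verifies that $\text{LHS}-\text{RHS}=a^4+a^3+7 a^3 b+14 a^2 b+8 a^2 b^2+8ab+16 ab^2+8 b^2$, which has only non-negative coefficients, so the inequality holds.

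For Part~2, the above argument fails because $\kappa$ from Lemma~\ref{lemma:kappa_agg} blows up as $f/n\to 1/2$. Instead I would rely on Lemma~\ref{lem:max_grad}: the initialization check above, combined with $\gamma L T\le\Delta_o L/(\kappa G^2)\le(2+B^2)\Delta_o L/((1-\xi_o) G^2)=\log(\rho/\zeta)$, yields $\max_{i\in\H}\norm{\nabla\loss_i(\weight{t})}\le\exp(\gamma L T)\zeta\le\rho$ for every $t$. By the triangle inequality applied to $\nabla\loss_\H=\frac{1}{|\H|}\sum_{i\in\H}\nabla\loss_i$, this gives $\norm{\nabla\loss_\H(\weight{t})}\le\rho$, hence $\expect{\norm{\nabla\loss_\H(\hat{\weight{}})}^2}\le\rho^2$. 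It then suffices to check $\rho^2\le\Psi G^2/8=80(1+1/B^2)^2(G^2+B^2\rho^2)$, which follows from $(1+1/B^2)^2 B^2=(B^2+1)^2/B^2\ge 4$ by AM-GM, giving $\Psi G^2/8\ge 320\rho^2$.

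The main obstacle I anticipate is the polynomial inequality in Part~1: the constant $640$ in $\Psi$ is essentially tight and the algebra only closes if one extracts Case~(i) of Lemma~\ref{lem:conv_robustDGD_ARC}'s proof directly. Starting from the lemma's combined statement would give the looser bound $7\kappa(G^2+B^2\rho^2)$ in Case~(i), which fails the analogous inequality near $B=0$ and would require roughly $C\ge 896$ in place of $640$. Thus the delicate part of the proof is isolating the sharp Case~(i) bound and carrying out the monomial expansion cleanly.
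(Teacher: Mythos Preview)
Your Part~1 argument is correct and essentially follows the paper: verify the initialization condition via $\kappa\geq f/(n-2f)\geq(1-\xi_o)/(2+B^2)$, invoke the $5\kappa(G^2+B^2\rho^2)$ bound, upper-bound $\kappa$ via Lemma~\ref{lemma:kappa_agg}, and compare with $\xi\,\Psi\,\varepsilon_o$. Two remarks. First, the paper packages the $5\kappa(\cdot)$ bound under $T\geq\Delta_o L/(\kappa G^2)$ as Corollary~\ref{cor:complete_conv_robustDGD_ARC}, so you need not extract Case~(i) from the lemma's proof by hand. Second, the paper uses the cruder estimate $\kappa\leq\kappa_o(1-\xi)$ with $\kappa_o=16(1+B^2)/B^4$ (dropping the $+\xi$ contributions), after which the comparison reduces to the one-line check $2+B^2\leq 2(1+B^2)$; your sharper bound $\kappa\leq 16(1-\xi)(1+B^2+\xi)/(B^2+2\xi)^2$ forces the full polynomial expansion but closes with the same constant. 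The two bounds coincide at $\xi=0$, which is precisely where the constant $640$ saturates as $B\to0$, so your observation about $5$ versus $7$ is correct either way.

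Your Part~2, however, is genuinely different and more robust than the paper's. The paper asserts that Part~2 follows from the Part~1 inequality $\expect{\cdot}\leq 5\kappa_o(1-\xi)(G^2+B^2\rho^2)$ via $\kappa_o(1-\xi)\leq\kappa_o$; but that inequality rests on the upper bound $\kappa\leq\kappa_o(1-\xi)$, every intermediate step of which requires $\xi\geq 0$, whereas the regime $\mathsf{BP}\leq f/n<1/2$ corresponds to $\xi\leq 0$. Indeed $\kappa\geq f/(n-2f)\to\infty$ as $f/n\to1/2$, so no uniform bound $\kappa\leq\kappa_o$ can hold and the $5\kappa(\cdot)$ route cannot close there. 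Your approach via Lemma~\ref{lem:max_grad}---using only the \emph{lower} bound on $\kappa$ to control $\gamma LT\leq\log(\rho/\zeta)$, then bounding $\norm{\nabla\loss_\H(\weight{t})}\leq\rho$ directly and checking $\rho^2\leq\Psi G^2/8$ by AM--GM---sidesteps the upper bound on $\kappa$ entirely and works across the full range $[\mathsf{BP},1/2)$. This is the argument Part~2 actually needs.
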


In order to prove Theorem~\ref{thm:improv_ARC_expounded}, we first obtain the following corollary of Lemma~\ref{lem:conv_robustDGD_ARC} and Lemma~\ref{lem:upper_bound}.

\begin{restatable}{corollary}{CompleteConvRobustDGDARC}
    \label{cor:complete_conv_robustDGD_ARC}
    For the parameters given in Lemma~\ref{lem:conv_robustDGD_ARC}, if $T \geq \frac{\Delta_o L}{ \kappa G^2}$, then Robust-DGD $\circ$ \layer{} achieves
    \begin{align*}
        \expect{ \norm{\nabla \loss_{\H} \left( \hat{\weight{}} \right) }^2} \leq 5 \kappa ~ \min\left\{ G^2 + B^2 \rho^2 , ~ \frac{G^2}{\max\{(1 - \kappa B^2), 0\} } \right\},
    \end{align*}
    where $\expect{\cdot}$ denotes the expectation over the choice of $\hat{\weight{}}$.
\end{restatable}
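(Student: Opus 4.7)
The plan is to combine Lemma~\ref{lem:conv_robustDGD_ARC} with Lemma~\ref{lem:upper_bound} applied to the composite aggregator $\agg \circ \arc$, and then take the minimum of the two resulting bounds. Since $\hat\theta$ is drawn uniformly at random from the iterates $\weight{1},\dots,\weight{T}$, the expected gradient norm equals the time-average $\tfrac{1}{T}\sum_{t=1}^T \norm{\nabla\loss_{\H}(\weight{t})}^2$, so it suffices to show that this running average is bounded by each of the two quantities inside the min, independently.

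For the first quantity $5\kappa\bigl(G^2+B^2\rho^2\bigr)$, I would apply Lemma~\ref{lem:conv_robustDGD_ARC} and specialize to the regime $T \geq \Delta_o L/(\kappa G^2)$. Tracing through the proof of that lemma (case~(i) in the argument leading to~\eqref{eqn:conv_after_gamma_1}), the step size reduces to $\gamma = \Delta_o/(\kappa G^2 T)$, which makes $\tfrac{2\Delta_o}{\gamma T} = 2\kappa G^2$; adding the variance term $3\kappa(G^2+B^2\rho^2)$ and upper-bounding $2\kappa G^2 + 3\kappa(G^2+B^2\rho^2) \leq 5\kappa(G^2+B^2\rho^2)$ delivers the first bound directly.

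For the second quantity $5\kappa G^2/\max\{1-\kappa B^2,0\}$, I would invoke Theorem~\ref{thm:f_kappa_adap} to note that $\agg \circ \arc$ is $(f,3\kappa)$-robust, and then plug this composite coefficient into Lemma~\ref{lem:upper_bound}. With the same $T$ and $\gamma$ as above, one has $\gamma T = \Delta_o/(\kappa G^2)$, so the optimization term becomes $2\kappa G^2/(1-3\kappa B^2)$ and the variance term $3\kappa G^2/(1-3\kappa B^2)$, giving a sum of $5\kappa G^2/(1-3\kappa B^2)$. The use of $\max\{1-\kappa B^2,0\}$ rather than $1-3\kappa B^2$ in the denominator of the stated bound is a deliberate loosening so that the expression remains well-defined (in fact vacuous) when $\kappa B^2 \geq 1$, in which regime Lemma~\ref{lem:upper_bound} does not apply and only the first bound inside the min is active.

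The main obstacle is constant-tracking: the naive composition produces a $1-3\kappa B^2$ denominator, while the target expression is $1-\kappa B^2$. I expect this is handled by absorbing a benign factor into the $5\kappa$ prefactor (using $1-3\kappa B^2 \geq 1-\kappa B^2$ does not help in the desired direction, so one instead argues that when $\kappa B^2 < 1/3$ both forms differ by a bounded ratio, and when $\kappa B^2 \geq 1/3$ the first bound dominates and the second is essentially irrelevant). Once both bounds are established under the same hypotheses, taking their minimum is immediate.
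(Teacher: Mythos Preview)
Your plan is exactly the paper's: it combines Lemma~\ref{lem:conv_robustDGD_ARC} (for the first term in the $\min$) with Lemma~\ref{lem:upper_bound} (for the second term), uses that $T \geq \Delta_o L/(\kappa G^2)$ forces $\gamma = \Delta_o/(\kappa G^2 T)$ so that $\tfrac{2\Delta_o}{\gamma T} = 2\kappa G^2$, and then identifies the uniform average with $\expect{\norm{\nabla\loss_{\H}(\hat\theta)}^2}$.

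The one place where you diverge from the paper is in invoking Lemma~\ref{lem:upper_bound}. You (correctly, in principle) plug in the robustness coefficient $3\kappa$ of $\agg\circ\arc$ coming from Theorem~\ref{thm:f_kappa_adap}, which produces a $1-3\kappa B^2$ denominator and then creates the constant-matching puzzle you describe. The paper does \emph{not} do this: it applies Lemma~\ref{lem:upper_bound} directly with the original $\kappa$ and the condition $\kappa B^2 < 1$, obtaining
\[
\frac{1}{T}\sum_{t=1}^T \norm{\nabla\loss_{\H}(\weight{t})}^2 \;\leq\; \frac{2\Delta_o}{(1-\kappa B^2)\gamma T} + \frac{\kappa G^2}{1-\kappa B^2} \;=\; \frac{3\kappa G^2}{1-\kappa B^2},
\]
and then simply upper-bounds $3$ by $5$ to match the prefactor from the other branch. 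So in the paper's argument the $1-\kappa B^2$ denominator is immediate and no constant juggling is needed. Your concern that the composite aggregator is only guaranteed to be $(f,3\kappa)$-robust is legitimate, and your workaround would not recover the stated denominator in general; but this is a looseness already present in the paper's own proof rather than a defect in your strategy.
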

\begin{proof}[Proof of Corollary~\ref{cor:complete_conv_robustDGD_ARC}]
    For simplicity, we write $\loss_{\H}$ as $\loss$ throughout the proof.
    
    Since $T \geq \frac{\Delta_o L }{\kappa G^2}$, $\gamma = \frac{\Delta_o }{\kappa G^2 T} \leq \frac{1}{L}$. Therefore, from Lemma~\ref{lem:conv_robustDGD_ARC} we obtain that
    \begin{align}
        \frac{1}{T} \sum_{t = 1}^T \norm{\nabla \loss (\weight{t}) }^2 \leq 5 \kappa \left( G^2 + B^2 \rho^2 \right). \label{eqn:case_kappa_0}
    \end{align}
    
    In the specific case when $\kappa B^2 < 1$, as $\gamma = \frac{\Delta_o }{\kappa G^2 T}$, from Lemma~\ref{lem:upper_bound} we obtain that
    \begin{align}
        \frac{1}{T} \sum_{t = 1}^T \norm{\nabla \loss (\weight{t}) }^2 \leq \frac{2 \Delta_o}{(1 - \kappa B^2) \gamma T} + \frac{\kappa G^2}{1 - \kappa B^2} = \frac{3 \kappa G^2}{1 - \kappa B^2} .\label{eqn:case_kappa_1}
    \end{align} 
    Combining~\eqref{eqn:case_kappa_0} and~\eqref{eqn:case_kappa_1} we obtain that
    \begin{align*}
        \frac{1}{T} \sum_{t = 1}^T \norm{\nabla \loss (\weight{t}) }^2 \leq 5 \kappa \min \left\{  G^2 + B^2 \rho^2 , ~ \frac{G^2}{1 - \kappa B^2}  \right\} .
    \end{align*}
    Since $\expect{\norm{\nabla \loss (\weight{t}) }^2} = \frac{1}{T} \sum_{t = 1}^T \norm{\nabla \loss (\weight{t}) }^2$ (see Algorithm~\ref{algo}), the above concludes the proof.
\end{proof}

We are now ready to prove Theorem~\ref{thm:improv_ARC_expounded}.
Recall that
\begin{align*}
    & \mathsf{BP} \coloneqq \frac{1}{2 + B^2},  \quad \varepsilon_o \coloneqq \frac{1}{4} \cdot \frac{G^2 (f/n)}{1 - (2 + B^2) (f/n)} , \quad \text{ and }  \\
    & \Psi(G, B, \rho)  \coloneqq 640 \left( 1 + \frac{1}{B^2} \right)^2 \left(1  + \frac{B^2 \rho^2}{G^2} \right) . 
\end{align*}  

\begin{proof}[Proof of Theorem~\ref{thm:improv_ARC_expounded}]
    For simplicity, we write $\loss_{\H}$ as $\loss$ throughout the proof.
    
    In the proof, we substitute $\mathsf{BP} = \frac{1}{2 + B^2}$. 

    We first consider the case when $\frac{f}{n} = \frac{1 - \xi}{2 + B^2}$. In this particular case,
    \begin{align*}
        n = \left( \frac{2}{1 - \xi} + \frac{B^2}{1 - \xi} \right) f \geq \left( 2 + \frac{B^2}{1 - \xi} \right) f.
    \end{align*}
    Therefore, thanks to Lemma~\ref{lemma:kappa_agg}, $\agg$ is $(f, \kappa)$-robust with 
    \begin{align*}
        \kappa \leq  \frac{16 f}{n - f} \left( 1 + \frac{1 - \xi}{B^2} \right)^2 = \frac{16 (1 - \xi)}{1 + B^2 + \xi} \left( 1 + \frac{1 - \xi}{B^2} \right)^2  \leq \frac{16(1 - \xi)}{1 + B^2} \left( 1 + \frac{1}{B^2} \right)^2 \leq \kappa_o (1 - \xi) ,
    \end{align*}
    where $\kappa_o \coloneqq \frac{16}{1+ B^2} \left( 1 + \frac{1}{B^2} \right)^2$. Also, recall from the limitations of $(f, \kappa)$-robustness in Section~\ref{sec_problem} that 
    \begin{align*}
        \kappa \geq \frac{f}{n - 2f} \geq \frac{f}{n} \geq \frac{1 - \xi_o}{2 + B^2} . 
    \end{align*}
    Summarizing from above, we have 
    \begin{align}
        \frac{1 - \xi_o}{2 + B^2} \leq \kappa \leq \kappa_o (1 - \xi). \label{eqn:bound_kappa_improv}
    \end{align}
    This implies that
    \begin{align*}
        \rho \coloneqq \exp\left( \frac{(2 + B^2)\Delta_o}{(1 - \xi_o) G^2 } L \right) \zeta \geq  \exp\left( \frac{\Delta_o}{\kappa G^2 } L \right) \zeta  .
    \end{align*}
    Therefore, since $\norm{\nabla \loss_{i} (\weight{1})} \leq \zeta$ for all $i \in \H$, the condition: $\max_{i \in \H} \norm{\nabla \loss_{i} (\weight{1})} \leq \exp\left( - \frac{\Delta_o}{\kappa G^2 } L \right) \rho$ in Lemma~\ref{lem:conv_robustDGD_ARC} is satisfied. Thus, by Corollary~\ref{cor:complete_conv_robustDGD_ARC} and~\eqref{eqn:bound_kappa_improv} we obtain that
    \begin{align}
        \expect{\norm{\nabla \loss (\weight{t}) }^2} \leq 5 \kappa_o (1 - \xi) (G^2  + B^2 \rho^2) = \frac{5 \kappa_o (1 - \xi) (G^2  + B^2 \rho^2)}{\frac{f G^2}{n - (2 + B^2) f}} \, \frac{f G^2}{n - (2 + B^2) f} . \label{eqn:before_xi_sub}
    \end{align}
    Since $\frac{f}{n} = \frac{1 - \xi}{2 + B^2}$, we obtain that
    \begin{align*}
        \frac{5 \kappa_o (1 - \xi) (G^2  + B^2 \rho^2)}{\frac{f G^2}{n - (2 + B^2) f}} = 5 \kappa_o (1 - \xi) (G^2  + B^2 \rho^2) \frac{\xi(2 + B^2)}{G^2 (1 - \xi)} = 5 \kappa_o  \left(1  + \frac{B^2 \rho^2}{G^2} \right) \left(2 + B^2\right) \xi.
    \end{align*}
    Substituting $\kappa_o$ from above, we obtain that
    \begin{align*}
        \frac{5 \kappa_o (1 - \xi) (G^2  + B^2 \rho^2)}{\frac{f G^2}{n - (2 + B^2) f}} & = 80 \left( \frac{2 + B^2}{1 + B^2} \right) \left( 1 + \frac{1}{B^2} \right)^2   \left(1  + \frac{B^2 \rho^2}{G^2} \right) \xi \\ 
        & \leq 160 \left( 1 + \frac{1}{B^2} \right)^2 \left(1  + \frac{B^2 \rho^2}{G^2} \right) \xi.
    \end{align*}
    Substituting from above in~\eqref{eqn:before_xi_sub} concludes the proof for the first part of Theorem~\ref{thm:improv_ARC_expounded}.

    The second part of Theorem~\ref{thm:improv_ARC_expounded} follows immediately from the first inequality in~\eqref{eqn:before_xi_sub}, using the fact that $\kappa_o(1 - \xi) \leq \kappa_o$. This concludes the proof.
\end{proof}

\subsection{Assuming \texorpdfstring{$\norm{\agg \left(x_1, \ldots, x_n \right)} \leq \max_{i \in [n]} \norm{x_i}$}{f} is Without Loss of Generality}
\label{app:wlog_asp}

Recall that we assume $\norm{\agg \left(x_1, \ldots, x_n \right)} \leq \max_{i \in [n]} \norm{x_i}$ for all set of $n$ vectors $x_1, \ldots, x_n \in \R^d$. In case this is not true, we can instead use the aggregation rule $\agg^{\dagger}$ given by 
\begin{align*}
    \agg^{\dagger}\left( x_1, \ldots, x_n \right) \coloneqq \mathrm{clip}_{C} \left( \agg  \left( x_1, \ldots, x_n \right) \right) , ~ \text{where} ~ C = \max_{i \in [n]} \norm{x_i}.
\end{align*}
This modification to the aggregation rule does not affect the learning guarantee. Specifically, due to the non-expansion property of $\mathrm{clip}_{C}(\cdot)$, for any non-empty set $S \subseteq [n]$, we have
\begin{align*}
    \norm{\agg^{\dagger}\left( x_1, \ldots, x_n \right) - \overline{x}_S} \leq \norm{\agg\left( x_1, \ldots, x_n \right) - \overline{x}_S} ,
\end{align*}
where $\overline{x}_S \coloneqq \frac{1}{\card{S}}\sum\limits_{i \in S}x_i$. Thus, if $\agg$ is $(f, \kappa)$-robust, then $\agg^{\dagger}$ is also $(f, \kappa)$-robust. Hence, we can make the above assumption on $\norm{\agg \left(x_1, \ldots, x_n \right)}$ without loss of generality.
\clearpage
\section{Comprehensive Experimental Setup}\label{app_exp_setup}
In this section, we present the comprehensive experimental setup considered in our paper.

\subsection{Datasets and Heterogeneity}\label{app_exp_setup_hetero}
In our experiments, we consider three standard image classification datasets, namely MNIST~\citep{mnist}, Fashion-MNIST~\citep{fashion-mnist}, and CIFAR-10~\citep{cifar}.
To simulate data heterogeneity in our experiments, we make the honest workers sample from the datasets using a Dirichlet distribution of parameter $\alpha$~\citep{dirichlet}, as done in~\citep{hsu2019measuring, allouah2023fixing, farhadkhani2023robust}.
The smaller the $\alpha$, the more heterogeneous the setting.
In our empirical evaluation, we set $\alpha \in \{0.1, 0.5, 1\}$ on (Fashion-)MNIST and  $\alpha \in \{0.05, 0.075, 0.1, 0.2, 0.5\}$ on CIFAR-10 (refer to Figure~\ref{fig_data_heterogeneity}).
Furthermore, on MNIST and Fashion-MNIST, we also consider an \textit{extreme} heterogeneity setting where the datapoints are sorted by increasing labels (0 to 9) and sequentially split equally among the honest workers.

The input images of MNIST are normalized with mean $0.1307$ and standard deviation $0.3081$, while the images of Fashion-MNIST are horizontally flipped.
Moreover, CIFAR-10 is expanded with horizontally flipped images, followed by a per channel normalization with means 0.4914, 0.4822, 0.4465 and standard deviations 0.2023, 0.1994, 0.2010.

\begin{figure*}[ht]
    \centering
    \subfloat[$\alpha = 0.1$]{\includegraphics[width=0.33\textwidth]{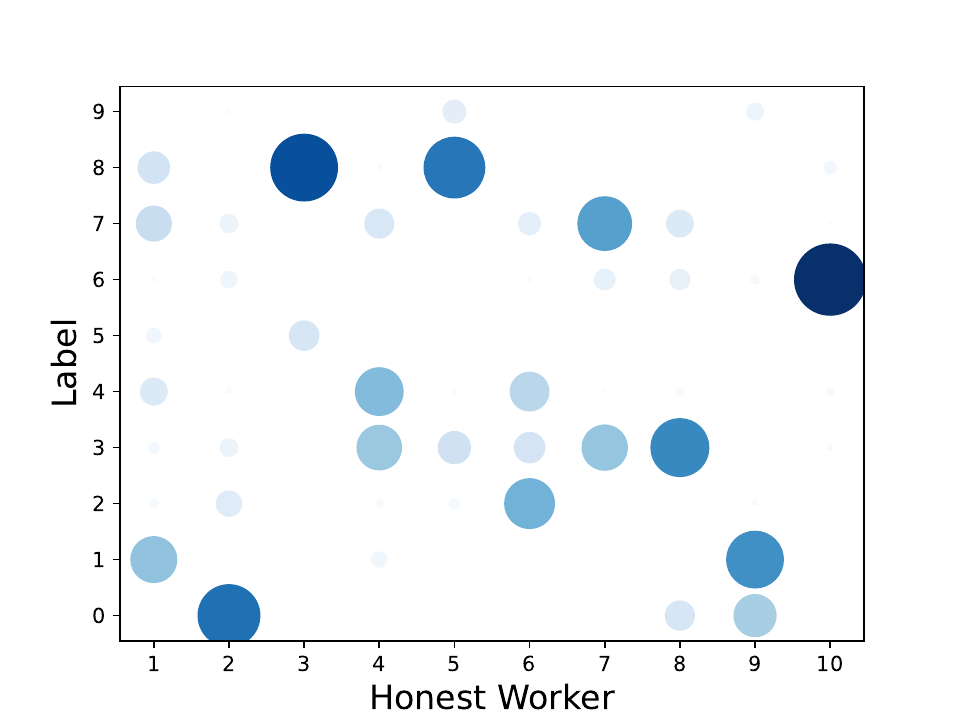}}
    \subfloat[$\alpha = 0.5$]{\includegraphics[width=0.33\textwidth]{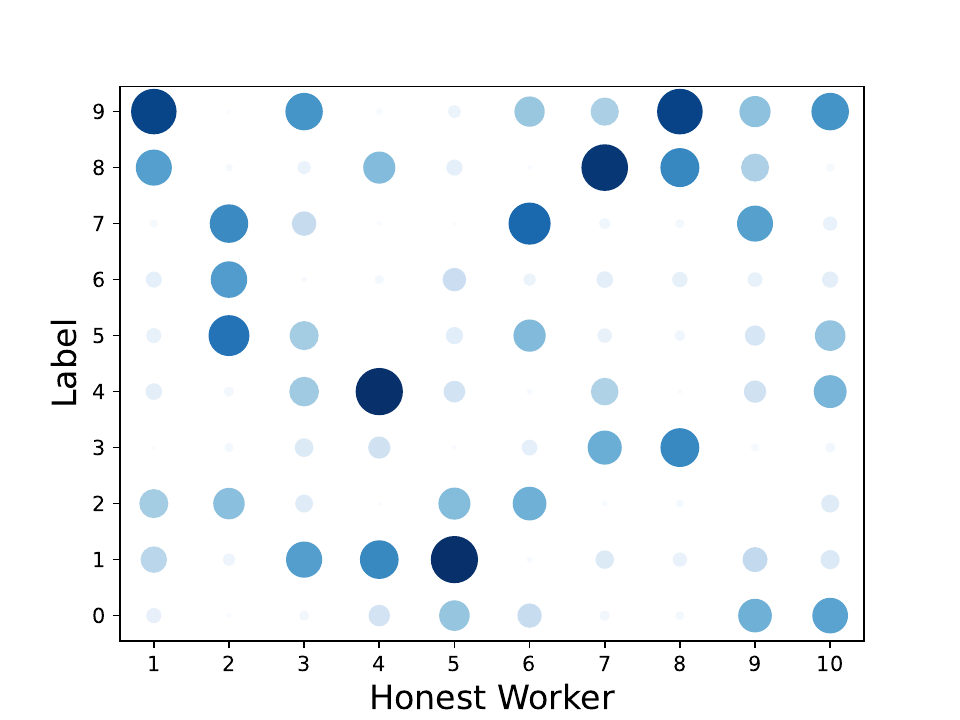}}
    \subfloat[$\alpha = 1$]{\includegraphics[width=0.32\textwidth]{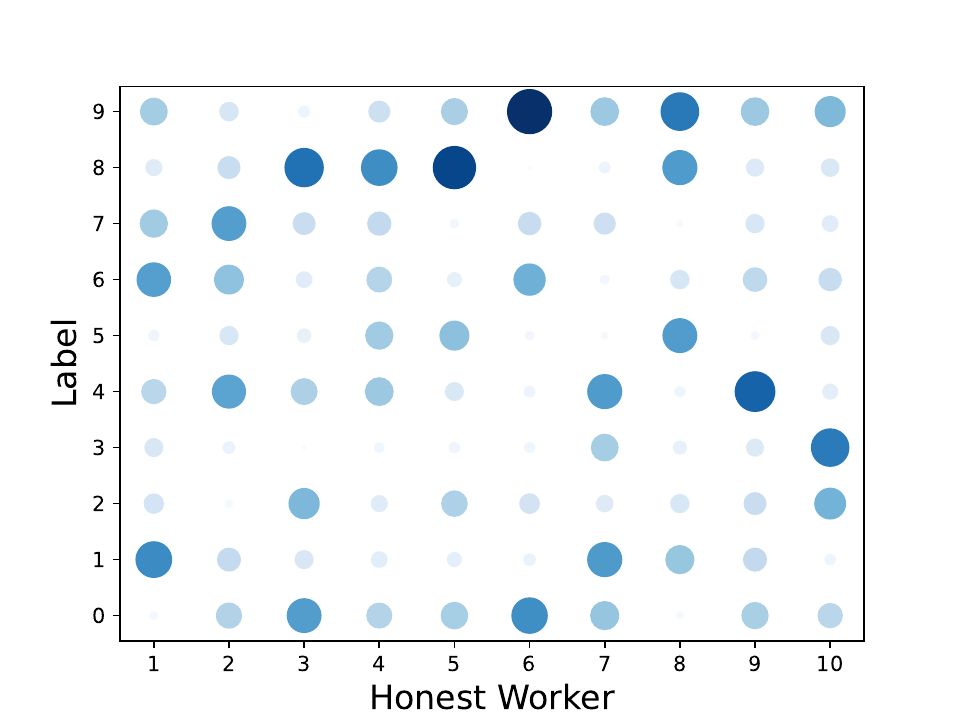}}\\
    \subfloat[$\alpha = 0.075$]{\includegraphics[width=0.49\textwidth]{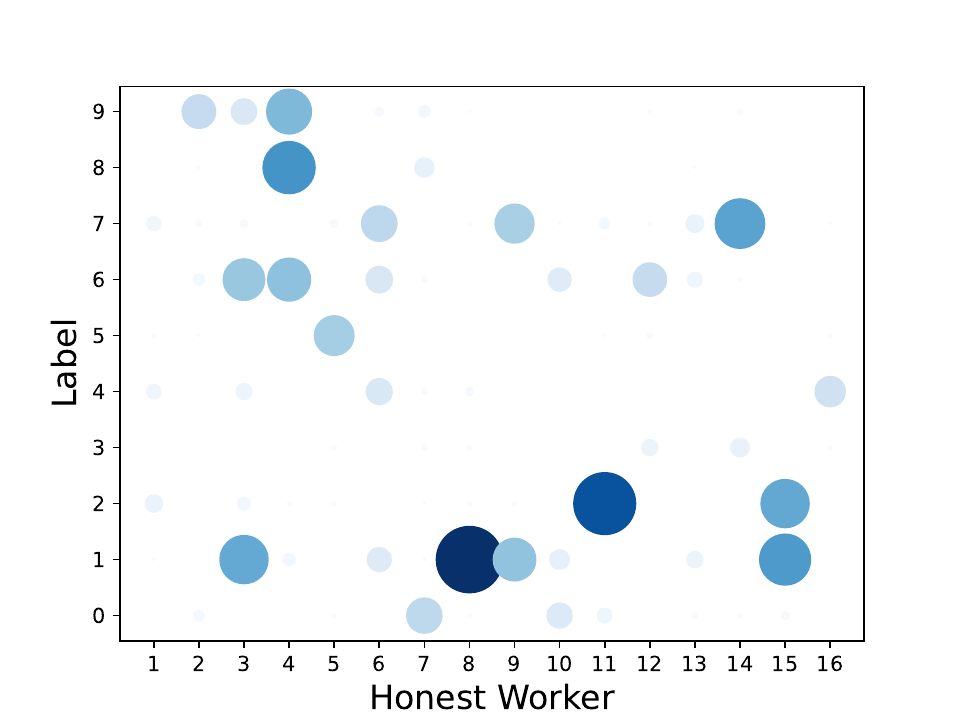}}
    \subfloat[$\alpha = 0.2$]{\includegraphics[width=0.49\textwidth]{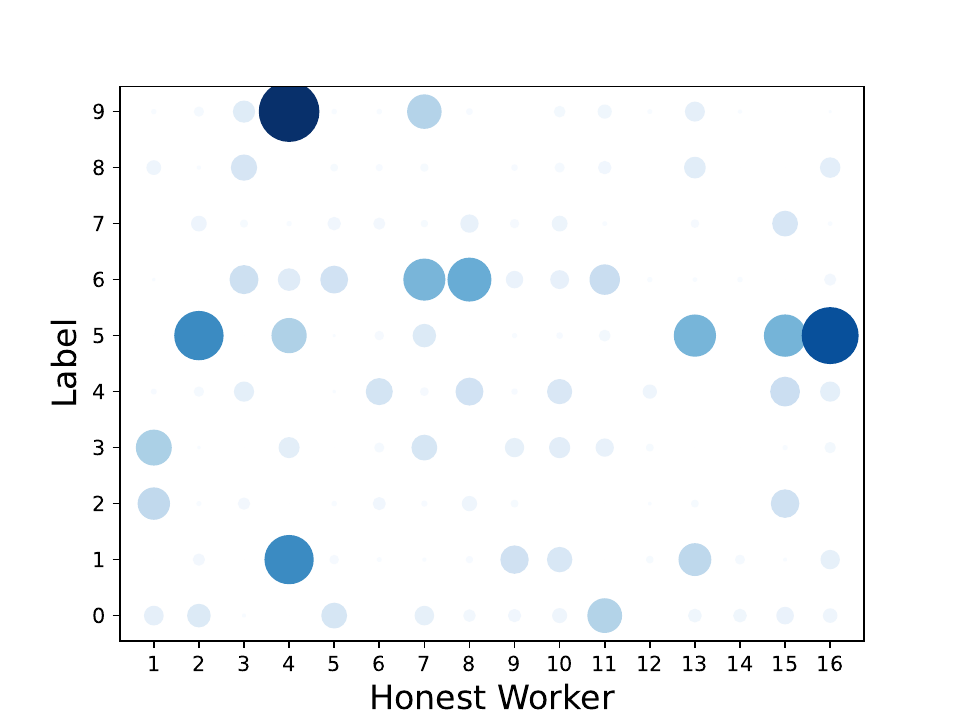}}
    \caption{Distribution of labels across honest workers on MNIST (row 1) and CIFAR-10 (row 2).}
\label{fig_data_heterogeneity}
\end{figure*}

\subsection{Algorithm, Distributed System, ML Models, and Hyperparameters}
We perform our experiments using Robust-DSGD, an order-optimal robust variant of Robust-DGD~\citep{allouah2023fixing} (see Algorithm~\ref{algo_dsgd}), in distributed systems of varying sizes.
On MNIST and Fashion-MNIST, 
we train a convolutional neural network (CNN) of 431,080 parameters with batch size $b = 25$, $T = 1000$, $\gamma = 0.1$, and momentum parameter $\beta = 0.9$.
Moreover, the negative log likelihood (NLL) loss function is used, along with an $\ell_2$-regularization of $10^{-4}$.
On CIFAR-10, we train a CNN of 1,310,922 parameters.
We set $b = 50$, $T = 2000$, $\beta = 0.9$, and $\gamma = 0.05$ decaying once at step 1500.
Finally, we use the NLL loss function with an $\ell_2$ regularization of $10^{-2}$.
In order to present the architectures of the ML models used in our experiments, we adopt the following compact terminology introduced in~\cite{choffrut2023towards}.
\noindent \fcolorbox{black}{blue!40}{
\parbox{0.98\textwidth}{
L(\#outputs) represents a \textbf{fully-connected linear layer}, R stands for \textbf{ReLU activation}, S stands for \textbf{log-softmax}, C(\#channels) represents a \textit{fully-connected 2D-convolutional layer} (kernel size 5, padding 0, stride 1), M stands for \textbf{2D-maxpool} (kernel size 2), B stands for \textbf{batch-normalization}, and D represents \textbf{dropout} (with fixed probability 0.25).
}}
The comprehensive experimental setup and the architecture of the models are presented in Table~\ref{table_exp}.

\begin{algorithm}[ht!]
   \caption{Robust Distributed Stochastic Gradient Descent (Robust-DSGD)}
   \label{algo_dsgd}
\begin{algorithmic}
    \STATE {\bfseries Input:} \textcolor{purple}{\bf Server} chooses an initial model $\weight{1} \in \R^d$, {\em learning rate} $\gamma \in \R^+$, robust aggregation $\agg : \mathbb{R}^{n \times d} \to \mathbb{R}^{d}$, {\em momentum parameter} $\beta \in \R^+$. Each \textcolor{teal}{\textbf{honest worker} $\worker{}{i}$} sets an initial {\em momentum} $m_0^{(i)} = 0 \in \R^d$.
    \FOR{$t = 1$ to $T$}
    \STATE \textcolor{purple}{\bf Server} broadcasts $\weight{t}$ to all workers.
    \FOR{\textbf{\textup{each}} \textcolor{teal}{\textbf{honest worker} $\worker{}{i}$} \textbf{in parallel}}
    \STATE Sample a random data point $z^{(i)}$ uniformly from $\D_i$.
    \STATE Compute stochastic gradient $g_t^{(i)} \coloneqq \nabla{ \ell (\theta_{t}, z^{(i)} )}$.
    \STATE Update momentum: $m_t^{(i)} \coloneqq (1 - \beta) g_t^{(i)} + \beta m_{t-1}^{(i)}$.
    \STATE Send $m_t^{(i)}$ to \textcolor{purple}{\bf Server}.
    \STATE  \textit{\color{darkgray} // An adversarial worker $\worker{}{j}$ can send an arbitrary vector in $\R^d$ for $m_t^{(j)}$}
    \ENDFOR
    \STATE \textcolor{purple}{\bf Server} computes $R_t \coloneqq \agg\left(m_t^{(1)}, \dots, \, m_t^{(n)} \right)$.
    \STATE \textcolor{purple}{\bf Server} updates the model: $\weight{t+1} \coloneqq \weight{t} - \gamma \, R_t $ .
    \ENDFOR
    \STATE {\bfseries Output:}  \textcolor{purple}{{\bf Server}} outputs $\hat{\weight{}}$ chosen uniformly at random from $\{\weight{1}, \ldots, \, \weight{T}\}$.
\end{algorithmic}
\end{algorithm}

\newcolumntype{P}[1]{>{\centering\arraybackslash}p{#1}}
\begin{table}[ht!]
\centering
\def\arraystretch{1.5}
\begin{tabular}{|P{10em}||P{10em}|P{15em}|} 
 \hline
 \textit{Dataset} & (Fashion-)MNIST & CIFAR-10 \\ [0.5ex] 
 \hline
 \textit{Data heterogeneity} & $\alpha \in \{0.1, 0.5, 1\}$
 and extreme & $\alpha \in \{0.05, 0.075, 0.1, 0.2, 0.5\}$\\
 \hline
 \textit{Model type} & CNN & CNN\\ 
 \hline
 \textit{Model architecture} & C(20)-R-M-C(20)-R-M-L(500)-R-L(10)-S & (3,32×32)-C(64)-R-B-C(64)-R-B-M-D-C(128)-R-B-C(128)-R-B-M-D-L(128)-R-D-L(10)-S\\
 \hline
 \textit{Number of parameters} & 431,080 & 1,310,922\\ 
 \hline
 \textit{Loss} & NLL & NLL\\
  \hline
  \textit{$\ell_2$-regularization} & $10^{-4}$ & $10^{-2}$\\
  \hline
  \textit{Number of steps} & $T= 1000$ & $T = 2000$\\
  \hline
  \textit{Learning rate} & $\gamma = 0.1$ & \begin{equation*}
    \gamma_t=
    \begin{cases}
      0.05 & t \leq 1500 \\
      0.00082 & 1500 < t \leq 2000
    \end{cases}
  \end{equation*}\\
  \hline
  \textit{Momentum parameter} & $\beta = 0.9$ & $\beta = 0.9$\\
  \hline
  \textit{Batch size} & $b = 25$ & $b = 50$\\
  \hline
\end{tabular}
\caption{Comprehensive experimental setup.}
\label{table_exp}
\end{table}

\subsection{Byzantine Attacks}\label{app_exp_setup_attacks}
In our experiments, the adversarial workers execute five state-of-the-art adversarial attacks from the Byzantine ML literature, namely \textit{sign-flipping} (SF)~\citep{allen2020byzantine}, \textit{label-flipping} (LF)~\citep{allen2020byzantine}, \textit{mimic}~\citep{karimireddy2022byzantinerobust}, \textit{fall of empires} (FOE)~\citep{xie2020fall}, and \textit{a little is enough} (ALIE)~\citep{baruch2019alittle}.
The exact functionality of the attacks is detailed next.
In every step $t$, let $\overline{m}_t$ be an estimation of the true honest momentum at step $t$.
In our experiments, we estimate $\overline{m}_t$ by averaging the momentums sent by the honest workers in step $t$ of Robust-DSGD.
In other words, $\overline{m}_t = \frac{1}{|\mathcal{H}|} \sum_{i \in \mathcal{H}} m_t^{(i)}$, where $m_t^{(i)}$ is the momentum computed by honest worker $w_i$ in step $t$.

\begin{itemize}[leftmargin=10pt]
    \item SF: the adversarial workers send the vector $-\overline{m}_t$ to the server.
    \item LF: the adversarial workers compute their gradients on flipped labels, and send the flipped gradients to the server. Since the original labels $l$ for (Fashion-)MNIST and CIFAR-10 are in $\{0, ..., 9\}$, the adversarial workers execute a label flip/rotation by computing their gradients on the modified labels $l' = 9 - l$.
    \item Mimic: the adversarial workers \textit{mimic} a certain honest worker by sending its gradient to the server.
    In order to determine the optimal honest worker for the adversarial workers to mimic, we use the heuristic in~\cite{karimireddy2022byzantinerobust}.
    \item FOE: the adversarial workers send $(1 - \tau) \overline{m}_t$ in step $t$ to the server, where $\tau \geq 0$ is a fixed real number representing the attack factor. When $\tau = 2$, this attack is equivalent to SF.
    \item ALIE: the adversarial workers send $\overline{m}_t + \tau \sigma_t$ in step $t$ to the server, where $\tau \geq 0$ is a fixed real number representing the attack factor, and $\sigma_t$ is the coordinate-wise standard deviation of $\overline{m}_t$.
\end{itemize}

Since FOE and ALIE are parametrized by $\tau \in \mathbb{R}$, we implement in our experiments enhanced and adaptive versions of these attacks, where the attack factor is not constant and may be different in every iteration.
In every step $t$, we determine the optimal attack factor $\tau_t$ through a grid search over a predefined range of values.
More specifically, in every step $t$, $\tau_t$ takes the value that maximizes the damage inflicted by the adversarial workers, i.e., that maximizes the $\ell_2$-norm of the difference between the average of the honest momentums $\overline{m}_t$ and the output of the aggregation $R_t$ at the server.

\subsection{Benchmarking and Reproducibility}
We evaluate the performance of \layer{} compared to no clipping within the context of Robust-DSGD.
Accordingly, we choose $\agg \circ$ NNM as aggregator in Algorithm~\ref{algo_dsgd}, where $\agg \in \{$CWTM, GM, CWMed, MK$\}$ is an aggregation method proved to be $(f, \kappa)$-robust~\citep{allouah2023fixing}.
Pre-composing these aggregation schemes with NNM provides them with optimal $(f, \kappa)$-robustness~\citep{allouah2023fixing}.
As benchmark, we also execute the standard DSGD algorithm in the same setting, but in the absence of adversarial workers (i.e., without attack and $f = 0$).
We use the metric of \textit{\textbf{worst-case maximal accuracy}} to evaluate the performance of our algorithm.
In other words, for each of the aforementioned five Byzantine attacks, we record the maximal accuracy achieved by Robust-DSGD during the learning under that attack.
The worst-case maximal accuracy is thus the \textit{smallest} maximal accuracy encountered across the five attacks.
As the attack executed by adversarial workers cannot be known in advance in a practical system, this metric is critical to accurately evaluate the robustness of aggregation methods, as it gives us an estimate of the potential worst-case performance of the algorithm.
Finally, all our experiments are run with seeds 1 to 5 for reproducibility purposes.
We provide standard deviation measurements for all our results (across the five seeds).
\clearpage
\section{\layer{} vs. Static Clipping}\label{app_exp_static}
In this section, we present empirical results on static clipping when used as a pre-aggregation technique in Robust-DSGD, and compare its performance to our proposed adaptive algorithm \layer{}.  

\subsection{Comparison with Static Clipping on MNIST}
On MNIST, we execute Robust-DSGD in a distributed system composed of $n = 15$ workers, among which $f \in \{3, 4\}$ are adversarial.
We consider three different levels of heterogeneity: \textit{moderate} ($\alpha = 1$), \textit{high} ($\alpha = 0,1$), and \textit{extreme} (as explained in Appendix~\ref{app_exp_setup}).
In our experiments, we examine a wide range of static clipping parameters $C \in \{0.02, 0.2, 2, 20\}$.

\begin{figure*}[ht!]
    \vspace{-2mm}
    \centering
    \begin{subfigure}[t]{0.5\textwidth}
        \centering
        \includegraphics[height=1.3in]{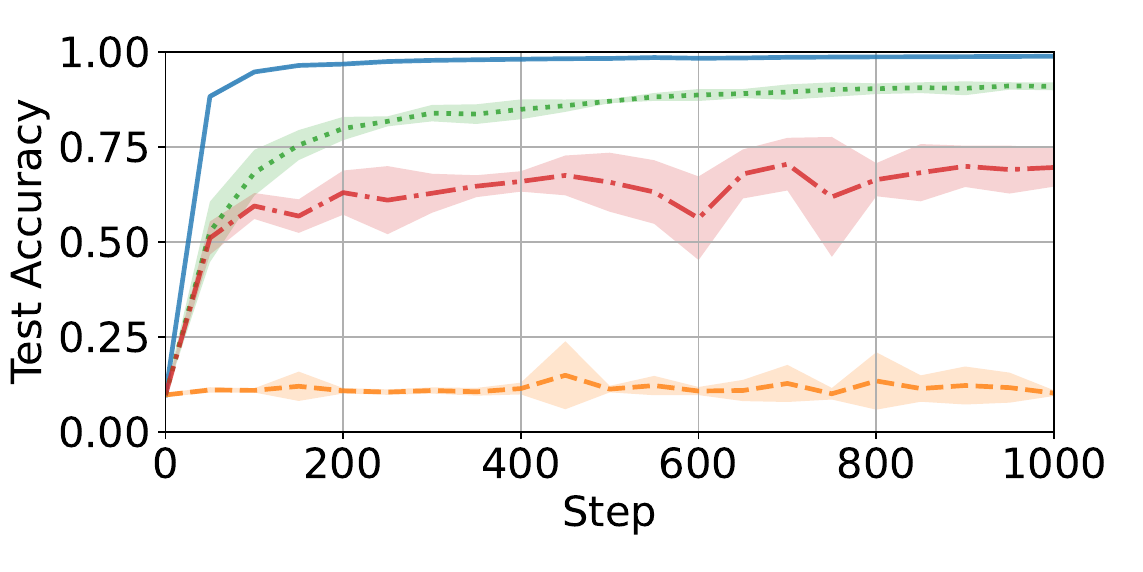}
        \vspace{-4mm}
        \caption{Sign-flipping (SF)~\citep{allen2020byzantine}}
        \label{fig_mnist_intro_SF}
    \end{subfigure}%
    \begin{subfigure}[t]{0.5\textwidth}
        \centering
        \includegraphics[height=1.3in]{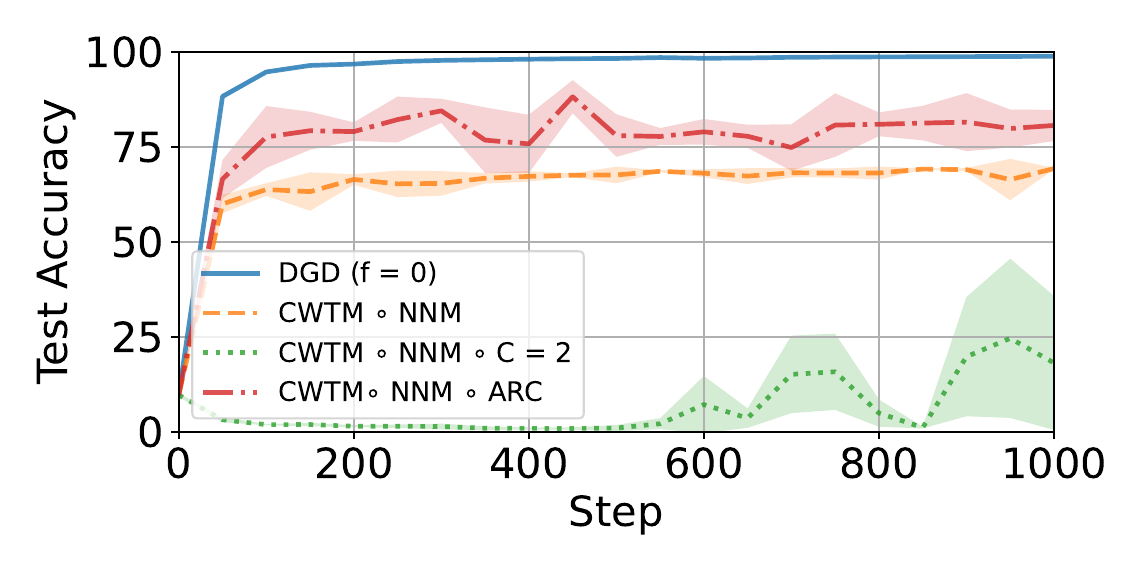}
        \vspace{-4mm}
        \caption{Label-flipping (LF)~\citep{allen2020byzantine}}
        \label{fig_b}
    \end{subfigure}
    \vspace{-4mm}
    \caption{Impact of pre-aggregation clipping, specifically static clipping with $C = 2$ and our adaptive clipping algorithm \layer{}, on Robust-DGD with CWTM $\circ$ NNM, under the SF and LF attacks. We consider the MNIST~\citep{mnist} dataset distributed amongst $10$ honest (non-adversarial) workers with {\em extreme} heterogeneity, and there are three adversarial workers.}
    \label{fig_plots_mnist_comparison}
\end{figure*}

\textbf{Unreliability due to the attack.} First, the efficacy of static clipping is significantly influenced by the type of Byzantine attack executed during the learning process.
To illustrate, while $C = 2$ exhibits the best performance under the SF attack~\citep{allen2020byzantine}, depicted in Figure~\ref{fig_plots_mnist_comparison}, it leads to a complete collapse of learning under LF~\citep{allen2020byzantine}.
Consequently, identifying a static clipping threshold $C$ that consistently performs well in practice against any attack is difficult (if not impossible).
This highlights the fragility of static clipping, as the unpredictable nature of the Byzantine attack, a parameter that cannot be a priori known,
can significantly degrade the performance of the chosen static clipping approach.

\textbf{Unreliability due to the heterogeneity model.} Second, the robustness under static clipping is notably influenced by the level of heterogeneity present across the datasets of honest workers.
To illustrate this impact, we present in Figure~\ref{fig_static_clipping_main} the performances of Robust-DSGD for different levels of heterogeneity (moderate, high, and extreme) and different values of $C$.
In Figure~\ref{static_f=3}, $C = 2$ emerges as the best static clipping threshold when the heterogeneity is high whereas $C=20$ appears to be sub-optimal.
On the other hand, under extreme heterogeneity, the accuracy associated to $C = 2$ diminishes drastically while $C=20$ becomes a better choice. 
Intuitively, in heterogeneous scenarios, honest gradients become large in $\ell_2$-norm, 
due to the increasingly detrimental effect of Byzantine attacks.
Consequently, we must increase the static clipping threshold. 
This highlights the intricate dependence of the performance of static clipping on data heterogeneity, and emphasizes the necessity to fine-tune static clipping strategies prior to the learning.

\textbf{Unreliability due to the number of adversarial workers.} Last but not least, the number of adversarial workers $f$ also affects the efficacy of static clipping.
As seen in Figure~\ref{fig_static_clipping_main}, $C = 2$ leads to the highest accuracy among static clipping strategies when $f = 3$ in high heterogeneity, but cannot be used when $f = 4$ as its corresponding accuracy drops below 50\% (see Figure~\ref{static_f=4}).

Overall, our empirical findings reveal that there is no single value of $C$ for which static clipping consistently delivers satisfactory performance across the various settings.
We have also shown that the performance of static clipping is greatly influenced on typically uncontrollable parameters, such as data heterogeneity and the nature of the Byzantine attack.
Indeed, it should be noted that explicitly estimating these parameters is challenging (if not impossible) in a distributed setting.
First, since the server does not have direct access to the data, it cannot estimate the degree of heterogeneity.
Second, as (by definition) an adversarial worker can behave in an unpredictable manner, we cannot adjust $C$ to the attack(s) being executed by the adversarial workers. This highlights the necessity for a robust clipping alternative that can naturally adapt to the setting in which it is deployed.

In contrast to static clipping, \layer{} is adaptive, delivering consistent performance across diverse levels of heterogeneity, Byzantine attacks, and number of adversarial workers.

\paragraph{Adaptiveness.} \layer{} dynamically adjusts its clipping parameter $C_t$ based on the norms of honest momentums at step $t$, avoiding static over-clipping or under-clipping.
This adaptability is evident in Figure~\ref{fig_plots_mnist_app_c_t}, where $C_t$
consistently decreases with time under all attacks, an expected behavior when reaching convergence. Moreover, Figure~\ref{fig_plots_mnist_app_c_t} also illustrates that any surge in the norm of the honest mean corresponds to a direct increase in $C_t$, highlighting the adaptive nature of \layer{}.

\paragraph{Robust performance across heterogeneity regimes.}
The efficacy of our solution is illustrated in Figure~\ref{fig_static_clipping_main}, showcasing a consistently robust performance for all considered heterogeneity levels.
Specifically, in scenarios of moderate heterogeneity where clipping may not be essential, \layer{} matches the performance of \textit{No clipping} as well as static strategies $C = 2$ and $C = 20$.
Conversely, Figure~\ref{static_f=3} shows that under high and extreme heterogeneity, \layer{} surpasses the best static clipping strategy in terms of accuracy, highlighting the effectiveness of our approach.

\paragraph{Robust performance across Byzantine regimes.}
\layer{} exhibits robust performance across diverse Byzantine scenarios, encompassing variations in both the type of Byzantine attack and the number of adversarial workers $f$. As depicted in Figures~\ref{fig_plots_mnist_comparison} and~\ref{fig_static_clipping_main}, \layer{} consistently yields robust performance, regardless of the Byzantine attack. In extreme heterogeneity with $f = 3$, \layer{} maintains an accuracy of 75\%, while $C = 2$ demonstrates a subpar worst-case accuracy of 25\% (also refer to Figure~\ref{fig_b}).
Furthermore, despite the increase in the number of adversarial workers to $f = 4$, \layer{} remains the top-performing clipping approach among all considered strategies.

This analysis underscores the empirical superiority of \layer{} over static clipping methods, eliminating the reliance on data heterogeneity and the specific Byzantine regime.
Similar trends are also observed in Figure~\ref{fig_plots_mnist_app}, when executing Robust-DSGD using other aggregation methods such as CWMed $\circ$ NNM, GM $\circ$ NNM, and Multi-Krum $\circ$ NNM.

\begin{figure*}[ht!]
    \centering
    \includegraphics[width=0.8\textwidth]{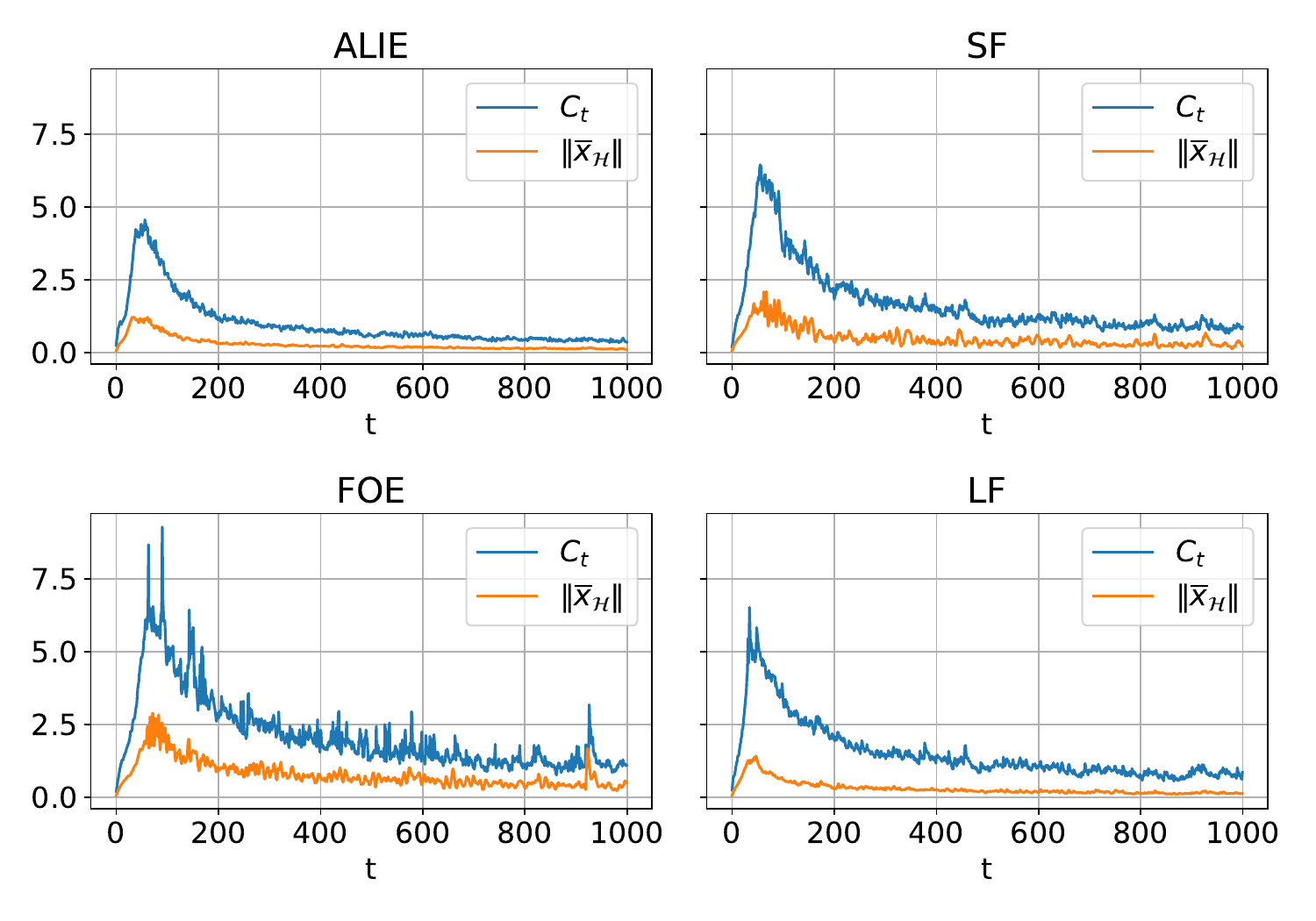}
    \vspace{-4mm}
    \caption{Evolution of the adaptive clipping parameter $C_t$ of \layer{} compared to the norm of the honest mean during the learning on heterogeneous MNIST ($\alpha = 1$).
    There are $f=3$ adversarial workers among $n = 15$.
    CWTM $\circ$ NNM is used as aggregation.}
\label{fig_plots_mnist_app_c_t}
\end{figure*}

\begin{figure*}[ht!]
    \centering
    \subfloat[CWMed $\circ$ NNM]{\includegraphics[width=0.65\textwidth]{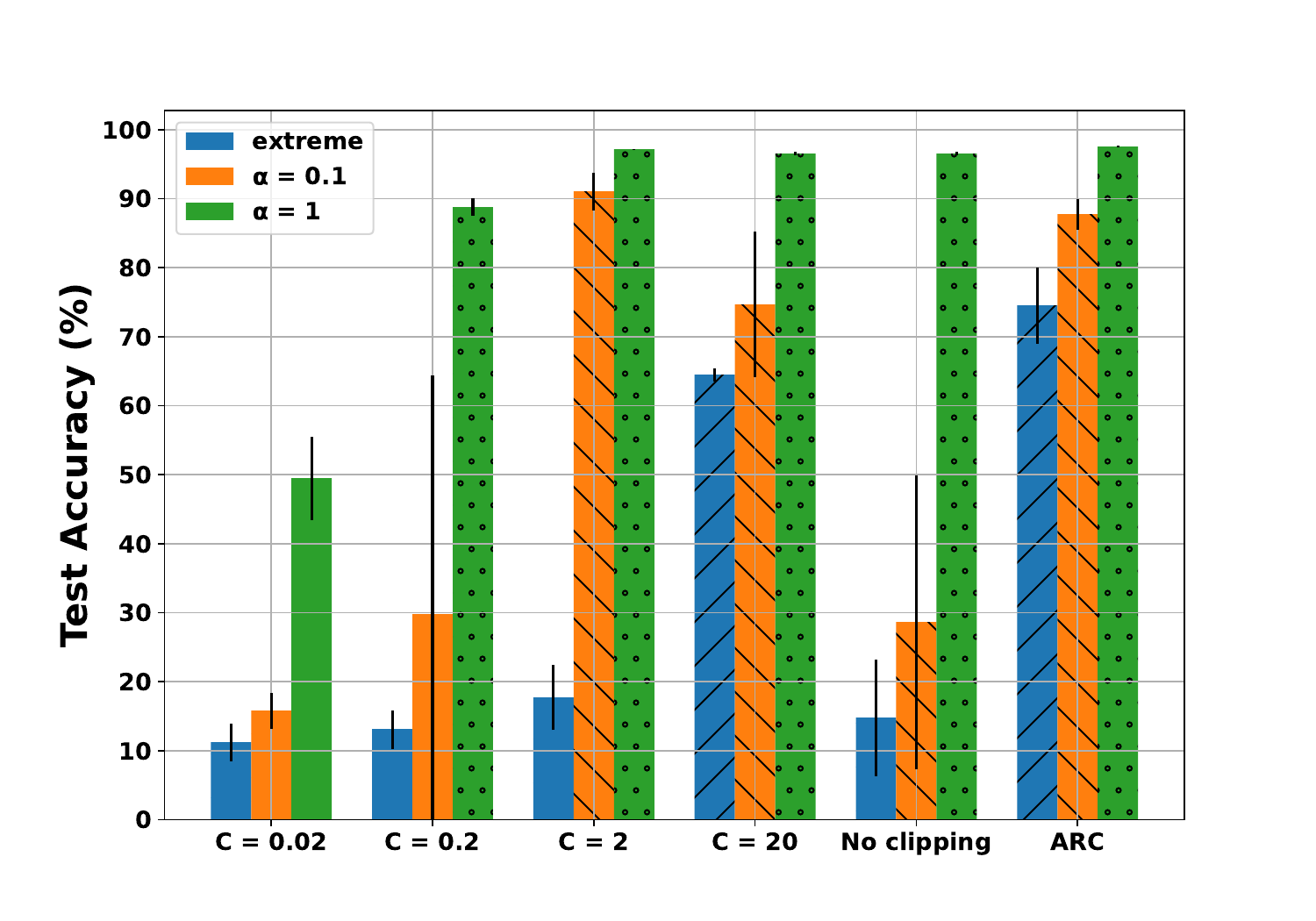}}\\
    \subfloat[GM $\circ$ NNM]{\includegraphics[width=0.65\textwidth]{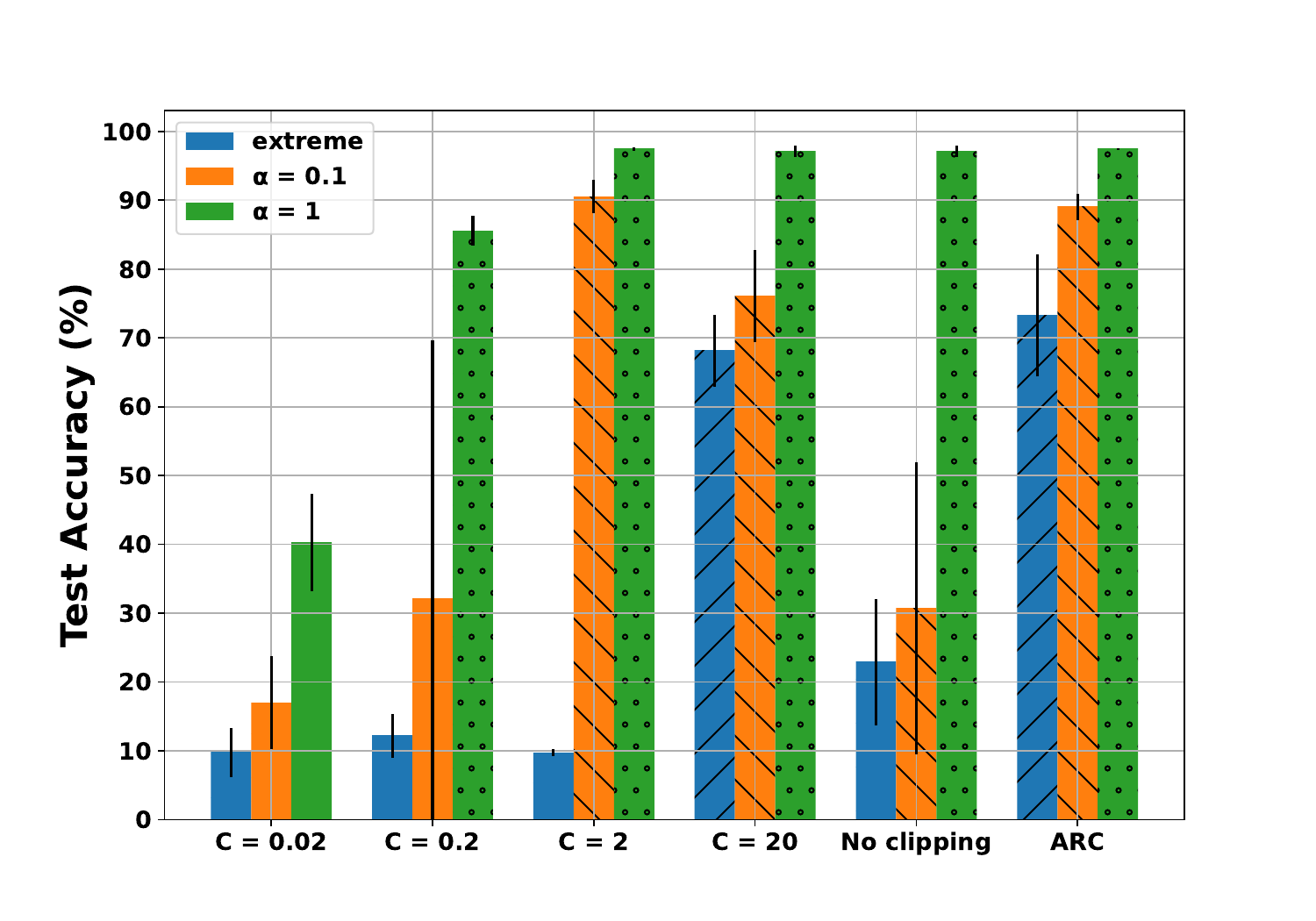}}\\
    \subfloat[MK $\circ$ NNM]{\includegraphics[width=0.65\textwidth]{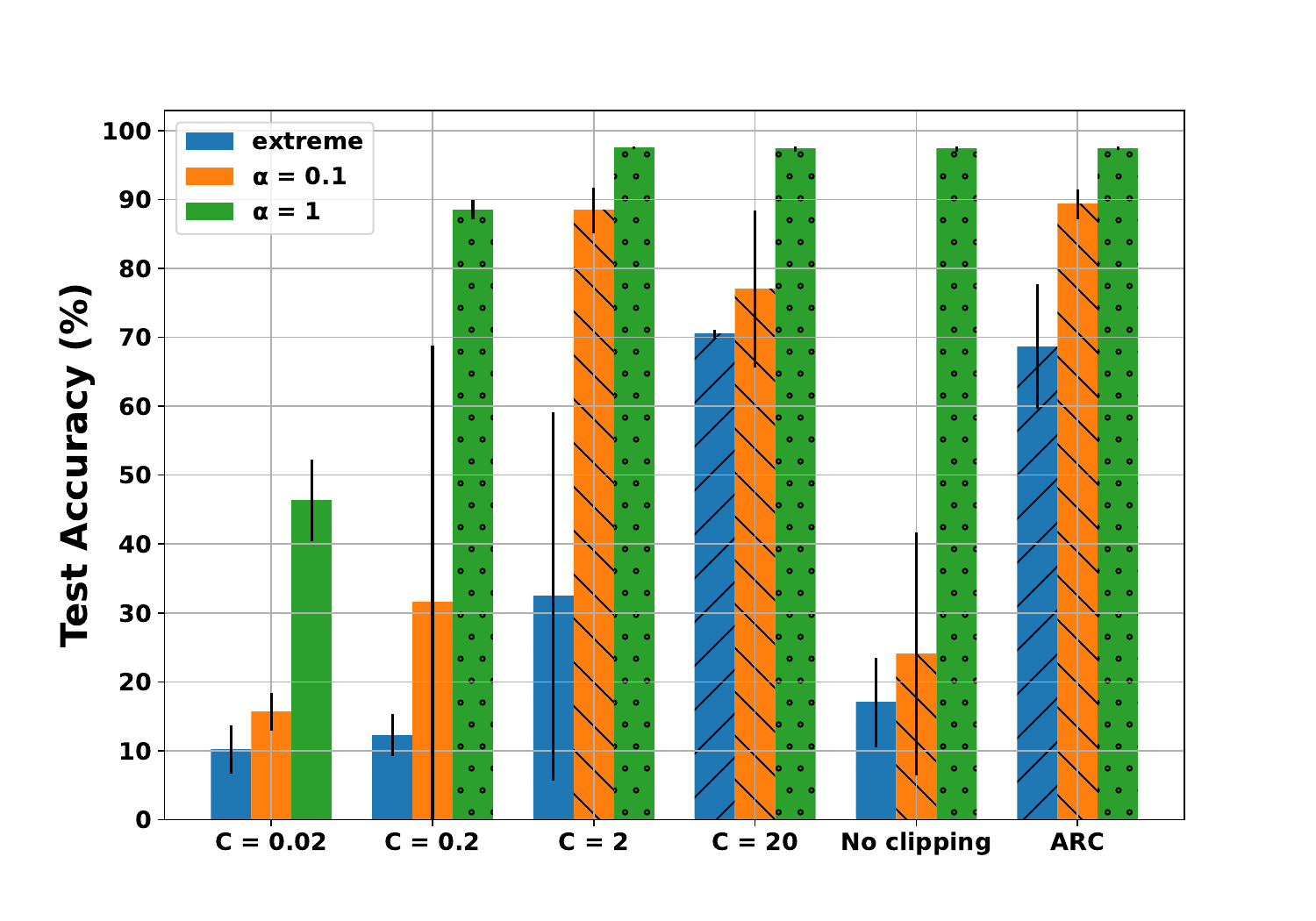}}
    \caption{Impact of the clipping strategy on the worst-case maximal accuracy achieved during the learning against five Byzantine attacks on heterogeneous MNIST, with $f=3$ and $n = 15$.}
\label{fig_plots_mnist_app}
\end{figure*}

\clearpage
\subsection{Comparison with Static Clipping on Fashion-MNIST}
On Fashion-MNIST, we execute Robust-DSGD in a distributed system composed of $n = 15$ workers, among which $f = 3$ are adversarial.
See Figure~\ref{fig_plots_fashion}.

\begin{figure*}[ht!]
    \centering
    \includegraphics[width=0.49\textwidth]{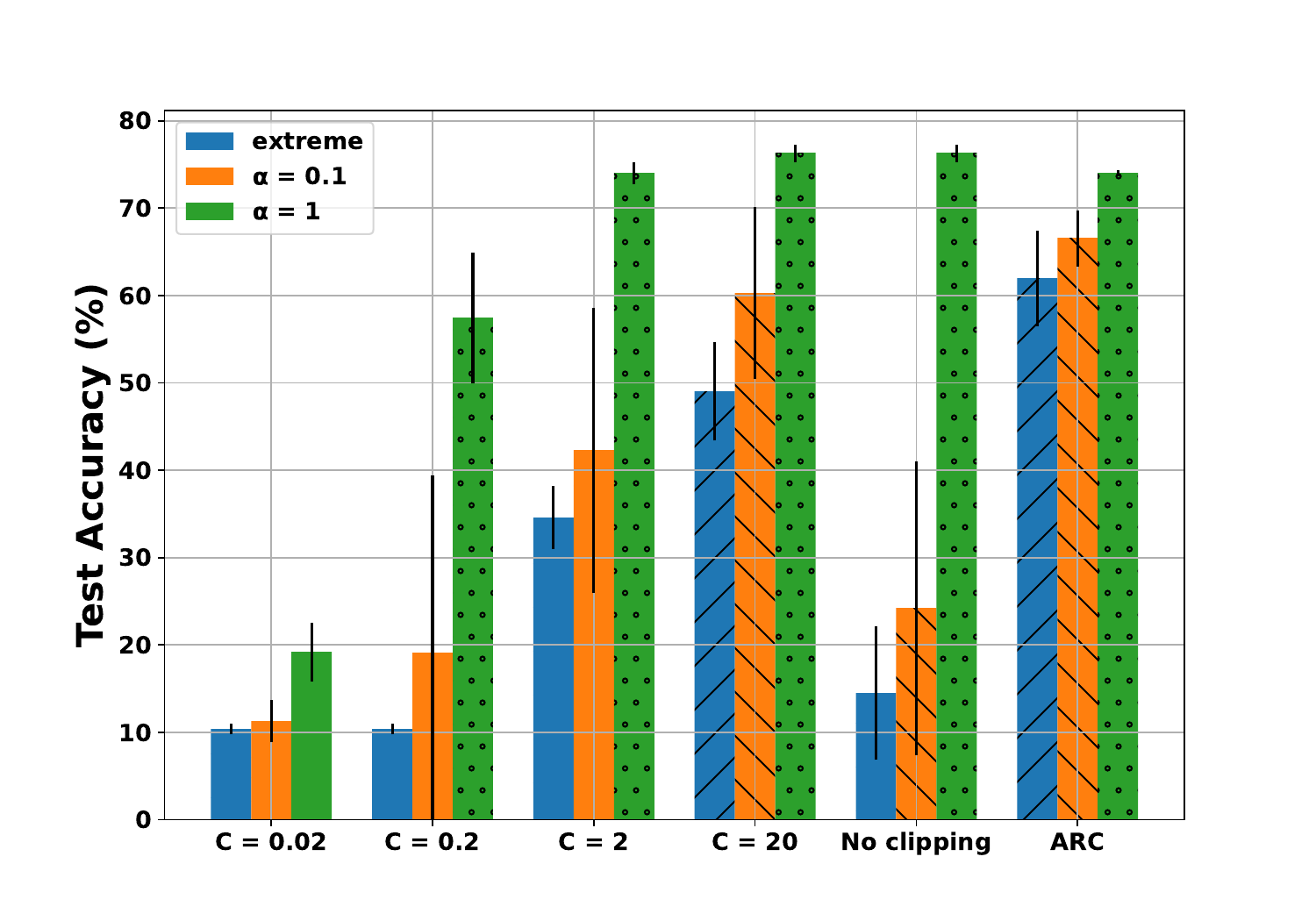}
    \includegraphics[width=0.49\textwidth]{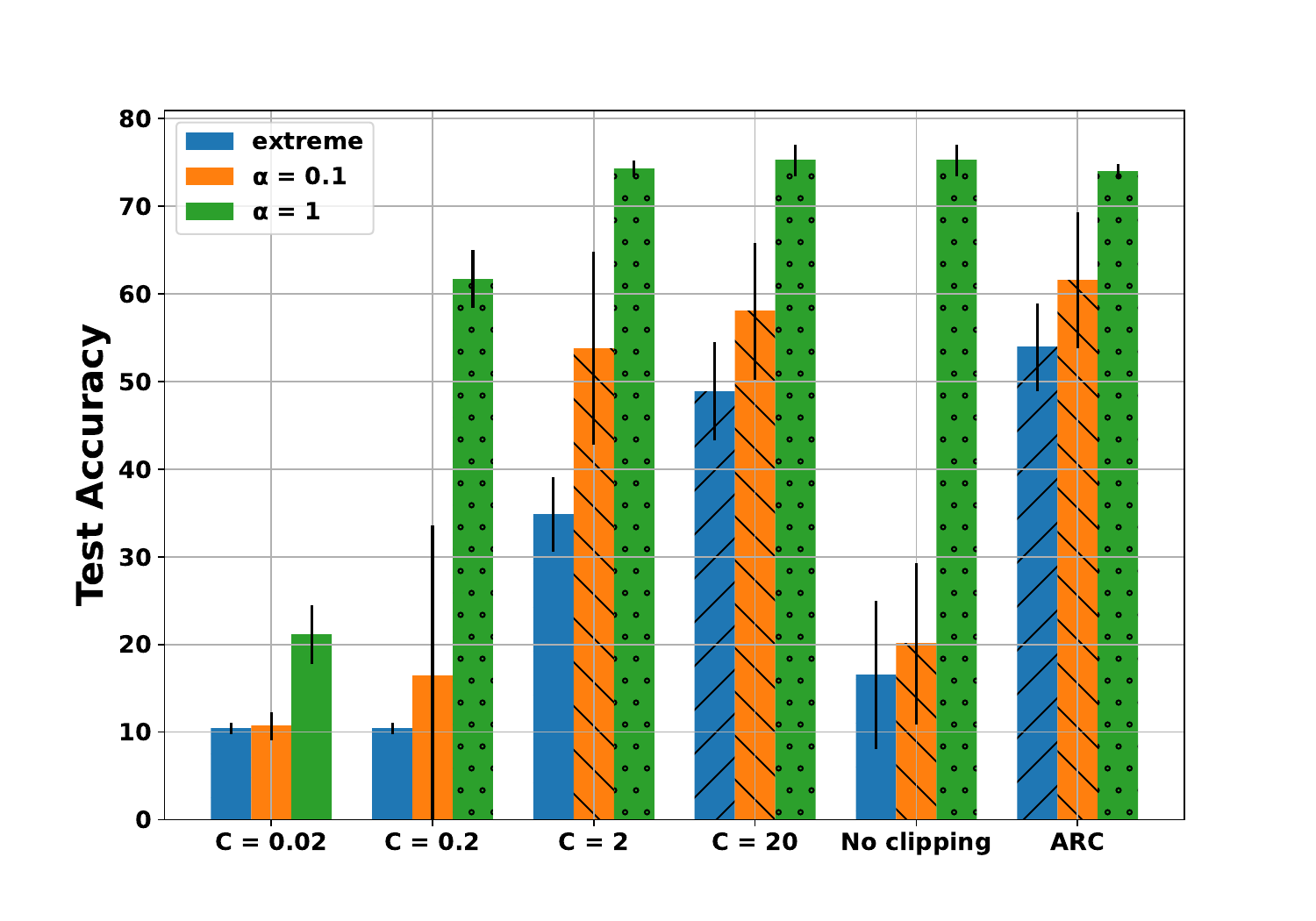}
    \includegraphics[width=0.49\textwidth]{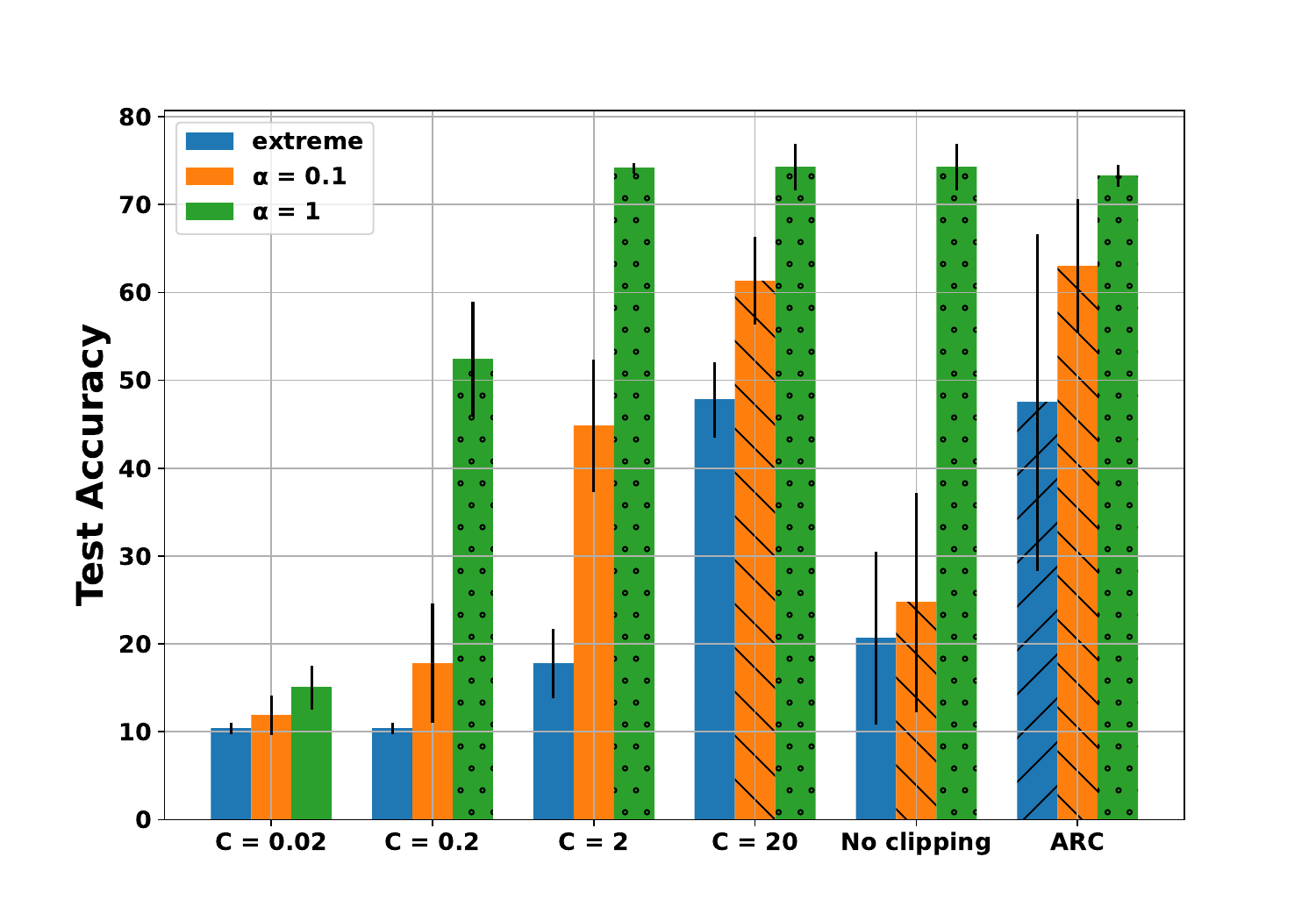}
    \includegraphics[width=0.49\textwidth]{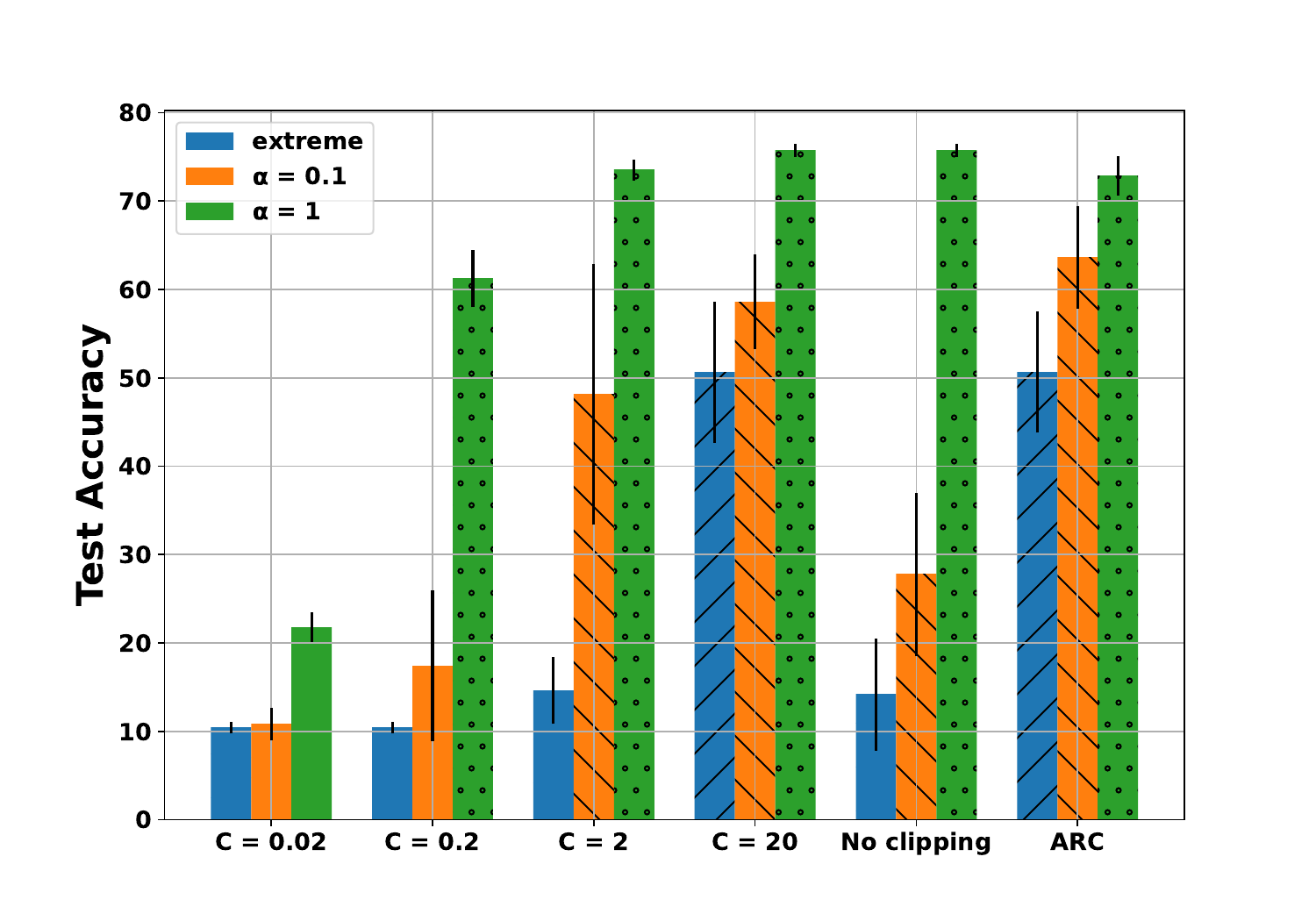}
    \caption{Impact of the clipping strategy and heterogeneity level on the worst-case \textit{maximal} accuracy achieved during the learning against five Byzantine attacks on Fashion-MNIST.
    $f=3$ and $n = 15$, and the heterogeneity levels considered are \textit{extreme}, \textit{high} ($\alpha = 0.1$), and \textit{moderate} ($\alpha = 1)$.
    Aggregation methods used: CWTM $\circ$ NNM (top left), CWMed $\circ$ NNM (top right), GM $\circ$ NNM (bottom left), and Multi-Krum $\circ$ NNM (bottom right).
    DSGD reaches 85\% in accuracy in all settings.}
\label{fig_plots_fashion}
\end{figure*}

\clearpage
\subsection{Comparison with Static Clipping on CIFAR-10}
For this particular experiment, we execute Robust-DSGD using the ResNet-18~\citep{he2015deep} model, in a distributed system comprising $n = 9$ workers among which $f \in \{1, 2\}$ are adversarial.
We set $b = 128$, $T = 2000$, $\beta = 0.9$, and $\gamma = 0.1$ decaying once by 10$\times$ at step 1500.
Finally, we use the cross-entropy loss function with an $\ell_2$-regularization of $5\times10^{-4}$.
See Tables~\ref{table_cifar_cwtm} and~\ref{table_cifar_cwmed}.

\begin{table*}[ht!]
\centering
\begin{tabular}{||c | c c c c c c||} 
 \hline
 Attack & $C = 0.05$ & $C = 0.5$ & $C = 5$ & $C = 50$ & No clipping & \layer{}\\ [0.5ex] 
 \hline\hline
 FOE & 26.4 $\pm$ 1.2 & 52.6 $\pm$ 2.0 & 76.4 $\pm$ 5.4 & 47.0 $\pm$ 8.0 & 43.0 $\pm$ 6.2 & $\mathcolorbox{blue!30}{79.1 \pm 1.6}$\\ 
 ALIE & 34.6 $\pm$ 1.3 & 43.3 $\pm$ 1.1 & 47.2 $\pm$ 3.4 & 47.2 $\pm$ 4.2 & 47.2 $\pm$ 4.2 & $\mathcolorbox{blue!30}{51.8 \pm 3.5}$\\
 LF & 33.7 $\pm$ 1.5 & 58.4 $\pm$ 1.8 & 83.6 $\pm$ 1.2 & $\mathcolorbox{blue!30}{85.7 \pm 0.5}$ & $\mathcolorbox{blue!30}{85.9 \pm 0.4}$ & 81.5 $\pm$ 1.4\\
 SF & 32.3 $\pm$ 1.4 & 57.9 $\pm$ 1.2 & 77.2 $\pm$ 4.6 & 68.6 $\pm$ 8.5 & 66.4 $\pm$ 9.0 & $\mathcolorbox{blue!30}{82.0 \pm 1.1}$\\
 Mimic  & 38.2 $\pm$ 1.6 & 68.3 $\pm$ 1.4 & $\mathcolorbox{blue!30}{85.6 \pm 0.42}$ & $\mathcolorbox{blue!30}{85.7 \pm 0.4}$ & $\mathcolorbox{blue!30}{85.7 \pm 0.4}$ & $\mathcolorbox{blue!30}{85.4 \pm 0.5}$\\
 \hline
 \hline
 Worst-case & 26.4 $\pm$ 1.2 & 43.3 $\pm$ 1.1 & 47.2 $\pm$ 3.4 & 47.0 $\pm$ 8.0 & 43.0 $\pm$ 6.2 & $\mathcolorbox{blue!30}{51.8 \pm 3.5}$\\[1ex] 
 \hline
\end{tabular}
\caption{Maximum accuracy (\%) achieved by CWTM $\circ$ NNM on CIFAR-10 under moderate heterogeneity ($\alpha = 1$), for various clipping strategies and attacks. There are $f = 2$ adversarial workers among $n = 9$. We highlight in blue the highest accuracy achieved per attack (i.e., per row).}
\label{table_cifar_cwtm}
\end{table*}

\begin{table*}[ht!]
\centering
\begin{tabular}{||c | c c c c c c||} 
 \hline
 Attack & $C = 0.05$ & $C = 0.5$ & $C = 5$ & $C = 50$ & No clipping & \layer{}\\ [0.5ex] 
 \hline\hline
 FOE & 28.9 $\pm$ 1.7 & 56.8 $\pm$ 1.9 & $\mathcolorbox{blue!30}{75.6 \pm 2.8}$ & 35.5 $\pm$ 11.6 & 34.9 $\pm$ 12.2 & $\mathcolorbox{blue!30}{75.9 \pm 3.0}$\\
 ALIE & 34.9 $\pm$ 1.3 & 43.4 $\pm$ 1.0 & 47.4 $\pm$ 4.1 & 44.9 $\pm$ 3.8 & 44.9 $\pm$ 3.8 & $\mathcolorbox{blue!30}{54.6 \pm 3.6}$\\
 LF & 33.7 $\pm$ 1.4 & 58.4 $\pm$ 1.8 & 77.4 $\pm$ 12.9 & $\mathcolorbox{blue!30}{85.8 \pm 0.6}$ & $\mathcolorbox{blue!30}{85.6 \pm 0.7}$ & 80.5 $\pm$ 1.8\\
 SF & 32.5 $\pm$ 1.4 & 57.5 $\pm$ 1.4 & 77.7 $\pm$ 5.0 & 63.0 $\pm$ 12.0 & 55.4 $\pm$ 18.9 & $\mathcolorbox{blue!30}{82.5 \pm 0.3}$\\
 Mimic & 38.1 $\pm$ 1.3 & 68.5 $\pm$ 1.4 & $\mathcolorbox{blue!30}{85.5 \pm 0.5}$ & $\mathcolorbox{blue!30}{85.9 \pm 0.5}$ & $\mathcolorbox{blue!30}{85.9 \pm 0.5}$ & $\mathcolorbox{blue!30}{85.4 \pm 0.2}$\\
 \hline
 \hline
 Worst-case & 28.9 $\pm$ 1.7 & 43.4 $\pm$ 1.0 & 47.4 $\pm$ 4.1 & 35.5 $\pm$ 11.6 & 34.9 $\pm$ 12.2 & $\mathcolorbox{blue!30}{54.6 \pm 3.6}$\\[1ex] 
 \hline
\end{tabular}
\caption{Maximum accuracy (\%) achieved by CWMed $\circ$ NNM on heterogeneous CIFAR-10 ($\alpha = 1$), for various clipping strategies and attacks. There are $f = 2$ adversarial workers among $n = 9$. We highlight in blue the highest accuracy achieved per attack (i.e., per row).}
\label{table_cifar_cwmed}
\end{table*}

\clearpage
\section{Performance Gains of \layer{} Over Robust-DSGD}\label{app_exp_results}
In this section, we present additional experimental results demonstrating the performance improvements achieved by \layer{} in Robust-DSGD on the MNIST, Fashion-MNIST, and CIFAR-10 datasets.
These results could not be included in the main paper due to space constraints.

\subsection{MNIST}\label{app_exp_results_mnist}

\subsubsection{Initial System (Main Paper): 10 Honest Workers}
We consider training on MNIST in a system comprised of $n - f = 10$ honest workers. These results complement the ones presented in Section~\ref{sec_experiments} of the main paper.

\begin{figure*}[ht!]
    \centering
    \subfloat[extreme]{\includegraphics[width=0.45\textwidth]{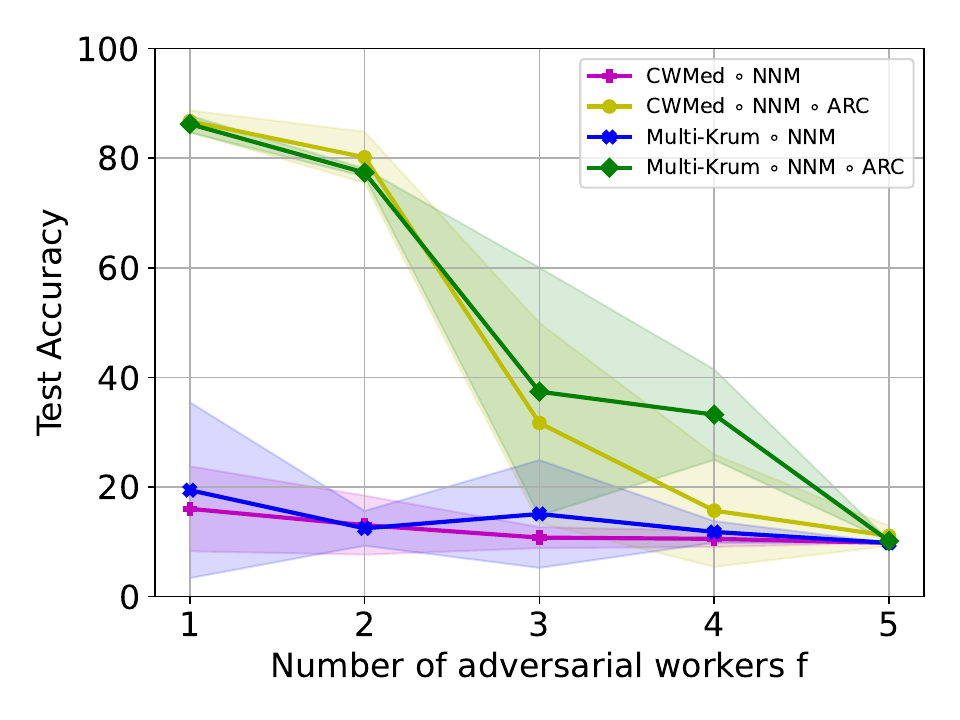}}
     \subfloat[$\alpha = 0.5$]{\includegraphics[width=0.45\textwidth]{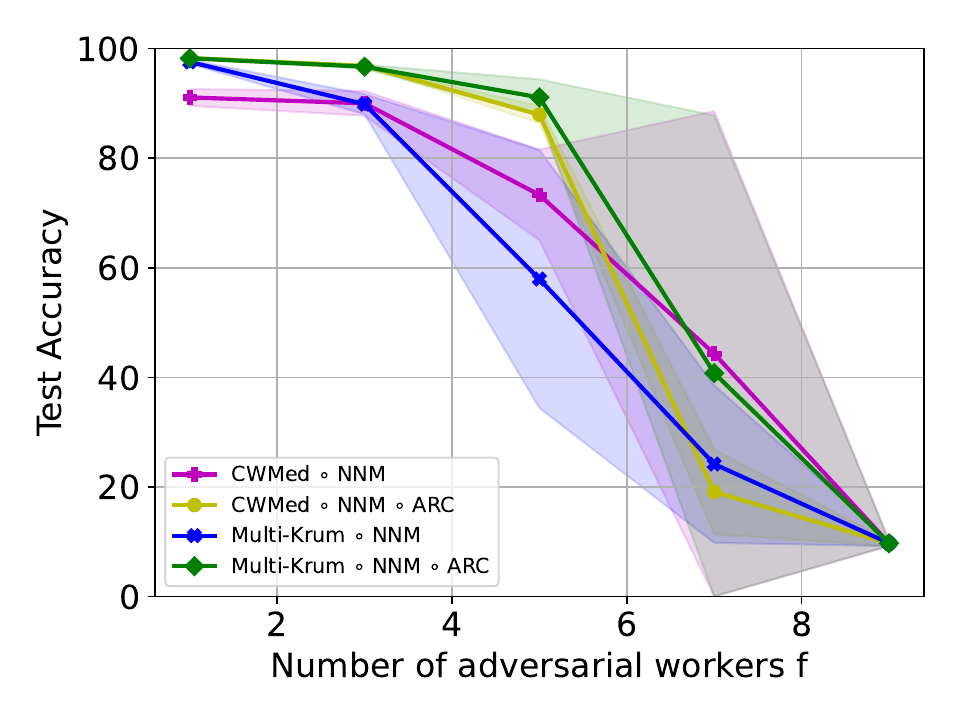}}\\
    \subfloat[$\alpha = 0.1$]{\includegraphics[width=0.45\textwidth]{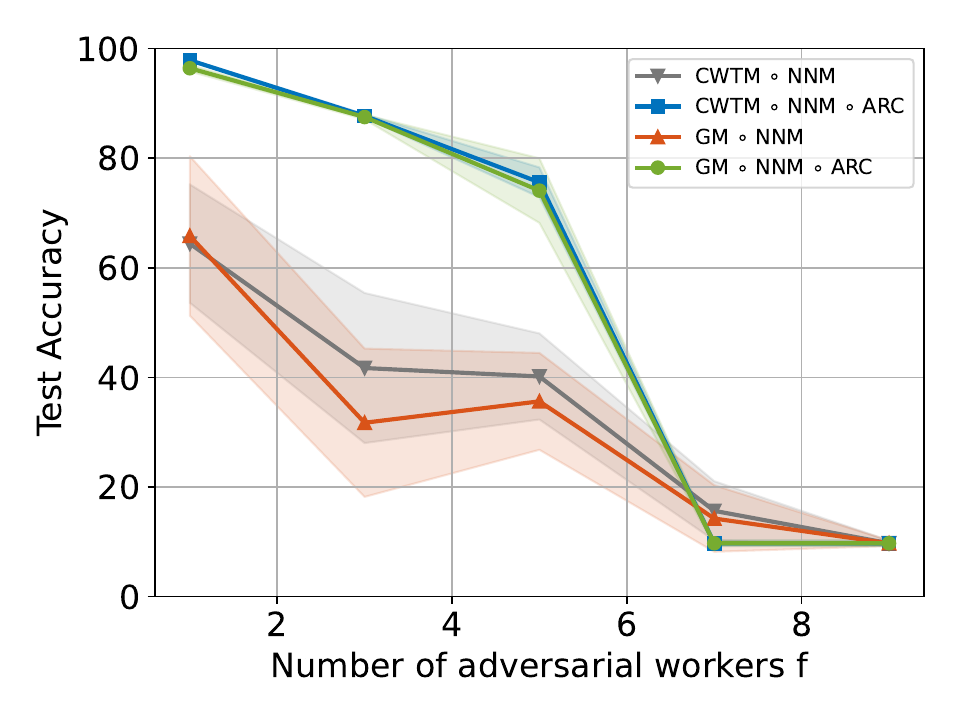}}
    \subfloat[$\alpha = 0.1$]{\includegraphics[width=0.45\textwidth]{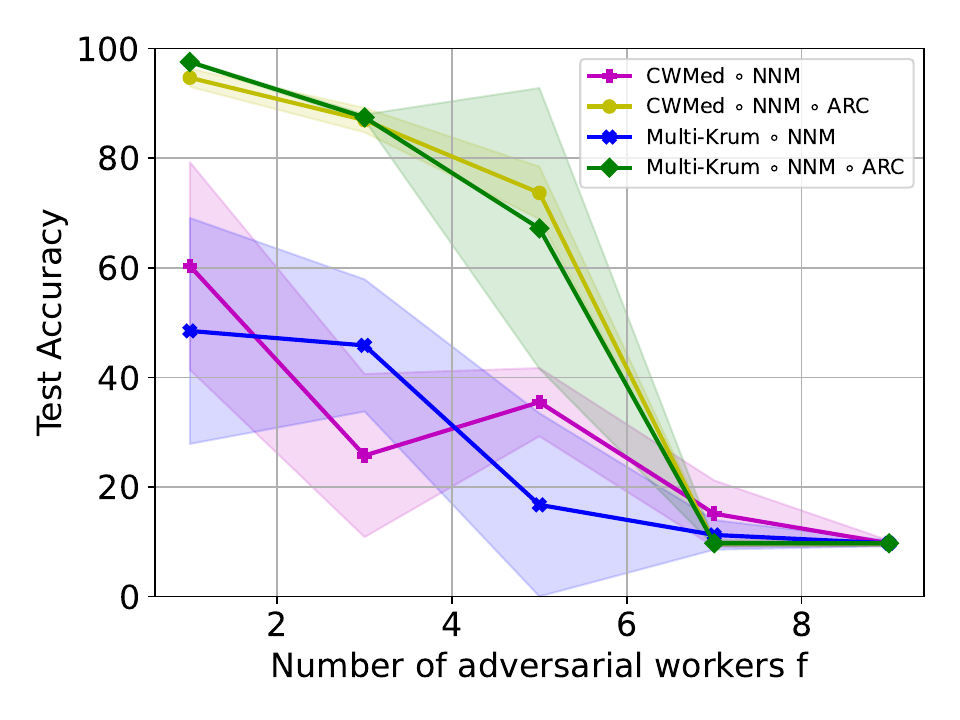}}\\
    \subfloat[$\alpha = 1$]{\includegraphics[width=0.45\textwidth]{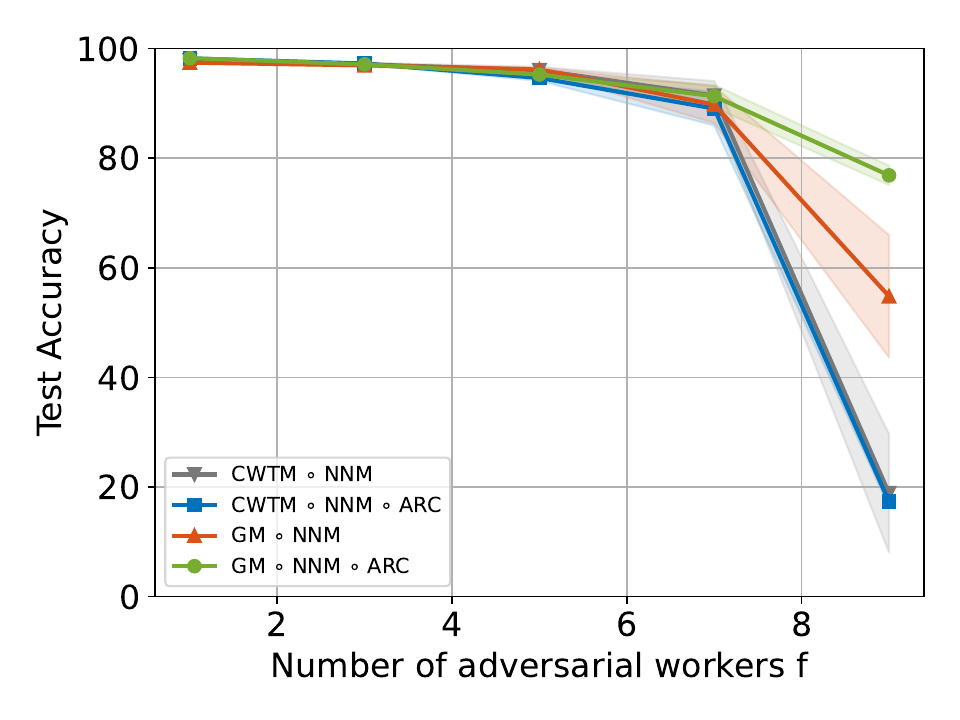}}
    \subfloat[$\alpha = 1$]{\includegraphics[width=0.45\textwidth]{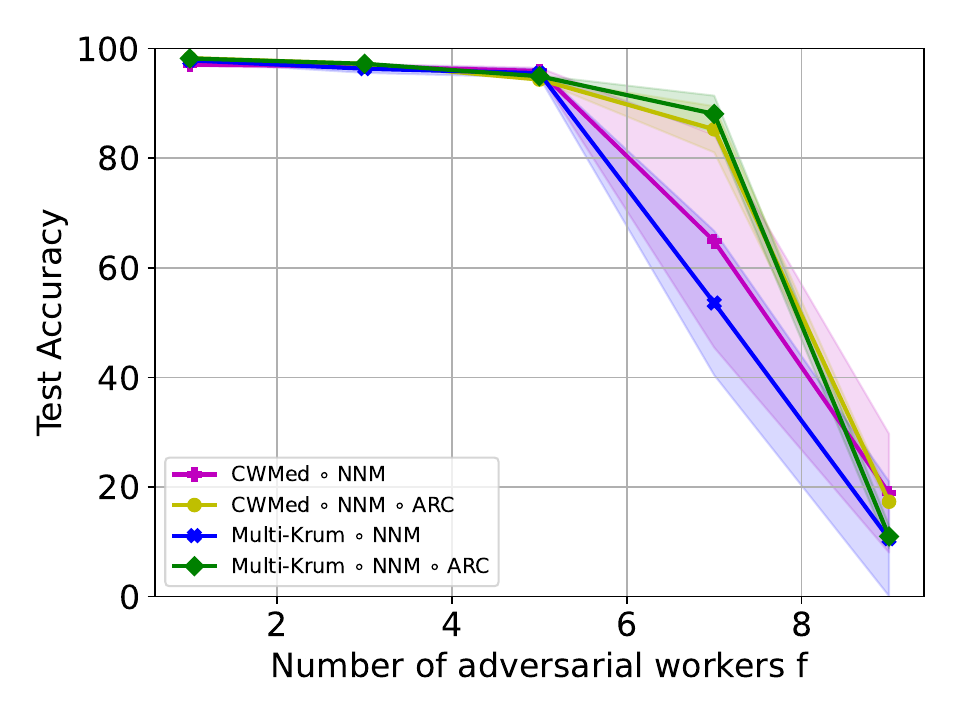}}
    \caption{\textit{Worst-case maximal accuracies} achieved by Robust-DSGD when using \layer{} compared to no clipping, on heterogeneously-distributed MNIST with $10$ honest workers.
    We fix the heterogeneity level, and vary the the number of adversarial workers $f$.}
\label{fig_breakdown_mnist_app_alpha}
\end{figure*}

\begin{figure*}[ht!]
    \centering
    \subfloat[$f=3$]{\includegraphics[width=0.49\textwidth]{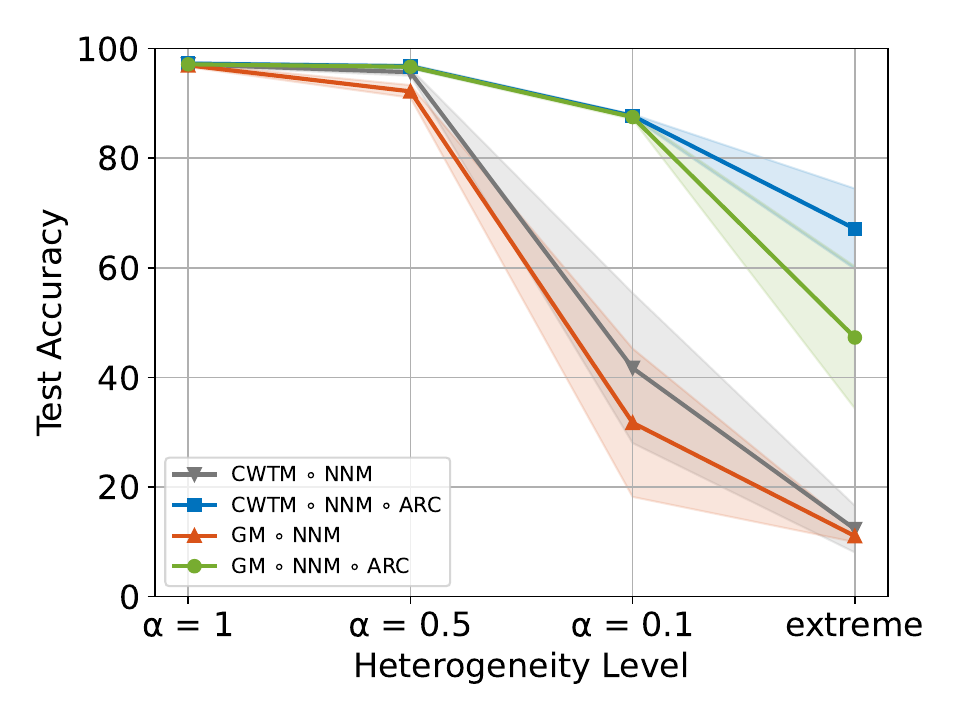}}
     \subfloat[$f=5$]{\includegraphics[width=0.49\textwidth]{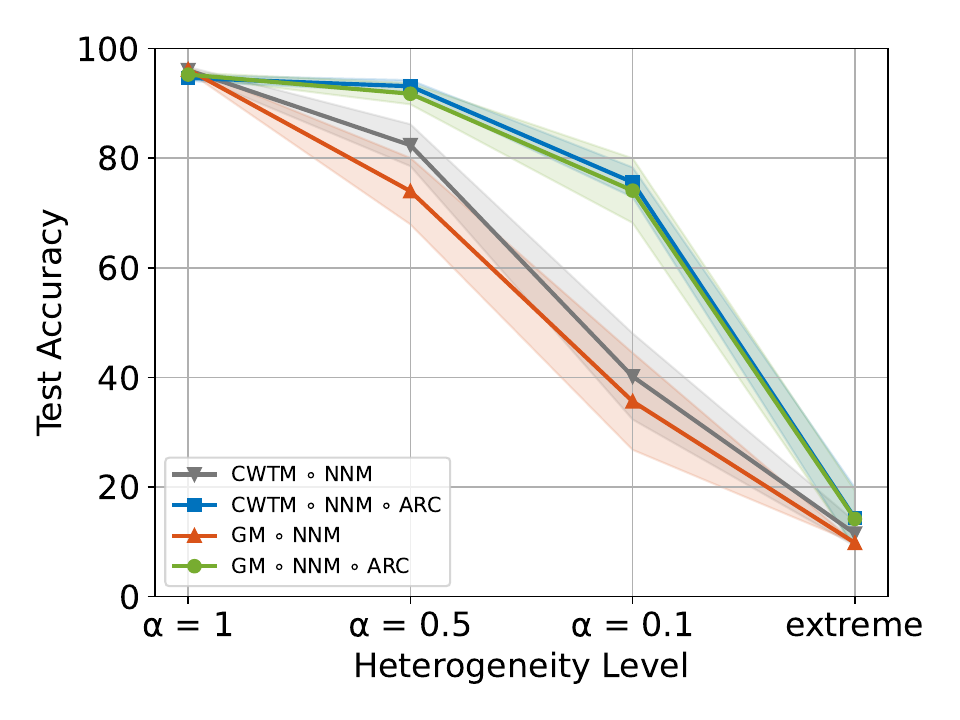}}\\
    \subfloat[$f=1$]{\includegraphics[width=0.33\textwidth]{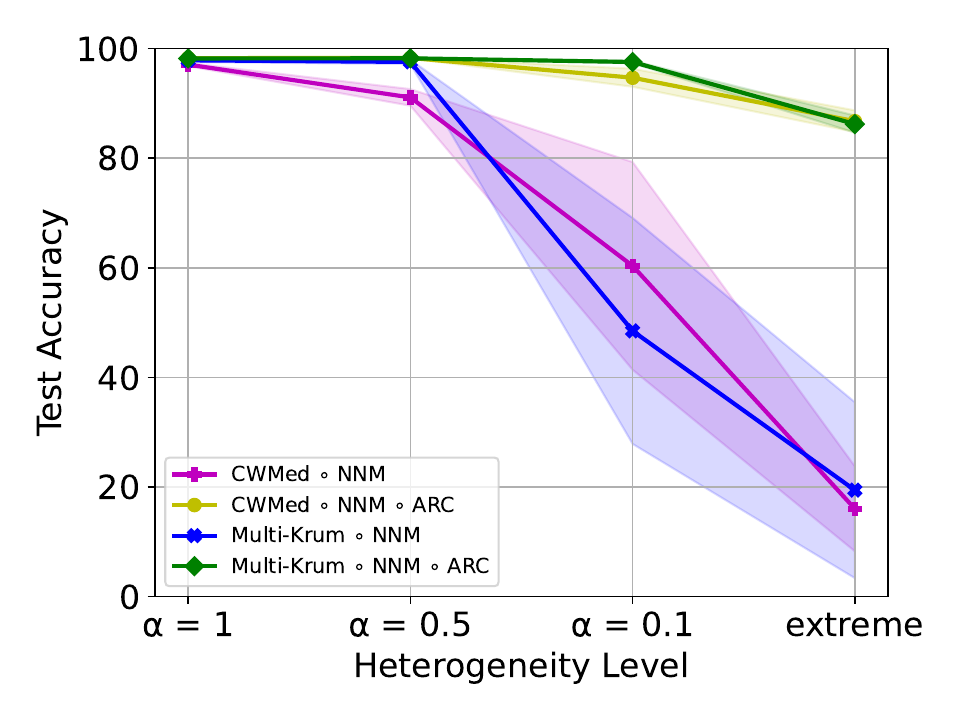}}
    \subfloat[$f=3$]{\includegraphics[width=0.33\textwidth]{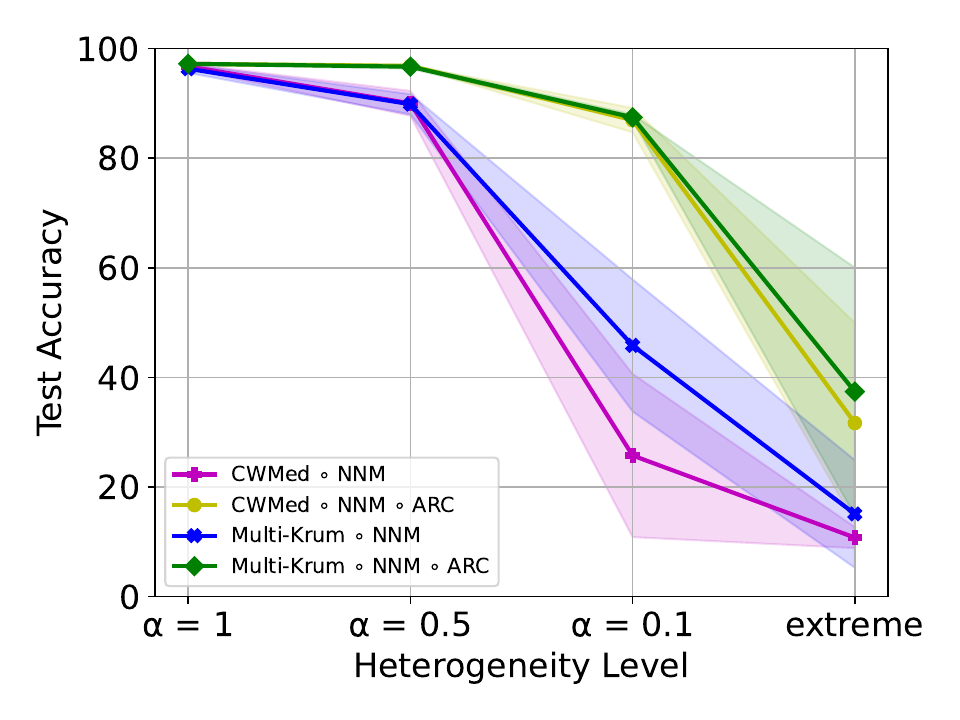}}
    \subfloat[$f=5$]{\includegraphics[width=0.33\textwidth]{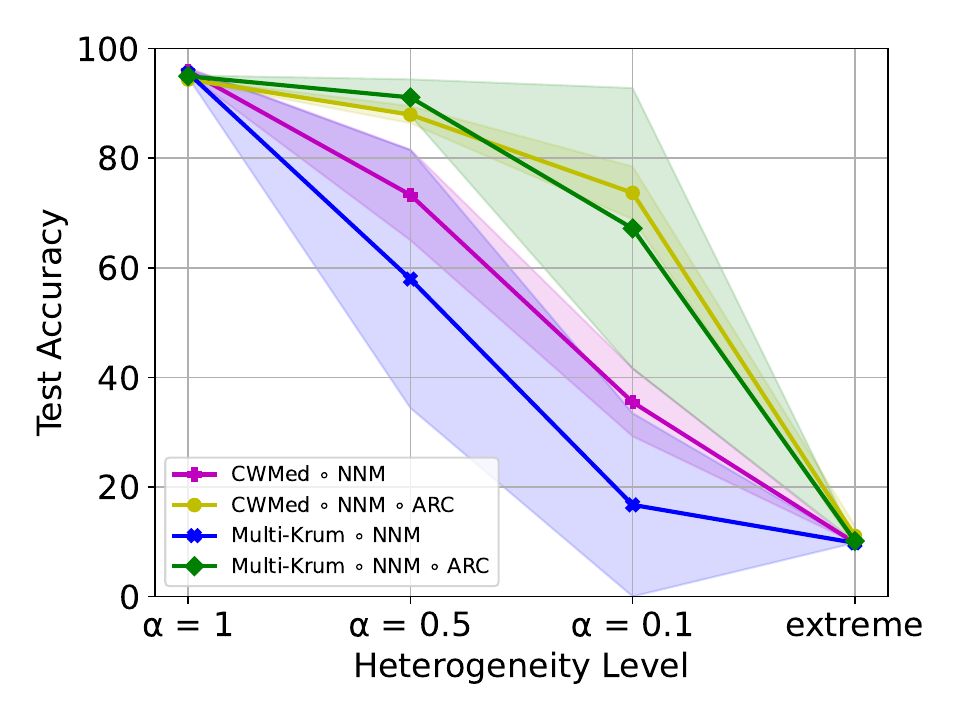}}
    \caption{\textit{Worst-case maximal accuracies} achieved by Robust-DSGD when using \layer{} compared to no clipping, on heterogeneously-distributed MNIST with $10$ honest workers.
    We fix the number of adversarial workers $f$, and vary the heterogeneity level.}
\label{fig_breakdown_mnist_app_f}
\end{figure*}

\begin{figure*}[ht!]
    \centering
    \subfloat[ALIE]{\includegraphics[width=0.49\textwidth]{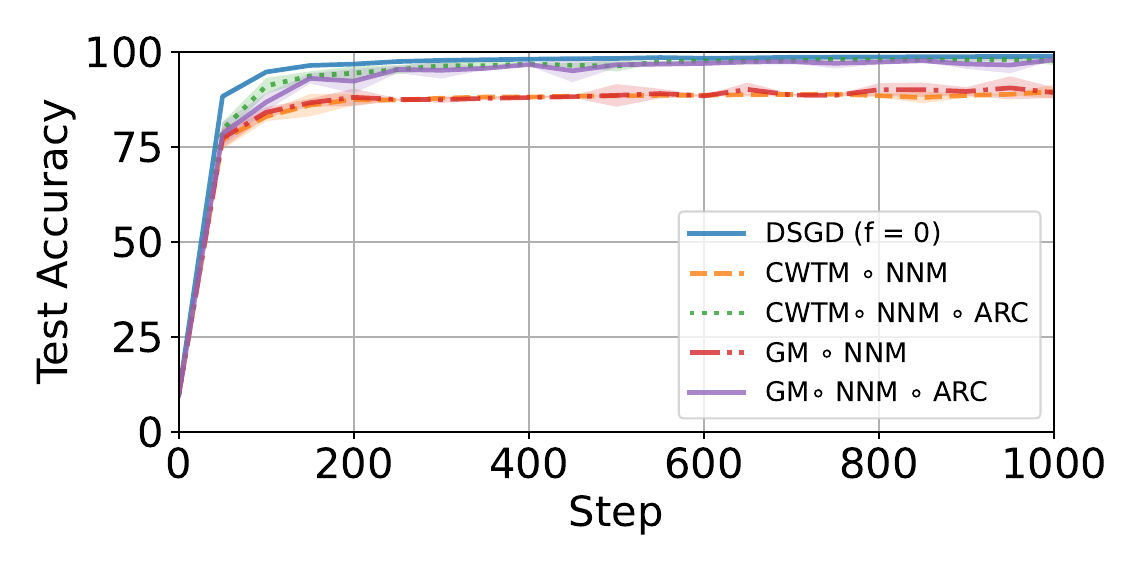}}
    \subfloat[SF]{\includegraphics[width=0.49\textwidth]{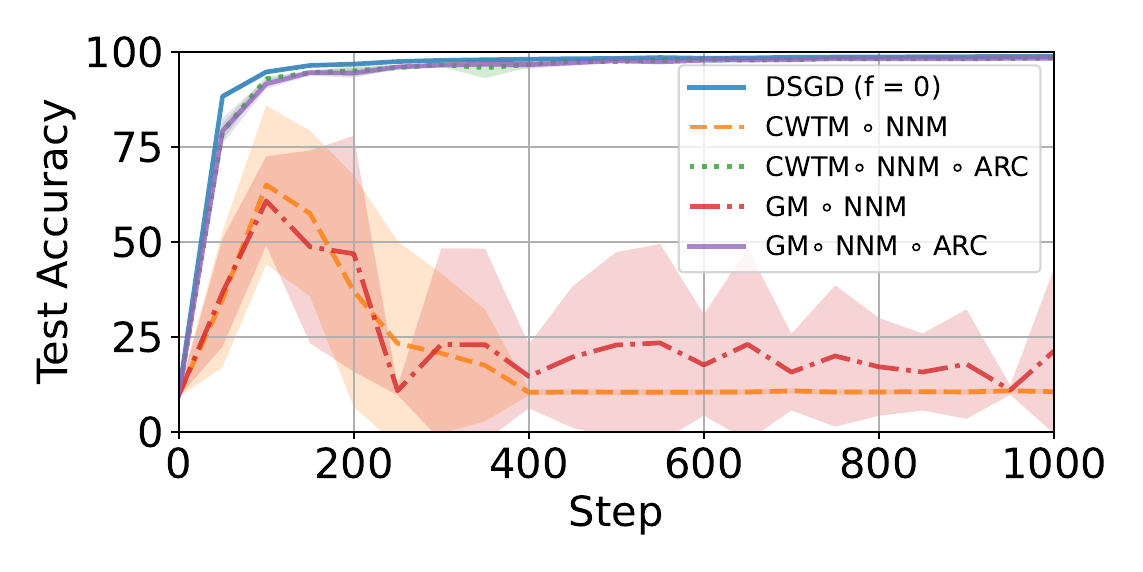}}\\
    \subfloat[LF]{\includegraphics[width=0.49\textwidth]{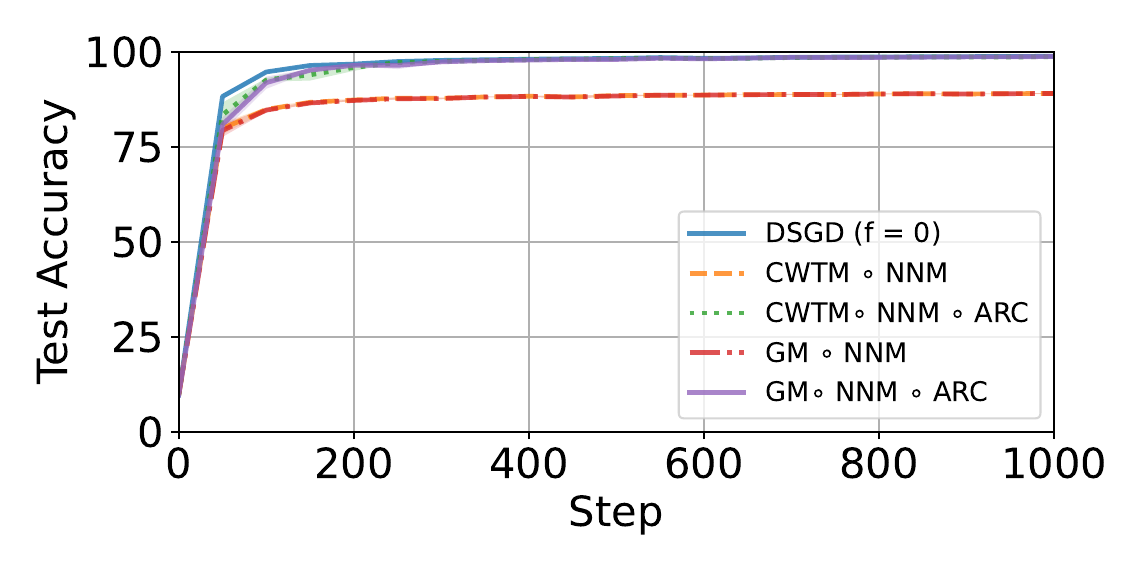}}
    \subfloat[Mimic]{\includegraphics[width=0.49\textwidth]{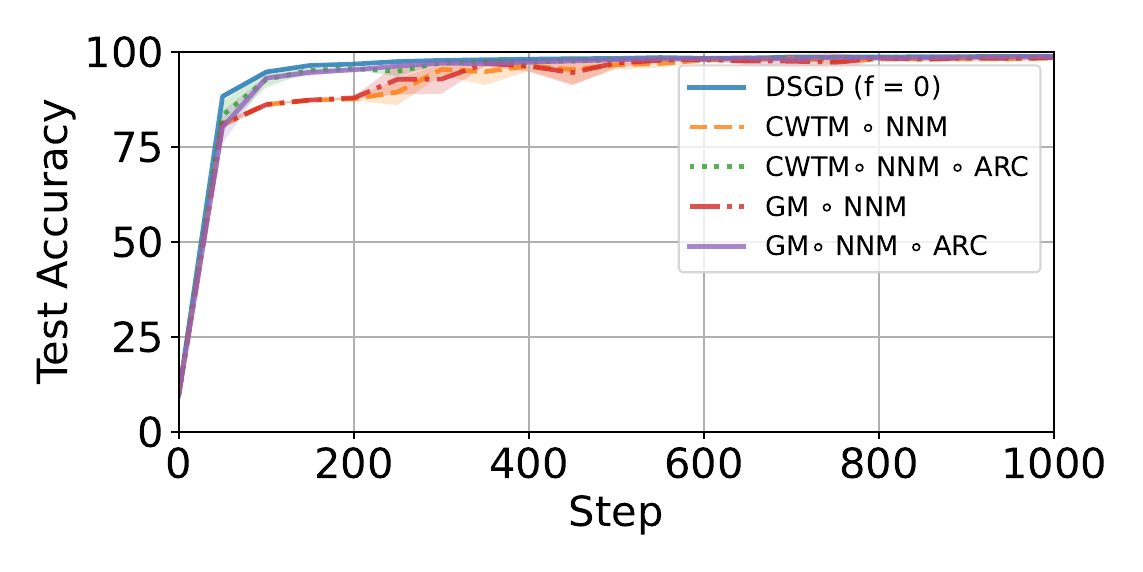}}
    \caption{Performance of Robust-DSGD when using \layer{} compared to no clipping, on heterogeneously-distributed MNIST (extreme heterogeneity) with $10$ honest workers and $f = 1$, under several attacks. This complements Figure~\ref{fig_mnist_FOE_main_extreme} of the main paper.}
\label{fig_mnist_app_f=1_extreme}
\end{figure*}

\begin{figure*}[ht!]
    \centering
    \subfloat[FOE]{\includegraphics[width=0.49\textwidth]{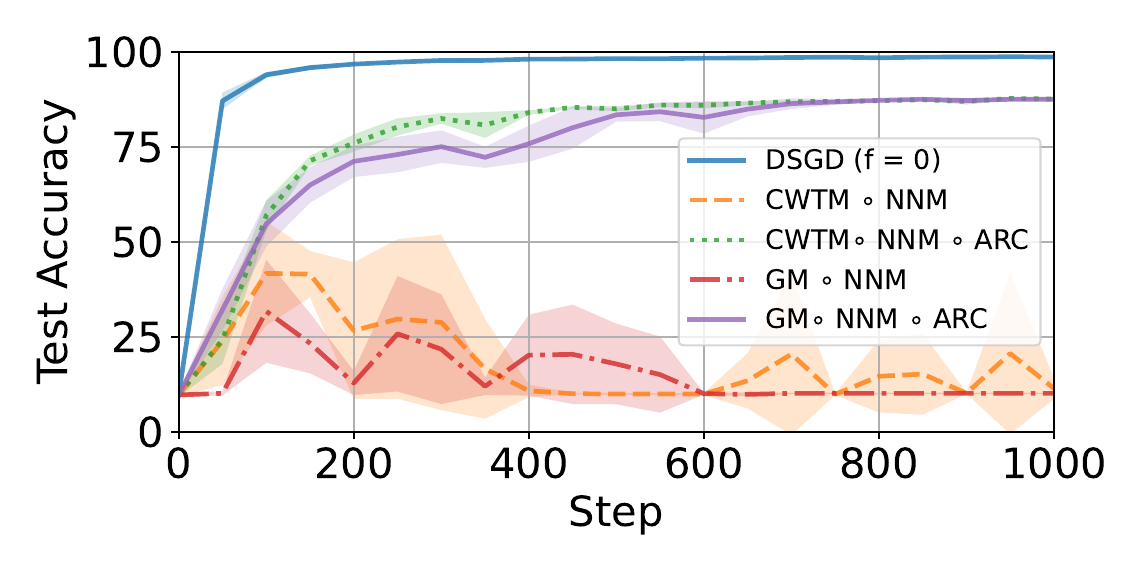}}
    \subfloat[SF]{\includegraphics[width=0.49\textwidth]{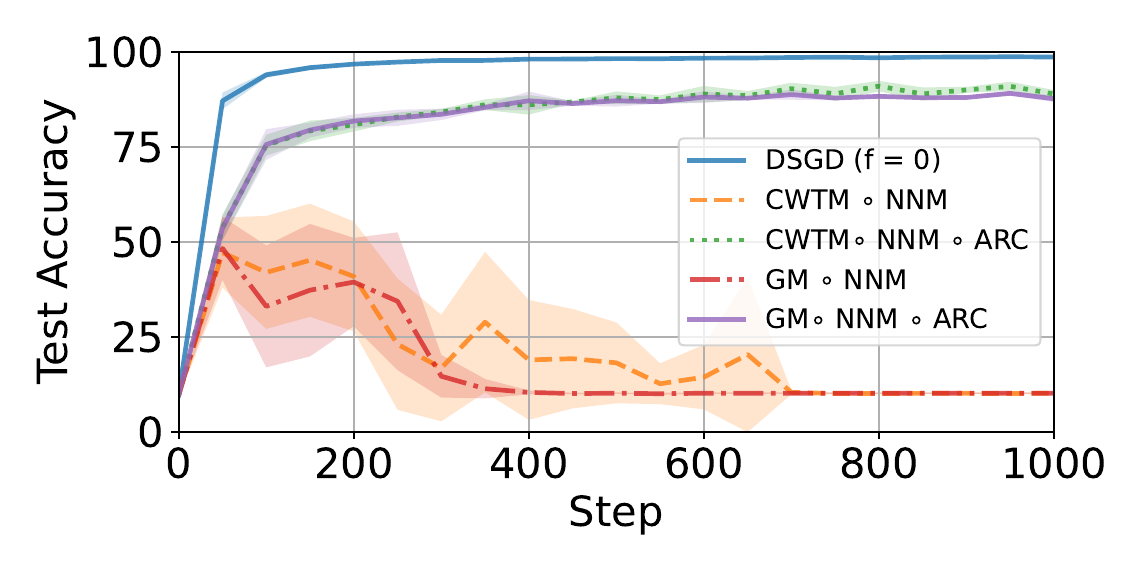}}\\
    \subfloat[ALIE]{\includegraphics[width=0.33\textwidth]{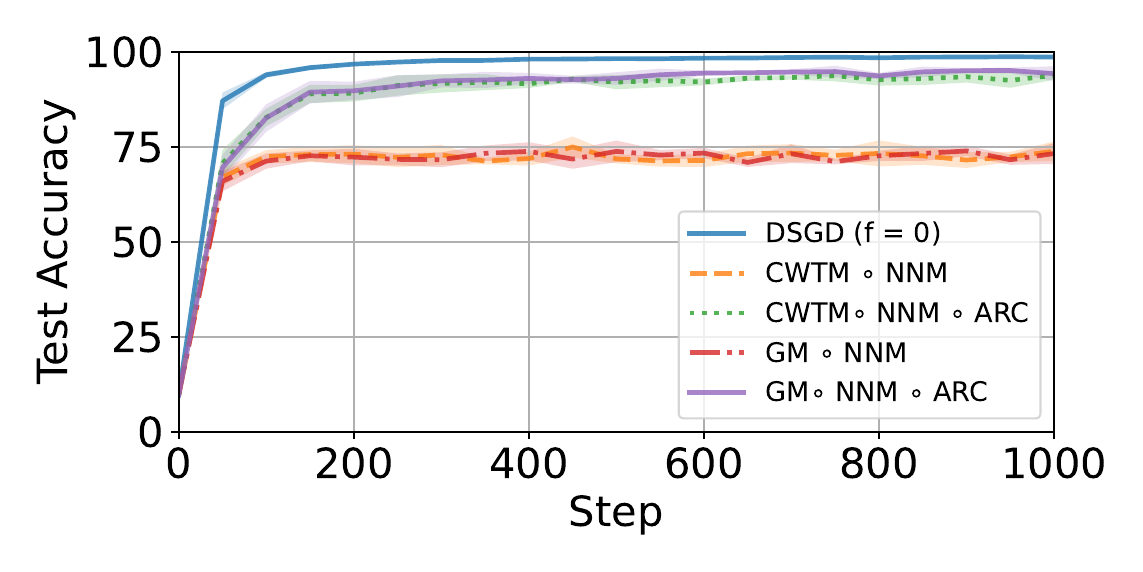}}
    \subfloat[LF]{\includegraphics[width=0.33\textwidth]{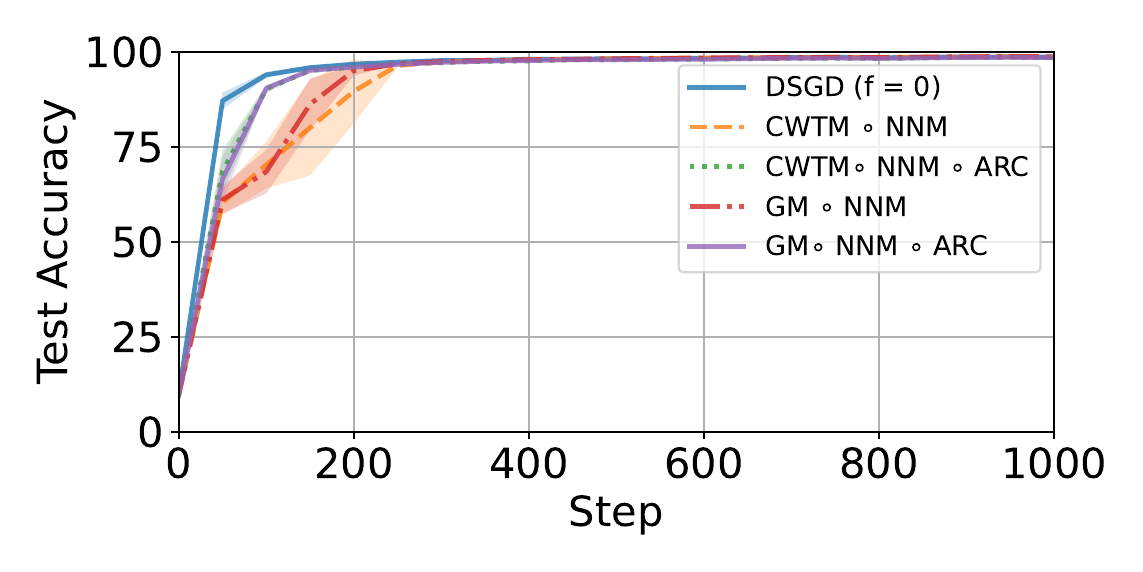}}
    \subfloat[Mimic]{\includegraphics[width=0.33\textwidth]{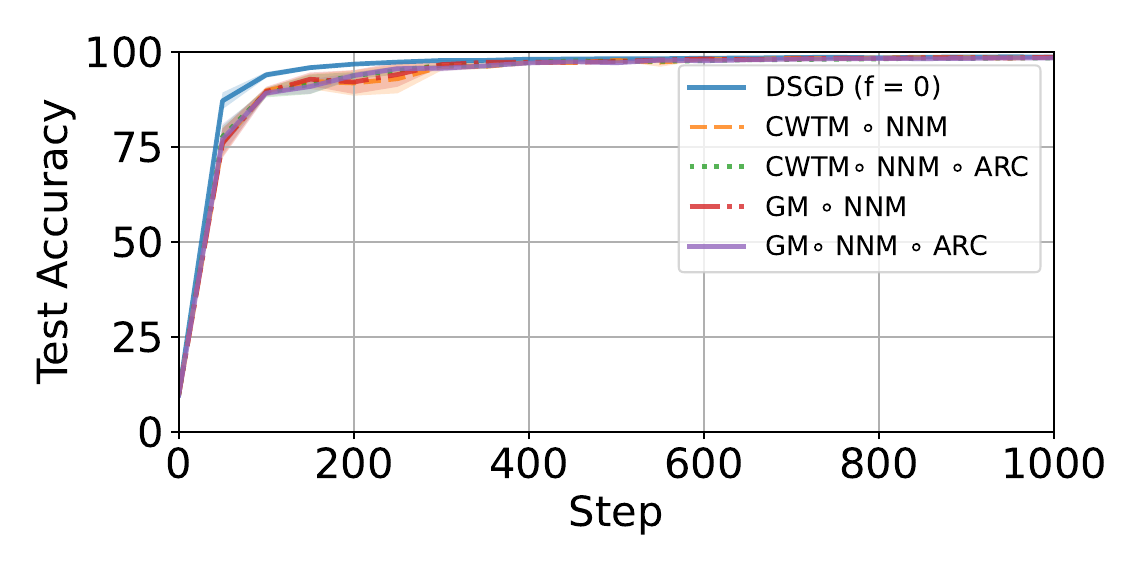}}
    \caption{Performance of Robust-DSGD when using \layer{} compared to no clipping, on heterogeneous MNIST ($\alpha = 0.1$) with $10$ honest workers and $f = 3$, under several attacks.}
\label{fig_mnist_app_f=3_0.1}
\end{figure*}

\begin{figure*}[ht!]
    \centering
    \subfloat[$f=3$]{\includegraphics[width=0.49\textwidth]{plots_mnist_new/mnist_LF_f=3_model=cnn_mnist_lr=0.1_alpha=0.1.pdf}}
    \subfloat[$f=4$]{\includegraphics[width=0.49\textwidth]{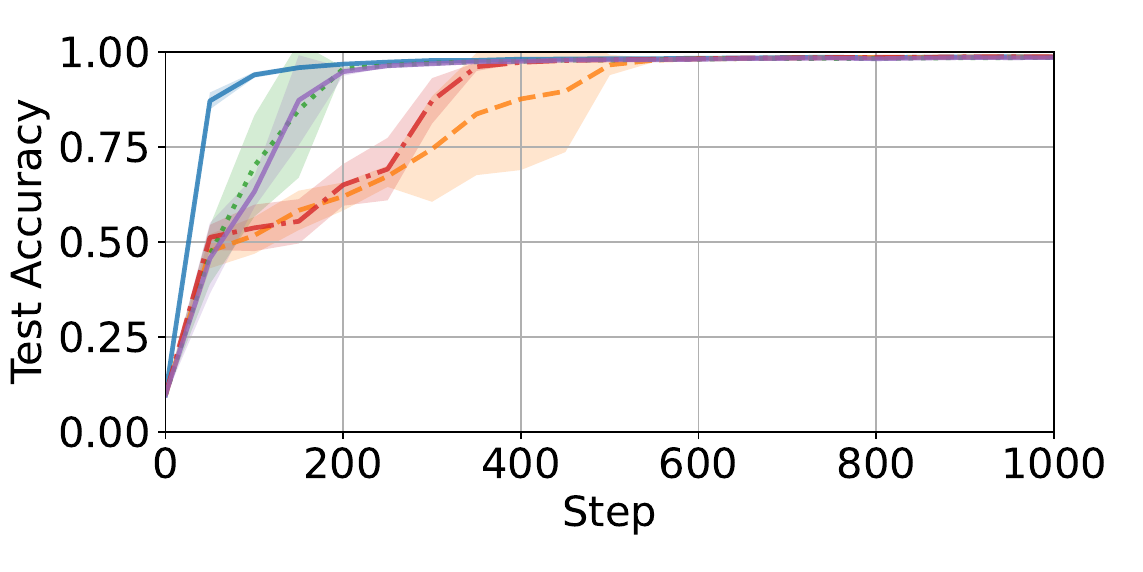}}\\
    \subfloat[$f=5$]{\includegraphics[width=0.49\textwidth]{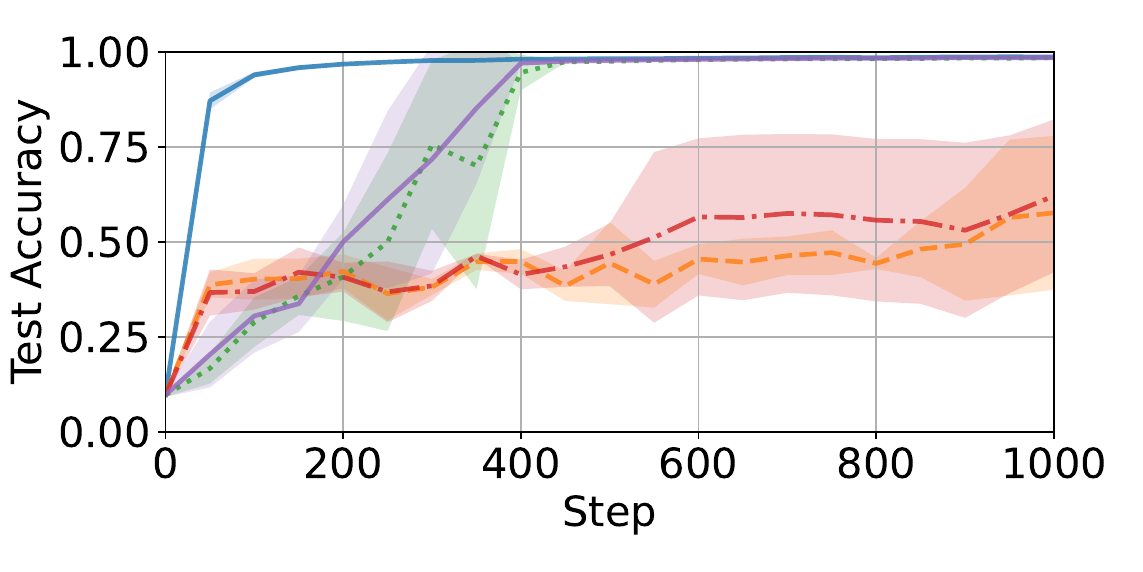}}
    \caption{Performance of Robust-DSGD under the LF attack when using \layer{} compared to no clipping, on heterogeneously-distributed MNIST ($\alpha = 0.1$) with $10$ honest workers and varying number of adversarial workers $f$.}
\label{fig_mnist_app_LF}
\end{figure*}

\clearpage
\subsubsection{Larger System: 30 Honest Workers}
\label{app_exp_large_sys_mnist}
We consider training on the MNIST dataset in a larger system comprised of $n - f = 30$ honest workers, and $f \in \{3, 6, 9\}$ adversarial workers.

\begin{figure*}[ht!]
    \centering
    \subfloat[ALIE]{\includegraphics[width=0.49\textwidth]{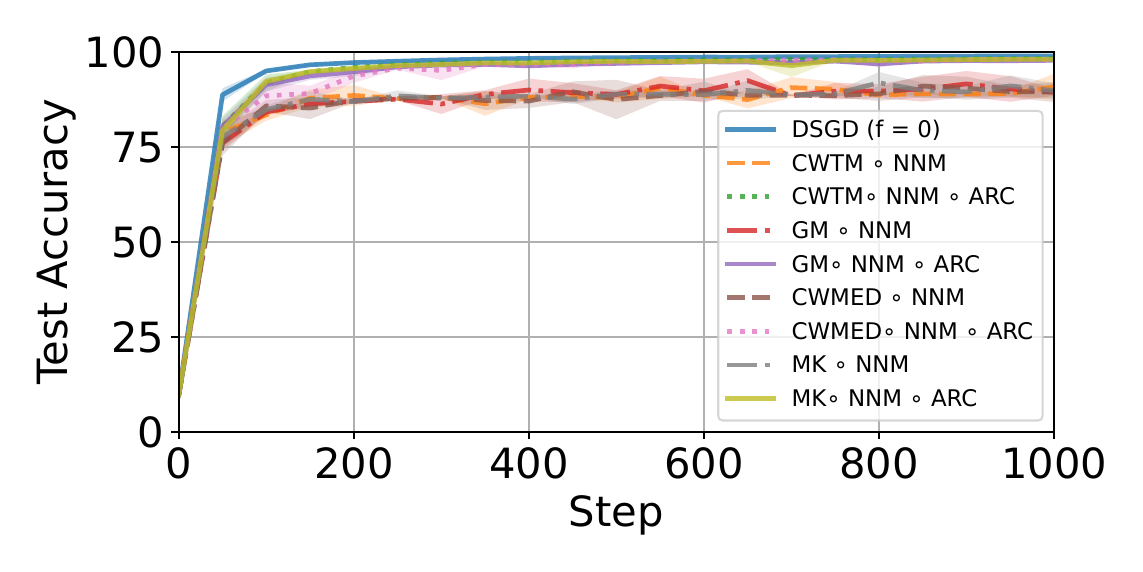}}
    \subfloat[SF]{\includegraphics[width=0.49\textwidth]{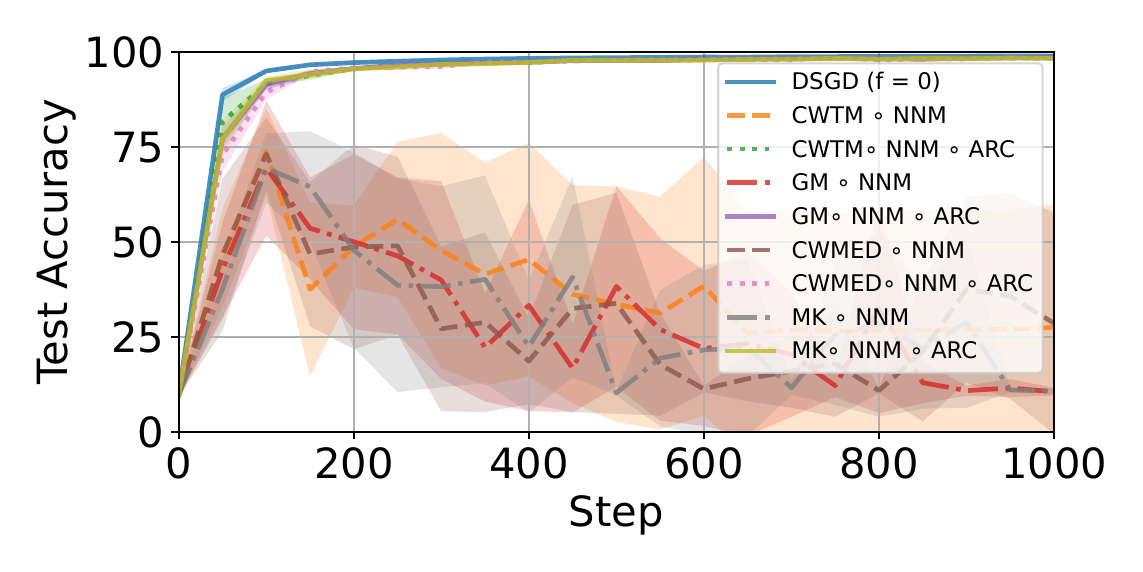}}\\
    \subfloat[LF]{\includegraphics[width=0.49\textwidth]{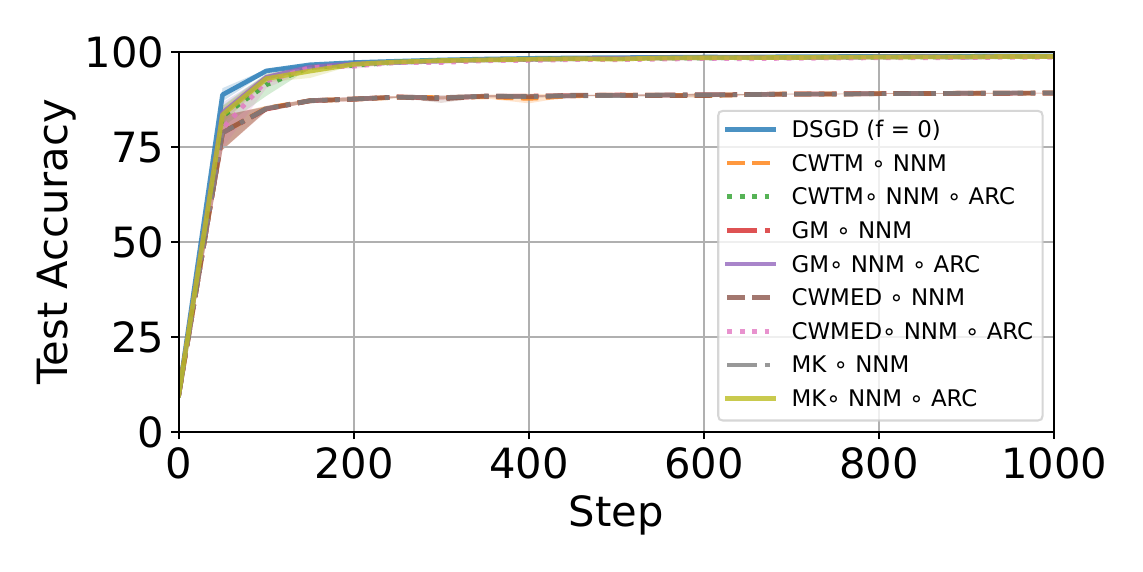}}
    \subfloat[FOE]{\includegraphics[width=0.49\textwidth]{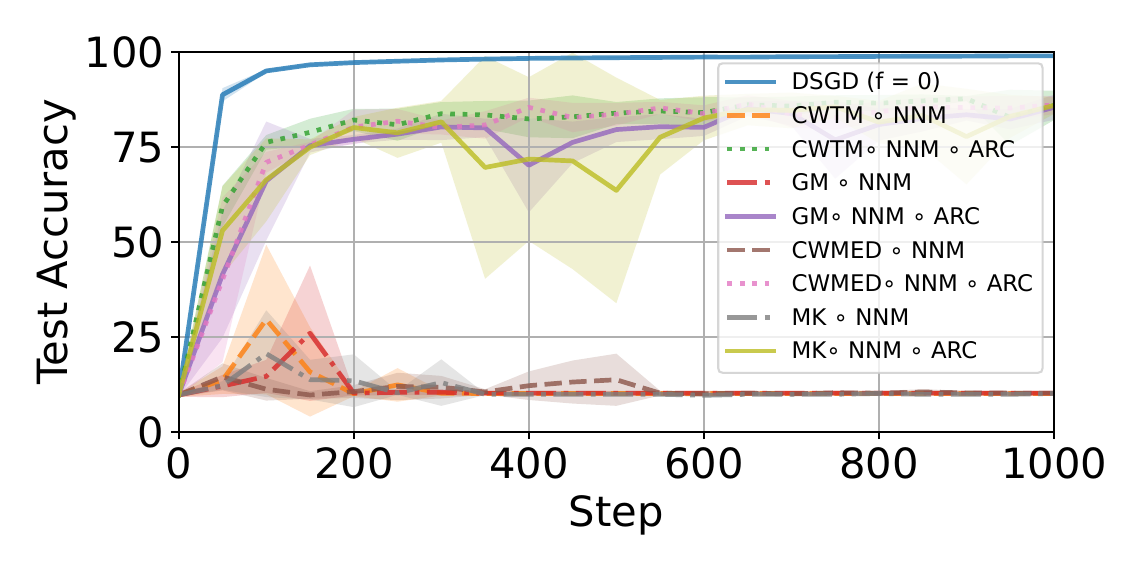}}
    \caption{Performance of Robust-DSGD when using \layer{} and without clipping on distributed MNIST under \textit{extreme} heterogeneity.
    There are $f = 3$ adversarial workers executing 4 attacks.
    }
\label{fig_plots_mnist_large}
\end{figure*}

\begin{figure*}[ht!]
    \centering
    \subfloat[ALIE]{\includegraphics[width=0.49\textwidth]{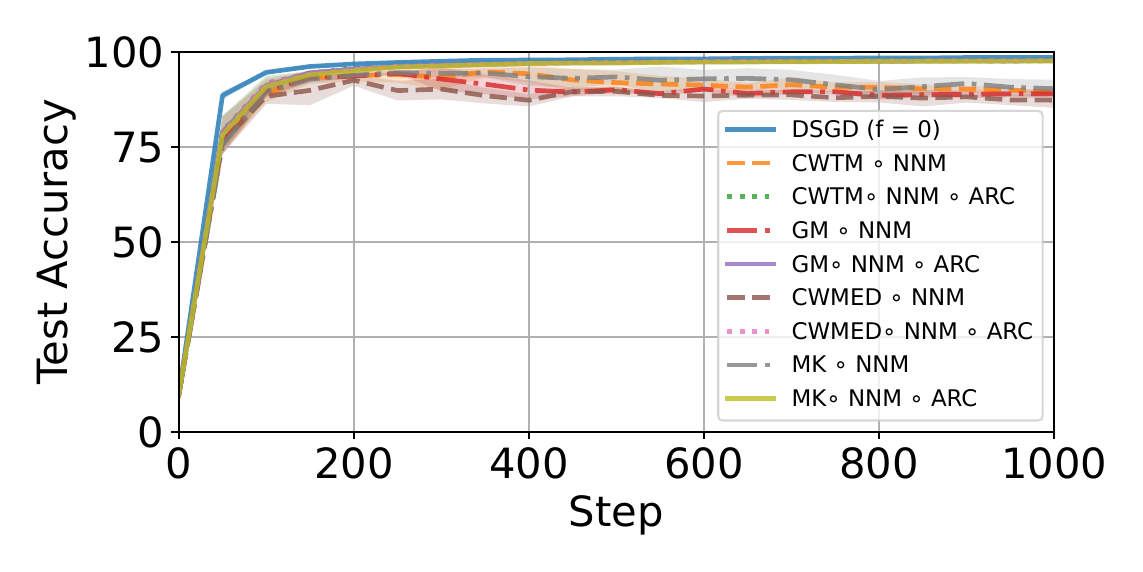}}
    \subfloat[SF]{\includegraphics[width=0.49\textwidth]{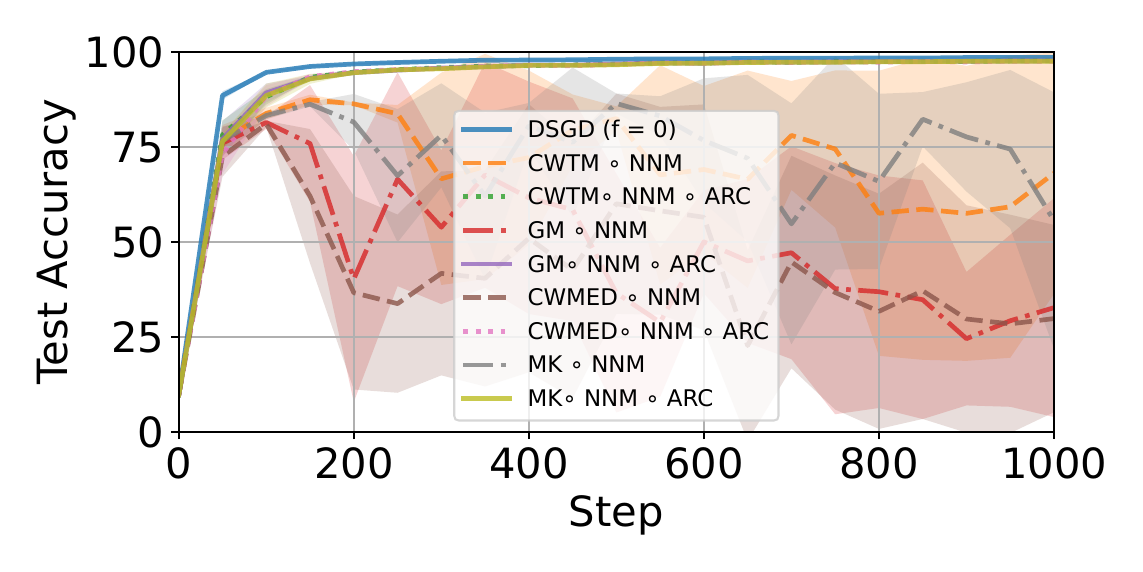}}\\
    \subfloat[LF]{\includegraphics[width=0.49\textwidth]{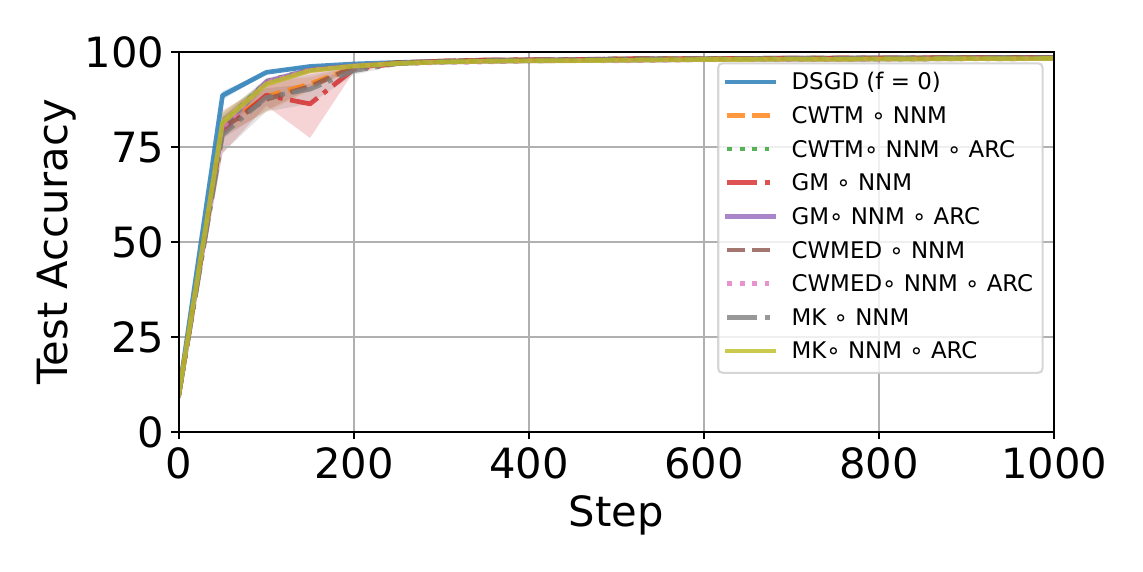}}
    \subfloat[FOE]{\includegraphics[width=0.49\textwidth]{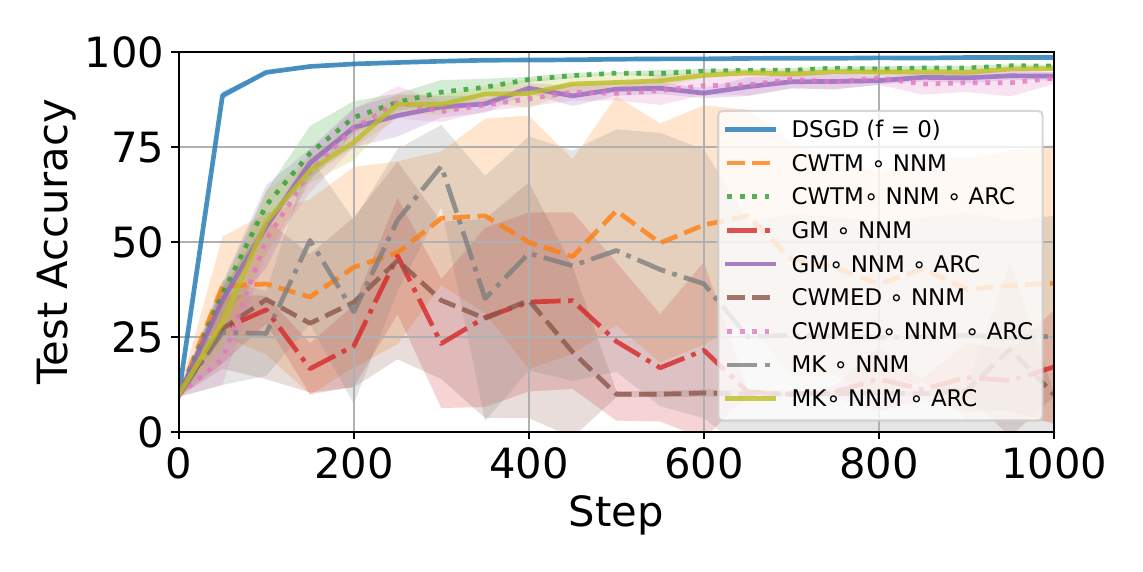}}
    \caption{Performance of Robust-DSGD when using \layer{} and without clipping on heterogeneously-distributed MNIST with $\alpha = 0.1$.
    There are $f = 6$ adversarial workers executing 4 attacks.
    }
\label{fig_plots_mnist_large_2}
\end{figure*}

\begin{figure*}[ht!]
    \centering
    \subfloat[ALIE]{\includegraphics[width=0.49\textwidth]{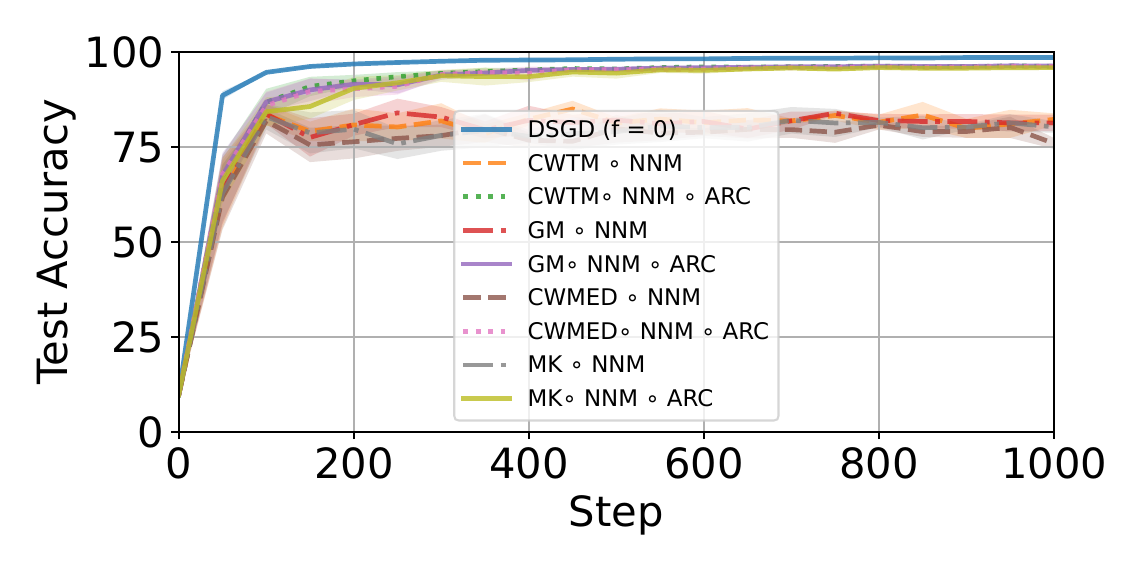}}
    \subfloat[SF]{\includegraphics[width=0.49\textwidth]{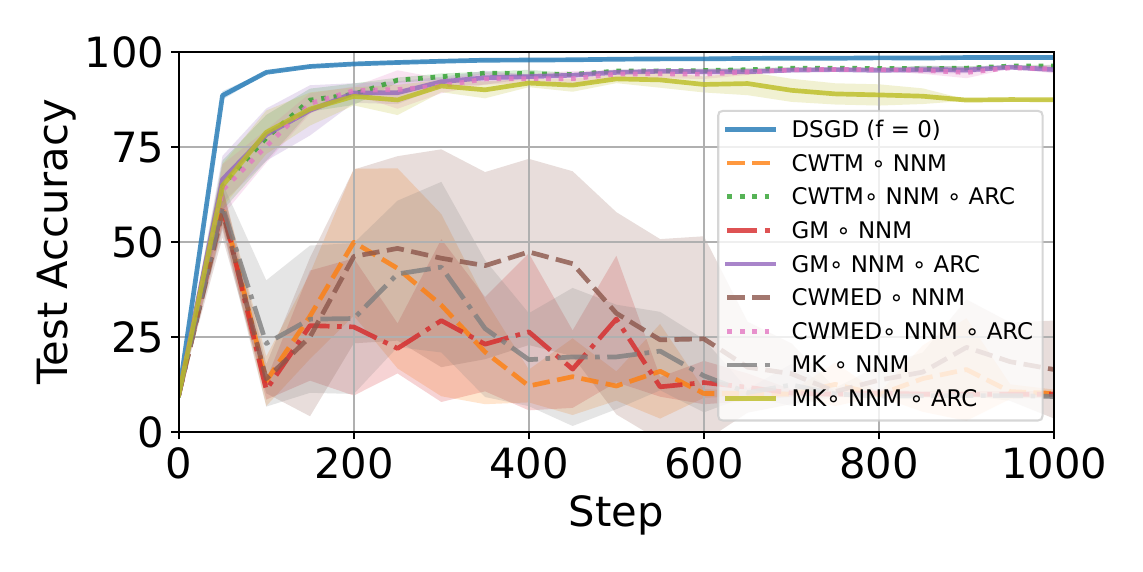}}\\
    \subfloat[LF]{\includegraphics[width=0.49\textwidth]{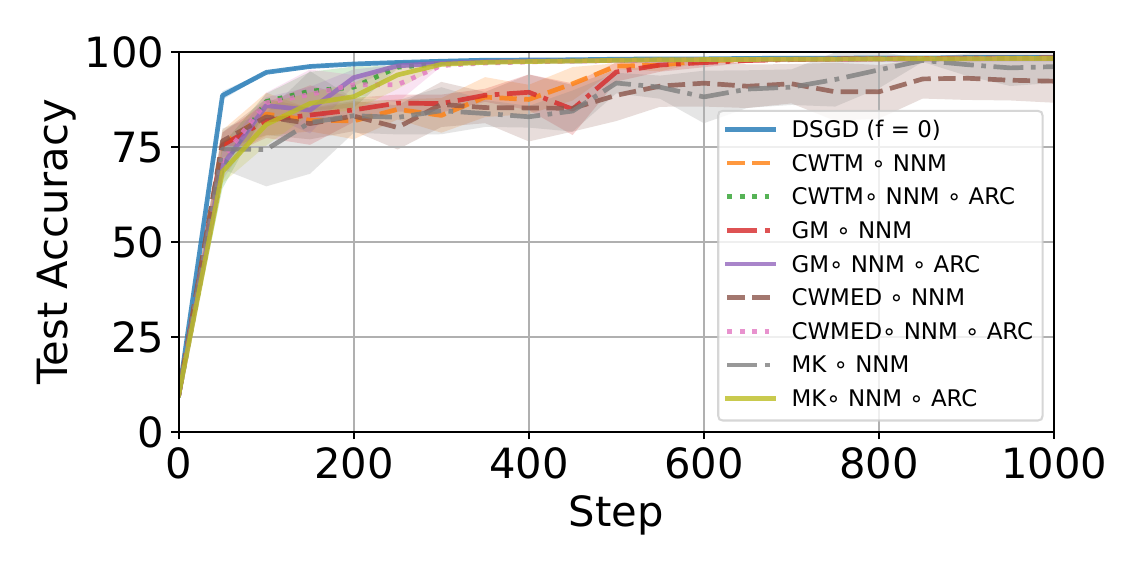}}
    \subfloat[FOE]{\includegraphics[width=0.49\textwidth]{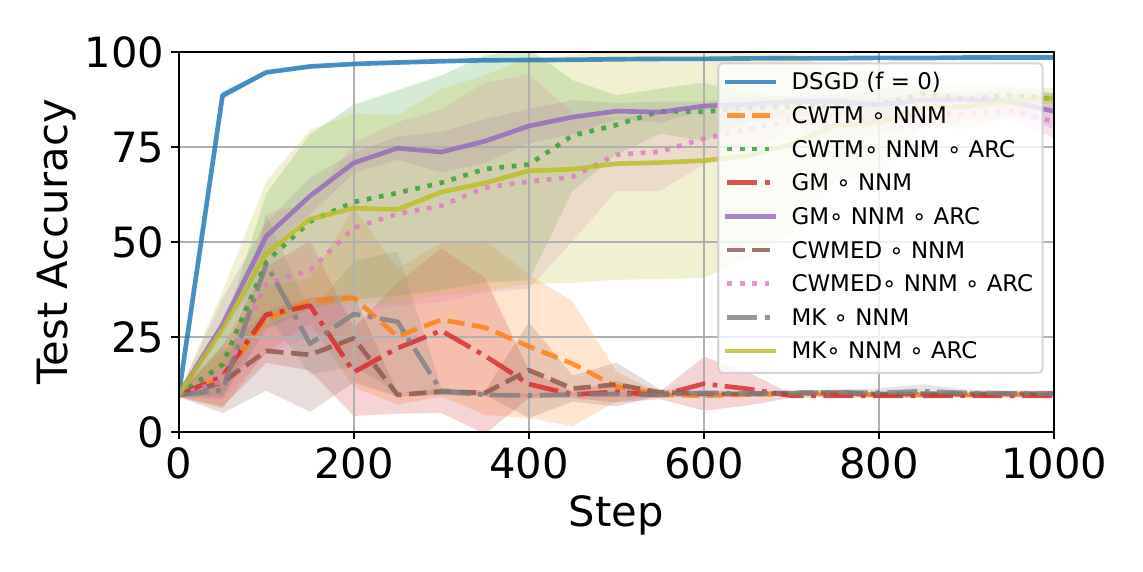}}
    \caption{Performance of Robust-DSGD when using \layer{} and without clipping on heterogeneously-distributed MNIST with $\alpha = 0.1$.
    There are $f = 9$ adversarial workers executing 4 attacks.
    }
\label{fig_plots_mnist_large_3}
\end{figure*}

\begin{figure*}[ht!]
    \centering
    \subfloat[$\alpha = 0.1$]{
   \includegraphics[width=0.49\textwidth]{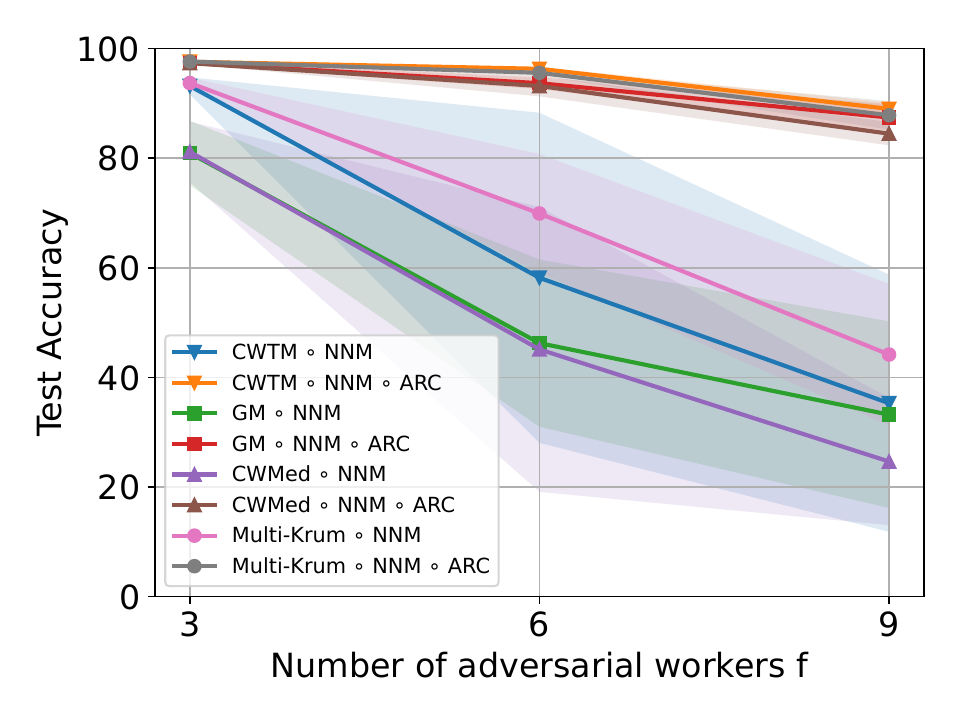}}
   \subfloat[extreme heterogeneity]{
   \includegraphics[width=0.49\textwidth]{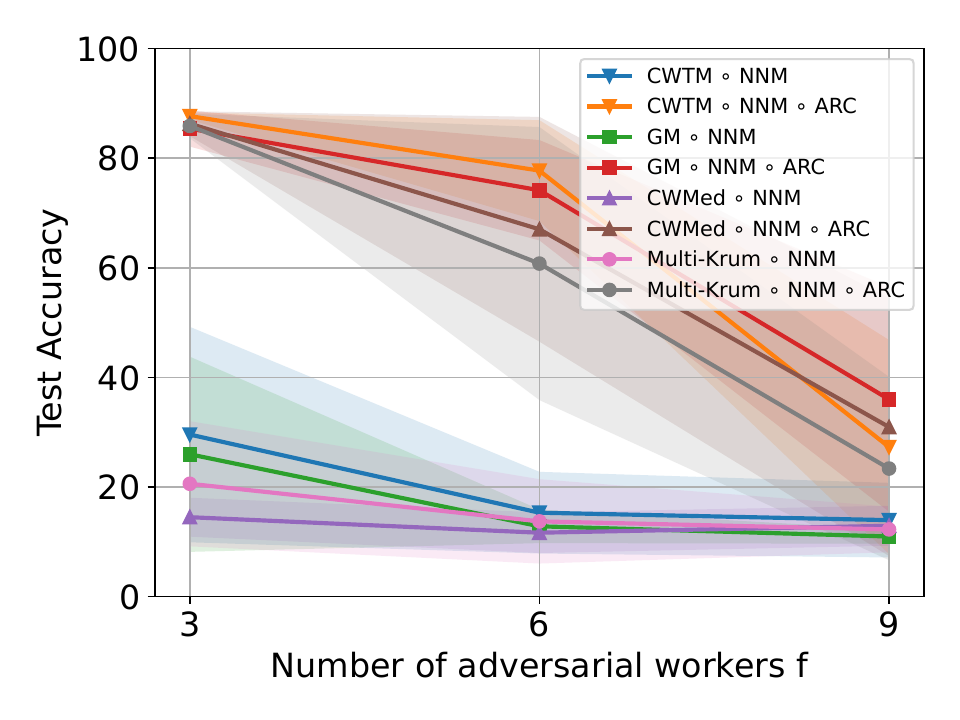}}
    \caption{\textit{Worst-case maximal accuracies} achieved by Robust-DSGD, with and without \layer{}, on heterogeneously-distributed MNIST with $30$ honest workers. We consider a heterogeneous data distribution with $\alpha = 0.1$ (\textit{left}) and extreme heterogeneity (\textit{right}), and vary $f \in \{3, 6, 9\}$.}
\label{fig_breakdown_mnist_rebuttal}
\end{figure*}

\clearpage
\subsection{Fashion-MNIST}\label{app_exp_results_fashion}

\begin{figure*}[ht!]
    \centering
    \subfloat[$\alpha = 0.1$]{\includegraphics[width=0.49\textwidth]{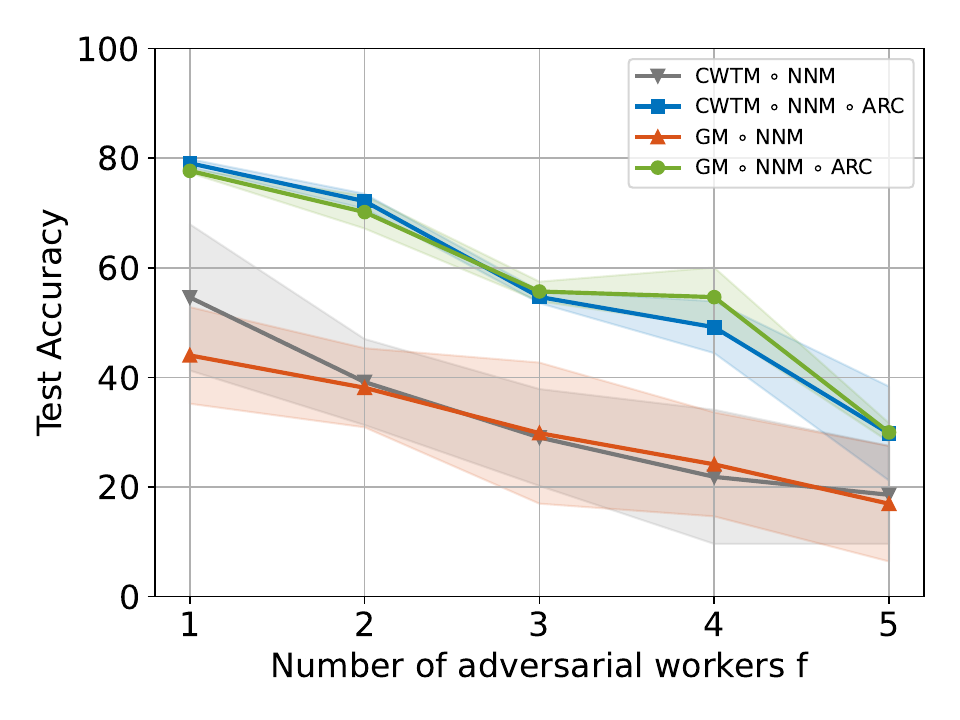}}
     \subfloat[$\alpha = 0.1$]{\includegraphics[width=0.49\textwidth]{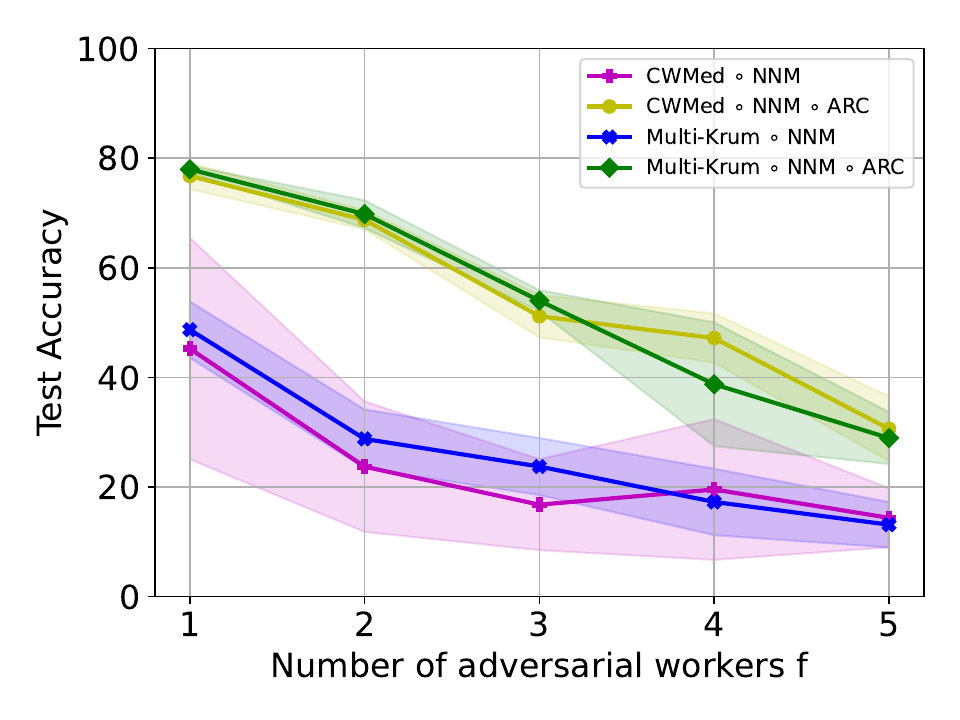}}\\
    \subfloat[$\alpha = 0.05$]{\includegraphics[width=0.49\textwidth]{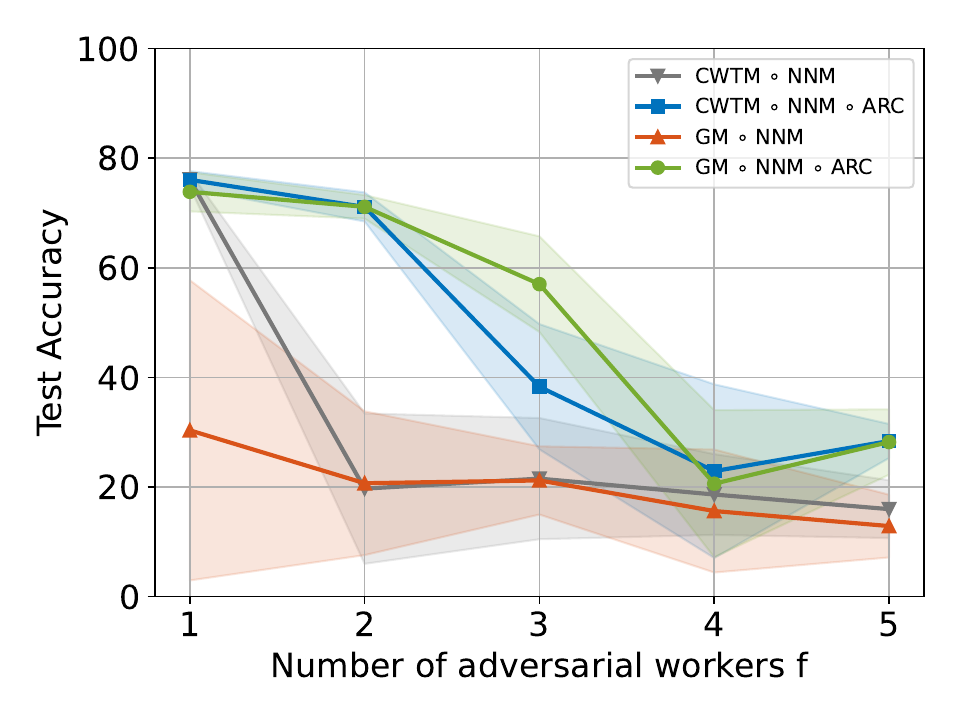}}
    \subfloat[$\alpha = 0.05$]{\includegraphics[width=0.49\textwidth]{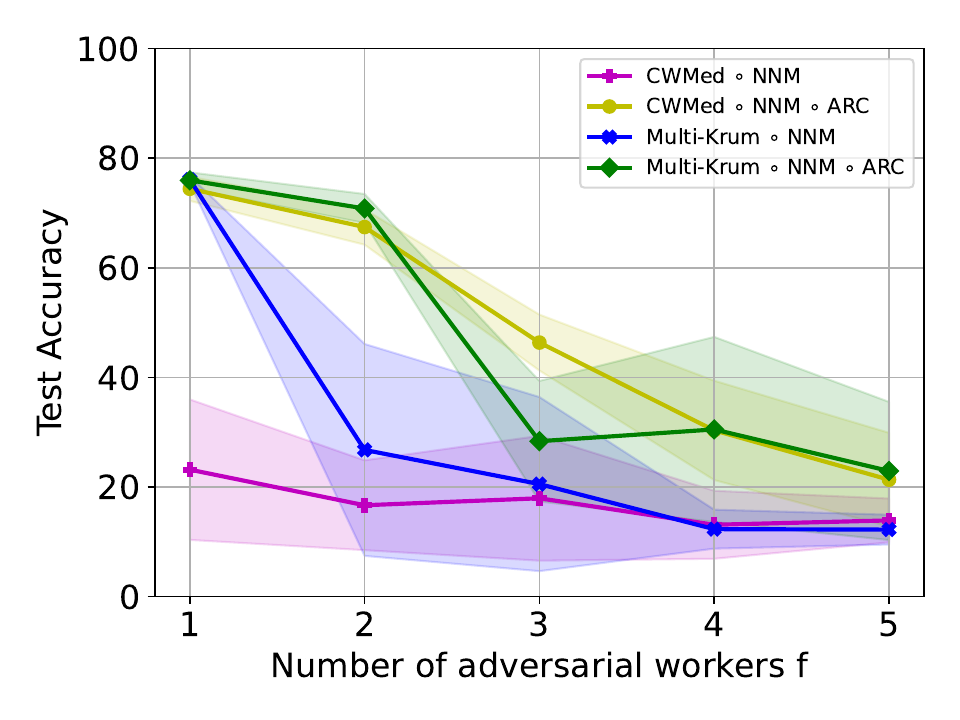}}\\
    \subfloat[extreme]{\includegraphics[width=0.49\textwidth]{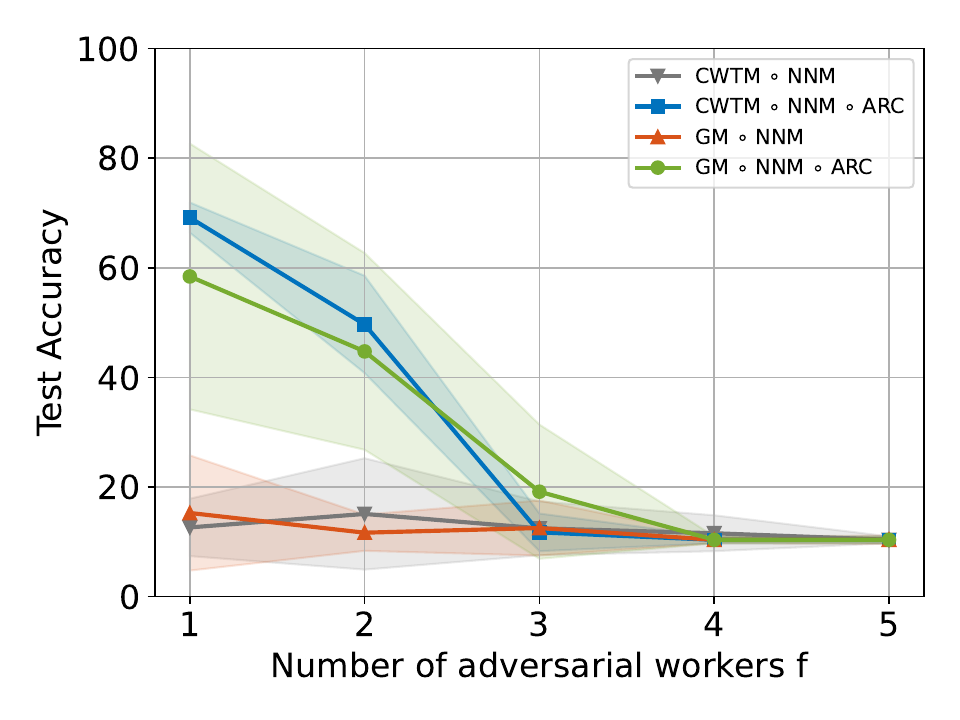}}
     \subfloat[extreme]{\includegraphics[width=0.49\textwidth]{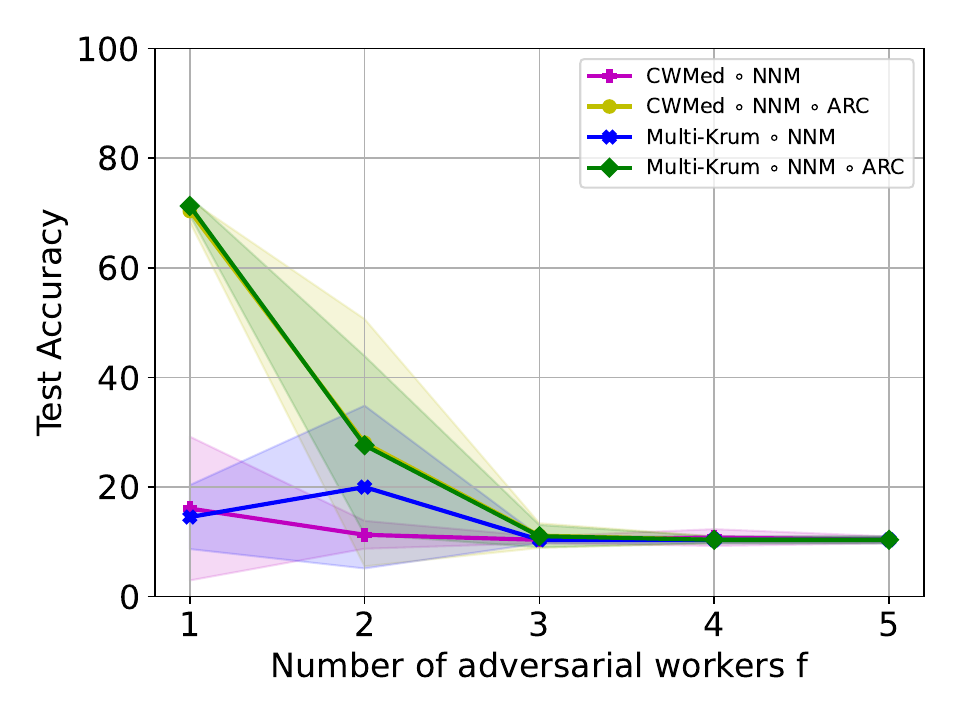}}
    \caption{\textit{Worst-case maximal accuracies} achieved by Robust-DSGD when using \layer{} compared to no clipping, on heterogeneously-distributed Fashion-MNIST with $10$ honest workers.
    We fix the heterogeneity level and vary the number of adversarial workers $f$.}
\label{fig_breakdown_fashion_mnist_app_f}
\end{figure*}

\begin{figure*}[ht!]
    \centering
    \subfloat[$f=1$]{\includegraphics[width=0.49\textwidth]{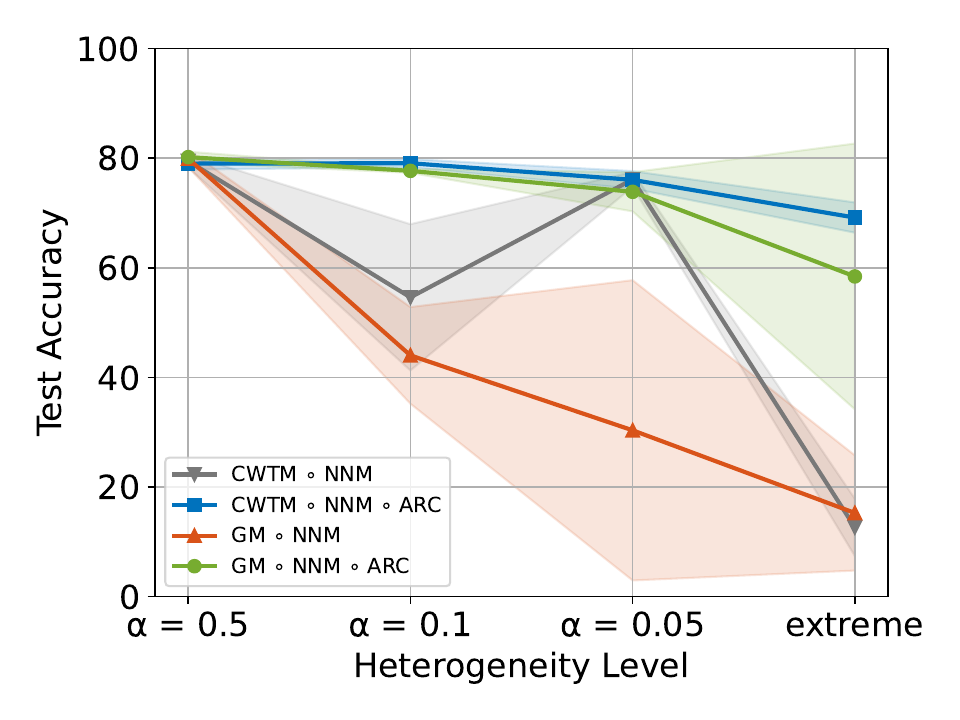}}
    \subfloat[$f=1$]{\includegraphics[width=0.49\textwidth]{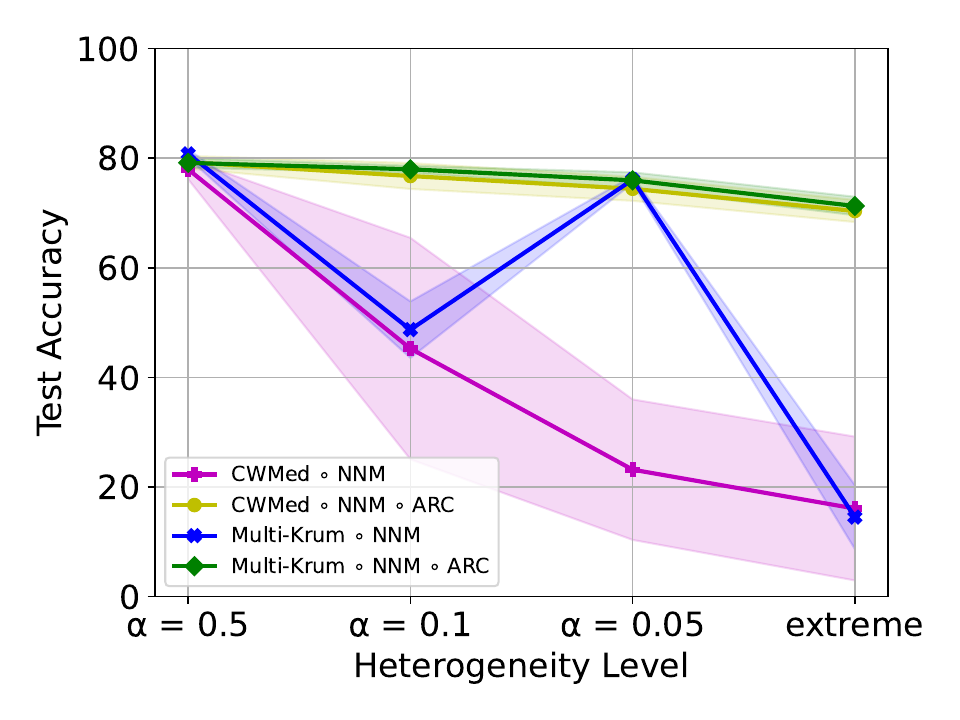}}\\
    \subfloat[$f=3$]{\includegraphics[width=0.49\textwidth]{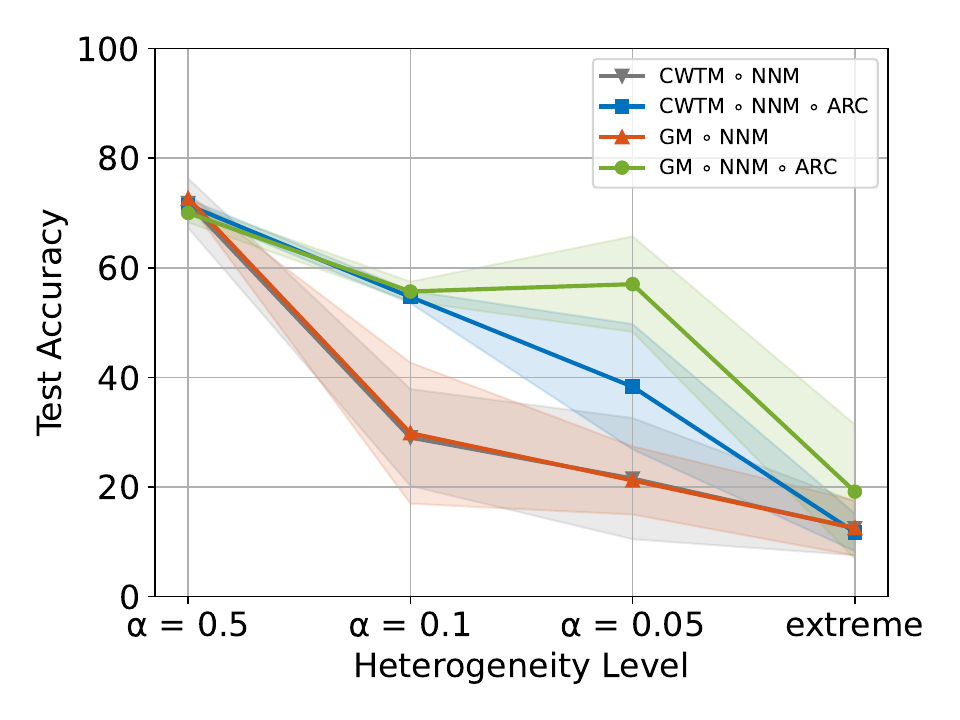}}
    \subfloat[$f=3$]{\includegraphics[width=0.49\textwidth]{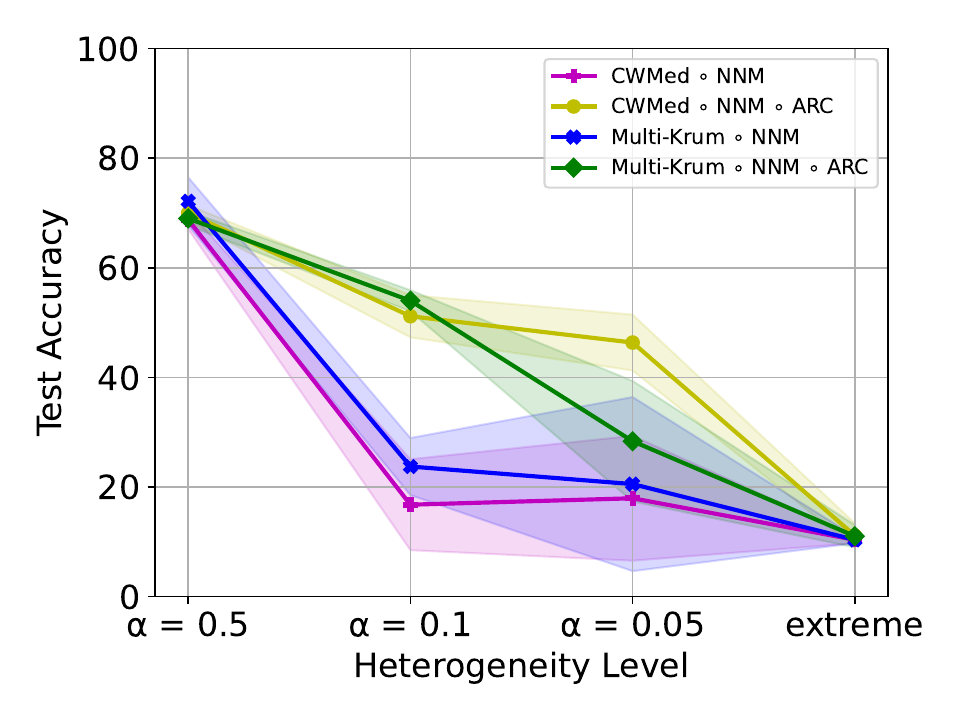}}\\
    \subfloat[$f=4$]{\includegraphics[width=0.49\textwidth]{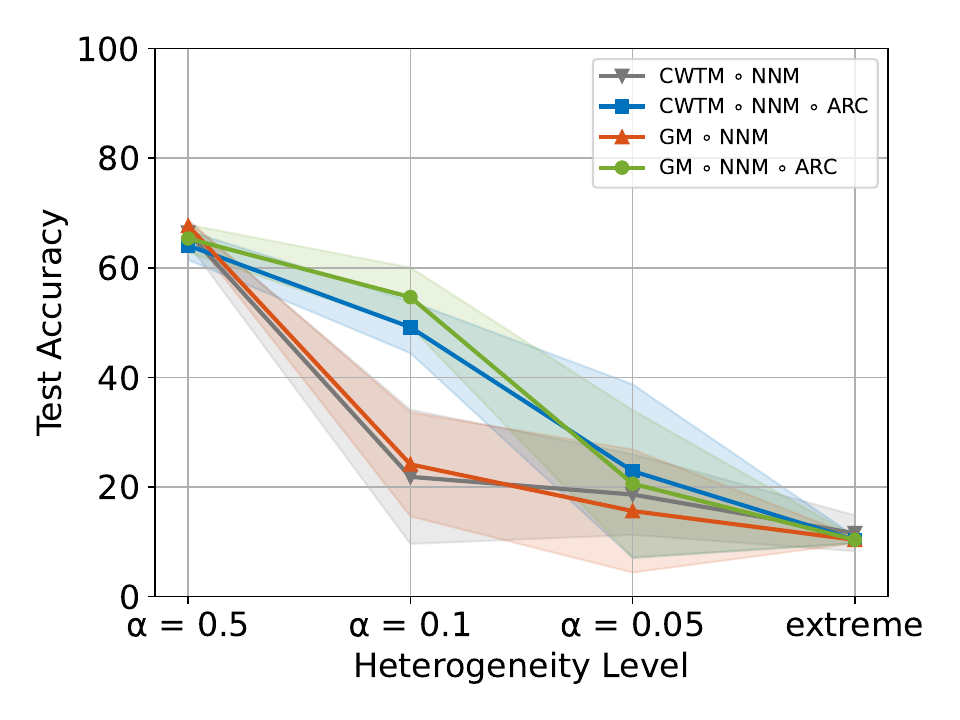}}
    \subfloat[$f=4$]{\includegraphics[width=0.49\textwidth]{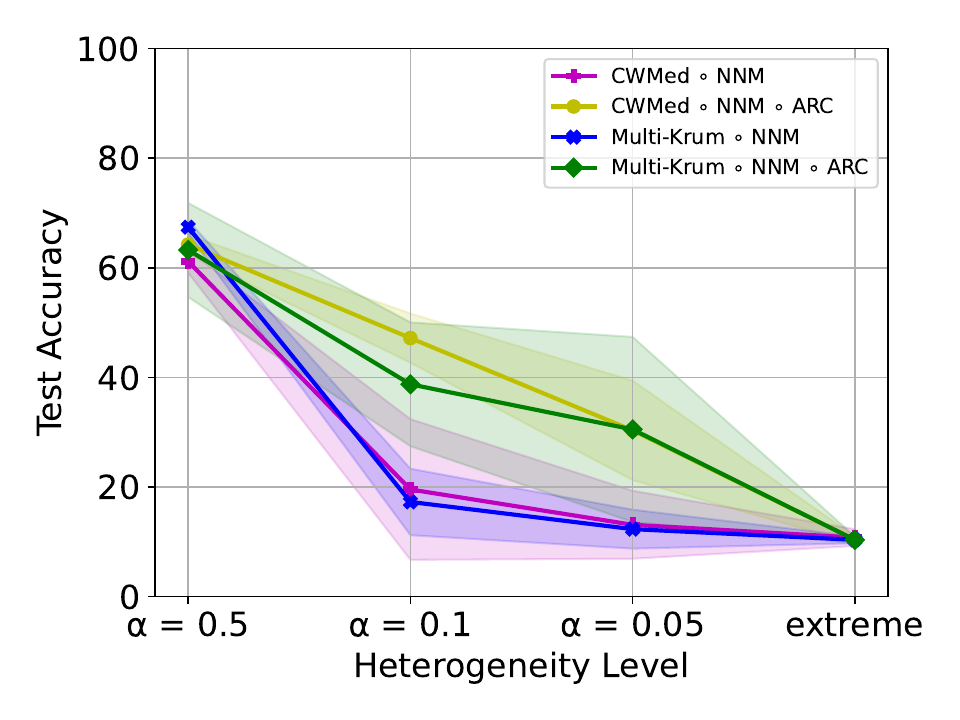}}
    \caption{\textit{Worst-case maximal accuracies} achieved by Robust-DSGD when using \layer{} compared to no clipping, on heterogeneously-distributed Fashion-MNIST with $10$ honest workers.
    We fix the number of adversarial workers $f$ and vary the heterogeneity level.}
\label{fig_breakdown_fashion_mnist_app_alpha}
\end{figure*}

\clearpage
\subsection{CIFAR-10}\label{app_exp_results_cifar}

\paragraph{Initial System (Main Paper): 16 Honest Workers}

\begin{table*}[ht!]
\centering
\begin{tabular}{||c | c | c || c | c||}
 \multicolumn{1}{c}{} & \multicolumn{2}{c}{$\alpha = 0.2$} & \multicolumn{2}{c}{$\alpha = 0.075$}\\
\hline
  Aggregation & No Clipping & \layer{} & No Clipping & \layer{}\\ [0.5ex] 
 \hline\hline
 CWMed & 43.4 $\pm$ 4.2 & \textbf{69.4 $\pm$ 0.8} & 13.7 $\pm$ 12.1  & \textbf{62.7 $\pm$ 1.2}\\
 \hline
 MK & 50.9 $\pm$ 5.1 & \textbf{68.7 $\pm$ 0.5} & 40.5 $\pm$ 0.7 & \textbf{59.9 $\pm$ 1.9}\\
  \hline
\end{tabular}
\caption{\textit{Worst-case maximal accuracies} (\%) achieved by Robust-DSGD on heterogeneously-distributed CIFAR-10 with \layer{} and without.
There is $f = 1$ adversarial worker among $n = 17$.
As a baseline, DSGD ($f=0$) reaches 76.5\% and 70\% when $\alpha = 0.2$ and 0.075, respectively. This table complements Table~\ref{table_cifar_main} in the main paper.}
\label{table_cifar_app}
\end{table*}
\vspace{-4mm}

\begin{figure*}[ht!]
    \centering
    \subfloat[$f=1$]{\includegraphics[width=0.49\textwidth]{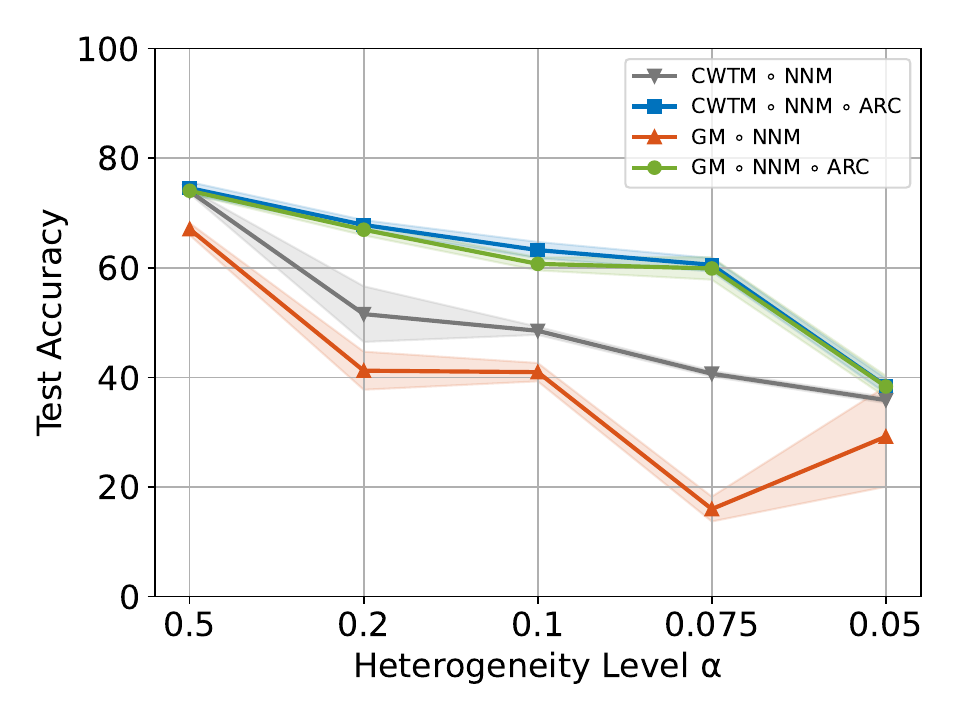}}
    \subfloat[$f=1$]{\includegraphics[width=0.49\textwidth]{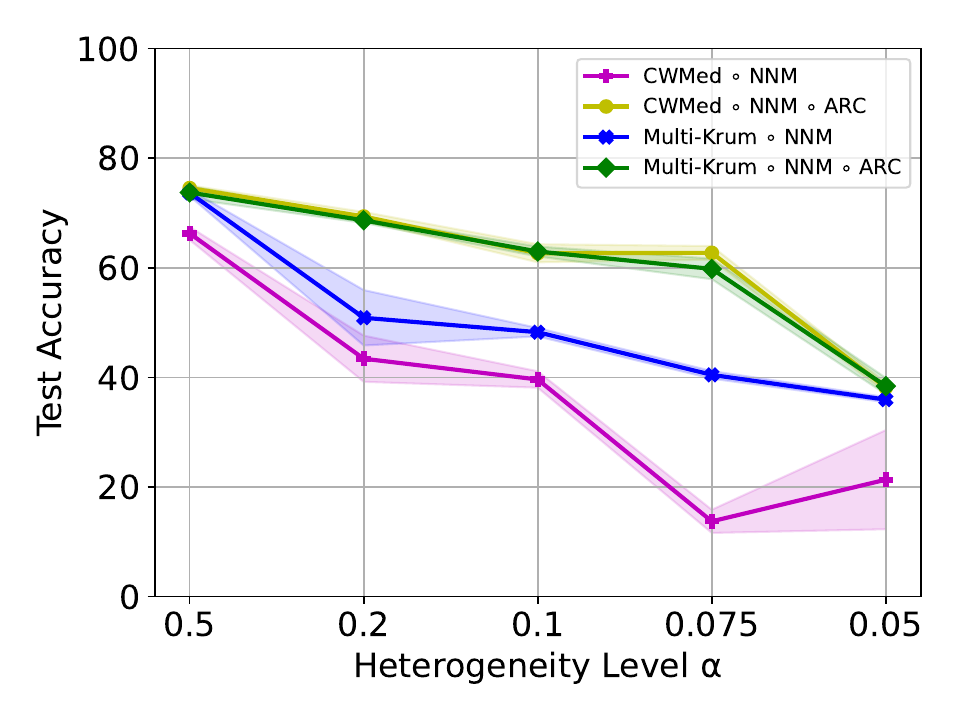}}
    \caption{\textit{Worst-case maximal accuracies} achieved by Robust-DSGD when using \layer{} compared to no clipping, on heterogeneously-distributed CIFAR-10 with $16$ honest workers.
    We fix the number of adversarial workers $f = 1$ and vary the heterogeneity level.}
\label{fig_breakdown_cifar_app_alpha}
\end{figure*}

\begin{figure*}[ht!]
    \centering
    \subfloat[FOE with CWTM and GM]{\includegraphics[width=0.49\textwidth]{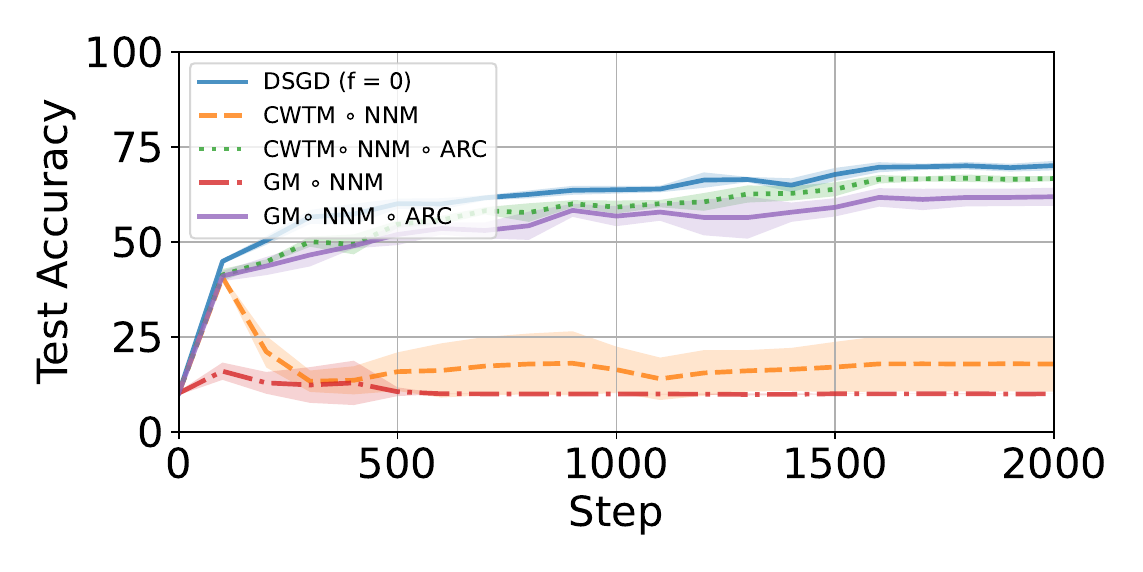}
    \label{fig_cifar_main_foe}}
    \subfloat[ALIE with CWTM and GM]{\includegraphics[width=0.49\textwidth]{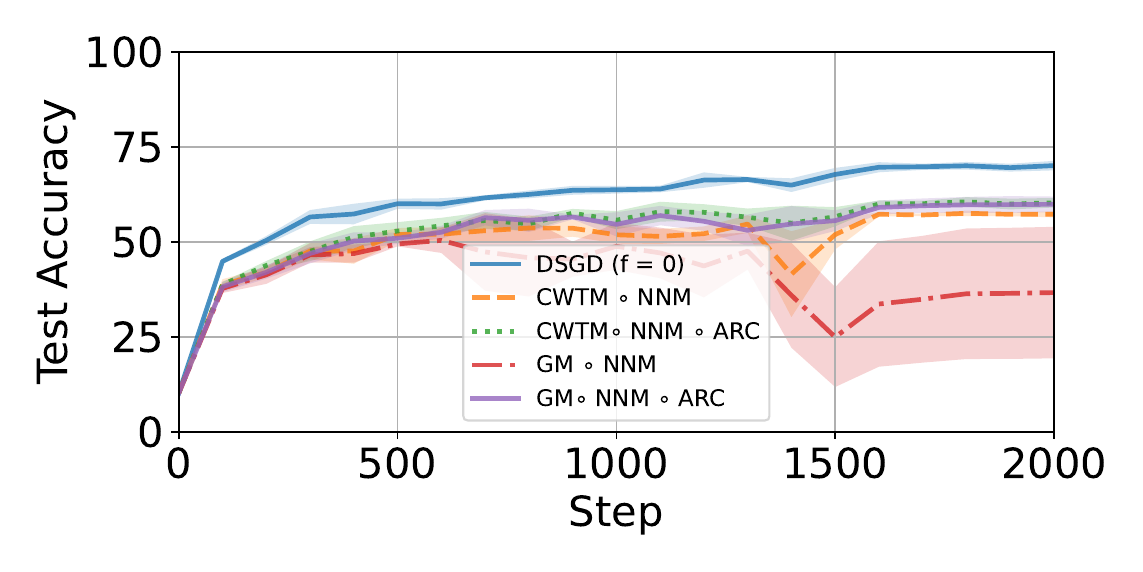}
    }\\
    \subfloat[FOE with CWMed and MK]{\includegraphics[width=0.49\textwidth]{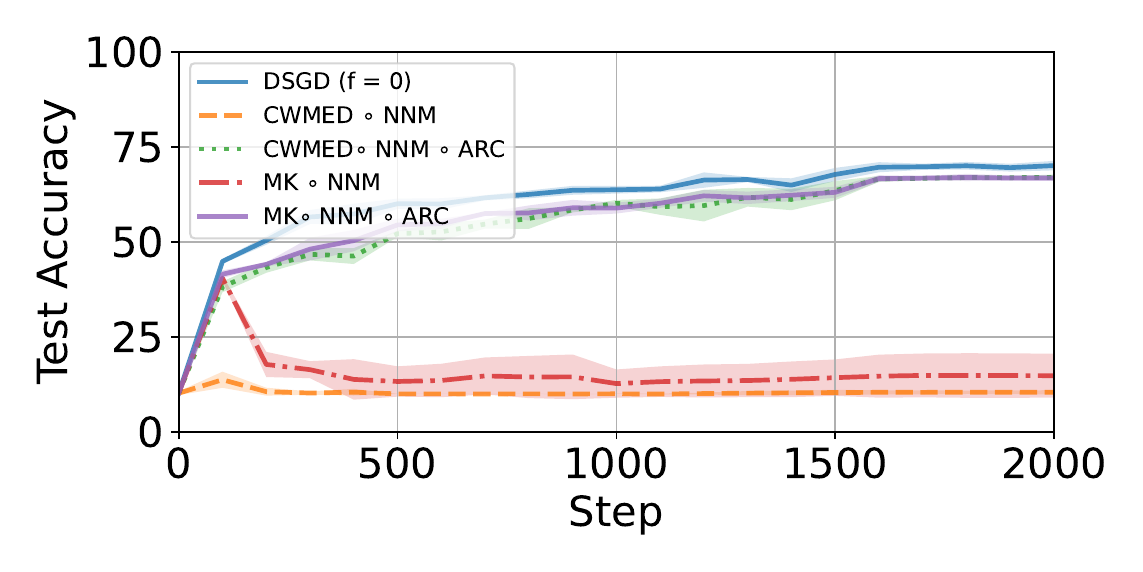}}
    \subfloat[ALIE with CWMed and MK]{\includegraphics[width=0.49\textwidth]{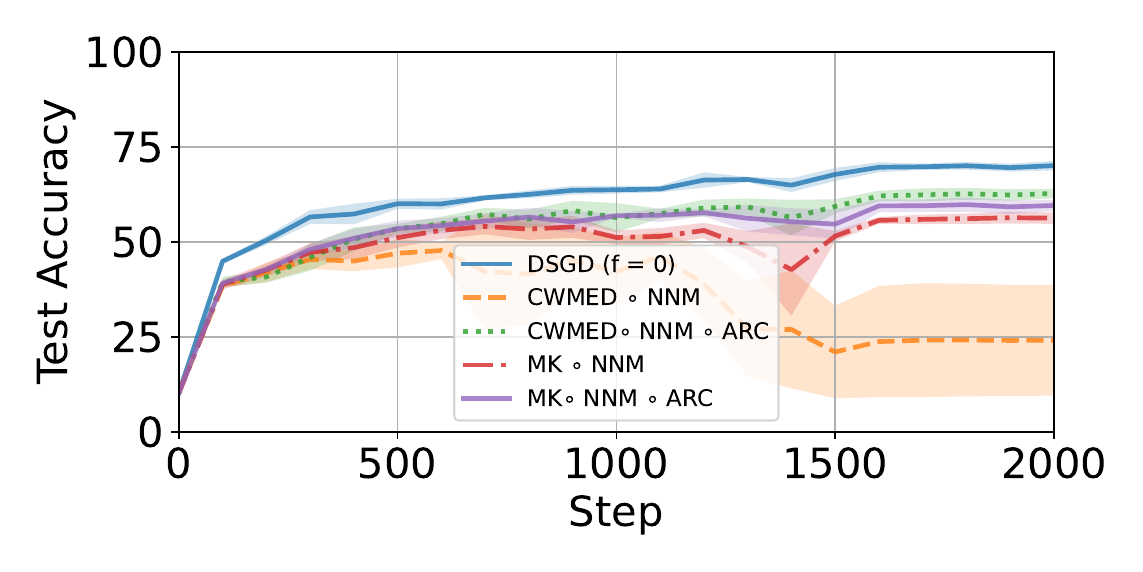}}
    \caption{Performance achieved by Robust-DSGD when using \layer{} compared to no clipping, on heterogeneously-distributed CIFAR-10 ($\alpha = 0.075$) with $16$ honest workers and $f=1$ executing ALIE and FOE.
    }
\label{fig_cifar_app_alpha=0.075}
\end{figure*}

We also test the performance of \layer{} in more adversarial regimes, with $f \in \{2, 3\}$ adversarial workers (along with $n-f=16$ honest workers).
We consider heterogeneity regimes of $\alpha = 0.2$ and 0.5, when $f = 2$ and 3, respectively.
The results are presented in Table~\ref{table_cifar_rebuttal}, and we show in Figure~\ref{fig_arc_vs_no_clip_varying_hetero_rebuttal} the performance of ARC compared to no clipping under the FOE attack.

\begin{table*}[ht!]
\centering
\begin{tabular}{||c | c | c || c | c||}
 \multicolumn{1}{c}{} & \multicolumn{2}{c}{$f = 2, \alpha = 0.2$} & \multicolumn{2}{c}{$f = 3, \alpha = 0.5$}\\
\hline
  Aggregation & No Clipping & \layer{} & No Clipping & \layer{}\\ [0.5ex] 
 \hline\hline
 CWTM $\circ$ NNM & 44.3 $\pm$ 5.2 & \textbf{53.0 $\pm$ 4.7} & 52.0 $\pm$ 1.3 & \textbf{55.4 $\pm$ 1.1}\\
 \hline
 GM $\circ$ NNM & 33.8  $\pm$ 6.7 & \textbf{50.2 $\pm$ 3.3} & 50.9 $\pm$ 2.5 & \textbf{49.6 $\pm$ 1.4}\\
 \hline
 CWMed $\circ$ NNM & 32.6 $\pm$ 9.8 & \textbf{49.7 $\pm$ 2.7} & 48.7 $\pm$ 2.4  & \textbf{62.7 $\pm$ 0.9}\\
 \hline
 MK $\circ$ NNM & 45.0 $\pm$ 4.2 & \textbf{51.9 $\pm$ 2.1} & 50.3 $\pm$ 1.4 & \textbf{50.2 $\pm$ 2.4}\\
  \hline
\end{tabular}
\caption{\textit{Worst-case maximal accuracies} (\%) achieved by Robust-DSGD on heterogeneously-distributed CIFAR-10 with \layer{} and without.
There are $f \in \{ 2, 3\}$ adversarial workers with $n -f = 16$ honest workers.}
\label{table_cifar_rebuttal}
\end{table*}

\begin{figure*}[ht!]
    \centering
     \subfloat[$f=2, \alpha = 0.2$]
     {\includegraphics[width=0.45\textwidth]{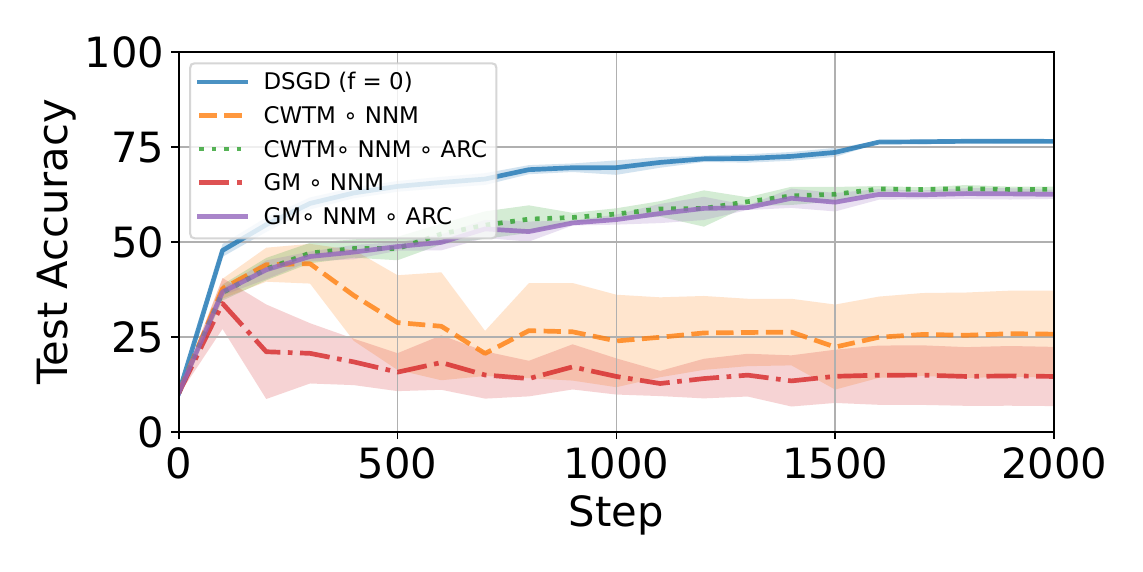}}
    \subfloat[$f=3, \alpha = 0.5$]{\includegraphics[width=0.45\textwidth]{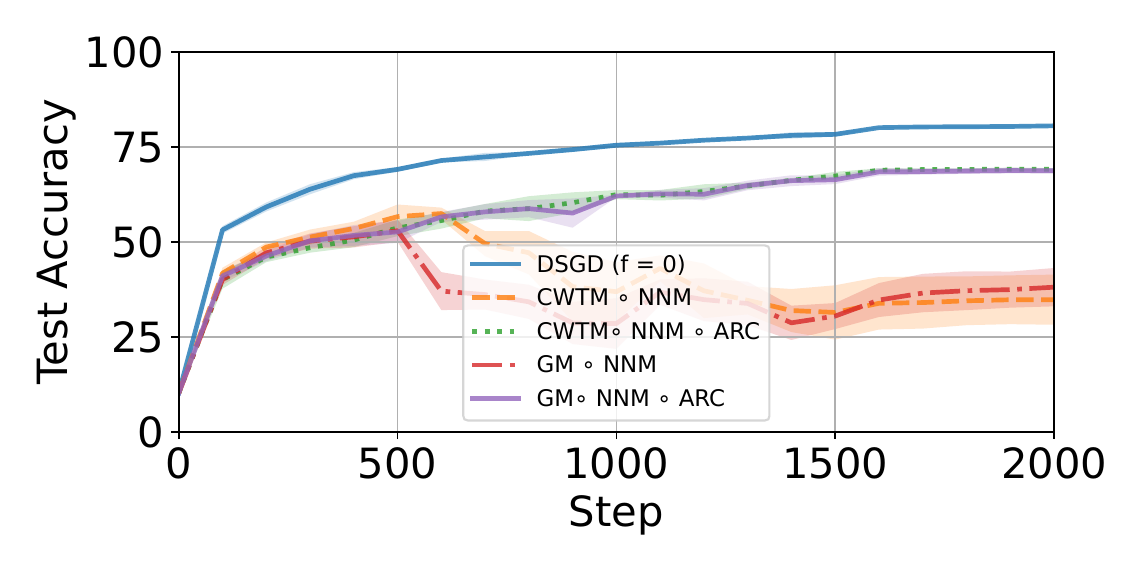}
    }\\
    \subfloat[$f=2, \alpha = 0.2$]
     {\includegraphics[width=0.45\textwidth]{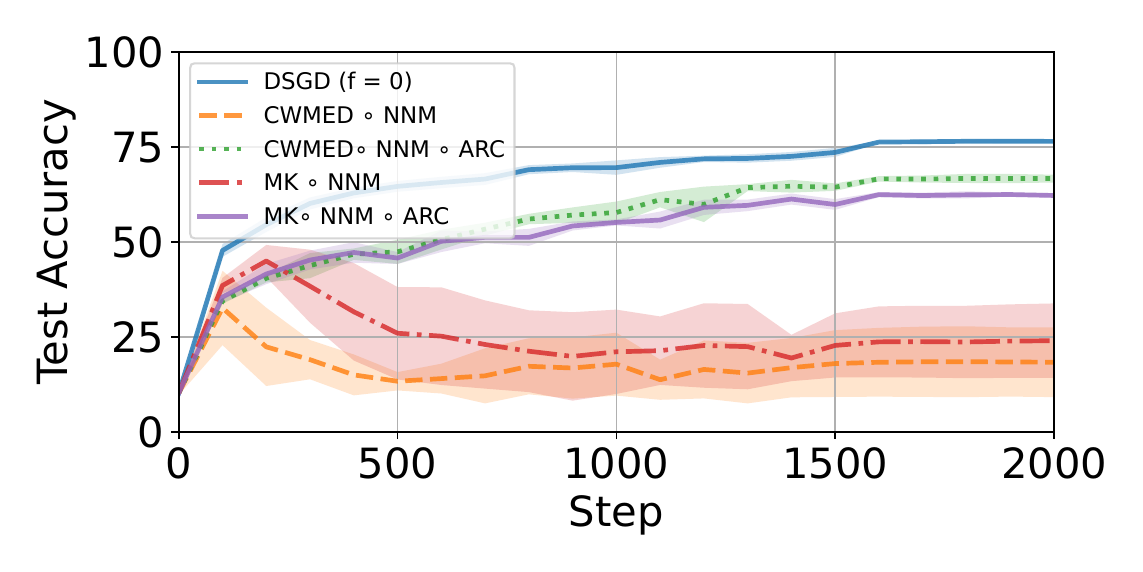}}
    \subfloat[$f=3, \alpha = 0.5$]{\includegraphics[width=0.45\textwidth]{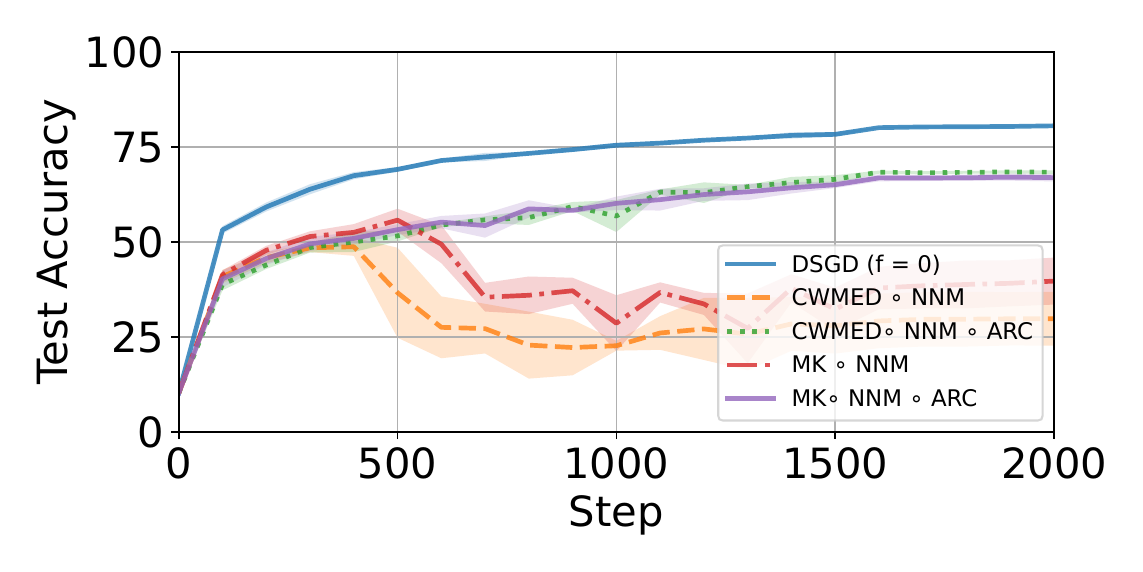}
    }
    \caption{Performance of Robust-DSGD when using \layer{} and without clipping on CIFAR-10.
    There are 16 honest workers, and $f = 2$ (left) and $f = 3$ (right) adversarial workers executing the FOE attack. The aggregations used are CWTM $\circ$ NNM and GM $\circ$ NNM (top), and CWMed $\circ$ NNM and MK $\circ$ NNM (bottom).}
\label{fig_arc_vs_no_clip_varying_hetero_rebuttal}
\end{figure*}

\paragraph{Larger System: 33 Honest Workers}
\label{app_exp_large_sys_cifar}
We also run experiments on CIFAR-10 in a larger system comprised of $n-f =33$ honest workers and $f \in \{2, 3\}$ adversarial workers.
We consider the high heterogeneity regime of $\alpha = 0.1$.
The results are presented in Table~\ref{table_cifar_rebuttal_2}, and we show in Figure~\ref{fig_arc_vs_no_clip_varying_hetero_rebuttal_3} the performance of ARC compared to no clipping under the FOE attack.

\begin{table*}[ht!]
\centering
\begin{tabular}{||c | c | c || c | c||}
 \multicolumn{1}{c}{} & \multicolumn{2}{c}{$f = 2$} & \multicolumn{2}{c}{$f=3$}\\
\hline
  Aggregation & No Clipping & \layer{} & No Clipping & \layer{}\\ [0.5ex] 
 \hline\hline
 CWTM $\circ$ NNM & 45.5 $\pm$ 1.3 & \textbf{54.2 $\pm$ 1.0} & 41.3 $\pm$ 0.5 & \textbf{47.5 $\pm$ 1.5} \\
 \hline
 GM $\circ$ NNM & 41.8  $\pm$ 1.1 & \textbf{55.2 $\pm$ 2.2} & 40.0 $\pm$ 1.2 & \textbf{47.5 $\pm$ 1.5 } \\
 \hline
 CWMed $\circ$ NNM & 42.5 $\pm$ 1.42 & \textbf{54.7 $\pm$ 1.2} & 35.7 $\pm$ 8.7  & \textbf{49.7 $\pm$ 3.4}\\
 \hline
 MK $\circ$ NNM & 45.8 $\pm$ 0.8 & \textbf{54.8 $\pm$ 2.5} & 41.4 $\pm$ 0.6 & \textbf{48.7 $\pm$ 2.8}\\
 \hline
\end{tabular}
\caption{\textit{Worst-case maximal accuracies} (\%) achieved by Robust-DSGD on heterogeneously-distributed CIFAR-10 with \layer{} and without.
There are $f = 2$ (left) and $f = 3$ (right) adversarial workers, along with $n -f = 33$ honest workers. We consider the high heterogeneity regime $\alpha = 0.1$.}
\label{table_cifar_rebuttal_2}
\end{table*}

\begin{figure*}[ht!]
    \centering
     \subfloat[CWTM and GM, $f = 2$]
     {\includegraphics[width=0.45\textwidth]{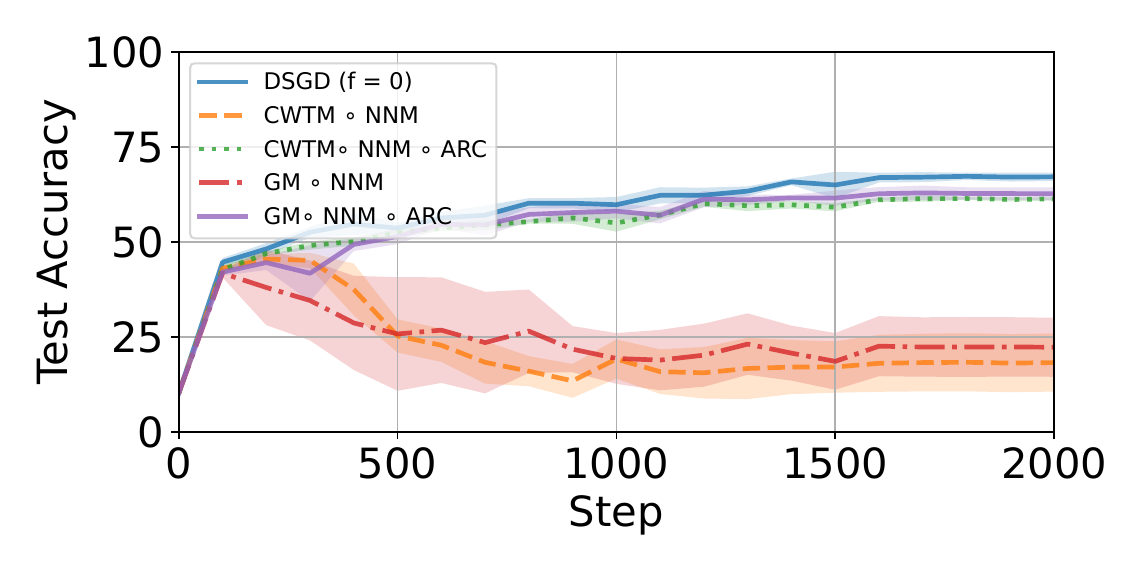}}
    \subfloat[CWMed and MK, $f = 2$]{\includegraphics[width=0.45\textwidth]{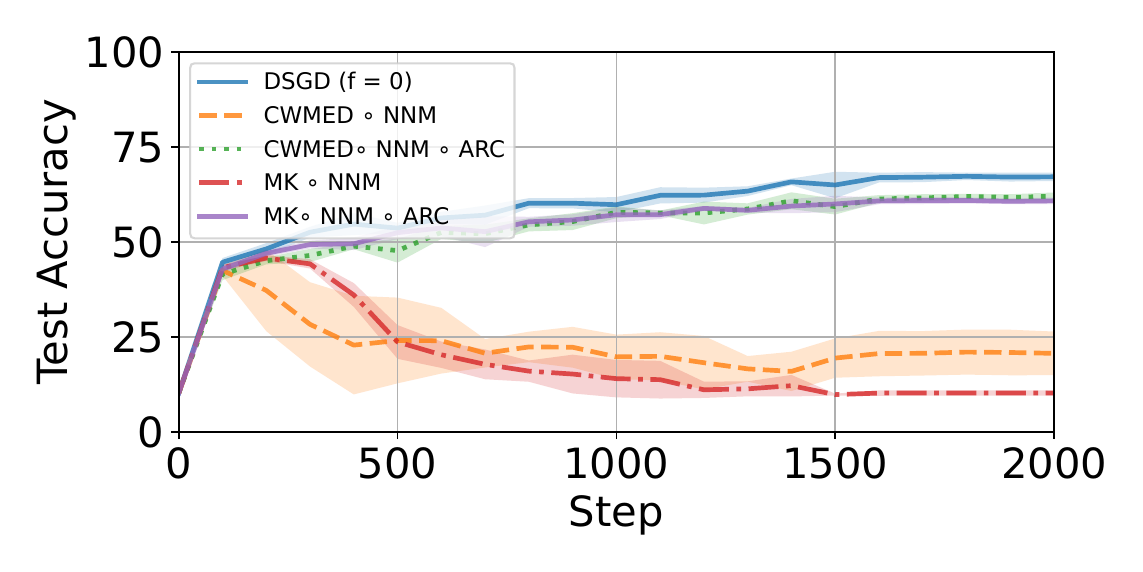}}\\
    \subfloat[CWTM and GM, $f = 3$]
     {\includegraphics[width=0.45\textwidth]{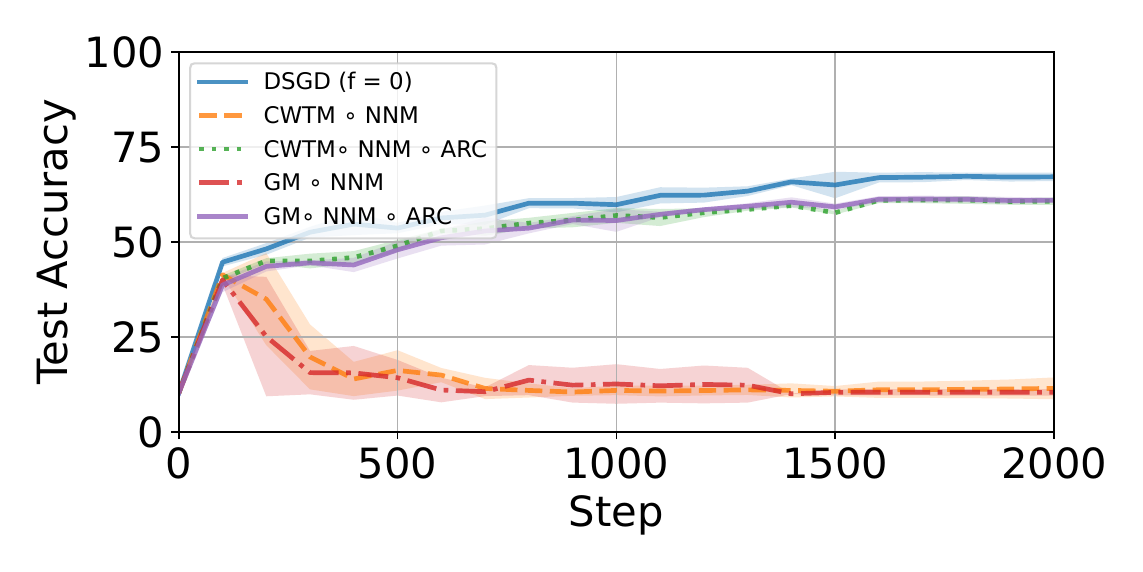}}
    \subfloat[CWMed and MK, $f = 3$]{\includegraphics[width=0.45\textwidth]{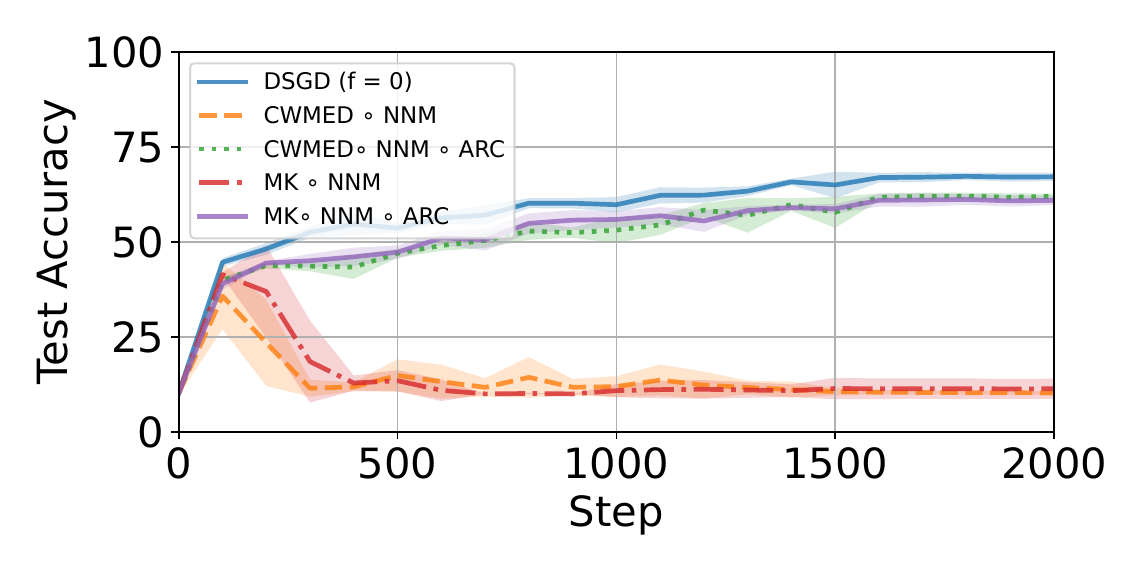}}
    \caption{Performance of Robust-DSGD when using \layer{} and without clipping on CIFAR-10.
    There are 33 honest workers, and $f = 2$ (top) and 3 (bottom) adversarial workers executing the FOE attack. 
    We consider the high heterogeneity regime $\alpha = 0.1$.
    The aggregations used are CWTM $\circ$ NNM and GM $\circ$ NNM (left), and CWMed $\circ$ NNM and MK $\circ$ NNM (right).}
\label{fig_arc_vs_no_clip_varying_hetero_rebuttal_3}
\end{figure*}

\subsection{Runtime Comparison: NNM\texorpdfstring{~\citep{allouah2023fixing}}{} vs. ARC}
\label{app_exp_runtime}

\begin{figure*}[ht!]
    \vspace{-2mm}
    \centering
    \begin{subfigure}[t]{0.5\textwidth}
        \centering
        \includegraphics[height=1.7in]{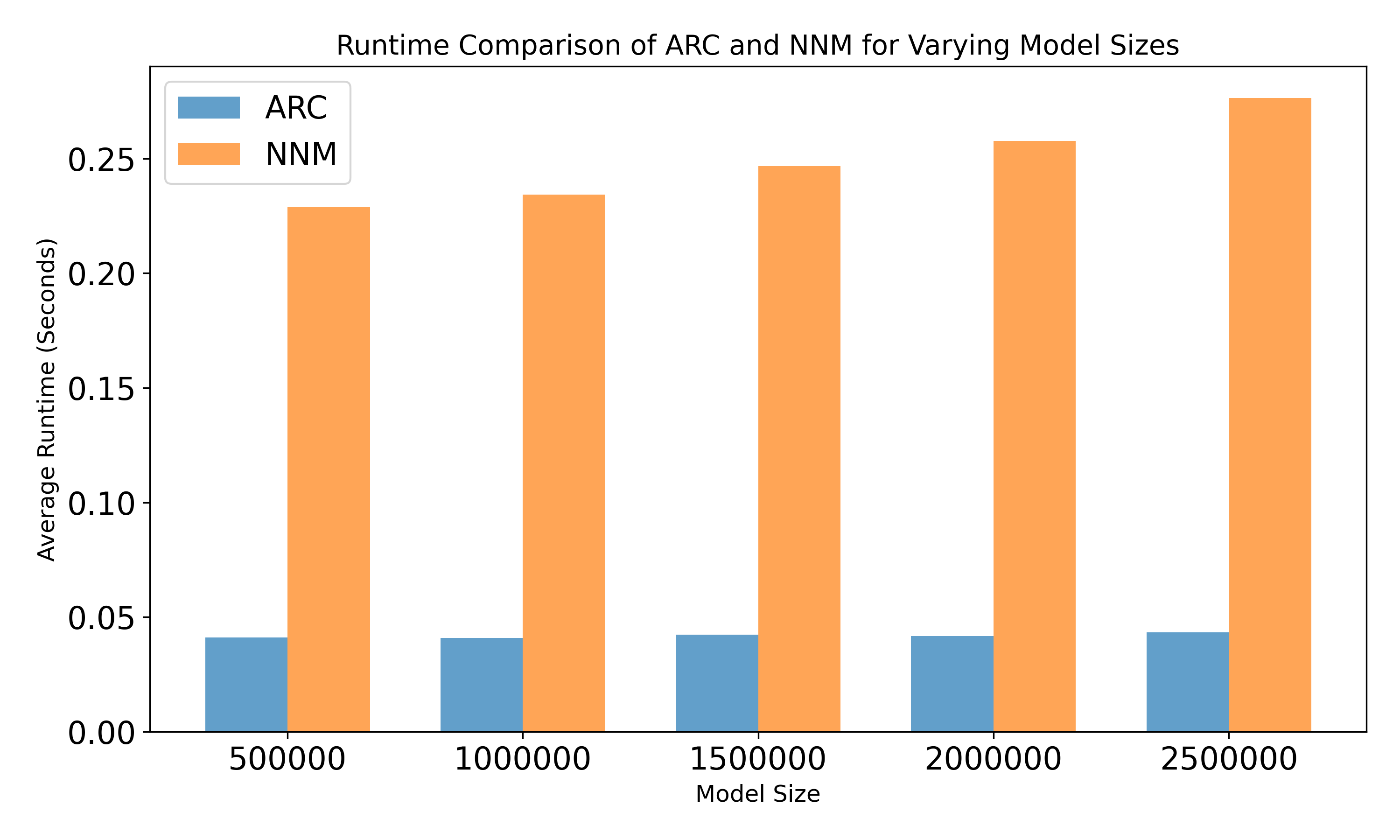}
        \caption{Varying Model Size $d$}
    \end{subfigure}%
    \begin{subfigure}[t]{0.5\textwidth}
        \centering
        \includegraphics[height=1.7in]{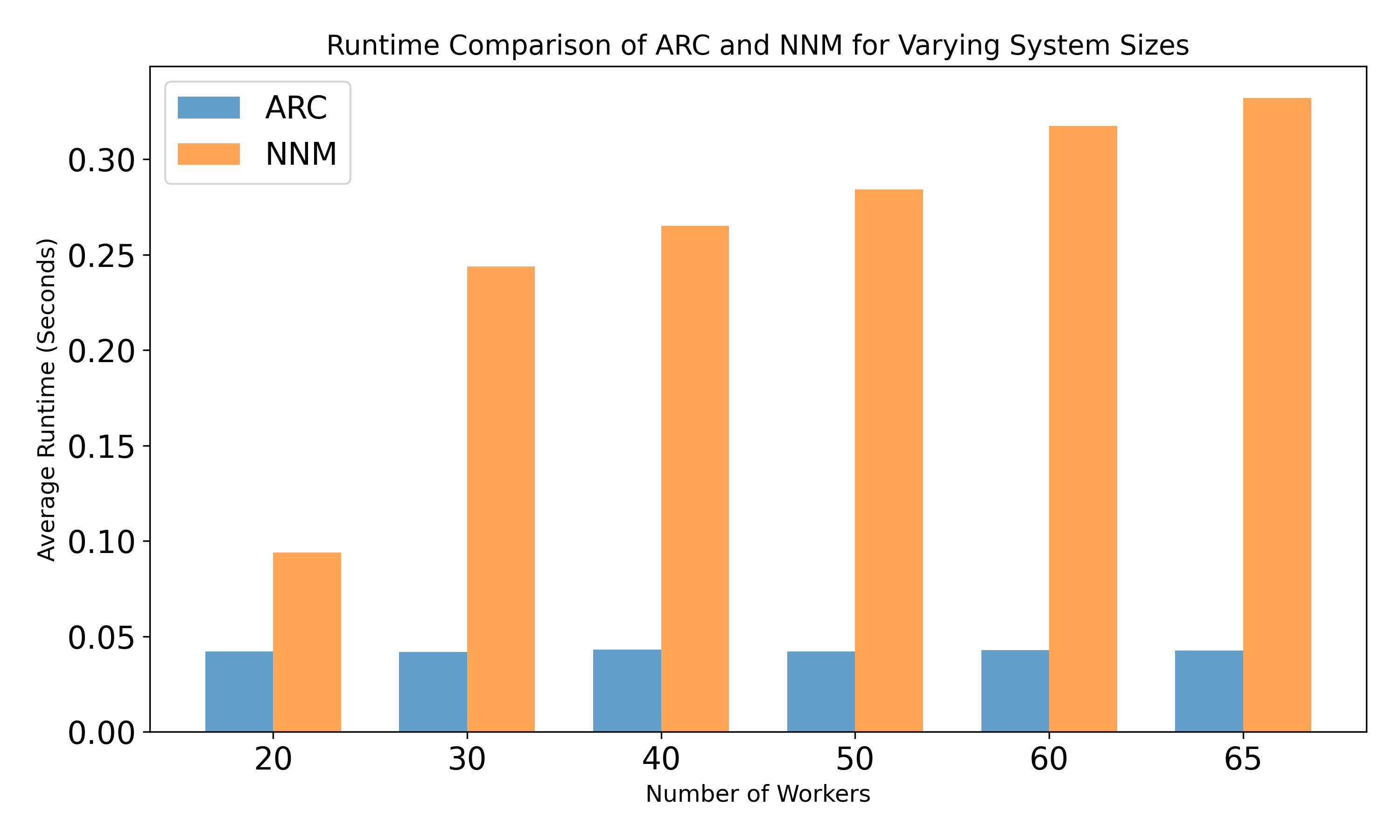}
        \caption{Varying System Size $n$}
    \end{subfigure}
    \caption{Computational performance of ARC vs NNM in two different settings. In the left plot, we fix the total number of workers $n = 30$, the number of adversarial workers $f = 3$, and vary the model size $d$. In the right plot, we fix the model size $d = 1310922$ (the size of the model we used on CIFAR-10) and $f=3$, and vary the number of workers $n$ in the distributed system.}
    \label{fig_runtime_comparison}
\end{figure*}

\end{document}